\newtheorem{theorem}{Theorem}[section]
\newtheorem{proposition}[theorem]{Proposition}
\newtheorem{lemma}[theorem]{Lemma}
\newtheorem{condition}{Condition}
\newtheorem*{remark}{Remark}
\newtheorem{definition}{Definition}
\newcommand{\cT}{\mathcal{T}}
\newcommand{\cB}{\mathcal{B}}
\newcommand{\cA}{\mathcal{A}}
\newcommand{\cE}{\mathcal{E}}
\newcommand{\bbE}{\mathbb{E}}
\newcommand{\bbP}{\mathbb{P}}
\newcommand{\bbR}{\mathbb{R}}
\newcommand{\pa}[1]{\left(#1\right)}
\newcommand{\ac}[1]{\left\{#1\right\}}
\newcommand{\cro}[1]{\left[#1\right]}
\newcommand{\condsignal}{\hyperref[as:signal]{\texttt{C-Signal}}\xspace}
\newcommand{\condinvariance}{\texttt{Independent-Sampling}\xspace}
\newcommand{\condmoment}{\hyperref[as:moment]{\texttt{C-Moment}}\xspace}
\newcommand{\condvariance}{\hyperref[as:gen]{\texttt{C-Variance}}\xspace}
\newcommand{\condmomentWR}{\hyperref[as:moment:without:replacement]{\texttt{C-Variance-Permutation}}\xspace}
\newcommand{\condinvarianceWR}{\texttt{Permutation Sampling}\xspace}
\newcommand{\HidSub}{\hyperref[mod:HS]{(HS)\xspace}}
\newcommand{\HidSubI}{\hyperref[mod:HS-I]{(HS-I)\xspace}}
\newcommand{\HidSubP}{\hyperref[mod:HS-P]{(HS-P)\xspace}}
\newcommand{\StoBlo}{\hyperref[mod:SBM]{(SBM)\xspace}}
\newcommand{\StoBloI}{\hyperref[mod:SBM-I]{(SBM-I)\xspace}}
\newcommand{\StoBloP}{\hyperref[mod:SBM-P]{(SBM-P)\xspace}}
\newcommand{\ToeSer}{\hyperref[mod:TS]{(TS)\xspace}}
\newcommand{\ToeSerI}{\hyperref[mod:TS-I]{(TS-I)\xspace}}
\newcommand{\ToeSerP}{\hyperref[mod:TS-P]{(TS-P)\xspace}}
\newcommand{\1}{\mathbf{1}}
\title{Low-degree lower bounds via almost orthonormal bases}
\author{Alexandra Carpentier\footnote{Institut für Mathematik~---~Universität Potsdam, Potsdam, Germany. Alexandra.Carpentier@uni-potsdam.de},  \ Simone Maria Giancola\footnote{Laboratoire de Math\'ematiques d'Orsay, Universit\'e Paris~-~Saclay, CNRS, France. simonegiancola09@gmail.com }, \  Christophe Giraud\footnote{Laboratoire de Math\'ematiques d'Orsay, Universit\'e Paris~-~Saclay, CNRS, France. Christophe.Giraud@universite-paris-saclay.fr} \ 
and Nicolas Verzelen\footnote{INRAE, Institut Agro, MISTEA,
Univ. Montpellier, France. Nicolas.Verzelen@inrae.fr} }
\date{}
\begin{document}

\maketitle
\begin{abstract}
Low-degree polynomials have emerged as a powerful paradigm for providing evidence of statistical-computational gaps across a variety of high-dimensional statistical models~\cite{SurveyWein2025}. For detection problems ---\ where the goal is to test a planted distribution $\mathbb{P}'$ against a null distribution $\mathbb{P}$ with independent components ---\ the standard approach is to bound the advantage using an $L^2(\mathbb{P})$-orthonormal family of polynomials. However, this method breaks down for estimation tasks or more complex testing problems where $\mathbb{P}$ has some planted structure, so that no simple $L^2(\mathbb{P})$-orthogonal polynomial family is available. To address this challenge, several technical workarounds have been proposed~\cite{SchrammWein22,SohnWein25}, though their implementation can be delicate.

In this work, we propose a more direct proof strategy. Focusing on random graph models, we construct a basis of polynomials that is almost orthonormal under $\mathbb{P}$, in precisely those regimes where statistical-computational gaps arise. This almost orthonormal basis not only yields a direct route to establishing low-degree lower bounds, but also allows us to explicitly identify the polynomials that optimize the low-degree criterion. This, in turn, provides insights into the design of optimal polynomial-time algorithms.
We illustrate the effectiveness of our approach by recovering known low-degree lower bounds, and establishing new ones for problems such as hidden subcliques, stochastic block models, and seriation models.
\end{abstract}

\section{Introduction}\label{sec:introduction}

In high-dimensional statistics, a central objective is to design computationally efficient estimation ---\ or test ---\ procedures that achieve the best possible statistical performance. However, in many fundamental problems ---\ such as sparse PCA, planted clique, or clustering ---\ the best known polynomial-time algorithms fail to attain the performance that is provably achievable by the optimal estimators. This gap between the information-theoretic optimum and the best polynomial-time performance, known as a statistical-computational gap, has been conjectured to occur broadly.
From this perspective, the performance of an efficient algorithm should be compared not to the information-theoretic  optimum, but to the best achievable by any polynomial-time method, leading naturally to the problem of proving lower bounds for polynomial-time algorithms.
Since statistical problems involve random instances, classical worst-case complexity classes (P, NP, etc.) are not well suited for characterizing hardness. Instead, computational lower bounds are typically established within specific models of computation, such as the sum-of-squares (SoS) hierarchy~\cite{HopkinsFOCS17,Barak19}, the overlap gap property~\cite{gamarnik2021overlap}, the statistical query framework~\cite{kearns1998efficient,brennan2020statistical}, and the low-degree polynomial model~\cite{hopkins2018statistical,KuniskyWeinBandeira,SchrammWein22}, sometimes in combination with reductions between statistical problems~\cite{brennan2020reducibility,pmlr-v30-Berthet13,brennan2018reducibility}.

Among these, low-degree polynomial (LD) lower bounds have recently emerged as a powerful tool for establishing state-of-the-art computational lower bounds in a variety of detection problems ---\ including community detection~\cite{Hopkins17}, spiked tensor models~\cite{Hopkins17,KuniskyWeinBandeira}, sparse PCA~\cite{ding2024subexponential}  among others ---\ and estimation problems ---\ including submatrix estimation~\cite{SchrammWein22}, stochastic block models and graphons~\cite{luo2023computational,SohnWein25}, dense cycle recovery~\cite{mao2023detection}, and planted coloring~\cite{kothari2023planted} ---\ see~\cite{SurveyWein2025} for a recent survey. In the LD framework, we restrict our attention to estimators ---\ or test statistics ---\ that are multivariate polynomials of degree at most $D$ in the observations. The central conjecture in the LD literature is that, for many problems, degree-$O(\log n)$ polynomials are as powerful as any polynomial-time algorithm. Consequently, proving failure for all degree-$O(\log n)$ polynomials  provides strong evidence~\cite{KuniskyWeinBandeira} of polynomial-time hardness.
The LD framework connects to several other computational models, including statistical queries~\cite{brennan2020statistical}, free energy landscapes from statistical physics~\cite{bandeira2022franz}, and approximate message passing~\cite{montanari2025equivalence}.

In this work, we consider testing and estimation problems on random graph models with latent structure.
We observe an undirected graph with $n$ nodes, encoded in the adjacency matrix $Y^{\star}= (Y^{\star}_{ij})_{1\leq i < j\leq n}\in \bbR^{n(n-1)/2}$, where $Y^\star_{ij}$ equals 1 when there is an edge between $i$ and $j$, and 0 otherwise. We consider a latent structure model, where for some $q\in (0,1)$, some symmetric matrix $\Theta\in \bbR^{n(n-1)/2}$, and some unobserved latent assignment $z\in [n]^{n}$, the $Y^\star_{ij}$s are sampled independently conditionally on $z$, with conditional distribution
\begin{equation}\label{eq:definition:sample:graph}
 \mathbb{P}\left[Y^{\star}_{ij} = 1 | z \right]= q+ \Theta_{z_i z_j}\  ; \quad \quad  
 \mathbb{P}\left[Y^{\star}_{ij} = 0 | z \right]= 1- q-  \Theta_{z_iz_j}\enspace .
\end{equation}
It is more standard to consider the matrix $\Theta^{\star}$ defined  by $\Theta^\star_{ij}=q+ \Theta_{ij}$ for $i\neq j$, but the parametrization with $\Theta$ will be more convenient for our purpose.  We consider two different sampling schemes for this latent assignment vector $z$:

\begin{condition}[\texttt{Independent sampling}]\label{as:perinv}
For $i=1,\ldots, n$, the $z_i$'s are sampled uniformly on $[n]$. 
\end{condition}

\begin{condition}[\texttt{Permutation sampling}]\label{as:perinv:without:replacement}
The vector $z=(z_i)_{1=1,\ldots,n}$ is distributed as the uniform permutation over $[n]$.
\end{condition}

Under \condinvarianceWR, $\mathbb{E}[Y^{\star}|z]$ is distributed as a random permutation of $\Theta^\star$ whereas, under  \condinvariance, $\mathbb{E}[Y^{\star}|z]$ corresponds to some sampling with replacement of $\Theta^\star$. 
Importantly, the distribution of $Y^{\star}$ is permutation-invariant in both cases. This model encompasses three classical random graph models that depend on some parameter $\lambda\in [0,1-q]$ and some integer $k\in [n]$.

\begin{itemize}
\item[(HS)]~\label{mod:HS} {\bf Hidden subclique} 
 Set    $\Theta_{ij}= \lambda \1\{i\leq k\}\1\{j\leq k\}$. When two nodes belong to the hidden subclique (that is $z_i\leq k$), then the connection probability equals $\lambda+q$. Under \condinvariance, each node belongs to the hidden subclique with probability $k/n$, whereas, under \condinvarianceWR, the size of the hidden subclique is exactly $k$. We refer to the former model as (HS-I)~\label{mod:HS-I}, and to the latter as (HS-P)~\label{mod:HS-P}.
\item[(SBM)]~\label{mod:SBM} {\bf Stochastic Block Models}. Assume that $n/k$ is an integer.  Then, we set   $\Theta_{ij}= \lambda \1\{\lceil \tfrac{i}{k}\rceil = \lceil \tfrac{j}{k}\rceil  \}$, where  $\lceil x\rceil$ stands for the upper integer part. Under \condinvariance  (SBM-I)~\label{mod:SBM-I}, $Y^\star$ is sampled as a SBM with $K=n/k$ groups with random size. Under \condinvarianceWR (SBM-P)~\label{mod:SBM-P}, 
$Y^\star$ is sampled as a SBM with $K=n/k$ groups of size exactly $k$.
\item[(TS)]~\label{mod:TS} {\bf T\oe plitz Seriation}. For simplicity assume that $k$ is even. We have $\Theta_{ij}= \lambda\1_{|i-j|\leq k/2}$, where the label $z\in [n]^{n}$ is either sampled uniformly at random in $[n]^{n}$ (TS-I)~\label{mod:TS-I}, or is sampled uniformly in the set of permutations (TS-P)~\label{mod:TS-P}. 
\end{itemize}

\paragraph{Our contributions.} 
\begin{enumerate}
    \item We present a novel approach for proving low-degree lower bounds for testing and estimation in random graph models with planted structure. Our method relies on constructing a new basis of low-degree polynomials invariant under vertex relabelling, which is \emph{almost orthonormal} when the planted structure (i.e.\ $\Theta$) is small. Typically, this property holds as long as the signal is weak enough to prevent non-trivial recovery using degree-$\log(n)$ polynomials. The technique offers a simpler systematic framework for proving low-degree bounds, particularly effective when the latent vector $z$ is not i.i.d., thereby opening the door to addressing previously unsolved and challenging settings. An additional advantage of this framework is that it allows us to explicitly identify the polynomials that optimize the low-degree criterion, providing insights for the design of optimal polynomial-time algorithms ---\ see open problem \#6 in~\cite{SurveyWein2025}.
\item We establish two new low-degree lower bounds for testing and estimation in the general model~\eqref{eq:definition:sample:graph}, covering the three models \HidSub, \StoBlo, and \ToeSer. These bounds yield new results for testing between different planted structures in these three models, as well as new results for estimation in the \ToeSerI, \ToeSerP, \HidSubP\ and \StoBloP\ models. We also recover several known results for \HidSubI\ and \StoBloI, up to logarithmic factors. Note that in this paper, we throughout  assume that $|\Theta|_{\infty}$ is of smaller order than $q$ up to a polynomial in $D$, which is not necessary, but which simplifies significantly our analysis as we want to have a generic analysis for all models.
\end{enumerate}

\paragraph{A glimpse at our technique for deriving LD bounds.}
We now give a brief overview of our approach for deriving LD bounds; full details appear in Section~\ref{s:nearort}.
To simplify the forthcoming analysis, from now on, we work  with the centered adjacency matrix $Y$ defined by 
\begin{equation}\label{eq:definition:Y}
Y_{ij} = Y^{\star}_{ij}- q \enspace , \text{ for any }1\leq i < j \leq n\enspace .
\end{equation}
Let $\bbP$ denote the distribution under the null hypothesis $H_{0}$ in the testing setting, or the distribution of the data in the estimation setting. Proving LD lower bounds amounts to establishing an upper bound of the form (see Section~\ref{s:set} for further details)
\begin{equation}\label{eq:intro:LD1}
     \sup_{f: \mathrm{deg}(f)\leq D}\frac{\bbE\cro{xf(Y)}^2}{{\bbE\cro{f(Y)^2}}} \leq  \bbE\cro{x}^2(1+o(1))\enspace ,
\end{equation}
where $x$ is the likelihood ratio $x={d\bbP_{H_{1}}\over d\bbP}(Y)$ in testing $\bbP$ against $\bbP_{H_1}$, or the target quantity in estimation.  The supremum ranges over all polynomial functions $f$ of degree at most $D$, and  $\bbE\cro{x}^2=1$ in testing problems. The value $\bbE\cro{x}^2$ corresponds to the supremum for $D=0$, i.e. when restricting to trivial constant polynomials.

In a simple detection setting with $\Theta = 0$ under $H_{0}$, the supremum in~\eqref{eq:intro:LD1} can be evaluated explicitly.  
Indeed, the monomials
  $\ac{\phi_{S}(Y):=\prod_{(i,j)\in S} Y_{ij}:S\in\mathcal{S}_{\leq D}}$, indexed by $\mathcal{S}_{\leq D}=\ac{S\subset \ac{(i,j):1\leq i<j\leq n} : |S|\leq D}$, form an $L^2(\bbP)$-orthonormal basis  for degree-$D$ polynomials. Defining $\hat x_{S}:= \bbE\cro{x\phi_{S}(Y)}$, we obtain
\begin{equation}\label{eq:intro:LD2}
 \sup_{f: \mathrm{deg}(f)\leq D}\frac{\bbE\cro{xf(Y)}^2}{{\bbE\cro{f(Y)^2}}} = \sup_{(\alpha_{S})_{S\in\mathcal{S}_{\leq D}}} \frac{\pa{\sum_{S\in\mathcal{S}_{\leq D}}\alpha_{S} \hat x_{S}}^2}{\sum_{S\in\mathcal{S}_{\leq D}}\alpha_{S}^2}=\|(\hat x_{S})_{S\in\mathcal{S}_{\leq D}}\|^2\enspace ,
\end{equation}
so the problem reduces to comparing $\|(\hat x_{S})_{S\in\mathcal{S}_{\leq D}}\|^2$ with~$\bbE\cro{x}^2$.
This is the classical approach first derived in \cite{HopkinsFOCS17,Hopkins17,KuniskyWeinBandeira}.
However, the convenient simplification fails for estimation problems or more complex testing settings with $\Theta \neq 0$ under $H_{0}$, where no simple explicit $L^2(\bbP)$-orthonormal basis for low-degree polynomials is available.  
Two strategies have been proposed to address this issue:
\begin{enumerate}
\item The approach of~\cite{SchrammWein22} applies an affine transformation to $Y$, and then uses a partial Jensen inequality, integrating over the latent variable inside the square, i.e., schematically
\begin{equation}\label{eq:intro:LD3}
\bbE\cro{f(Y)^2} \geq \bbE\cro{\pa{\bbE_{z}\cro{f(Y)}}^2} =: \|M^{\mathsf{T}} \alpha\|^2 \enspace ,
\end{equation}
yielding an upper triangular matrix $M$ that can be simply inverted.  
The supremum~\eqref{eq:intro:LD1} is then bounded above by $\|M^{-1} \hat{x}\|^2$,  which can be evaluated thanks to the explicit inversion of $M$.
This method has been successfully applied to certain estimation problems in stochastic block models, graphons~\cite{luo2023computational}, and dense cycle recovery~\cite{mao2023detection}.  
However, the integration over $z$ within the square can cause cancellations between symmetric terms, significantly shrinking the $L^2$-norm and leading to suboptimal bounds, see e.g.~\cite{Even25a}.

\item The more powerful method of~\cite{SohnWein25} bypasses the construction of an $L^2(\bbP)$-orthonormal basis by instead building one in the extended space $L^2(\bbP^{W})$, where $\bbP^{W}$ is the distribution of $W = (Y, z)$.  
The task then reduces to finding a minimal norm solution $u$ of an overcomplete system $Mu = \hat{x}$.  
This approach has yielded tight bounds in a variety of problems~\cite{SohnWein25,pmlr-v291-chin25a}, but its applicability can be limited in complex settings, as it requires identifying special solutions of a large overcomplete system.
\end{enumerate}

\medskip
We propose a simpler and more direct method for evaluating the supremum~\eqref{eq:intro:LD1}.  
While constructing an explicit $L^2(\bbP)$-orthonormal basis seems infeasible beyond basic detection problems, we relax the requirement to almost orthonormality.  
Our method is based on two key ideas:

\begin{enumerate}
\item Restrict attention to polynomials $f$ invariant under permutations of the vertex labels, i.e.,
$f(Y_{\sigma}) = f(Y)$ for any permutation $\sigma$ of $[n]$, where $[Y_{\sigma}]_{ij} = Y_{\sigma(i),\sigma(j)}$.  
Indeed, the supremum in~\eqref{eq:intro:LD1} is achieved for $f$ invariant by permutations.
Such symmetry property has been exploited in previous works~\cite{semerjian2024matrix,kunisky2024tensor,montanari2025equivalence} where the authors leverage some invariance by permutations or by orthogonal transformations.

\item Construct a basis of invariant low-degree polynomials that is almost $L^2(\bbP)$-orthonormal in the weak signal regime, in the sense that
\begin{equation}\label{eq:intro:LD4}
\bbE\cro{\pa{\sum_{S\in \mathcal{S}_{\leq D}}\alpha_{S}\phi_{S}(Y)}^2} = \|(\alpha_{S})_{S\in\mathcal{S}_{\leq D}}\|^2(1+o(1))\enspace ,
\end{equation}
typically when $|\Theta|_{\infty}=\lambda$ is small.
\end{enumerate}

To achieve the key property~\eqref{eq:intro:LD4}, we start from the basis $\{\phi_{S} : S \in \mathcal{S}_{\leq D}\}$, adjust it to ensure $\bbE\cro{\phi_{S}(Y)\phi_{S'}(Y)} = 0$ for many (but not all) distinct $S, S'$, and then average over permutations of the labels to enforce invariance.  
A central result is that the resulting basis is almost orthonormal for weak signals. Compared with~\cite{SchrammWein22}, our method avoids the potentially suboptimal Jensen step~\eqref{eq:intro:LD3}. Compared with~\cite{SohnWein25}, computations are simpler, which may facilitate its application to more intricate problems.
For example, problems where the latent vector $z$ is a permutation can be treated more directly, whereas earlier analyses were considerably more involved~\cite{Even25b}. 
Another important advantage of our direct approach is that it allows us to identify the dominant polynomials in~\eqref{eq:intro:LD1}, thereby yielding optimal algorithms for the underlying testing or estimation task.

\subsection{Related literature}~\label{subsec:related literature}

\paragraph{Low-Degree Polynomials in Hypothesis Testing and Estimation.}
Historically, the low-degree method originated from the study of the sum-of-squares (SoS) semidefinite programming hierarchy~\cite{Barak19}.
The idea of capturing polynomial-time complexity via low-degree polynomials emerged in a sequence of works~\cite{HopkinsFOCS17,Hopkins17,hopkins2018statistical,KuniskyWeinBandeira} on detection problems, namely hypothesis testing under a simple null distribution (typically with independent entries).
The core strategy is to expand the likelihood ratio in a basis orthonormal under the null distribution, and then solve the resulting optimization problem explicitly.
This approach has been successfully applied to a broad range of models, including community detection~\cite{Hopkins17}, spiked tensor models~\cite{Hopkins17,KuniskyWeinBandeira}, sparse PCA~\cite{ding2024subexponential}, and planted subgraph problems~\cite{elimelech2025detecting}, among many others.

In contrast, the literature on complex testing problems is relatively sparse.
Two notable exceptions are~\cite{rush2022easier} and~\cite{kothari2023planted}, which study testing between two different “planted” distributions, each with a distinct type of hidden structure ---\ for example, testing between stochastic block models with different number of communities, or between $q$-colorable and $(q+\ell)$-colorable random graphs.
Their proofs adapt techniques from~\cite{SchrammWein22} originally developed for estimation.

Theory for estimation (or “recovery”) has been primarily developed in~\cite{SchrammWein22} and~\cite{SohnWein25}.
The framework of~\cite{SchrammWein22} has been successfully applied to submatrix estimation~\cite{SchrammWein22}, stochastic block models and graphons~\cite{luo2023computational}, and Gaussian mixture models~\cite{Even24}.
It has also been extended to more complex latent variable models by exploiting conditional independence~\cite{Even25a} and weighted dependency graph theory~\cite{Even25b}, yielding lower bounds for challenging settings such as sparse clustering, biclustering, and multiple feature matching.
The more recent work~\cite{SohnWein25} develops a technically further involved but sharper theory, providing exact constants for thresholds and establishing lower bounds for polynomials of degree $D$ as large as fractional powers of $n$.
This approach has been applied to planted submatrix, planted subclique, spiked Wigner, and stochastic block models~\cite{SohnWein25,pmlr-v291-chin25a}.
When there is no detection–recovery gap ---\ i.e., when recovery is as easy as detection ---\ recovery lower bounds can be directly derived from detection bounds.
For problems exhibiting a gap, more sophisticated detection-to-recovery reductions have recently been proposed~\cite{li2025algorithmic,ding2025low}.

The ideas of leveraging symmetries in the data generating distribution and constructing a nearly orthogonal basis first appeared in~\cite{montanari2025equivalence} for the rank one matrix estimation problem, where the basis is derived from Hermite polynomials. This strategy was further developed in~\cite{kunisky2024tensor} for tensor models such as the spiked tensor model, introducing the tensor cumulant basis of rotationally invariant polynomials, which is nearly orthogonal under the tensor–Wigner distribution.  Beyond the difference between the statistical models, \cite{kunisky2024tensor,montanari2025equivalence} only establish the near orthogonality\footnote{Here, near orthogonality means that the eigenvalues of the corresponding Gram matrix are bounded away from $0$ and from $\infty$, whereas almost-orthogonality ensures that its eigenvalues are asymptotically close to $1$.} of their basis in specific regimes or asymptotics: the tensor Wigner distribution being a ``pure noise'' model,~\cite{kunisky2024tensor} need to rely on a Jensen-type argument reminiscent of~\cite{SchrammWein22} to consider estimation problems. \cite{montanari2025equivalence} also considered a specific asymptotic regime for their rank one matrix estimation problem. In both~\cite{montanari2025equivalence,kunisky2024tensor}, the spectrum of the associated Gram matrix is bounded away from zero and infinity, but does not approach 1 as the problem size grows ---\ a key distinction from our setting. In comparison to those two works, we prove that our basis construction is a versatile and simple tool to establish near optimal LD lower and upper bounds. 
We further elaborate on the connection between our basis construction and~\cite{kunisky2024tensor},  \cite{montanari2025equivalence} in Section~\ref{s:nearort}.

Finally, beyond predicting computational thresholds for polynomial-time algorithms, low-degree polynomials can also provide insight into time complexity in the hard regime.
The low-degree conjecture~\cite{hopkins2018statistical} posits that degree-$D$ polynomials can serve as a proxy for algorithms with runtime approximately $n^{D}$.
Extensions of this framework address optimization problems~\cite{gamarnik2024hardness} and refutation tasks~\cite{kothari2023planted}.
For a comprehensive overview of the low-degree method, its connections to other hardness frameworks, and a broader set of references, we refer to the recent survey~\cite{SurveyWein2025}.

\paragraph{Hidden subclique.}
The planted clique problem, corresponding to $q = 1/2$ and $p:= \lambda +q = 1$, is a canonical example of a problem exhibiting an information–computation gap.
While the existence of a hidden clique can be detected as soon as $k > 2\log_{2}(n)$ by exhaustively scanning all possible cliques, all known polynomial-time tests fail when $k = o(\sqrt{n})$.
Low-degree hardness for detection in this regime was proven in~\cite{hopkins2018statistical}, adapting arguments from~\cite{Barak19}, and the corresponding hardness of estimation was established in~\cite{SchrammWein22}.

For the planted subclique problem (i.e., $p < 1$),~\cite{SchrammWein22} showed low-degree hardness for recovery when
\begin{equation}\frac{\lambda}{\sqrt{q}}\left( 1 \vee \frac{k}{\sqrt{n}} \right) \leq \log(n)^{-2}\enspace .\end{equation}
This result was refined in~\cite{SohnWein25}, which proved low-degree hardness of recovery for
\begin{equation}\frac{\lambda k}{\sqrt{q(1-q)n}} < e^{-1/2}\enspace .\end{equation}
By analogy with the planted submatrix problem, when $k\gg \sqrt{n}$, recovery is conjectured to be possible above this precise threshold using Approximate Message Passing\cite{DM15}.

For detection,~\cite{Dhawan25} showed  low-degree failure roughly when $p = o\big( \sqrt{q} k^2/n \big)$ for $k \geq \sqrt{n}$, and when $p = o(q^{\log_{n}(k)})$ for $k = o(\sqrt{n})$.
Finally,~\cite{elimelech2025detecting} studied the more general case where the hidden subclique is replaced by an arbitrary hidden subgraph.
They found contrasting behaviors depending on the subgraph density: an statistical-computational gap appears only for dense subgraphs, specifically when the subgraph density exceeds the logarithm of its number of nodes.

\paragraph{Stochastic Block Model.}
The Stochastic Block Model with connection probabilities $p,q$ scaling as $1/n$ has attracted significant attention since the seminal paper of~\cite{Decelle2011}, which ---\ using tools from statistical physics ---\ conjectured computational hardness of recovery below the Kesten–Stigum (KS) threshold
\begin{equation}\label{eq:KS}
{\lambda k\over \sqrt{\lambda k + nq}} <1\enspace .
\end{equation}
Non-trivial recovery above this threshold was established in~\cite{massoulie2014community,bordenave2015non,AbbeSandon2015a,pmlr-v291-chin25a}.
Low-degree hardness of detection below the KS threshold~\eqref{eq:KS} was proven in~\cite{Hopkins17}; see also~\cite{bandeira2021spectral,kunisky2024low}.

For recovery,~\cite{SohnWein25,pmlr-v291-chin25a,ding2025low} proved low-degree hardness below the KS level~\eqref{eq:KS} when $k \gg \sqrt{n}$ and the polynomial degree $D$ is a fractional power of $n$.
This result was extended to the denser regime with $1/n \ll p,q \ll 1$ and $k \gg \sqrt{n}$ in~\cite{luo2023computational,pmlr-v291-chin25a}.
For $p,q$ of constant order, the same conclusion holds at the modified KS threshold
\begin{equation}{\lambda k\over \sqrt{\lambda k(1-p-q)+nq(1-q)}} <1\enspace .\end{equation}
When $k \leq \sqrt{n}$,~\cite{luo2023computational} established computational hardness for $\lambda = O(\sqrt{q} \log(n)^{-2})$, although this bound is believed to be suboptimal~\cite{pmlr-v291-chin25a}.

\paragraph{T\oe plitz seriation}
Optimal statistical rates for various loss functions have been derived in~\cite{flammarion2019optimal,cai2023matrix,berenfeld2024seriation}.
However, the best known polynomial-time algorithms  achieve significantly slower rates~\cite{cai2023matrix,berenfeld2024seriation}.
For this reason, statistical-computational gaps have been conjectured, e.g., in~\cite{cai2023matrix,berenfeld2024seriation}.
In particular,~\cite{berenfeld2024seriation} and~\cite{Even25b} proved a low-degree lower bound for a Gaussian version of the \ToeSerI\ and \ToeSerP\ models, showing that low-degree polynomials fail when ${k\lambda \over \sqrt{n}}\vee \lambda \leq 1$ up to poly-logarithmic factors.

\subsection{Organization of the manuscript}

In Section~\ref{s:set}, we introduce the two statistical problems studied in this paper, namely the problem of estimating an entry of $\Theta$, and a specific composite-composite testing problem, where we want to test a small alteration of our structure. We introduce our invariant basis for LD polynomials in Section~\ref{s:nearort}, and we establish its almost orthonormality
for all our models, when the signal is weak enough. In Section~\ref{s:mainres}, we then rely on these almost orthornormal polynomials to establish LD lower bounds for the estimation and testing problems.
Additional definitions,  important for the proof, are introduced in  Section~\ref{sec:graph:definition}.
While the proof of the almost orthonormality property in the general case is technical ---\ as we simultaneously  handle different models ---\ it becomes much simpler when instantiated to a specific model, like the hidden subclique model \HidSubI. 
To provide insights, we convey  in Section~\ref{s:core}  the core ideas by detailing the proof for the specific hidden subclique model \HidSubI.  Finally, we present in Section~\ref{sec:main_orthonormalite} general conditions under which our basis is almost orthonormal, these conditions being satisfied for all models under consideration in this work.

\section{Setting}\label{s:set}

Recall the six statistical  models \HidSubI, \StoBloI, \ToeSerI, \HidSubP, \StoBloP, \ToeSerP\ described in the introduction. Henceforth, we write $\mathbb P$ and $\mathbb E$ for the probability and expectation of $Y$.

In this manuscript, we tackle  two statistical tasks: {\bf estimation} and {\bf complex testing}. In estimation, the goal is to recover the $\mathbb{E}[Y|z]=(\Theta_{z_iz_j})_{1\leq i < j \leq n}$. In complex testing, the goal is to test some structural properties on the matrix $(\Theta_{z_iz_j})_{1\leq i < j \leq n}$ ---\ this is in sharp contrast with signal detection problems~\cite{KuniskyWeinBandeira} which test the nullity of $\Theta$. 

\subsection{Estimation}

As is standard for LD lower bounds in estimation problems~\cite{SchrammWein22}, we focus on estimating the functional 
\begin{equation}x = \mathbf 1\{\Theta_{z_1,z_2} \neq 0\}\enspace .\end{equation}
Note that $\Theta_{z_1,z_2}= \mathbb{E}[Y_{1,2}|z] \in \{0,\lambda\}$ for the all six models \HidSubI, \StoBloI, \ToeSerI, \HidSubP, \StoBloP, and \ToeSerP\ so that proving a LD lower bound for estimating $x$ readily allows, by linearity, to establish a LD lower bound for estimating the matrix $(\Theta_{z_iz_j})$ in Frobenius norm.
We recall that
\begin{equation} \inf_{f: \mathrm{deg}(f)\leq D} \mathbb{E}\cro{(f-x)^2}=\mathbb{E}[x^2]- \mathrm{Corr}^2_{\leq D}\enspace ,\end{equation}
where
\begin{equation}\label{eq:definition_corr}
\mathrm{Corr}_{\leq D} = \sup_{f: \mathrm{deg}(f)\leq D}\frac{\mathbb{E}[fx]}{\sqrt{\mathbb{E}[f^2]}} = \sup_{(\alpha_S)_{S \in \mathcal S_{\leq D}}} \frac{\mathbb E\left[ x\sum_{S \in \mathcal S_{\leq D}} \alpha_S Y^S\right]}{\sqrt{\mathbb E\left[\left[\sum_{S \in \mathcal S_{\leq D}} \alpha_S Y^S\right]^2\right]}}\enspace 
\end{equation}
is the {\bf minimum low-degree correlation criterion} introduced in~\cite{SchrammWein22}.
As explained in the introduction ---\ see~\eqref{eq:intro:LD1}, proving that $\mathrm{Corr}^2_{\leq D}$ is no larger than $\mathbb{E}[x^2]$ for $D$ of the order of $\log(n)$ is a strong indication of the computational hardness of the estimation problem. Our aim is therefore to characterize the regimes of $(k,n,\lambda)$ such that $\mathrm{Corr}^2_{\leq D}\leq \mathbb{E}[x^2](1+o(1))$.

\subsection{Complex Testing}

Fix $\epsilon\in (0,1)$. For all our six models, we define an alteration
\begin{itemize}
    \item[1] {\bf Alteration of \HidSub.} First sample $(\Theta_{z_iz_j})$ from \HidSubI\ (resp. \HidSubP). For all $i$ such that $z_i\leq k$, we set the $i$-th row and $i$-th column of $(\Theta_{z_iz_j})$ to zero with probability $\epsilon$. In plain words, under the alteration of \HidSubI, the size of the hidden subclique is  distributed as $Bin(n,k(1-\epsilon)/n)$ instead of $Bin(n,k/n)$, whereas under the alteration of \HidSubP, its size is distributed as $Bin(k,(1-\epsilon))$ instead of being equal to $k$. 
    \item[2] {\bf Alteration of \StoBlo.} First sample $(\Theta_{z_iz_j})$ from \StoBloI\ (resp. \StoBloP) and sample uniformly a group $\hat{l}\in [n/k]$. Then, for all $i$ such that $z_i\in [(\hat{l}-1)n/k;(\hat{l}-1)n/k + 1]$, we set the $i$-th row and $i$-th column of $(\Theta_{z_iz_j})$ to zero with probability $\epsilon$. In this alteration, we decrease the size of one of the  $K=n/k$ groups of the SBM and we create a new group of size $\epsilon n/k$ (in expectation) whose probability of connection is always equal to $q$. 
    \item[3] {\bf Alteration of \ToeSer.} First sample $(\Theta_{z_iz_j})$ from \ToeSerI\ (resp. \ToeSerP) and sample uniformly a position $\hat{l}\in [n]$. Then, for all $i$ such that $z_i\in [\hat{l}-k/2;\hat{l}+k/2]$, we set the $i$-th row and $i$-th column of $(\Theta_{z_iz_j})$ to zero with probability $\epsilon$. 
    In the alteration of \ToeSerP, this amounts to erase some of the entries of the T\oe plitz matrix. 
\end{itemize}
We have defined these alterations as illustrative and unified examples of complex testing problems. We could adapt the methodology to other structural tests (e.g. number of groups in the SBM), as the main difficulty in establishing the LD lower bounds is to introduce a candidate basis and establish its almost orthonormality. 

Henceforth, we write $\mathbb{P}_{H_1}$ and $\mathbb{E}_{H_1}$ for the probability and expectation in the altered model.
For each of the models, we consider the testing problem 
\begin{equation}H_0: Y\sim \mathbb{P}\quad \text{  against  }~~~H_1: Y \sim \mathbb{P}_{H_1}\enspace . \end{equation}
In the low-degree framework~\cite{hopkins2018statistical,KuniskyWeinBandeira,SurveyWein2025}, the difficulty of the testing problem is characterized by 
\begin{equation}\label{def:Adv}
\mathrm{Adv}_{\leq D} = \sup_{f: \mathrm{deg}(f)\leq D}\frac{\mathbb{E}_{H_1}[f]}{\sqrt{\mathbb{E}[f^2]}} = \sup_{(\alpha_S)_{S \in \mathcal S_{\leq D}}} \frac{\mathbb E_{H_1}\left[ \sum_{S \in \mathcal S_{\leq D}} \alpha_S Y^S\right]}{\sqrt{\mathbb E\left[\left[\sum_{S \in \mathcal S_{\leq D}} \alpha_S Y^S\right]^2\right]}}\enspace . 
\end{equation}
As explained in~\eqref{eq:intro:LD1}, $\mathrm{Adv}_{\leq D}\leq 1+o(1)$ for $D$ of the order of $\log(n)$ is a strong indication of the hardness of testing $\mathbb{P}$ against $\mathbb{P}_{H_1}$.

\medskip 

In order to control both $\mathrm{Adv}_{\leq D}$ and $\mathrm{Corr}_{\leq D}$, we introduce in the next section a basis of invariant polynomials. After having established its almost orthonormality under $\mathbb{P}$, tight bounds for $\mathrm{Adv}_{\leq D}$ and $\mathrm{Corr}_{\leq D}$ will easily follow.

\section{Almost orthonormal invariant polynomials}\label{s:nearort}

In this section, we construct a specific basis of node-permutation invariant polynomials. As the construction for the testing problem is slightly simpler than for the estimation problem, we start with a dedicated basis for bounding $\mathrm{Adv}_{\leq D}$.

\subsection{Basis for the complex testing problem}

First, we exploit the permutation invariance of the distribution $\mathbb{P}$ to reduce the space of polynomials. A function $f: \mathbb{R}^{n\times n}\mapsto \mathbb{R}$ is said to be invariant by permutations, if, for any matrix $Y$, and  any bijection $\sigma: [n]\mapsto [n]$, we have $f(Y)= f(Y_{\sigma})$ where $Y_{\sigma}= (Y_{\sigma(i),\sigma(j)})$. 

\begin{lemma}\label{lem:reduction:permutation}
Fix any any degree $D>0$. If both $\mathbb{P}$ and $\mathbb{P}_{H_1}$ are permutation invariant, then the minimum low-degree advantage  $\mathrm{Adv}_{\leq D}$ is achieved by a permutation invariant polynomial.
\end{lemma}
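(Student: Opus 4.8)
The plan is to exploit the fact that, since $\bbP$ is permutation invariant, the objective $\mathrm{Adv}_{\leq D}$ is unchanged when we symmetrize the candidate polynomial over the group of vertex relabellings, and symmetrization cannot hurt the value of the supremum.

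First I would fix an arbitrary degree-$D$ polynomial $f$ and, writing $\mathfrak{S}_n$ for the symmetric group on $[n]$, define its symmetrization
\begin{equation*}
\bar f(Y) = \frac{1}{n!}\sum_{\sigma\in\mathfrak{S}_n} f(Y_\sigma)\enspace .
\end{equation*}
Since $Y\mapsto Y_\sigma$ is a linear relabelling of the coordinates, each $f(Y_\sigma)$ is again a polynomial of degree at most $D$, hence so is $\bar f$; and by construction $\bar f$ is permutation invariant, because precomposing with a further permutation merely reindexes the sum. So $\bar f$ is an admissible competitor in~\eqref{def:Adv}.

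Next I would check that $\bar f$ does at least as well as $f$ in the ratio defining $\mathrm{Adv}_{\leq D}$. For the numerator, permutation invariance of $\bbP_{H_1}$ gives $\bbE_{H_1}[f(Y_\sigma)] = \bbE_{H_1}[f(Y)]$ for every $\sigma$, so $\bbE_{H_1}[\bar f] = \bbE_{H_1}[f]$. For the denominator, permutation invariance of $\bbP$ gives $\bbE[f(Y_\sigma)^2] = \bbE[f(Y)^2]$ for every $\sigma$; combining this with Jensen's inequality (convexity of $t\mapsto t^2$) applied to the average over $\sigma$,
\begin{equation*}
\bbE[\bar f^2] = \bbE\!\left[\Big(\tfrac{1}{n!}\sum_\sigma f(Y_\sigma)\Big)^2\right] \leq \frac{1}{n!}\sum_\sigma \bbE[f(Y_\sigma)^2] = \bbE[f^2]\enspace .
\end{equation*}
Hence $\bbE_{H_1}[\bar f]^2/\bbE[\bar f^2] \geq \bbE_{H_1}[f]^2/\bbE[f^2]$ (taking $f$ with $\bbE_{H_1}[f]\geq 0$ without loss of generality, since replacing $f$ by $-f$ does not change the ratio). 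Taking the supremum over all degree-$D$ polynomials $f$, and noting that invariant polynomials are in particular polynomials, shows that the supremum over \emph{all} $f$ equals the supremum over \emph{invariant} $f$, and that it is attained there provided the supremum is attained at all — which it is, since after restricting to invariant polynomials one is optimizing a continuous ratio over the (finite-dimensional) space of invariant degree-$D$ polynomials, and one may normalize $\bbE[f^2]=1$ and argue by compactness (or simply note, as in~\eqref{eq:intro:LD2}, that the optimum is given by a norm in a suitable coordinate system).

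The only subtlety — and the step I would be most careful about — is the degenerate case $\bbE[\bar f^2]=0$, i.e. when symmetrization annihilates $f$ in $L^2(\bbP)$; there the displayed ratio for $\bar f$ is a $0/0$. But in that case $\bbE_{H_1}[f]=\bbE_{H_1}[\bar f]=0$ as well (the numerator identity above does not require a nonzero denominator), so such an $f$ contributes $0$ to the supremum and can be discarded; the supremum is therefore unaffected, and is achieved by some invariant $f$ with $\bbE[\bar f^2]>0$. This completes the argument.
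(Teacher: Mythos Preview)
Your proposal is correct and follows essentially the same approach as the paper: symmetrize the candidate polynomial over $\mathfrak{S}_n$, use permutation invariance of $\bbP_{H_1}$ to keep the numerator fixed, and use Jensen together with permutation invariance of $\bbP$ to show the denominator can only shrink. You are in fact slightly more careful than the paper's own proof in treating the degenerate case $\bbE[\bar f^2]=0$ and in discussing why the supremum is attained.
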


This reduction was already done in~\cite{semerjian2024matrix,kunisky2024tensor,montanari2025equivalence}.
To introduce our basis of invariant polynomials, we consider simple undirected graphs $G= (V,E)$ where $V=\{v_1,\ldots v_{r}\}$ is the set of nodes and where $E$ is the set of edges. We write $\#\mathrm{CC}_G$ for its number of connected components $G$. 

\begin{definition}[Collection $\mathcal{G}_{\leq D}$]
Let $\mathcal{G}_{\leq D}$ be any maximum collection of graphs $G=(V,E)$ such that (i) $G$ does not contain any isolated node,  (ii) $|E|\leq D$, and (iii) no graphs in $\mathcal{G}_{\leq D}$ are isomorphic.
\end{definition}
In fact, $\mathcal{G}_{\leq D}$ corresponds to the collection of equivalence classes (with respect to isomorphism) of all graphs with at most $D$ edges, and without isolated nodes. Henceforth, we refer to $\mathcal{G}_{\leq D}$ as the collection of \emph{templates}. Consider a template $G= (V,E)\in \mathcal{G}_{\leq D}$. We define $\Pi_V$ as the set of injective mappings from $V\rightarrow [n]$. An element $\pi\in \Pi_V$ corresponds to a labeling of the generic nodes in $V$ by elements in $[n]$. For $\pi\in \Pi_V$, we define the polynomials
\begin{equation}\label{eq:definition:P_G}
P_{G,\pi}(Y)= \prod_{(u,v)\in E} Y_{\pi(u),\pi(v)} ; \quad \text{and}\quad P_G = \sum_{\pi\in \Pi_V} P_{G,\pi}\enspace .
\end{equation}
 For short, we sometimes write $P_G$ for $P_G(Y)$, when there is no ambiguity. For the invariant polynomials $P_G$, we say that $G$ is the {\bf template} (graph) that indexes the polynomial. The idea of indexing the invariant polynomials by templates is borrowed from~\cite{kunisky2024tensor,montanari2025equivalence}, although their basis are different to account for normal distributions. 

Let us denote $\mathcal{P}^{\mathrm{inv}}_{\leq D}$ the subspace of permutation  invariant polynomials $f$  with  degree at most $D$. The next lemma states that, as expected, any permutation invariant polynomial can be expressed using polynomials $P_G$ indexed by $G \in \mathcal G_{\leq D}$. 
\begin{lemma}\label{lem:invariant:graph}
Assume that  $D\leq n$.  For any $f$ in $\mathcal{P}^{\mathrm{inv}}_{\leq D}$, there exist unique numerical values $\alpha_{\emptyset}$ and $(\alpha_{G})_{G\in\mathcal{G}_{\leq D}}$ such that $f(Y)=\alpha_{\emptyset} + \sum_{G \in \mathcal G_{\leq D}} \alpha_G P_G(Y)$.
\end{lemma}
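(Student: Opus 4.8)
The plan is to exhibit an explicit spanning set indexed by templates, then check linear independence by a degree/monomial argument. First I would recall that any polynomial $f$ of degree at most $D$ in the variables $\{Y_{i,j}\}_{1\le i<j\le n}$ is a linear combination of monomials $Y^S=\prod_{(i,j)\in S}Y_{i,j}$ over $S\in\mathcal{S}_{\le D}$ (here we use that each $Y_{i,j}\in\{-q,1-q\}$ takes two values, so without loss of generality every monomial is multilinear, i.e. squarefree — this is why no higher powers appear and the indexing is by edge-\emph{sets}). Given a monomial $Y^S$, associate to it the graph $G_S$ on the vertices of $[n]$ that actually appear in some pair of $S$, with edge set $S$; by construction $G_S$ has no isolated node and $|E(G_S)|=|S|\le D$. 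If $f\in\mathcal{P}^{\mathrm{inv}}_{\le D}$, then applying the permutation-invariance $f(Y)=f(Y_\sigma)$ and averaging over all $\sigma\in\mathfrak{S}_n$ shows that the coefficient of $Y^S$ in $f$ depends only on the isomorphism class of $G_S$. Consequently $f=\alpha_\emptyset+\sum_{G\in\mathcal{G}_{\le D}}\alpha_G\sum_{S:\,G_S\cong G}Y^S$, and the inner sum is exactly $\frac{1}{|\mathrm{Aut}(G)|}P_G$ up to the harmless reparametrization $\alpha_G\mapsto\alpha_G/|\mathrm{Aut}(G)|$: indeed $P_G=\sum_{\pi\in\Pi_V}P_{G,\pi}$ ranges over all injective labelings, and the fibre over each set $S$ with $G_S\cong G$ has size $|\mathrm{Aut}(G)|$. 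This establishes the spanning (existence) part.

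For uniqueness I would argue that the family $\{1\}\cup\{P_G:G\in\mathcal{G}_{\le D}\}$ is linearly independent in the polynomial ring, which given the counting above is equivalent to: distinct isomorphism classes $G$ give $P_G$'s with disjoint monomial supports. This is immediate — a monomial $Y^S$ appears in $P_G$ if and only if $G_S\cong G$, and each $S$ determines its graph $G_S$ up to isomorphism uniquely, so the supports $\{S:\,G_S\cong G\}$ partition $\mathcal{S}_{\le D}\setminus\{\emptyset\}$ as $G$ ranges over $\mathcal{G}_{\le D}$. Hence a vanishing linear combination $\alpha_\emptyset+\sum_G\alpha_G P_G=0$ forces, by reading off the coefficient of any fixed monomial $Y^S$ with $G_S\cong G$, that $\alpha_G|\mathrm{Aut}(G)|=0$, and the constant term gives $\alpha_\emptyset=0$. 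The hypothesis $D\le n$ enters here to guarantee that for every template $G\in\mathcal{G}_{\le D}$ there actually exists an injective labeling $\pi:V(G)\hookrightarrow[n]$ — i.e. $|V(G)|\le 2D\le 2n$ is not quite enough, but a graph with $\le D$ edges and no isolated vertex has $\le D+1\le n+1$ vertices; more care shows $|V(G)|\le D$ unless $G$ has an isolated-edge component, so one checks $\Pi_V\neq\emptyset$ directly — so that no $P_G$ degenerates to zero and the correspondence between coefficients is genuinely a bijection.

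The main obstacle I anticipate is bookkeeping the automorphism factors cleanly: the reduction from the coefficient function "$S\mapsto$ coefficient of $Y^S$" to the basis $\{P_G\}$ requires identifying the multiplicity with which a given monomial arises in $P_G$, which is $|\mathrm{Aut}(G)|$ (the number of injective labelings $\pi$ whose image-edge-set is a fixed $S$), and then absorbing it into the definition of $\alpha_G$. One must also be slightly careful that the map $\pi\mapsto\{(\pi(u),\pi(v)):(u,v)\in E\}$ from $\Pi_V$ to edge-sets is well-defined and has all fibres of equal size $|\mathrm{Aut}(G)|$ — this is where "no isolated node in $G$" is used, ensuring $\pi$ is determined on $V$ by its action on the vertices touched by edges. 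None of this is deep, but stating it precisely is the part that needs attention; the existence and uniqueness then both fall out of the fact that $\{Y^S:S\in\mathcal{S}_{\le D}\}$ is a genuine basis of degree-$\le D$ multilinear polynomials and $\mathfrak{S}_n$ acts on it with orbits indexed exactly by $\{\emptyset\}\cup\mathcal{G}_{\le D}$.
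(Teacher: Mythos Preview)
Your main argument is correct and follows the same route as the paper: start from the monomial basis $\{Y^S:S\in\mathcal S_{\le D}\}$, use permutation-invariance to conclude that coefficients are constant on $\mathfrak S_n$-orbits of edge-sets, identify each orbit-sum with $P_G/|\mathrm{Aut}(G)|$, and obtain uniqueness from the disjointness of the monomial supports of the $P_G$'s. Your bookkeeping of the automorphism factor is in fact crisper than the paper's, which glosses over the $|\mathrm{Aut}(G)|$-to-one redundancy in the indexing $(G,\pi)\mapsto P_{G,\pi}$.

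There is, however, a genuine error in your side discussion of the hypothesis $D\le n$. You correctly start with $|V(G)|\le 2D$, then assert ``a graph with $\le D$ edges and no isolated vertex has $\le D{+}1$ vertices'' and ``$|V(G)|\le D$ unless $G$ has an isolated-edge component'' --- both are false. The bound $|V|\le |E|+1$ holds only for \emph{connected} graphs; a path on $D{+}1$ vertices has $D$ edges, no isolated-edge component, and $|V|=D{+}1>D$; a perfect matching on $D$ edges has $|V|=2D$. The sharp bound is $|V(G)|\le 2|E(G)|\le 2D$, and nothing better is available. So if $n/2<D\le n$ there do exist templates $G\in\mathcal G_{\le D}$ with $|V(G)|>n$, whence $\Pi_V=\emptyset$, $P_G\equiv 0$, and the uniqueness of $\alpha_G$ genuinely fails. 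The paper's proof does not address this either; strictly, the lemma needs $2D\le n$ rather than $D\le n$. This is harmless in the intended regime $D=O(\log n)$, so it is a cosmetic issue with the statement rather than a flaw in your core argument --- but your attempted justification should be dropped.
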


\paragraph{Correction of the monomials.} The family $(P_G)_{G \in \mathcal G_{\leq D}}$ is orthogonal  under the distribution with null signal ---\ namely $\Theta=0$. However, it is far from being the case when $\Theta \neq 0$, and we have to adjust the basis. 

The main ingredient is to tweak the polynomials $P_{G,\pi}$ involved in $P_G$. Consider a template $G\in \mathcal{G}_{\leq D}$ with $c$ connected components $(G_1,G_2,\ldots, G_c)$. Then, we define
\begin{equation}\label{eq:cross} 
\overline{P}_{G} := \sum_{\pi\in \Pi_V} \overline{P}_{G,\pi} \ ; \quad\text{with}\quad  \overline{P}_{G,\pi}:=\prod_{l=1}^c \left[P_{G_l,\pi} - \mathbb{E}[P_{G_l,\pi} ]\right] \enspace . 
\end{equation}
 Note that $\mathbb{E}[P_{G_l,\pi} ]$ does not depend on the choice of $\pi$. This correction centers the polynomial associated with each connected component of the template graph. 

 \begin{remark}
 This correction, already implemented in~\cite{montanari2025equivalence}, is instrumental to achieve near and almost orthogonality properties ---\ see the comment on their difference in the literature review, subsection~\ref{subsec:related literature}. To see that, let us consider the hidden subclique \HidSubI\ model. Given $G^{(1)}$, $G^{(2)}$,  write $\pi^{(1)}[G^{(1)}]\enspace ,  \pi^{(2)}[G^{(2)}]$ for the graphs $G^{(1)},G^{(2)}$ with labeled nodes $\pi^{(1)}(V^{(1)}), \pi^{(2)}(V^{(2)})$.  In the model \HidSubI,  $P_{G^{(1)},\pi^{(1)}}$ and $P_{G^{(2)},\pi^{(2)}}$ are independent as long as $\pi^{(1)}[G^{(1)}]$ and $\pi^{(2)}[G^{(2)}]$ do not intersect. Then, by definition of $\overline{P}_{G,\pi}$, one can check that $\mathbb E[\overline{P}_{G^{(1)},\pi^{(1)}} \overline{P}_{G^{(2)},\pi^{(2)}}]=0$ as soon as one connected component of $\pi^{(1)}[G^{(1)}]$ does not intersect $\pi^{(2)}[G^{(2)}]$ or vice versa. 
 As a consequence, the correlation between $\overline{P}_{G^{(1)}}$ and $\overline{P}_{G^{(2)}}$ will be quite small. 
 \end{remark}

\paragraph{Renormalisation of the polynomials.} It remains to normalize the polynomials $\overline{P}_{G}$. For this purpose, we need to compute the order of magnitude of $\mathbb{E}\left[\overline{P}_{G}^2\right]= \sum_{\pi^{(1)},\pi^{(2)}}\mathbb{E}\left[\overline{P}_{G,\pi^{(1)}}\overline{P}_{G,\pi^{(2)}}\right]$. Thanks to the previous correction, most terms $\mathbb{E}\left[\overline{P}_{{G},\pi^{(1)}}\overline{P}_{G,\pi^{(2)}}\right]$ are small, and the dominant term is achieved for $\pi^{(1)}$ and $\pi^{(2)}$ such that $\pi^{(1)}[G]= \pi^{(2)}[G]$. There are $|\Pi_V||\mathrm{Aut}(G)|$ such couples $(\pi^{(1)},\pi^{(2)})$, where $\mathrm{Aut}(G)$ stands here for the automorphism group of $G$. All these $|\Pi_V||\mathrm{Aut}(G)|$ terms are identical. Also, it turns out that 
such terms $\mathbb{E}\left[\overline{P}^2_{G,\pi}\right]$ are of the order of $\mathbb{E}\left[P^2_{G,\pi}\right]$. If the matrix $\Theta$ had been equal to zero, we would readily get  $\mathbb{E}\left[P^2_{G,\pi}\right]= \overline{q}^{|E|}$ where  $\overline{q} :=  q(1-q)$. This approximation turns out to be sufficient for our purpose. In light of the above discussion,
for any $G\in \mathcal{G}_{\leq D}$, we define the variance proxy for $P_{G}$ by  
\begin{equation}~\label{eqn:variance of graph}
\mathbb V(G) = \frac{n!}{(n-|V|)!} |\mathrm{Aut}(G)|  \overline{q}^{|E|}\enspace , \quad\text{with}\quad \overline{q} = q(1-q)\enspace .
\end{equation}
Finally, we define the normalized polynomial
\begin{equation}\label{eq:definition:P*G}
\Psi_{G} = \frac{\overline{P}_G}{\sqrt{\mathbb V(G)}}\enspace .
\end{equation}
Since $(1,(\Psi_G)_{G\in \mathcal{G}_{\leq D}})$ span the same space as $(1, (P_G)_{G\in \mathcal{G}_{\leq D}})$, we deduce from Lemmas~\ref{lem:reduction:permutation} and~\ref{lem:invariant:graph}, the following result.
\begin{lemma}\label{lem:reduction:degree}
If both $\mathbb{P}$ and $\mathbb{P}_{H_1}$ are permutation invariant, then we have 
\begin{align}\label{eq:Adv}
    \mathrm{Adv}_{\leq D} &= \sup_{(\alpha_{\emptyset}, (\alpha_G)_{G\in \mathcal G_{\leq D}})} \frac{\mathbb E_{H_1}\left[ \alpha_{\emptyset}+ \sum_{G \in \mathcal G_{\leq D}} \alpha_G \Psi_{G}\right]}{\sqrt{\mathbb E\left[\left[\alpha_{\emptyset}+ \sum_{G\in \mathcal G_{\leq D}} \alpha_G \Psi_{G}\right]^2\right]}}\enspace . 
\end{align}
\end{lemma}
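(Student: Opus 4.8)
The plan is to derive \eqref{eq:Adv} directly by chaining together the three lemmas already stated. The starting point is the definition \eqref{def:Adv} of $\mathrm{Adv}_{\leq D}$ as a supremum over all polynomials $f$ of degree at most $D$. By Lemma~\ref{lem:reduction:permutation}, since both $\mathbb P$ and $\mathbb P_{H_1}$ are permutation invariant, the supremum is attained (and hence unchanged) when restricted to $f \in \mathcal P^{\mathrm{inv}}_{\leq D}$, the space of permutation-invariant polynomials of degree at most $D$. The one mild point to check is that restricting the feasible set of a supremum to a subset containing a maximizer leaves the value unchanged; this is immediate.

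Next I would apply Lemma~\ref{lem:invariant:graph} (valid since we assume $D \le n$): every $f \in \mathcal P^{\mathrm{inv}}_{\leq D}$ can be written uniquely as $f = \alpha_\emptyset + \sum_{G \in \mathcal G_{\leq D}} \alpha_G P_G$. Thus the supremum over $f \in \mathcal P^{\mathrm{inv}}_{\leq D}$ of the Rayleigh-type quotient $\mathbb E_{H_1}[f]/\sqrt{\mathbb E[f^2]}$ becomes a supremum over the coefficient vectors $(\alpha_\emptyset, (\alpha_G)_{G \in \mathcal G_{\leq D}}) \in \mathbb R \times \mathbb R^{\mathcal G_{\leq D}}$. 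Here one should note the quotient is invariant under nonzero rescaling of the coefficient vector, so the supremum is well defined (finite, since $\mathcal G_{\leq D}$ is a finite index set for $D \le n$), and is genuinely a finite-dimensional optimization.

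Finally I would pass from the basis $(1, (P_G)_G)$ to the basis $(1, (\Psi_G)_G)$. Since $\Psi_G = \overline P_G / \sqrt{\mathbb V(G)}$ and, by construction \eqref{eq:cross}, $\overline P_G$ differs from $P_G$ only through the subtraction of expectations of products over connected components, the family $(1, (\Psi_G)_{G \in \mathcal G_{\leq D}})$ spans exactly the same linear space as $(1, (P_G)_{G \in \mathcal G_{\leq D}})$, namely $\mathcal P^{\mathrm{inv}}_{\leq D}$; this is the remark made just before the lemma statement. A change of basis is a linear bijection on the coefficient vectors, and the supremum of the quotient over a linear space does not depend on the chosen basis. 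Renaming the coefficients in the $\Psi$-basis as $(\alpha_\emptyset, (\alpha_G)_G)$ yields exactly \eqref{eq:Adv}.

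There is essentially no hard step here: the content of the statement is entirely carried by Lemmas~\ref{lem:reduction:permutation} and~\ref{lem:invariant:graph}, and the only thing that remains is the elementary observation that the value of a homogeneous-degree-zero quotient over a vector space is basis-independent. If anything deserves a word of care, it is making sure the span claim is stated correctly — i.e.\ that the triangular relationship between $\{P_{G_l,\pi}\}$ and $\{\overline P_{G_l,\pi}\}$ (products minus lower-order expectation corrections, where "lower-order" refers to fewer connected components and hence strictly fewer edges per factor absorbed into constants) indeed produces an invertible change of basis on $\mathcal P^{\mathrm{inv}}_{\leq D}$ rather than merely a spanning set; but this follows because the correction is unipotent with respect to the natural filtration by number of edges, so it is invertible.
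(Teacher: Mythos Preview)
Your proposal is correct and follows exactly the approach the paper indicates: the lemma is stated as an immediate consequence of Lemmas~\ref{lem:reduction:permutation} and~\ref{lem:invariant:graph} together with the remark (made just before the lemma) that $(1,(\Psi_G)_{G\in\mathcal G_{\leq D}})$ spans the same space as $(1,(P_G)_{G\in\mathcal G_{\leq D}})$. Your added justification of invertibility via the unipotent (edge-filtration) structure of the correction~\eqref{eq:cross} is a helpful elaboration of that span claim, but otherwise the argument is identical.
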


Our main result result is given in the next theorem. It states that, for all our six models, the basis $(1,(\Psi_G)_{G \in \mathcal G_{\leq D}})$ is almost orthonormal as long as $\lambda$ is not too large. 

\begin{theorem}\label{thm:isorefo}
    There exist positive numerical constants $c_0$ and $c$, such that the following holds for all  $D \geq 2$ and all six models  \HidSubI, \StoBloI, \ToeSerI, \HidSubP, \StoBloP, and \ToeSerP. If we assume that 
\begin{equation} \left(\frac{k}{n} \right) \lor \left(\frac{\lambda k}{\sqrt{n q}} \right) \lor \left(\frac{\lambda}{q}\right)   \leq D^{-c_0}\enspace ,\end{equation}
     then, for any vector $\alpha= (\alpha_{\emptyset},(\alpha_G)_{G \in \mathcal G_{\leq D}})$ in $\mathbb{R}^{|\mathcal{G}_{\leq D}|+1}$, we have 
    \begin{equation}(1-cD^{-2})\|\alpha\|^2_2 \leq \mathbb E \left[\left(\alpha_{\emptyset}+ \sum_{G \in \mathcal G_{\leq D}} \alpha_G\Psi_G\right)^2\right] \leq (1+cD^{-2})\|\alpha\|^2_2\enspace .\end{equation}
\end{theorem}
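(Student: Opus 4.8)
The statement is equivalent to showing that the Gram matrix $M$ of the family $(1,(\Psi_G)_{G\in\mathcal G_{\leq D}})$ under $\bbP$ satisfies $\|M-\mathrm{Id}\|_{\mathrm{op}}\leq cD^{-2}$. Since $\mathbb E[\Psi_G]=0$ by the centering in~\eqref{eq:cross} (the constant $1$ is orthogonal to every $\Psi_G$), it suffices to control the block indexed by $\mathcal G_{\leq D}$, i.e. $M_{G,G'}=\mathbb E[\Psi_G\Psi_{G'}]$. The plan is: (i) show the diagonal entries $M_{G,G}=\mathbb E[\Psi_G^2]$ are $1+O(D^{-2})$; (ii) bound the off-diagonal mass via the Gershgorin/Schur test, i.e. show $\sum_{G'\neq G}|M_{G,G'}|\leq cD^{-2}$ for every fixed $G$. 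Both reduce, through the expansions $\overline P_G=\sum_\pi\overline P_{G,\pi}$ and $\overline P_{G'}=\sum_{\pi'}\overline P_{G',\pi'}$, to estimating the elementary covariances $\mathbb E[\overline P_{G,\pi}\overline P_{G',\pi'}]$ and counting the labelings $(\pi,\pi')$ that contribute.

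The key combinatorial input, already flagged in the remark after~\eqref{eq:cross}, is that $\mathbb E[\overline P_{G,\pi}\overline P_{G',\pi'}]=0$ unless the labeled graphs $G[\pi]$ and $G'[\pi']$ \emph{overlap on every connected component of each} — more precisely, unless each connected component of $G[\pi]$ shares an edge (or at least a vertex carrying a nonzero $\Theta$ contribution) with $G'[\pi']$ and vice versa. This is where the product-of-centered-blocks structure pays off: any component that sits in isolation factors out and its centered expectation vanishes. So for the diagonal term, the surviving pairs $(\pi,\pi')$ are dominated by those with $G[\pi]=G[\pi']$, of which there are exactly $\frac{n!}{(n-|V|)!}|\mathrm{Aut}(G)|$, each contributing $\mathbb E[\overline P_{G,\pi}^2]$, which I would show equals $\bar q^{|E|}(1+O(D^{-1}\cdot(\lambda/q\vee\lambda k/\sqrt{nq}\vee k/n)))$ by expanding $\mathbb E[Y_{ij}^2\mid z]=\bar q+O(\lambda)$ and $\mathbb E[Y_{ij}\mid z]=\Theta_{z_iz_j}$ and bounding the $\Theta$-corrections; dividing by $\mathbb V(G)$ gives $M_{G,G}=1+O(D^{-2})$ under the hypothesis on $\lambda,k$. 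One must also check the "partial overlap" pairs (where $G[\pi']$ overlaps $G[\pi]$ but is not equal to it) contribute a lower-order total, again by a counting-versus-decay tradeoff: fewer free labels buys a factor $n^{-1}$ per identified vertex, while each edge forced to be "extra" or each use of a $\Theta$-factor costs $\lambda/\sqrt{\bar q}$ or $k/n$ — these are exactly the three quantities in the hypothesis.

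For the off-diagonal sum $\sum_{G'\neq G}|M_{G,G'}|$ I would fix $G$ and a labeling $\pi$ of $G$, and sum over all templates $G'$ and labelings $\pi'$ with $G[\pi]\cap G'[\pi']$ nonempty on every component of both. I would organize this by the "merged" labeled graph $H=G[\pi]\cup G'[\pi']$ (a graph on $\leq 2D$ edges): each such $H$ is reached by a bounded number of $(G',\pi')$, the covariance $|\mathbb E[\overline P_{G,\pi}\overline P_{G',\pi'}]|$ is at most $\bar q^{|E(H)|}$ times $\Theta$-corrections, and crucially $|\Pi_{V'}|/\sqrt{\mathbb V(G)\mathbb V(G')}$ contributes $n^{-(\text{\# vertices of }G'\text{ not shared with }G)/1}$-type gains precisely when $G'$ extends beyond $G$. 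The overlap constraint forces the number of "new" vertices of $G'$ to be strictly smaller than its number of "new" edges in at least one component (since every new component must attach to $G[\pi]$), which is what makes the geometric series in $1/n$, $\lambda/q$, $\lambda k/\sqrt{nq}$ converge with sum $O(D^{-2})$ after accounting for the at most $D$-fold choices at each step; a polynomial-in-$D$ factor $D^{c_0}$ in the exponent of the hypothesis absorbs the crude "number of graphs with $\leq 2D$ edges" bound. This overlap-counting step — keeping the bookkeeping of shared vs.\ new vertices and edges tight enough that each of the three small parameters appears with a positive power whenever a term could otherwise be large — is the main obstacle, and is presumably where Sections~\ref{s:core} and~\ref{sec:main_orthonormalite} do the real work; the model-specific part enters only through uniform bounds on $\mathbb E[\prod\text{(centered factors)}\mid z]$, which reduce to bounding how often the latent labels $z$ land in the planted structure — a $\mathrm{Bin}$-type tail under \condinvariance and a hypergeometric-type tail under \condinvarianceWR.
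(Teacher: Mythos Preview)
Your overall architecture matches the paper: reduce to $\|\Gamma-\mathrm{Id}\|_{\mathrm{op}}$, use $\mathbb E[\Psi_G]=0$ to isolate the $\mathcal G_{\leq D}$-block, show the diagonal is $1+o(1)$, and control the off-diagonal by a row-sum bound. The key cancellation from centering~\eqref{eq:cross}, namely that $\mathbb E[\overline P_{G,\pi}\overline P_{G',\pi'}]=0$ unless every connected component of each graph contains a matched vertex, is also the paper's starting point (Proposition~\ref{eq:boundbar}). But two concrete steps in your sketch would not go through.

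\textbf{Automorphism bookkeeping.} The claim that ``each merged $H$ is reached by a bounded number of $(G',\pi')$'' is false: for a fixed labeled graph $L\cong G'$ there are $|\mathrm{Aut}(G')|$ labelings $\pi'$ with $G'[\pi']=L$, and $|\mathrm{Aut}(G')|$ can be as large as $D!\,2^D$ (take $G'$ to be $D$ disjoint edges). After taking the square root, $\mathbb V(G')$ only contributes $\sqrt{|\mathrm{Aut}(G')|}$ to the denominator, so a naive count leaves an uncontrolled $\sqrt{|\mathrm{Aut}(G')|}$ factor that no polynomial-in-$D$ cushion will absorb. The paper's fix is to bound each $|\Gamma_{G,G'}|$ individually in terms of the edit distance~\eqref{eq:definition:edit:distance}, and for this it groups matchings by their \emph{shadow} $(U^{(1)},U^{(2)},\mathbf M_{\mathrm{SM}})$ and proves (Lemma~\ref{lem:shadow}) that the number of matchings with a given shadow is at most $\min(|\mathrm{Aut}(G^{(1)})|,|\mathrm{Aut}(G^{(2)})|)$; this exactly cancels $\sqrt{|\mathrm{Aut}(G^{(1)})||\mathrm{Aut}(G^{(2)})|}$ and leaves a sum over only polynomially many shadows. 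This cancellation is the nontrivial counting step your outline skips.

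\textbf{The permutation-sampling case is not a tail bound.} Under \condinvarianceWR the $z_i$'s form a uniform permutation, so $\overline P_{G,\pi}$ and $\overline P_{G',\pi'}$ with disjoint labeled vertex sets are \emph{not} independent, and the orthogonality $\mathbb E[\overline P_{G,\pi}\overline P_{G',\pi'}]=0$ for $\mathbf M\notin\mathcal M^\star$ \emph{fails outright}. Treating this as ``a hypergeometric-type tail'' misses the issue: the paper needs an additional structural condition (\condmomentWR) and a separate argument (Lemma~\ref{lem:gen-2}) that couples to an auxiliary i.i.d.\ sampling, restricts to a connectivity event $\cA$ on the latent labels, and pays a factor $(D^c/\sqrt n)^{\#\mathrm{CC}_{\mathrm{pure}}}$ per ``pure'' unmatched component. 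This is genuinely extra work for (HS-P), (SBM-P), (TS-P), not a minor variation of the independent case.
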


In fact, this theorem is a straightforward consequence of the more general results (Theorems~\ref{thm:iso} and~\ref{thm:iso-WR} and Proposition~\ref{prp:model:conditions}) stated in Section~\ref{sec:main_orthonormalite}. 
Note that under the assumptions of the theorem, we readily get the  following upper bound for the advantage
\begin{equation}\label{eq:Adv:ortho}
 \mathrm{Adv}^2_{\leq D} \leq  \left(1 - cD^{-2}\right)^{-1}\left[1 + \sum_{G \in \mathcal G_{\leq D}}\mathbb{E}_{H_1}[\Psi_{G}]^2 \right]\enspace . 
\end{equation}
So,  we only need to bound the first moment of the basis elements under the alternative hypothesis to control the advantage, and establish a LD lower bound. This is done in the next section.

To the best of our knowledge, this is the first time that for general structured distributions, and under mild conditions on the parameters $(k,\lambda,q)$, such an almost orthonormal basis is constructed, although~\cite{montanari2025equivalence} established a similar result in a BBP-type asymptotic, for  the rank one matrix estimation problem.

 The conditions on Theorem~\ref{thm:isorefo} are indeed rather mild, except the last one:
\begin{itemize}
    \item The first condition  $\frac{k}{n}  \leq D^{-c_0}$ does not exclude interesting regimes. Indeed, when $k$ is of the order of $n$, no significant statistical-computational gaps arise in our models.    
    \item As further discussed in the next section, when the condition $\lambda k/\sqrt{n q}   \leq D^{-c_0}$ is not fulfilled (up to a polynomial in $D$), in most interesting regimes, it is possible to reconstruct the signal matrix $(\Theta_{z_iz_j})$ with a LD polynomial estimator, so that the problem is computationally solvable.
    \item The last condition $\lambda \leq qD^{-c_0}$ is more restrictive and is in fact not intrinsic ---\ it entails that we only deal with the regimes where the two probabilities $q$ and  $p=q+\lambda$ are of same order. Relaxing this condition to $\lambda = o(\sqrt{q})$ is technical but doable at least for the classical instances of the  hidden subclique model \HidSubI\ and stochastic block model \StoBloI. However, this is beyond the scope of this paper, as our aim is to establish simple yet versatile results. In the regime where $\lambda \gg \sqrt{q}$, then the normalization $\mathbb V(G)$ defined in~\eqref{eqn:variance of graph} ceased to be a good approximation of the second moment $\mathbb{E}[P^2_{G,\pi}]$ and one has to resort to a different normalization --see the subsequent work~\cite{carpentier2025phase}.  
\end{itemize}

\subsection{Basis for the estimation problem}

We now turn to the basis for the estimation  of $x= \mathbf 1\{\Theta_{z_1,z_2} \neq 0\}$. If the distribution $\mathbb{P}$ is invariant under permutation, then the distribution of $(x,Y)$ is invariant under permutation of the node $\ac{3,\ldots,n}$, as the two first nodes play a specific role. As a consequence, we have to slightly adapt the definition of the $\Psi_G$'s.

Consider a template $G= (V,E)$ with $V=\{v_1,v_2,\ldots, v_r\}$, without isolated nodes (except possibly $v_1,v_2$),  with $|V| \geq 2$ and at least one edge. Let  $\Pi_V^{(1,2)}$ be the set of injective mappings $\pi$ from $V\rightarrow [n]$ such that $\pi(v_1)=1, \pi(v_2)=2$. We then define the polynomial
\begin{equation*}
P^{(1,2)}_G = \sum_{\pi\in \Pi_V^{(1,2)}} P_{G,\pi}\enspace .
\end{equation*}
For short, we sometimes write $P^{(1,2)}_G$ for $P^{(1,2)}_G(Y)$ when there is no ambiguity.

Let $G^{(1)}=(V^{(1)},E^{(1)})$ and $G^{(2)}= (V^{(2)},E^{(2)})$ be two templates. We say $G^{(1)}$ and $G^{(2)}$ are equivalent if there exist a bijection $\sigma: V^{(1)} \mapsto V^{(2)}$ such that $\sigma(v^{(1)}_1)= v_1^{(2)}$, $\sigma(v^{(1)}_2)= v_2^{(2)}$, and $\sigma$ preserves the edges. In other words, the graphs $G^{(1)}$ and $G^{(2)}$ are isomorphic with the additional constraint that the corresponding bijection maps $v^{(1)}_1$ to $v^{(2)}_1$ and $v^{(1)}_2$ to $v^{(2)}_2$.  Then, we define $\mathcal{G}_{\leq D}^{(1,2)}$ as a maximum collection of non-equivalent templates with at most $D$ edges and at least one edge. 
Consider a template $G\in \mathcal{G}^{(1,2)}_{\leq D}$ with $c$ non-trivial connected components $(G_1,G_2,\ldots, G_c)$. We define
\begin{equation*} 
\overline{P}^{(1,2)}_{G} := \sum_{\pi\in \Pi^{(1,2)}_V} \overline{P}^{(1,2)}_{G,\pi} \ ; \quad\text{with}\quad  \overline{P}^{(1,2)}_{G,\pi}:=\prod_{l=1}^c \left[P_{G_l,\pi} - \mathbb{E}[P_{G_l,\pi} ]\right] \enspace . 
\end{equation*}
Recall that  $\overline{q}=  q(1-q)$. Define the variance proxy 
\begin{equation}~\label{eqn:variance of graph:estimation}
\mathbb V^{(1,2)}(G) = \frac{(n-2)!}{(n-|V|)!} |\mathrm{Aut}^{(1,2)}(G)|  \overline{q}^{|E|}\enspace ,
\end{equation}
where $\mathrm{Aut}^{(1,2)}(G)$ is the set of automorphisms of the graph $G$ that let  $v_1$ and $v_2$ fixed. Finally, we define the polynomials 
\begin{equation}\Psi^{(1,2)}_{G} = \frac{\overline{P}^{(1,2)}_G}{\sqrt{\mathbb V^{(1,2)}(G)}}\enspace .\end{equation}

The following result is the counterpart of Lemma~\ref{lem:reduction:degree} for the estimation problem. 
\begin{lemma}\label{lem:reduction:degree2}
As long as $\mathbb{P}$ is permutation invariant, we have 
\begin{align*}
    \mathrm{Corr}_{\leq D} &= \sup_{\alpha_{\emptyset},(\alpha_G)_{G\in \mathcal G^{(1,2)}_{\leq D}}}\ \frac{\mathbb E\left[ x(\alpha_{\emptyset}+\sum_{G \in \mathcal G^{(1,2)}_{\leq D}} \alpha_G \Psi^{(1,2)}_{G})\right]}{\sqrt{\mathbb E\left[\left[\alpha_{\emptyset}+ \sum_{G\in \mathcal G^{(1,2)}_{\leq D}} \alpha_G \Psi^{(1,2)}_{G}\right]^2\right]}}\enspace .
\end{align*}
\end{lemma}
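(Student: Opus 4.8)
The plan is to mirror the argument that yields Lemma~\ref{lem:reduction:degree}, with the bookkeeping adjusted to reflect that the pair of distinguished nodes $\{1,2\}$ is fixed rather than free. First I would record the analogue of Lemma~\ref{lem:reduction:permutation}: since the joint law of $(x,Y)$ is invariant under any permutation $\sigma$ of $[n]$ fixing $1$ and $2$ (because $x=\1\{\Theta_{z_1,z_2}\neq 0\}$ only involves the first two latent labels and $\mathbb P$ is permutation invariant), the supremum defining $\mathrm{Corr}_{\leq D}$ in~\eqref{eq:definition_corr} is attained at a polynomial $f$ that is invariant under this subgroup. Indeed, given any $f$ of degree $\leq D$, replace it by its average $\bar f = |\{3,\dots,n\}|!^{-1}\sum_\sigma f\circ \sigma$ over that subgroup; $\bar f$ has degree $\leq D$, $\mathbb E[\bar f\, x] = \mathbb E[f x]$ by invariance of the law of $(x,Y)$, and $\mathbb E[\bar f^2]\leq \mathbb E[f^2]$ by Jensen/convexity, so the ratio does not decrease.

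Next I would establish the structural statement that every degree-$\leq D$ polynomial invariant under permutations of $\{3,\dots,n\}$ is a unique linear combination of $1$ and the $P^{(1,2)}_G$ with $G\in\mathcal G^{(1,2)}_{\leq D}$ — this is the $\{1,2\}$-rooted counterpart of Lemma~\ref{lem:invariant:graph}. The proof is the same: expand $f$ in the monomial basis $\{\phi_S(Y) = \prod_{(i,j)\in S} Y_{ij}\}$ indexed by edge sets $S$ with $|S|\leq D$; grouping the orbits of such monomials under the $\{3,\dots,n\}$-permutation action, each orbit is exactly the set of labelings $\pi\in\Pi^{(1,2)}_V$ of some rooted template $G$ (obtained by keeping the isolated vertices $v_1,v_2$ as the roots and discarding other isolated vertices), so invariance forces the coefficient of $\phi_S$ to depend only on the equivalence class of $G$, giving $f = \alpha_\emptyset + \sum_{G} \alpha_G P^{(1,2)}_G$; uniqueness follows from linear independence of the monomials, hence of the orbit sums. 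One technical point to check is that the constraint $D\leq n$ (or a similar mild bound) guarantees that distinct templates in $\mathcal G^{(1,2)}_{\leq D}$ give distinct, hence independent, orbit sums — each template with $|V|\leq D+2$ vertices admits an injective labeling into $[n]$.

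Finally I would close the argument by a change of basis. Since for each $G$ the polynomial $\Psi^{(1,2)}_G = \overline P^{(1,2)}_G/\sqrt{\mathbb V^{(1,2)}(G)}$ differs from a nonzero scalar multiple of $P^{(1,2)}_G$ only by a linear combination of $P^{(1,2)}_{G'}$ for templates $G'$ obtained by deleting connected components of $G$ (these have strictly fewer edges, the correction $P_{G_l,\pi}-\mathbb E[P_{G_l,\pi}]$ producing lower-order terms when expanded, and $\Pi^{(1,2)}_V$ restricted appropriately), the family $\big(1,(\Psi^{(1,2)}_G)_{G\in\mathcal G^{(1,2)}_{\leq D}}\big)$ spans the same linear space as $\big(1,(P^{(1,2)}_G)_{G\in\mathcal G^{(1,2)}_{\leq D}}\big)$, namely the space of degree-$\leq D$ polynomials invariant under permutations of $\{3,\dots,n\}$. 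Combining this with the first two steps, the supremum in~\eqref{eq:definition_corr} can be restricted to $f = \alpha_\emptyset + \sum_G \alpha_G \Psi^{(1,2)}_G$, which is exactly the claimed identity.

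I expect the main obstacle to be the careful verification that the span of the corrected polynomials $\Psi^{(1,2)}_G$ equals the span of the $P^{(1,2)}_G$ — i.e.\ that the triangular-with-respect-to-edge-count change of basis from $\{P^{(1,2)}_G\}$ to $\{\overline P^{(1,2)}_G\}$ is invertible. One must confirm that expanding the product $\prod_{l=1}^c[P_{G_l,\pi}-\mathbb E[P_{G_l,\pi}]]$ and summing over $\pi\in\Pi^{(1,2)}_V$ produces, besides the leading term $P^{(1,2)}_G$, only multiples of $P^{(1,2)}_{G'}$ for rooted templates $G'$ that are (unions of) some of the $G_l$ together with the roots, all with $\leq D$ edges; ordering templates by number of edges then makes the transition matrix unitriangular (up to the nonzero diagonal scalings $1/\sqrt{\mathbb V^{(1,2)}(G)}$), hence invertible. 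This is routine but requires the same kind of graph-combinatorial care already used for $\overline P_G$ in the testing case, now with the extra constraint that two vertices are pinned.
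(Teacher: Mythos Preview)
Your proposal is correct and follows essentially the same three-step route as the paper's proof: reduce to polynomials invariant under permutations of $\{3,\dots,n\}$ via averaging and Jensen, identify this invariant subspace with the span of $(1,(P^{(1,2)}_G)_G)$ by an orbit argument, and then change basis to $(1,(\Psi^{(1,2)}_G)_G)$. The paper's proof is terser and simply asserts the last span equality, whereas you spell out the triangularity-with-respect-to-edge-count argument that justifies it; this extra detail is sound and indeed the natural way to verify the claim.
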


The following result is the analogue of Theorem~\ref{thm:isorefo} for the basis $(1,(\Psi^{(1,2)}_G)_{G\in \mathcal{G}^{(1,2)}_{\leq D}})$.
\begin{theorem}\label{thm:isorefo2}
    There exist positive numerical constants $c_0$ and $c$, such that the following holds for any  $D \geq 2$ and all six models  \HidSubI, \StoBloI, \ToeSerI, \HidSubP, \StoBloP, and \ToeSerP. If we assume that 
\begin{equation}\left(\frac{k}{n} \right) \lor \left(\frac{\lambda k}{\sqrt{n q}} \right) \lor \left(\frac{\lambda}{q}\right)   \leq D^{-c_0}\enspace ,\end{equation}
then, for any $\alpha= (\alpha_\emptyset, (\alpha_G)_{G \in \mathcal G^{(1,2)}_{\leq D}})$, we have 
    \begin{equation}(1-cD^{-2})\|\alpha\|^2_2 \leq \mathbb E \left[\left(\alpha_\emptyset+\sum_{G \in \mathcal G^{(1,2)}_{\leq D}} \alpha_G \Psi^{(1,2)}_G\right)^2\right] \leq (1+cD^{-2})\|\alpha\|^2_2\enspace .\end{equation}
\end{theorem}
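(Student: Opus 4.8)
The plan is to reduce Theorem~\ref{thm:isorefo2} to Theorem~\ref{thm:isorefo} by observing that the estimation basis $(1,(\Psi^{(1,2)}_G)_{G\in\mathcal{G}^{(1,2)}_{\leq D}})$ is structurally the same object as the testing basis $(1,(\Psi_G)_{G\in\mathcal{G}_{\leq D}})$, only with the two distinguished vertices $v_1,v_2$ playing the role of fixed labels rather than free ones. Concretely, I would first set up the Gram matrix whose entries are $\mathbb{E}[\Psi^{(1,2)}_{G}\Psi^{(1,2)}_{G'}]$ and $\mathbb{E}[\Psi^{(1,2)}_{G}]$, and argue that near-orthonormality is equivalent to showing this Gram matrix is $(1\pm cD^{-2})$-close to the identity in operator norm. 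Expanding $\Psi^{(1,2)}_G$ into its constituent $\overline{P}^{(1,2)}_{G,\pi}$ with $\pi\in\Pi^{(1,2)}_V$, the cross-moment $\mathbb{E}[\overline{P}^{(1,2)}_{G,\pi^{(1)}}\overline{P}^{(1,2)}_{G',\pi^{(2)}}]$ vanishes whenever some connected component of one labeled graph fails to intersect the other (by the same independence-plus-centering argument sketched in the Remark following~\eqref{eq:cross}), so only overlapping configurations contribute. The diagonal term is dominated by the $|\Pi^{(1,2)}_V|\,|\mathrm{Aut}^{(1,2)}(G)|$ pairs $(\pi^{(1)},\pi^{(2)})$ with $G[\pi^{(1)}]=G[\pi^{(2)}]$, which is exactly what the variance proxy $\mathbb{V}^{(1,2)}(G)$ in~\eqref{eqn:variance of graph:estimation} is designed to cancel, giving a leading $1$; the remaining overlapping-but-not-equal configurations must be shown to contribute only $O(D^{-2})$ after summing over $G,G'$.

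The cleanest route is to deduce the bound from the general Theorems~\ref{thm:iso} and~\ref{thm:iso-WR} together with Proposition~\ref{prp:model:conditions} stated in Section~\ref{sec:main_orthonormalite} (which, per the text, is where Theorem~\ref{thm:isorefo} itself is proved). So the real content of the proof of Theorem~\ref{thm:isorefo2} is to check that those general conditions still hold once two vertices are frozen. I would verify this point by point: (i) the combinatorial enumeration of graphs with $\leq D$ edges and the bound on the number of templates is unaffected by fixing $v_1,v_2$ (it only shrinks the count); (ii) the key model-dependent moment estimates --- essentially bounds on $\mathbb{E}[\overline{P}_{G_l,\pi}^2]$ and on the ``intersection'' cross-moments in terms of $\bar q^{|E|}$, $\lambda/q$, and $\lambda k/\sqrt{nq}$ --- go through verbatim because they are local to the edges and connected components of the template and do not see whether a label is free or pinned; (iii) the only genuine change is bookkeeping in the counting factor: instead of $n!/(n-|V|)!$ labelings we have $(n-2)!/(n-|V|)!$, and $\mathrm{Aut}(G)$ is replaced by $\mathrm{Aut}^{(1,2)}(G)$, which is precisely the compensation built into $\mathbb{V}^{(1,2)}$. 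One subtlety worth flagging explicitly: templates in $\mathcal{G}^{(1,2)}_{\leq D}$ may contain $v_1$ or $v_2$ as isolated nodes, so the connected-component decomposition used in~\eqref{eq:cross} is taken over the \emph{non-trivial} components only; I would make sure the centering and the independence argument are stated for that decomposition and that an isolated $v_1$ or $v_2$ simply contributes a factor $1$ (its incident product over the empty edge set).

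The main obstacle I anticipate is controlling the off-diagonal mass, i.e. bounding $\sum_{G\neq G'}\mathbb{E}[\Psi^{(1,2)}_G\Psi^{(1,2)}_{G'}]^2$ and the ``within-$G$'' subleading terms uniformly over all templates. This is where the three hypotheses $k/n\leq D^{-c_0}$, $\lambda k/\sqrt{nq}\leq D^{-c_0}$, $\lambda/q\leq D^{-c_0}$ are used: each ``excess overlap'' of two labeled copies of templates --- a shared vertex beyond what is forced, or an extra non-trivial component that must be glued --- should cost a factor of one of these three small quantities (a shared vertex costs roughly $1/n$ times a combinatorial factor absorbed by $k/n$ or by the Aut normalization; an edge landing in the planted block costs $\lambda/q$; and a component forced to overlap the planted structure contributes the $\lambda k/\sqrt{nq}$ factor), so that a geometric-series argument over the number of edges $\leq D$ yields the $O(D^{-2})$ slack once $c_0$ is taken large enough. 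Since the underlying combinatorics is identical to the testing case --- pinning $v_1,v_2$ only removes configurations and never creates new overlap types --- I would argue that every inequality proved for $\mathcal{G}_{\leq D}$ specializes to $\mathcal{G}^{(1,2)}_{\leq D}$ with the same (or better) constants, so that invoking the general Theorems~\ref{thm:iso}/\ref{thm:iso-WR} with the fixed-vertex version of the counting constants closes the proof. I would conclude by noting, as for Theorem~\ref{thm:isorefo}, that the operator-norm bound on the Gram matrix is exactly the two-sided inequality in the statement, and that it immediately yields, via Lemma~\ref{lem:reduction:degree2}, the analogue of~\eqref{eq:Adv:ortho} for $\mathrm{Corr}_{\leq D}$, namely $\mathrm{Corr}^2_{\leq D}\leq (1-cD^{-2})^{-1}\big(\mathbb{E}[x]^2+\sum_{G}\mathbb{E}[x\Psi^{(1,2)}_G]^2\big)$.
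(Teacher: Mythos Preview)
Your plan is essentially the paper's: Theorem~\ref{thm:isorefo2} is obtained from Theorem~\ref{thm:iso-estimation} and Proposition~\ref{prp:model:conditions}, and the proof of Theorem~\ref{thm:iso-estimation} is exactly the replay of the proof of Theorems~\ref{thm:iso}/\ref{thm:iso-WR} with $\mathbb{V}^{(1,2)}$, $\mathrm{Aut}^{(1,2)}$, $\Pi^{(1,2)}$ and the pinned-vertex edit distance $d^{(1,2)}$ substituted throughout (Proposition~\ref{prop:scal2} and Lemma~\ref{lem:gen:est}). Your identification of the bookkeeping changes and of the isolated-vertex subtlety is on target.

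Two places deserve more care than your sketch gives them. First, your sentence ``the cross-moment vanishes whenever some connected component fails to intersect the other'' is only true under \condinvariance; for (HS-P), (SBM-P), (TS-P) the latent labels form a permutation, non-overlapping components are \emph{not} independent, and the paper needs Lemma~\ref{lem:gen-2} and Condition~\condmomentWR to show those terms are merely small (with the extra $(D^{c}/\sqrt{n})^{\#\mathrm{CC}_{\mathrm{pure}}}$ factor). You cite Theorem~\ref{thm:iso-WR} later, so you are aware of this, but the vanishing claim should be dropped. Second, your assertion that ``pinning $v_1,v_2$ never creates new overlap types'' is not quite right: templates in $\mathcal{G}^{(1,2)}_{\leq D}$ may have $v_1$ or $v_2$ isolated, which is forbidden in $\mathcal{G}_{\leq D}$, and the genuinely new case is the \emph{asymmetric} one where, say, $v_1$ is isolated in $G^{(1)}$ but not in $G^{(2)}$. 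The paper handles this in the proof of Lemma~\ref{lem:gen:est} by pruning the isolated vertex from one side, noting that the resulting $G''_\Delta$ differs from $G_\Delta$ only in the semi-matched/pure-component counts, and checking the monotonicity $\psi[G_\Delta]\leq\psi[G''_\Delta]$. Your ``contributes a factor $1$'' remark covers the symmetric case but not this one.
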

Again, this theorem is a straightforward consequence of the more general theorem~\ref{thm:iso-estimation}. 

\section{Main Low-degree Lower bounds}\label{s:mainres}

In this section, we  deduce LD lower bounds from the almost orthornormality of the basis. We start with estimation problems as those are more classical.

\subsection{Estimation problem}

 \begin{theorem}\label{thm:lowdeg2}
    There exist positive numerical constants $c$ and $c_0$ such that the following holds for any $D\geq 2$. Provided that 
\begin{equation}\label{eq:estim:condition}
  \left(\frac{k}{n} \right) \lor \left(\frac{\lambda k}{\sqrt{n q}} \right) \lor \left(\frac{\lambda}{q}\right) \leq D^{-c_0}\enspace , 
\end{equation}
then, all six models \HidSubI, \HidSubP, \StoBloI, \StoBloP, \ToeSerI, and \ToeSerP\ satisfy 
\begin{equation}\mathrm{Corr}_{\leq D} \leq \mathbb E[x](1+cD^{-1})\enspace .\end{equation}
\end{theorem}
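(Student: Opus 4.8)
The plan is to combine the change of basis in Lemma~\ref{lem:reduction:degree2} with the near-orthonormality of Theorem~\ref{thm:isorefo2}: this reduces the bound on $\mathrm{Corr}_{\leq D}$ to an estimate on the ``Fourier coefficients'' $\widehat{x}_G := \mathbb{E}[x\,\Psi^{(1,2)}_G]$, which one then shows are negligible beyond the constant term. More precisely, by Lemma~\ref{lem:reduction:degree2}, $\mathrm{Corr}_{\leq D}$ is the supremum over $\alpha=(\alpha_\emptyset,(\alpha_G)_{G})$ of $\langle\alpha,\widehat{x}\rangle/\sqrt{\mathbb{E}[(\alpha_\emptyset+\sum_G\alpha_G\Psi^{(1,2)}_G)^2]}$, where $\widehat{x}=(\mathbb{E}[x],(\widehat{x}_G)_G)$. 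Under~\eqref{eq:estim:condition}, Theorem~\ref{thm:isorefo2} lower-bounds the denominator by $(1-cD^{-2})\|\alpha\|_2^2$, so Cauchy--Schwarz gives
\begin{equation*}
\mathrm{Corr}^2_{\leq D}\ \leq\ (1-cD^{-2})^{-1}\,\|\widehat{x}\|_2^2\ =\ (1-cD^{-2})^{-1}\Big(\mathbb{E}[x]^2+\sum_{G\in\mathcal{G}^{(1,2)}_{\leq D}}\widehat{x}_G^{\,2}\Big).
\end{equation*}
Since $(1-cD^{-2})^{-1}=1+O(D^{-2})$, it remains to prove $\sum_{G\in\mathcal{G}^{(1,2)}_{\leq D}}\widehat{x}_G^{\,2}\leq C\,D^{-1}\,\mathbb{E}[x]^2$ for a numerical constant $C$; the theorem follows by taking square roots and adjusting constants. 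Note that the cruder bound $\|\widehat{x}\|_2^2\leq\|x\|_{L^2}^2=\mathbb{E}[x]$ is far too weak, as $\mathbb{E}[x]\gg\mathbb{E}[x]^2$, so the coefficients must genuinely be computed.

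\textbf{Computing the coefficients.} Since $x=\mathbf{1}\{\Theta_{z_1,z_2}\neq0\}$ is $z$-measurable and, conditionally on $z$, the $Y_{ij}$ are independent with mean $\Theta_{z_iz_j}\in\{0,\lambda\}$, I would expand, for $G$ with nontrivial components $G_1,\dots,G_c$,
\begin{equation*}
\widehat{x}_G\ =\ \frac{1}{\sqrt{\mathbb{V}^{(1,2)}(G)}}\sum_{\pi\in\Pi_V^{(1,2)}}\mathbb{E}_z\!\left[x\prod_{l=1}^c\Big(\mathbb{E}[P_{G_l,\pi}\mid z]-\mathbb{E}[P_{G_l,\pi}]\Big)\right],
\end{equation*}
using that distinct components occupy vertex-disjoint (hence, given $z$, independent) blocks of edges and that $\mathbb{E}[P_{G_l,\pi}\mid z]=\lambda^{|E_l|}\mathbf{1}\{\text{all edges of }G_l\text{ under }\pi\text{ are active}\}$, with ``active'' model-dependent but a function of $(z_i)_{i\in\pi(V_l)}$ only. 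The key structural fact is that, under \condinvariance, whenever a nontrivial component $G_l$ has $\pi(V_l)\cap\{1,2\}=\emptyset$ its factor is independent of everything else and centered, so that term vanishes; hence only templates in which every nontrivial component is incident to $v_1$ or $v_2$ contribute, and such templates have at most two nontrivial components. Under \condinvarianceWR the latent coordinates are only exchangeable (sampling without replacement), so this factorization is merely approximate; I would absorb the discrepancy into the same $O(|V|/n)\le O(D/n)$-type corrections already controlled in the proof of Theorem~\ref{thm:isorefo2}.

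\textbf{Summation over templates.} For the surviving templates I would evaluate the $z$-expectations explicitly in each structure (HS, SBM, TS): a fresh edge of the component(s) attached to $v_1,v_2$ contributes a factor of order $\lambda/q$ (no new vertex), $\lambda k/\sqrt{nq}$ (a new pendant vertex), or $k/n$, all $\leq D^{-c_0}$ by~\eqref{eq:estim:condition}; after dividing by $\sqrt{\mathbb{V}^{(1,2)}(G)}$ and summing the $\pi$'s, one obtains $\widehat{x}_G^{\,2}/\mathbb{E}[x]^2\leq D^{-c_0\,m(G)}$ up to numerical constants, where $m(G)\geq1$ counts edges beyond the single edge $v_1v_2$. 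Summing over the at most $\mathrm{poly}(D)^{|E|}$ templates with a given edge count (with automorphism factors only helping, since they sit in the denominator) leaves a geometric series in $D^{-c_0}$; choosing $c_0$ large enough that it dominates this combinatorial growth and still leaves a factor $D^{-1}$ to spare --- the leading single-edge term alone being of order $\lambda^2/\bar q\lesssim D^{-2c_0}$ --- yields $\sum_G\widehat{x}_G^{\,2}\leq C\,D^{-1}\mathbb{E}[x]^2$ as needed.

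\textbf{Main obstacle.} The delicate part is this last step carried out uniformly over all six models: organizing the sum over the exponentially many templates so that the per-edge gains from~\eqref{eq:estim:condition} beat the combinatorial explosion of templates and of their labelings $\pi$, while tracking the two marked vertices and, for the permutation-sampling models, the failure of the latent law to factorize exactly. Since this is precisely the combinatorial core underlying Theorem~\ref{thm:isorefo2}, I expect to reuse its edge-by-edge accounting and the model-specific estimates for the probability that a labeled subgraph is ``active'', rather than redevelop them.
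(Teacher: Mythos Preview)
Your reduction via Lemma~\ref{lem:reduction:degree2} and Theorem~\ref{thm:isorefo2} to the inequality $\sum_{G}\widehat x_G^{\,2}\le C D^{-1}\mathbb{E}[x]^2$ is exactly what the paper does, and your identification of the single-edge coefficient $\widehat x_{\{v_1v_2\}}^{\,2}\asymp(\lambda/\sqrt{\bar q})^2\,\mathbb{E}[x]^2$ as the leading term is correct. The difference lies in how the coefficients $\widehat x_G$ are bounded.

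The paper does \emph{not} compute the conditional expectation in $z$ and sort templates by which components touch $\{1,2\}$. Instead it uses the identity $\mathbb{E}[Y_{12}\mid z]=\lambda x$ to write, for any $G$ not containing the edge $(v_1,v_2)$,
\[
\mathbb{E}\bigl[x\,\overline P^{(1,2)}_{G,\pi}\bigr]=\frac{1}{\lambda}\,\mathbb{E}\bigl[P_{G^{(0)},\pi^{(0)}}\,\overline P_{G',\pi'}\bigr],
\]
where $G^{(0)}$ is the single edge $(v_1,v_2)$ and $G'$ is $G$ with any isolated $v_1,v_2$ pruned. The right-hand side is precisely a cross-moment of the type already controlled by Lemmas~\ref{lem:gen-1}--\ref{lem:gen-2} (the core of the near-orthonormality proof), so all six models, including the permutation-sampling ones, are handled at once with \emph{no new} model-specific computation. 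A similar reduction covers the case $(v_1,v_2)\in E$. The upshot is a uniform bound $|\widehat x_G|\le (k/n)\,D^{-c_0|E|/2}$ (and a sharper $(k/n)^2$ version for the HS models, needed because there $\mathbb{E}[x]\asymp (k/n)^2$).

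Your route---condition on $z$, drop components not touching $\{1,2\}$, then compute---is valid for the three \condinvariance models, where independence of the $z_i$ makes the vanishing exact. For the three \condinvarianceWR models, however, the dismissal as ``$O(D/n)$-type corrections'' is too quick: the dependence between disjoint components produces terms of order $(D^{c}/\sqrt n)^{\#\mathrm{CC}_{\mathrm{pure}}}$, not $D/n$, and tracking them correctly essentially reproduces Lemma~\ref{lem:gen-2} and Condition~\condmomentWR. So your approach works but reinvents machinery the paper already built; the single-edge trick lets one invoke that machinery directly and keeps the proof short.
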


Note that $\mathbb E[x]=\mathrm{Corr}_{\leq 0}$.  
This entails that, when~\eqref{eq:estim:condition} holds with $D$ of the order $\log(n)$,  no polynomial of degree of order $\log(n)$ can perform significantly better than the constant prediction $\mathbb E[x]$ of degree $0$ polynomials. 

To discuss our results, let us focus on $D=\log(n)$ and regimes in $(q,k,n)$ such that $k \leq \frac{n}{\log^{c_0}(n)}$ and  $q\geq n/k^2$. 
Note that this forces $k$ to belong to $[\sqrt{n}; n\log^{-c_0}(n)]$.
Then, Theorem~\ref{thm:lowdeg2} states that recovery is impossible by low-degree polynomials as soon as
\begin{equation}\label{eq:condition:non_reconstruction}
\frac{\lambda k}{\sqrt{nq}} \leq \log^{-c_0}(n) \enspace . 
\end{equation}
For both hidden subclique (\HidSubI\ and \HidSubP) and stochastic blocks models (\StoBloI\ and \StoBloP), it is known that recovery is possible in polynomial time above this threshold~\cite{SchrammWein22,luo2023computational,SohnWein25}. In particular, we recover the impossibility results of~\cite{SchrammWein22,luo2023computational,SohnWein25} for \HidSubI\ and \StoBloI. In comparison to the tight bounds of~\cite{SohnWein25}, our results are less tight as we lose poly-logarithmic factors. Besides, our LD lower bounds are optimal when $q\geq n/k^2$ whereas~\cite{SohnWein25} deal with the case where $k\gg \sqrt{n}$. 
As already alluded, we believe that the condition $q\geq n/k^2$, which arises because we require $\lambda\leq q$, is an artefact of the proof and can be weakened to $k\gg \sqrt{n}$ using arguments that are more tailored to the models \HidSubI\ and \StoBloI. To the best of our knowledge, the LD lower bounds for the permutations models \HidSubP\ and \StoBloP\ are novel. 

For the T\oe plitz seriation models (\ToeSerI\ and \ToeSerP), the Condition~\eqref{eq:condition:non_reconstruction} matches that of~\cite{berenfeld2024seriation} for polynomial-time reconstruction in the dense regime ($q$ of the order of a constant). It is also similar to the LD lower bound in the Gaussian setting of~\cite{berenfeld2024seriation} for \ToeSerI\ and~\cite{Even25b} for \ToeSerP\ when $k\gg \sqrt{n}$.

\subsection{Complex testing problem}
Recall the altered distributions $\mathbb{P}_{H_1}$ introduced in Section~\ref{s:set}. Here, we deduce from Theorems~\ref{thm:iso} and~\ref{thm:iso-WR} and in particular from~\eqref{eq:Adv:ortho}  a bound for $\mathrm{Adv}_{\leq D}$.
 \begin{theorem}\label{thm:lowdeg}
There exist positive numerical constants $c$ and $c_0$ such that the following holds for any $D\geq 2$. Provided that 
\begin{equation}\label{eq:condition:LD:test}
\left(\frac{k}{n} \right) \lor \left(\frac{\lambda k}{\sqrt{n q}} \right) \lor \left(\frac{\lambda}{q}\right) \lor   \left(\epsilon \frac{\lambda k^2}{n\sqrt{q}}\right) \leq D^{-c_0}\enspace , 
\end{equation}
then, all six models \HidSubI, \HidSubP, \StoBloI, \StoBloP, \ToeSerI, \ToeSerP\  satisfy 
\begin{equation}\mathrm{Adv}^2_{\leq D} \leq 1+ cD^{-1}\enspace .\end{equation}
\end{theorem}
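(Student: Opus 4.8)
The plan is to deduce Theorem~\ref{thm:lowdeg} from the near-orthonormality of the basis $(1,(\Psi_G)_{G\in\mathcal{G}_{\leq D}})$ established in Theorem~\ref{thm:isorefo}, exactly along the lines sketched in~\eqref{eq:Adv:ortho}. Since the first three terms in the condition~\eqref{eq:condition:LD:test} are precisely the hypotheses of Theorem~\ref{thm:isorefo}, that theorem applies and yields
\begin{equation*}
\mathrm{Adv}^2_{\leq D} \;\leq\; \frac{1}{1-cD^{-2}}\left[1 + \sum_{G \in \mathcal G_{\leq D}}\mathbb{E}_{H_1}[\Psi_{G}]^2\right]\ .
\end{equation*}
So everything reduces to showing that $\sum_{G\in\mathcal{G}_{\leq D}}\mathbb{E}_{H_1}[\Psi_G]^2 = O(D^{-1})$ under the extra fourth condition $\epsilon\lambda k^2/(n\sqrt q)\leq D^{-c_0}$. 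First I would compute $\mathbb{E}_{H_1}[\Psi_G]$ for a fixed template $G$. Writing $\Psi_G = \overline P_G/\sqrt{\mathbb{V}(G)}$ with $\overline P_G = \sum_{\pi\in\Pi_V}\prod_{l=1}^c(P_{G_l,\pi}-\mathbb{E}[P_{G_l,\pi}])$, and noting that under $\mathbb{P}_{H_1}$ the entries are still conditionally independent given the (altered) latent structure, I would condition on the latent variables to get, for each labeling $\pi$, a product over connected components of terms $\mathbb{E}_{H_1}[P_{G_l,\pi}] - \mathbb{E}[P_{G_l,\pi}]$; the subtracted centering is the null mean, so this is exactly the "excess" first moment created by the alteration. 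Since the alteration zeros out rows/columns of $\Theta$ on an $\epsilon$-fraction of a structure of "width" $\sim k$, each such component-difference is nonzero only when the component's image touches the altered block, and its size is governed by $\lambda$ times combinatorial factors; the normalization $\sqrt{\mathbb{V}(G)}$ with $\mathbb{V}(G) = \frac{n!}{(n-|V|)!}|\mathrm{Aut}(G)|\bar q^{|E|}$ then converts the count of contributing labelings $\pi$ into the advertised scaling.

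The core of the computation is therefore a careful bookkeeping, per template $G$ with edge set of size $m\le D$ and $c$ connected components $G_1,\dots,G_c$, of the quantity $\mathbb{E}_{H_1}[\Psi_G]$. I expect it to factorize (approximately) over connected components, so that $\mathbb{E}_{H_1}[\Psi_G]^2 \approx \prod_{l=1}^c \big(\text{per-component contribution}\big)^2$, and the per-component contribution of a connected graph with $m_l$ edges and $v_l$ vertices should be of order roughly $\big(\epsilon \lambda k^2/(n\sqrt q)\big)$ times a factor depending only on $(m_l,v_l)$ and on whether $m_l \ge 1$, up to $\mathrm{poly}(D)$ corrections — because a connected component contributes to the alteration-induced mean only through a "path"-like structure of edges inside the altered block, each edge carrying a factor $\lambda/\bar q^{1/2}$, the block having $\sim k$ vertices to be placed among $n$, and the alteration probability $\epsilon$ appearing once per component. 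The normalization denominator $\frac{n!}{(n-|V|)!}$ accounts for the $|\Pi_V|$ labelings; the surviving labelings are those whose image lies (partly) in the width-$k$ block, contributing the extra $k/n$-type factors. Summing over all templates $G\in\mathcal{G}_{\leq D}$ — there are at most $\mathrm{poly}(D)^{D}$, i.e. $e^{O(D\log D)}$, of them, each with a contribution bounded by $(D^{-c_0})^{\Omega(1)}$ by the fourth condition — gives a geometric-type sum that is $O(D^{-1})$ once $c_0$ is chosen large enough to dominate the $e^{O(D\log D)}$ entropy factor and the $\mathrm{poly}(D)$ per-template slack. This is the same entropy-versus-smallness trade-off that already drives Theorem~\ref{thm:isorefo}.

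The main obstacle, as I see it, is twofold. First, making the per-component estimate of $\mathbb{E}_{H_1}[\Psi_G]$ rigorous requires handling the centering $P_{G_l,\pi}-\mathbb{E}[P_{G_l,\pi}]$ correctly in the altered model: one must expand $\mathbb{E}_{H_1}[\prod_l(P_{G_l,\pi}-\mathbb{E}[P_{G_l,\pi}])]$, see which subsets of components are "activated" by the alteration, and check that inactivated components contribute $0$ (because their conditional mean under $H_1$ equals their null mean once we have conditioned appropriately) — this is where the definition~\eqref{eq:cross} pays off and where a clean statement of "conditional independence given the altered latent structure" is needed. Second, the six models differ in how the alteration interacts with the latent law (sampling with replacement vs.\ permutation, and the three geometries HS/SBM/TS), so one either proves a model-agnostic bound via an abstract condition on the alteration — analogous to the abstract conditions of Section~\ref{sec:main_orthonormalite} — or one checks the four cases; I would aim for the former, isolating a lemma of the form "for any of the six alterations, $|\mathbb{E}_{H_1}[\Psi_G]| \le (\epsilon\lambda k^2/(n\sqrt q))^{\#\mathrm{CC}_G}\,\mathrm{poly}(D)^{|E(G)|}$" and then feeding it into the template sum. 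Everything else — the reduction to permutation-invariant polynomials (Lemma~\ref{lem:reduction:permutation}), the expansion in the $\Psi_G$ basis (Lemmas~\ref{lem:invariant:graph} and~\ref{lem:reduction:degree}), and the final optimization over $\alpha$ via Cauchy–Schwarz in~\eqref{eq:Adv:ortho} — is routine given Theorem~\ref{thm:isorefo}.
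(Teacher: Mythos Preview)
Your overall plan is correct and matches the paper's: use Theorem~\ref{thm:isorefo} to reduce to $\sum_G \mathbb{E}_{H_1}[\Psi_G]^2$, bound each $\mathbb{E}_{H_1}[\Psi_G]$ by analyzing how the alteration affects $\overline P_{G,\pi}$, then sum over templates. But the concrete lemma you aim for and the final counting step both contain a real gap.

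\medskip
\noindent\textbf{The per-template bound must decay in $|E|$, not only in $\#\mathrm{CC}_G$.} Your target lemma $|\mathbb{E}_{H_1}[\Psi_G]|\le (\epsilon\lambda k^2/(n\sqrt q))^{\#\mathrm{CC}_G}\,\mathrm{poly}(D)^{|E|}$ would give, after squaring, a contribution $D^{c|E|-2c_0\#\mathrm{CC}_G}$. For a single connected template with $|E|=D$ this is $D^{cD-2c_0}$, and there are $e^{O(D\log D)}$ templates; no \emph{fixed} constant $c_0$ can dominate this entropy, contrary to your closing sentence. What the paper actually proves is the stronger bound $|\mathbb{E}_{H_1}[\Psi_G]|\le D^{-c_0|E|/2}$, and for this \emph{all four} quantities in~\eqref{eq:condition:LD:test} are needed in the exponent, not just the fourth. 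Concretely, after dividing by $\sqrt{\mathbb V(G)}$ one finds
\[
|\mathbb{E}_{H_1}[\Psi_G]|\ \lesssim\ \left(D^c\frac{\lambda}{\sqrt{\bar q}}\right)^{|E|-|V|+\#\mathrm{CC}_G}\left(D^c\frac{\lambda k}{\sqrt{n\bar q}}\right)^{|V|-2\#\mathrm{CC}_G}\left[D^c\Big(\frac{\lambda k}{\sqrt{n\bar q}}+\frac{\epsilon\lambda k^2}{n\sqrt{\bar q}}\Big)\right]^{\#\mathrm{CC}_G},
\]
all three bases are $\le D^{-c_0}$ by~\eqref{eq:condition:LD:test}, all three exponents are nonnegative (since $|E|\ge |V|-\#\mathrm{CC}_G$ and $|V|\ge 2\#\mathrm{CC}_G$), and they sum to $|E|$. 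This is what makes $\sum_{G}D^{-c_0|E|}$ summable against the $v^{2e}$ template count. Note in particular that the per-component factor is $\frac{\lambda k}{\sqrt{n\bar q}}+\frac{\epsilon\lambda k^2}{n\sqrt{\bar q}}$, not just the $\epsilon$-term; the first summand is what survives for the permutation models via a $1/\sqrt n$ dependency correction, and it is also what is needed to reorganize into a product of three small bases. Your ``factor depending on $(m_l,v_l)$'' is hiding terms like $(k/\sqrt n)^{v_l-1}$ that can be large on their own; they only become small once paired with the $\lambda/\sqrt{\bar q}$ factors from the edges.

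\medskip
\noindent\textbf{On the mechanism for the first-moment computation.} Your intuition ``inactivated components contribute $0$'' points in the right direction but the paper's device is slightly different and cleaner. Introducing the event $\zeta_l=\{\text{no node of }G_l[\pi]\text{ is altered}\}$, one observes that on $\zeta_l^c$ \emph{the polynomial $P_{G_l,\pi}$ itself has conditional mean $0$ under $H_1$} (at least one incident edge has probability exactly $q$, hence centered $Y$). This lets one replace $\mathbb{E}_{H_1}$ by $\mathbb{E}$ in the surviving terms and reduces to expressions of the form $\mathbb{E}\big[\prod_l \mathbf 1\{\zeta_l^c\}P_{G_l,\pi}\big]$ (plus, under permutation sampling, cross-terms controlled as in Section~\ref{sec:main_orthonormalite}). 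Your proposed exact factorization $\mathbb{E}_{H_1}[\overline P_{G,\pi}]=\prod_l(\mathbb{E}_{H_1}[P_{G_l,\pi}]-\mathbb{E}[P_{G_l,\pi}])$ fails for (SBM) and (TS) already under independent sampling, because the alteration involves a single global random choice $\hat\ell$ that couples all components; the $\zeta_l$ argument avoids this issue.
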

Under the low-degree conjecture, Theorem~\ref{thm:lowdeg} provides a strong indication that when Condition~\eqref{eq:condition:LD:test} holds it is impossible in polynomial-time to distinguish the distribution $\mathbb{P}$ and $\mathbb{P}_{H_1}$. Given Theorem~\ref{thm:isorefo} and the Bound~\eqref{eq:Adv:ortho}, to prove  this theorem, we only have to control the first moments $\mathbb{E}_{H_1}[\Psi_G]$ for $G\in \mathcal{G}_{\leq D}$. 

Condition~\eqref{eq:condition:LD:test} is the conjunction of Condition~\eqref{eq:estim:condition} for reconstruction and the condition $\epsilon \lambda^2 k^2 \leq D^{-c_0} n\sqrt{q}$. In particular, the latter inequality is optimal for the alteration detection problem. Indeed, consider the statistic $T=\sum_{i<j} Y_{ij}$. Since $\lambda \leq q$, we have $T-\mathbb{E}[T]= O_{\mathbb{P}}(n\sqrt{q})$ and $T-\mathbb{E}_{H_1}[T]= O_{\mathbb{P}_{H_1}}(n\sqrt{q})$. As a consequence, $T$ is powerful as soon as $|\mathbb{E}[T] - \mathbb{E}_{H_1}[T]|\geq n\sqrt{q}$. Since $\mathbb{E}[T] - \mathbb{E}_{H_1}[T]$ is of the order of $\epsilon \lambda k^2$, the result follows.

\section{Graph definitions}\label{sec:graph:definition}

In order to show the almost orthonormality of the family $(\Psi_{G})_{G\in\mathcal{G}_{\leq D}}$, we have to work out cross products of the form $\mathbb{E}[P_{G^{(1)}}P_{G^{(2)}}]$ for two templates $G^{(1)}$ and $G^{(2)}$. In turn, this is done by working out quantities of the form $\mathbb{E}[P_{G^{(1)},\pi^{(1)}}P_{G^{(2)},\pi^{(2)}}]$ where $\pi^{(1)}$ and $\pi^{(2)}$ are two labelings of $G^{(1)}$ and $G^{(2)}$. This requires a systematic way to classify the combinatorial structures that arise when two template graphs are overlaid. The purpose of this section is to introduce all these concepts.

Recalling that we write $\pi^{(1)}[G]$ and $\pi^{(2)}[G]$ for the corresponding labeled graph, $\mathbb{E}[P_{G^{(1)},\pi^{(1)}}P_{G^{(2)},\pi^{(2)}}]$ highly depends on the nodes in common between these two graphs. Indeed, 
\begin{equation*}
P_{G^{(1)},\pi^{(1)}}P_{G^{(2)},\pi^{(2)}}= \prod_{(v_1,v_2)\in E^{(1)}}  Y_{\pi^{(1)}(v_1)\pi^{(1)}(v_2)} \prod_{(v_1,v_2)\in E^{(2)}}Y_{\pi^{(2)}(v_1)\pi^{(2)}(v_2)} \enspace , 
\end{equation*}
and the distribution of the product is a function of the edges that appear twice in $\pi^{(1)}[G^{(1)}]$ and $\pi^{(2)}[G^{(2)}]$ and of the edges that appear only once in these two graphs.

\subsection{Node matching and graph merging}

\paragraph{Matching of nodes.} Consider two templates $G^{(1)}=(V^{(1)},E^{(1)})$ and $G^{(2)}=(V^{(2)},E^{(2)})$. Given labelings $\pi^{(1)}$ and $\pi^{(2)}$, we say  that two nodes $v^{(1)}$ and $v^{(2)}$ are matched if $\pi^{(1)}(v^{(1)})=  \pi^{(2)}(v^{(2)})$. More generally,  a matching $\mathbf M$ stands for a set of pairs of nodes $(v^{(1)},v^{(2)})\in V^{(1)}\times V^{(2)}$ where no node in $V^{(1)}$ or $V^{(2)}$ appears twice. We denote $\mathcal M$ for the collection of all possible node matchings. For $\mathbf M \in \mathcal M$, we define the collection of labelings that are compatible with $\mathbf M$ by 
\begin{equation}\Pi(\mathbf M) = \left\{\pi^{(1)}\in \Pi_{V^{(1)}},\pi^{(2)} \in \Pi_{V^{(2)}}: \forall (v^{(1)},v^{(2)}) \in V^{(1)}\times V^{(2)}, \{\pi^{(1)}(v^{(1)}) = \pi^{(2)}(v^{(2)})\}\Longleftrightarrow \{(v^{(1)},v^{(2)}) \in \mathbf M\}\right\}\enspace .\end{equation}
Importantly, as $\mathbb{P}$ is permutation invariant, $\mathbb{E}[P_{G^{(1)},\pi^{(1)}}P_{G^{(2)},\pi^{(2)}}]$ is the same for all $(\pi^{(1)},\pi^{(2)})$ in $\Pi(\mathbf{M})$. 
Given a matching $\mathbf{M}$, we write that two edges $e\in E^{(1)}$ and $e'\in E^{(2)}$ are matched if the corresponding incident nodes are matched.

\paragraph{Merged graph $G_\cup$, intersection graph $G_{\cap}$, and symmetric difference graph $G_{\Delta}$.} Consider two templates $G^{(1)}$ and $G^{(2)} \in \mathcal G_{\leq D}$ and two labelings $\pi^{(1)}$ and $\pi^{(2)}$. Then, the merged graph $G_{\cup}=(V_{\cup},E_{\cup})$ is defined as the union of $\pi^{(1)}[G^{(1)}]$ and $\pi^{(2)}[G^{(2)}]$, with the convention that two same edges are merged into a single edge. Similarly,  we define the intersection graph 
$G_{\cap}=(V_{\cap},E_{\cap})$ and the symmetric difference graph $G_{\Delta}=(V_{\Delta},E_{\Delta})$ so that $E_{\Delta}=E_{\cup}\setminus E_{\cap}$ ---\ see 
See Figure~\ref{fig:tildeG} for an example. We also have $|E_{\cup}| = | E^{(1)}| +|E^{(2)}| - |E_\cap|$ and $|V_{\cup}| = |V^{(1)}|+|V^{(2)}|-|\mathbf{M}|$ for $(\pi^{(1)},\pi^{(2)})\in \Pi(\mathbf M)$. Note that, for a fixed matching $\mathbf{M}$, all graphs $G_{\cup}$ (resp. $G_{\cap}$, $G_\Delta$) are isomorphic for $(\pi^{(1)},\pi^{(2)})\in \Pi(\mathbf M)$ and we shall refer to quantities such as $|E_{\Delta}|$, $|V_{\Delta}|$,\ldots associated to a matching $\mathbf{M}$.  

Finally, we write $\#\mathrm{CC}_{\Delta}$ for the number of connected components in $G_{\Delta}$, and $\#\mathrm{CC}_{\mathrm{pure}}$ for the number of connected components in $G_{\Delta}$ that are solely composed of nodes from $G^{(1)}$, or from $G^{(2)}$. This is the number of connected components that are ``untouched'' from the matching process. These two quantities only depend on $(\pi^{(1)},\pi^{(2)})$ through the matching $\mathbf{M}$. 

\begin{figure}
    \centering
\includegraphics{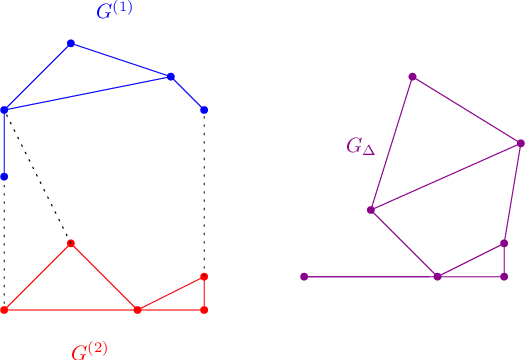}
    \caption{Illustration of two templates $G^{(1)}$ and $G^{(2)}$, a matching $\mathbf{M}$ and 
 the   symmetric difference graph $G_{\Delta}$. 
 }
    \label{fig:tildeG}
\end{figure}

\paragraph{Sets of unmatched nodes and of semi-matched nodes.} 
 Write $U^{(1)}$, resp.~$U^{(2)}$ for the set of nodes in $\pi^{(1)}[G^{(1)}]$, resp.~$\pi^{(2)}[G^{(2)}]$ that are not matched, namely the {\bf unmatched nodes}, that is 
\begin{align*}
U^{(1)} =\pi^{(1)}(V^{(1)})\setminus \pi^{(2)}(V^{(2)})\ ; \quad \quad\quad 
U^{(2)} =\pi^{(2)}(V^{(2)})\setminus \pi^{(1)}(V^{(1)})\enspace . 
\end{align*}
Again, $|U^{(1)}|$ and $|U^{(2)}|$ only  depend on $(\pi^{(1)},\pi^{(2)})$ through the matching $\mathbf{M}$. 
We have, for $i\in \{1,2\}$,
\begin{align}\label{eq:unmatched}
    |V^{(i)}| = |\mathbf M| + |U^{(i)}|\enspace .
\end{align}
Write also $\mathbf M_{\mathrm{SM}} = \mathbf M_{\mathrm{SM}}(\mathbf M) \subset \mathbf M$, 
for the set of node matches of $(G^{(1)},G^{(2)})$ that are matched, and yet that are not pruned when creating the symmetric difference graph $G_{\Delta}$.
This is the set of {\bf semi-matched nodes} ---\ i.e.\ at least one of their incident edges is not matched.  The remaining pairs of nodes $\mathbf M \setminus \mathbf M_{\mathrm{SM}}$ are said to be {\bf perfectly matched} as all the edges incident to them are matched. We write  $\mathbf M_{\mathrm{PM}} = \mathbf M_{\mathrm{PM}}(\mathbf M)$ for the set of perfect matches in $\mathbf M$. Note that 
\begin{equation}\label{eq:PMSM}
     |V^{(1)}| +|V^{(2)}| =   |V_{\Delta}| + |\mathbf M_{\mathrm{SM}}| + 2|\mathbf M_{\mathrm{PM}}|\enspace  .
\end{equation}

\paragraph{Definition of some relevant sets of nodes matchings.} We define  $\mathcal M^\star\subset \mathcal{M}$ for the collection of matchings $\mathbf{M}$ such that all connected components of $G^{(1)}$ and of $G^{(2)}$ intersect \footnote{Here, we mean that, for each connected component, at least one its vertices appears in a tuple of $\mathbf{M}$.} with $\mathbf{M}$. As a consequence, for any $\mathbf M \in \mathcal M^\star$, we have $\#\mathrm{CC}_{\mathrm{pure}} = 0$. Finally, we introduce  $\mathcal M_{\mathrm{PM}}\subset \mathcal{M}$ for 
  the collection of perfect matchings, that is matchings $\mathcal M$ such that all the nodes in $V^{(1)}$ and $V^{(2)}$ are {\bf perfectly matched}. Note that, if $\mathbf M\in \mathcal M_{\mathrm{PM}}$, then $G_{\Delta}$ is the empty graph (with $E_{\Delta} = \emptyset$). Besides, $\mathcal{M}_{\mathrm{PM}} \neq \emptyset$ if and only $G^{(1)}$ and $G^{(2)}$ are isomorphic, which is equivalent to $G^{(1)}= G^{(2)}$ when $G^{(1)}, G^{(2)} \in \mathcal G_{\leq D}$. 

\subsection{Further definitions}

This subsection gathers other concepts that will be useful for establishing the almost orthonormality of the basis. It can be skipped at first reading.

\paragraph{Shadow matchings.} Write for two sets $\overline{U}^{(1)}\subset V^{(1)}, \overline{U}^{(2)} \subset V^{(2)}$ and for a set of node matches $\underline{\mathbf M} \in \mathcal M$
 \begin{equation}\mathcal M_{\mathrm{shadow}}(\overline{U}_1, \overline{U}_2, \underline{\mathbf M} ) = \left\{\mathbf M' \in \mathcal M:~~U^{(1)}(\mathbf M') = \overline{U}^{(1)},~U^{(2)}(\mathbf M') = \overline{U}^{(2)},~\mathbf M_{\mathrm{SM}}(\mathbf M') = \underline{\mathbf M}\right\}\enspace ,\end{equation}
 namely the set of all matchings that lead to the set $\underline{\mathbf M} $ of semi matched nodes and to the sets $\overline{U}_1, \overline{U}_2$ of unmatched nodes in resp.~$G^{(1)}, G^{(2)}$. We say that these matchings satisfy a given {\bf shadow} $(\overline{U}_1, \overline{U}_2, \underline{\mathbf M} )$. The only thing that can vary between two elements of $\mathcal M_{\mathrm{shadow}}(\overline{U}_1, \overline{U}_2, \underline{\mathbf M} )$ is the matching of the nodes that are not in $\overline{U}_1, \overline{U}_2$, or part of a pair of nodes in $\underline{\mathbf M}$. This matching must however ensure that all of these nodes are perfectly matched.

 \begin{figure}
    \centering
\includegraphics{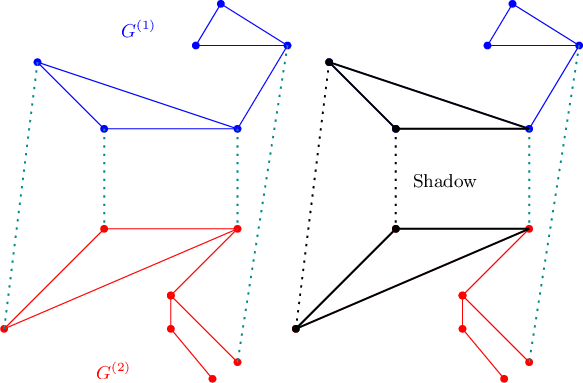}
    \caption{Illustration of the shadow of a graph. The information contained in the shadow are all labels of the nodes which are colored. The labels of the black part is not registered in the shadow ---\ but we know the ``shape'' and that all nodes in the black part are perfectly matched.}
    \label{fig:shadow}
\end{figure}

\paragraph{Edit Distance between graphs.} For any two templates $G^{(1)}$ and $G^{(2)}$, we define  the so-called edit distance.
\begin{equation}\label{eq:definition:edit:distance}
d(G^{(1)}, G^{(2)}) := \min_{\mathbf M \in \mathcal M} |E_{\Delta}| \enspace .
\end{equation}
Note that $d(G^{(1)}, G^{(2)})=0$ if and only if $G^{(1)}$ and $G^{(2)}$ are isomorphic. As a consequence, if $G^{(1)}$ and $G^{(2)}$ are in $\mathcal{G}_{\leq D}$, the edit distance is equal to $0$ if and only if $G^{(1)}=G^{(2)}$.

\section{Core of the proof: the \HidSubI\ model when $q=1/2$}\label{s:core}

In what follows, our goal is to set aside the technicalities arising from the consideration of more complex models, and instead focus on the simple \HidSubI\ model in the case $q = 1/2$. This will allow us to clearly illustrate how our proof technique proceeds in order to prove almost orthonormality. We present a detailed and annotated proof for this specific case. Although the other models, as well as the case $q \neq 1/2$, involve certain important technical differences, the core ideas and methods of the proof remain the same.

Assume that $D\geq 2$ and that for some large enough universal constant $c_{0} \geq 5$. 
\begin{align}\label{eq:signal1}
     \frac{\lambda k}{\sqrt{\overline{q} n}}\lor {\lambda\over \sqrt{\overline{q}}} \lor \frac{k}{n} \leq D^{-8c_{0}}\enspace ,
\end{align}
with $\overline{q}=q(1-q)$.
For $\mathbb P=\mathbb P_{H_{0}}$, our goal is to prove the almost $L^2(\mathbb P)$-orthonormality of the family of invariant polynomials $(\Psi_G)_{G\in \mathcal G_{\leq D}}$ with $\Psi_G=\frac{\overline{P}_G}{\sqrt{\mathbb V(G)}}$
defined in~\eqref{eq:cross} and~\eqref{eq:definition:P*G}.

Through this proof as well as other proofs, we shall often use that, for any template graph $G$, its number of vertices, edges, and connected components are respectively at most equal to $2D$, $D$, and $D$.  

\begin{proposition}\label{prop:ortho}
Let $\Gamma$ be the Gram matrix  
$\left(\Gamma_{G^{(1)}, G^{(2)}}\right)_{G^{(1)}, G^{(2)}\in \mathcal G_{\leq D}} = \left(\mathbb E\left[\Psi_{G^{(1)}}\Psi_{G^{(2)}}\right]\right)_{G^{(1)}, G^{(2)}\in \mathcal G_{\leq D}}$ .
Under the Condition~\eqref{eq:signal1}, we have
    \begin{equation*}
\|\Gamma - I\|_{op}\leq  2 D^{-c_{0}}\enspace . 
\end{equation*}
\end{proposition}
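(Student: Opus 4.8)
The plan is to show that the Gram matrix $\Gamma$ is close to the identity by separately controlling (i) the diagonal entries $\Gamma_{G,G} = \mathbb E[\Psi_G^2]$, showing each is $1 + o(1)$, and (ii) the off-diagonal mass $\sum_{G^{(2)} \neq G^{(1)}} |\Gamma_{G^{(1)}, G^{(2)}}|$, showing it is small uniformly over $G^{(1)}$, which then yields the operator-norm bound via the Gershgorin circle theorem. The heart of everything is a good estimate for a single cross-term $\mathbb E[\overline P_{G^{(1)}, \pi^{(1)}} \overline P_{G^{(2)}, \pi^{(2)}}]$ in terms of the combinatorial data of the associated matching $\mathbf M$ (number of matched edges, semi-matched nodes, symmetric-difference edges, etc.). In the (HS-I) model with $q = 1/2$ this computation is clean: conditionally on $z$, the $Y_{ij}$ are independent $\pm 1/2$-valued, so $\mathbb E[Y_{ij} \mid z]$ is nonzero only on the planted clique, and the centered product $\overline P_{G,\pi}$ vanishes in expectation unless every connected component of $G[\pi^{(1)}] \cup G[\pi^{(2)}]$ is ``touched'' by the matching. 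This forces the matching $\mathbf M$ to lie in $\mathcal M^\star$, and on each doubled (matched) edge $\mathbb E[Y_{ij}^2] = \bar q$, while each unmatched edge contributes a factor controlled by $\lambda$ (after the correction~\eqref{eq:cross}, only through products over semi-matched structure, each such component costing at least a factor $\lambda k / n$ or $\lambda$ relative to the ``aligned'' contribution).

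Concretely, I would first write $\mathbb E[\overline P_{G^{(1)}} \overline P_{G^{(2)}}] = \sum_{\mathbf M \in \mathcal M^\star} |\Pi(\mathbf M)| \, \mathbb E[\overline P_{G^{(1)}, \pi^{(1)}} \overline P_{G^{(2)}, \pi^{(2)}}]$ for a representative $(\pi^{(1)}, \pi^{(2)}) \in \Pi(\mathbf M)$, using permutation invariance. The count $|\Pi(\mathbf M)|$ is of order $n^{|V_\cup|}$, i.e. $n^{|V^{(1)}| + |V^{(2)}| - |\mathbf M|}$. For the matched-edges-only contribution (the ``perfect matching'' case $\mathbf M \in \mathcal M_{\mathrm{PM}}$, which forces $G^{(1)} = G^{(2)}$), this reproduces exactly $\mathbb V(G)$ up to the automorphism count, giving $\Gamma_{G,G} = 1 + (\text{error})$. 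For every other matching, I would bound the single cross-term by tracking: a factor $\bar q^{|E_\cap|}$ from doubled edges; for the symmetric-difference edges, since on (HS-I) a product of distinct $Y_{ij}$'s has conditional expectation $\prod \Theta_{z_i z_j} = \lambda^{|E_\Delta|} \prod \mathbf 1\{\text{endpoints in clique}\}$, and then $\mathbb E_z$ of this indicator product costs roughly $(k/n)^{(\text{number of distinct clique-vertices involved})}$; and crucially the centering~\eqref{eq:cross} kills all matchings with a ``pure'' untouched component. Assembling, each cross-term with $|E_\Delta| = \delta > 0$ and $m = |\mathbf M_{\mathrm{SM}}|$ semi-matched nodes is, after multiplying by $|\Pi(\mathbf M)| / \sqrt{\mathbb V(G^{(1)}) \mathbb V(G^{(2)})}$, bounded by something like $(\text{combinatorial constant depending only on }D) \cdot \big(\lambda k/\sqrt{\bar q n}\big)^{a} \big(\lambda/\sqrt{\bar q}\big)^{b} (k/n)^{c}$ with $a + b + c \geq 1$ whenever $\mathbf M \notin \mathcal M_{\mathrm{PM}}$ — this is the key inequality encoding that a non-trivial overlap always costs at least one power of the small signal parameter. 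Under~\eqref{eq:signal1} this is at most $D^{-8c_0}$ times a constant depending on $D$.

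Then I sum over off-diagonal $G^{(2)}$ and over matchings. The number of templates in $\mathcal G_{\leq D}$ is bounded by a function of $D$ alone (at most $D$ edges, so at most $2D$ non-isolated vertices), and the number of matchings between two such templates is likewise bounded by a function of $D$; so the total off-diagonal row sum is at most $C(D) \cdot D^{-8c_0} \leq D^{-c_0}$ once $c_0$ is large enough to absorb $C(D) \leq D^{7c_0}$ (this is exactly why the exponent $8c_0$ appears in~\eqref{eq:signal1} while the conclusion only claims $D^{-c_0}$). Combining with the diagonal estimate $|\Gamma_{G,G} - 1| \leq D^{-c_0}$ (same mechanism: the non-perfect-matching contributions to $\mathbb E[\Psi_G^2]$ are lower-order), Gershgorin gives $\|\Gamma - \mathrm{Id}\|_{op} \leq \max_{G^{(1)}} \sum_{G^{(2)}} |\Gamma_{G^{(1)}, G^{(2)}} - \delta_{G^{(1)} G^{(2)}}| \leq 2 D^{-c_0}$.

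The main obstacle, and where I expect to spend most of the effort, is proving the key inequality $a + b + c \geq 1$ cleanly — that is, showing that \emph{every} matching other than a perfect matching produces at least one factor of the small parameters $\lambda k/\sqrt{\bar q n}$, $\lambda/\sqrt{\bar q}$, or $k/n$, after correctly accounting for (a) the $n^{|V_\cup|}$ growth of $|\Pi(\mathbf M)|$, (b) the normalization $\mathbb V(G)^{1/2} \asymp n^{|V^{(i)}|/2} \bar q^{|E^{(i)}|/2}$, and (c) the cancellation from the component-wise centering, which must be invoked precisely to handle matchings with few symmetric-difference edges but many semi-matched nodes (e.g. two edges merged at one endpoint). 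Setting up the bookkeeping so that the exponents $a,b,c$ are read off directly from $|E_\Delta|$, $|\mathbf M_{\mathrm{SM}}|$, and the relation~\eqref{eq:PMSM} — rather than case-by-case on graph shapes — is the delicate part; I would organize it by first reducing to $\mathbf M \in \mathcal M^\star$ via the centering, then splitting the ``budget'' $|V^{(1)}| + |V^{(2)}| - |V_\cup| = |\mathbf M|$ into semi-matched versus perfectly-matched nodes using~\eqref{eq:PMSM}, and charging each semi-matched node or each symmetric-difference edge its appropriate power of the signal.
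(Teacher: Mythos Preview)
Your overall architecture is right, and so is the single-term computation $\mathbb E[P_{G^{(1)},\pi^{(1)}}P_{G^{(2)},\pi^{(2)}}] = \lambda^{|E_\Delta|}(k/n)^{|V_\Delta|}\bar q^{|E_\cap|}$ and the reduction to $\mathbf M\in\mathcal M^\star$ via the centering. But the summation step has a genuine gap. You write ``$C(D)\leq D^{7c_0}$'' where $C(D)$ absorbs the number of templates in $\mathcal G_{\leq D}$ times the number of matchings. The number of templates alone is of order $D^{\Theta(D)}$ (already the number of unlabeled trees with $D$ edges grows exponentially in $D$), so $C(D)\leq D^{7c_0}$ would force $c_0\gtrsim D$, contradicting that $c_0$ is a \emph{universal} constant. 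A uniform per-entry bound of size $D^{-8c_0}$ cannot survive summation over all $G^{(2)}$. Relatedly, your claimed key inequality ``$a+b+c\geq 1$'' is too weak: one power of the small parameter per non-perfect matching is not enough.

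The paper repairs both issues simultaneously. First, it proves the stronger entry-wise bound $|\Gamma_{G^{(1)},G^{(2)}}|\leq 2D^{-3c_0\, d(G^{(1)},G^{(2)})}$ where $d$ is the edit distance~\eqref{eq:definition:edit:distance}; this comes from $|E_\Delta|\geq d(G^{(1)},G^{(2)})$ together with $|E_\Delta|\geq |U^{(1)}|+|U^{(2)}|$ on $\mathcal M^\star$ (since each connected component of $G_\Delta$ contains a semi-matched node), giving exponents that scale with $|E_\Delta|+|\mathbf M_{\mathrm{SM}}|$ rather than just $\geq 1$. Second, summing over $G^{(2)}$ now works because there are at most $(u+D)^{2u}$ templates at edit distance $u$ from $G^{(1)}$, and $\sum_{u\geq 1}(u+D)^{2u}D^{-3c_0 u}$ converges. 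There is one more ingredient you gloss over: your normalization $\mathbb V(G)^{1/2}\asymp n^{|V|/2}\bar q^{|E|/2}$ drops the factor $|\mathrm{Aut}(G)|^{1/2}$, which can be as large as $D!$. The paper cancels it by grouping matchings into ``shadows'' $(U^{(1)},U^{(2)},\mathbf M_{\mathrm{SM}})$ and showing (Lemma~\ref{lem:shadow}) that the number of matchings with a fixed shadow is at most $\min(|\mathrm{Aut}(G^{(1)})|,|\mathrm{Aut}(G^{(2)})|)$; this exactly absorbs the $1/\sqrt{|\mathrm{Aut}(G^{(1)})||\mathrm{Aut}(G^{(2)})|}$ in the denominator and leaves a sum over shadows whose size is controlled by $(2D)^{u_1+u_2+2m}$, beaten by the decaying exponent.
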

This proposition is mostly Theorem~\ref{thm:isorefo} in our specific model.
We emphasize that once this result is proven,  a bound on $\mathrm{Adv_{\leq D}}$ can be derived simply. Indeed, note first that 
 $\mathbb{E}\cro{\Psi_G}=0$ due to the centering~\eqref{eq:cross}, i.e. 1 is orthogonal to all $\Psi_G$. Hence 
 the previous proposition together with Lemma~\ref{lem:reduction:degree} imply that 
\begin{align}\label{eq:advat}
    \mathrm{Adv}_{\leq D}^2 &= \sup_{\alpha_{\emptyset},(\alpha_G)_{G\in \mathcal G_{\leq D}}} \frac{\mathbb E_{H_1}\left[ \alpha_{\emptyset}+\sum_{G \in \mathcal G_{\leq D}} \alpha_G \Psi_{G}\right]^2}{{\mathbb E\left[\left(\alpha_{\emptyset}+\sum_{G\in \mathcal G_{\leq D}} \alpha_G \Psi_{G}\right)^2\right]}} \leq \frac{1+\left\|(\mathbb E_{H_1}\Psi_G)_{G\in \mathcal G_{\leq D}}\right\|_2^2}{(1- 2 D^{-c_{0}})^2}\enspace .
\end{align}
So it only remains to bound $\left\|(\mathbb E_{H_1}\Psi_G)_{G\in \mathcal G_{\leq D}}\right\|^2_2$ in order to get a bound on $\mathrm{Adv_{\leq D}}$.

In the remaining of this section, we focus on the proof of Proposition~\ref{prop:ortho}.
\medskip

\noindent
{\bf Step 0: Preliminary computations.} 
We observe that $\bbE\cro{Y_{ij}|z}=\Theta_{z_iz_j}=\lambda \mathbf{1}\{z_{i}\leq k\}\mathbf{1}\{z_{j}\leq k\}$ and 
$\bbE\cro{Y_{ij}^2|z}= \overline{q}+ \Theta_{z_iz_j}(1-2q)= \overline{q}$ for $q=1/2$. 
Consider two templates $G^{(1)}, G^{(2)}$, some node matching $\mathbf M \in \mathcal M$ and two injections $(\pi^{(1)}, \pi^{(2)})\in \Pi(\mathbf{M})$. %Write also $\pi_{\cup} = \pi_{\cup} (\pi^{(1)},\pi^{(1)})$ for the injection $\pi_{\cup} \in \Pi_{V_\cup}$ that maps all nodes $v$ in $V_\cup$ corresponding to nodes of $V^{(1)}$ to $\pi^{(1)}(v)$, and all nodes in $v$ in $V_\cup$ corresponding to nodes of $V^{(1)}$ that are not matched (i.e.~not part of a node match in $\mathbf M$) to $\pi^{(2)}(v)$. 
Since the $Y_{ij}$ are conditionally independent given $z$, we have under $\mathbb P$
\begin{align}
   \mathbb{E}\left[P_{G^{(1)}, \pi^{(1)}}P_{G^{(2)}, \pi^{(2)}}\right] 
    & = \bbE\cro{\prod_{(i,j)\in E_{\Delta}}Y_{ij}\prod_{(i,j)\in E_{\cap}}Y_{ij}^2} \nonumber \\
   & =   \bbE\cro{\prod_{(i,j)\in E_{\Delta}} (\lambda  \mathbf{1}\{z_{i}\leq k\}\mathbf{1}\{z_{j}\leq k\}) \prod_{(i,j)\in E_{\cap}}\overline{q}} \nonumber \\
    & =  \lambda^{|E_{\Delta}|} \overline{q}^{|E_{\cap}|} \bbP\cro{z_{i}\leq k,\ \text{for}\ i\in V_{\Delta}} \nonumber \\
    &   = \lambda^{|E_{\Delta}|}\pa{\frac{k}{n}}^{|V_{\Delta}|} \overline{q}^{|E_{\cap}|} = \pa{\lambda\over \overline{q}}^{|E_{\Delta}|}\pa{\frac{k}{n}}^{|V_{\Delta}|} \overline{q}^{|E_{\cup}|}\enspace .  \label{eqn:old basis proof ingredients}
\end{align}
This implies in particular (with $G^{(2)}=\emptyset$)
\begin{align}
   \mathbb{E}\left[P_{G^{(1)}, \pi^{(1)}}\right] 
    &    = \lambda^{|E^{(1)}|}\pa{\frac{k}{n}}^{|V^{(1)}|}\enspace . \label{eq:comp}
\end{align}
Also if $\mathbf M = \emptyset$, then for any functions $f^{(1)},f^{(2)}$ of edges respectively in $\pi^{(1)}(E^{(1)}), \pi^{(2)}(E^{(2)})$, then
\begin{equation}\label{eq:indep}
    \mathbf E\left[f^{(1)}f^{(2)}\right]=\mathbf E\left[f^{(1)}\right]\mathbb E\left[f^{(2)}\right]\enspace , 
\end{equation}
by independence of $(Y_{ij})_{(i,j)\in \pi^{(1)}(E^{(1)})}$ and $(Y_{ij})_{(i,j)\in \pi^{(2)}(E^{(2)})}$.

\medskip

\noindent
{\bf Step 1: From $P_{G,\pi}$ to $\overline{P}_{G,\pi}$.}  
A first key observation is that thanks to the centering~\eqref{eq:cross}, the Gram matrix \begin{equation}\left(\mathbb E\left[\overline{P}_{G^{(1)},\pi^{(1)}}\overline{P}_{G^{(2)},\pi^{(2)}}\right]\right)_{G^{(1)}, G^{(2)}\in \mathcal G_{\leq D}, \pi^{(1)} \in \Pi_{V^{(1)}}, \pi^{(1)} \in \Pi_{V^{(2)}}}\end{equation} associated to $(\overline{P}_{G,\pi})_{G\in \mathcal G_{\leq D}, \pi \in \Pi_V}$ is quite sparse ---\ unlike the one associated to $(P_{G,\pi})_{G\in \mathcal G_{\leq D}, \pi \in \Pi_V}$. Furthermore,  on the non-zero  entries, it is quite close to the Gram matrix associated to $(P_{G,\pi})_{G\in \mathcal G_{\leq D}, \pi \in \Pi_V}$. 
\begin{proposition} \label{eq:boundbar}
   Let   $\mathcal M^\star$ be the collection of matchings $\mathbf{M}$ such that all connected components of $G^{(1)}$ and of $G^{(2)}$ intersect with $\mathbf{M}$.
    \begin{enumerate}
    \item If $\mathbf{M}\notin\mathcal{M}^{\star}$, we have $\mathbb{E}\left[\overline{P}_{G^{(1)}, \pi^{(1)}}\overline{P}_{G^{(2)}, \pi^{(2)}}\right] = 0$; 
    \item If $\mathbf{M}\in \mathcal{M}^\star$, we have
    \begin{equation*}
        \left|\dfrac{\mathbb{E}\left[\overline{P}_{G^{(1)}, \pi^{(1)}}\overline{P}_{G^{(2)}, \pi^{(2)}}\right] - \mathbb{E}\left[P_{G^{(1)}, \pi^{(1)}}P_{G^{(2)}, \pi^{(2)}}\right]}{\mathbb{E}\left[P_{G^{(1)}, \pi^{(1)}}P_{G^{(2)}, \pi^{(2)}}\right]}\right| \leq D^{-3c_{0}}\enspace.
    \end{equation*}
    \end{enumerate} 
\end{proposition}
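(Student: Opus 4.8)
## Proof plan for Proposition \ref{eq:boundbar}

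The plan is to expand $\overline{P}_{G^{(1)},\pi^{(1)}}\overline{P}_{G^{(2)},\pi^{(2)}}$ using the product-over-connected-components definition \eqref{eq:cross}, and to track which of the resulting terms survive after taking expectations. Recall that if $G^{(1)}$ has connected components $G^{(1)}_1,\ldots,G^{(1)}_{c_1}$ and similarly for $G^{(2)}$, then
\[
\overline{P}_{G^{(1)},\pi^{(1)}}\overline{P}_{G^{(2)},\pi^{(2)}}
= \prod_{l=1}^{c_1}\bigl[P_{G^{(1)}_l,\pi^{(1)}} - \mathbb{E} P_{G^{(1)}_l,\pi^{(1)}}\bigr]
\prod_{m=1}^{c_2}\bigl[P_{G^{(2)}_m,\pi^{(2)}} - \mathbb{E} P_{G^{(2)}_m,\pi^{(2)}}\bigr].
\]
For \textbf{part 1}, suppose $\mathbf{M}\notin\mathcal{M}^\star$, so some connected component of (say) $G^{(1)}$ — call it $G^{(1)}_{l_0}$ — is disjoint from $\mathbf{M}$, i.e.\ the labeled vertex set $\pi^{(1)}(V(G^{(1)}_{l_0}))$ does not meet $\pi^{(2)}(V^{(2)})$. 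Then the random variables $(Y_{ij})_{(i,j)\in \pi^{(1)}(E(G^{(1)}_{l_0}))}$ are independent of all the other edge variables appearing in the product (since in the (HS-I) model at $q=1/2$ the $Y_{ij}$ are conditionally independent given $z$, and the latent coordinates $z_i$ are themselves i.i.d.; the component shares no vertex with anything else, so even the conditioning variables are independent). Factoring the expectation across this independence and using \eqref{eq:indep}, the contribution of $G^{(1)}_{l_0}$ separates out as $\mathbb{E}\bigl[P_{G^{(1)}_{l_0},\pi^{(1)}} - \mathbb{E} P_{G^{(1)}_{l_0},\pi^{(1)}}\bigr] = 0$, which kills the whole term. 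I should be slightly careful to argue the independence at the level of the full joint law $(z, Y)$ restricted to the relevant index sets, but that is exactly the content of \eqref{eq:indep} (with $\mathbf{M}=\emptyset$ on the restricted supports).

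For \textbf{part 2}, assume $\mathbf{M}\in\mathcal{M}^\star$. Expanding all the brackets gives $2^{c_1+c_2}$ terms indexed by subsets $A\subseteq[c_1]$, $B\subseteq[c_2]$: the term picks $P_{G^{(1)}_l,\pi^{(1)}}$ for $l\in A$, the constant $-\mathbb{E} P_{G^{(1)}_l,\pi^{(1)}}$ for $l\notin A$, and similarly for $B$. The distinguished term $A=[c_1]$, $B=[c_2]$ is exactly $\mathbb{E}[P_{G^{(1)},\pi^{(1)}}P_{G^{(2)},\pi^{(2)}}]$. The plan is to show every other term is smaller than the distinguished one by at least a factor $D^{-3c_0}/2^{c_1+c_2}$ (and that $c_1+c_2 \le 2D$ is harmless against the slack in \eqref{eq:signal1}, which carries a large power $D^{-8c_0}$). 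Each such term factors, via \eqref{eqn:old basis proof ingredients} and \eqref{eq:comp} applied to the relevant subgraphs, into a product of powers of $\lambda/\bar q$, $k/n$, and $\bar q$, and one estimates the ratio to the distinguished term. The key combinatorial point: if $l\notin A$ (a component of $G^{(1)}$ is replaced by its mean), then because $\mathbf{M}\in\mathcal{M}^\star$ that component still had at least one matched vertex in $\mathbf{M}$, so the ``missing'' factor $\mathbb{E}[\cdots]$ compared to the full product forces an extra power of $k/n$ (from the $\bbP[z_i\le k]$ events on those vertices) or an extra power of $\lambda/\bar q$, each of which is at most $D^{-8c_0}$ by \eqref{eq:signal1}; summing the at most $2^{c_1+c_2}\le 4^D$ such terms still leaves a bound $\le D^{-3c_0}$ with room to spare. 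This is the step I expect to be the main obstacle: one must show that \emph{every} non-distinguished term genuinely loses at least one such small factor, which requires carefully matching up the exponents $|V_\Delta|, |E_\Delta|, |E_\cap|$ for the subgraph selected by $(A,B)$ against those for the full $(G^{(1)},G^{(2)})$, using the incidence structure of $\mathbf{M}$ and the fact that no component in $\mathcal{M}^\star$ is ever left unmatched.

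To organize the exponent bookkeeping cleanly, I would introduce, for each $(A,B)$, the ``active'' subgraphs $G^{(1)}_A = \bigcup_{l\in A}G^{(1)}_l$ and $G^{(2)}_B=\bigcup_{m\in B}G^{(2)}_m$, apply \eqref{eqn:old basis proof ingredients} to $(G^{(1)}_A,G^{(2)}_B)$ with the matching induced by $\mathbf{M}$, and apply \eqref{eq:comp} to the ``inactive'' components; then compare to \eqref{eqn:old basis proof ingredients} for the full pair. The difference in the exponent of $k/n$ is $|V_\Delta| - |V_\Delta^{A,B}|$ plus the vertex count of inactive components, which is strictly positive whenever $(A,B)\ne([c_1],[c_2])$ precisely because $\mathbf{M}\in\mathcal{M}^\star$ guarantees every inactive component contributed at least one vertex to some $V_\Delta$-count or to a $\bbP[z_i \le k]$ event; similarly one tracks $\lambda/\bar q$. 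I would also note that $\mathbb{E}[P_{G^{(1)},\pi^{(1)}}P_{G^{(2)},\pi^{(2)}}]\ne 0$ on $\mathcal{M}^\star$, so dividing by it is legitimate, and that the sign cancellations among the non-distinguished terms can only help, so a crude triangle-inequality bound on their sum suffices. Finally I would check that the constant $c_0\ge 4$ and the margin between $D^{-8c_0}$ in \eqref{eq:signal1} and the target $D^{-3c_0}$ absorb the $2^{c_1+c_2}$ combinatorial overhead and any $O(D)$ factors from vertex counts, completing the bound.
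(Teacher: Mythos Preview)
Your expansion over subsets of connected components and the key combinatorial observation are correct and essentially the paper's approach. Part~1 is fine as you have it. For Part~2 you correctly identify that on $\mathcal{M}^\star$ every inactive component carried at least one matched vertex, which forces extra small factors in the ratio to the distinguished term.

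The gap is in the summation. You plan to bound every non-distinguished term uniformly by $D^{-3c_0}/2^{c_1+c_2}$ times the distinguished one and then sum the $2^{c_1+c_2}\le 4^D$ terms, asserting that the margin between $D^{-8c_0}$ and $D^{-3c_0}$ absorbs this overhead. It does not: $4^D$ eventually dominates $D^{5c_0}$ for any fixed $c_0$, so the claim that ``$c_1+c_2\le 2D$ is harmless'' is false. The fix, which is what the paper does, is to stratify on the total number of inactive components $s=|S_1|+|S_2|=(c_1-|A|)+(c_2-|B|)$. The paper's Lemma~\ref{lem:unioncomp} shows each such term is at most $(k/n)^{s/2}$ times the distinguished one; the exponent is $s/2$ rather than $s$ because a single match can be shared between an inactive component of $G^{(1)}$ and one of $G^{(2)}$, so one only gets $2|\mathbf M\setminus\bar{\mathbf M}|\ge |S_1|+|S_2|$. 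Since there are at most $D^{s}$ choices of $(S_1,S_2)$ with $|S_1|+|S_2|=s$, the sum $\sum_{s\ge 1}D^{s}(k/n)^{s/2}$ is a genuine geometric series and is bounded by $D^{-3c_0}$ under~\eqref{eq:signal1}. Organized this way, your argument closes. The paper also streamlines the exponent bookkeeping by rewriting the $(S_1,S_2)$-term as $\mathbb{E}[P_{G^{(1)},\bar\pi^{(1)}}P_{G^{(2)},\bar\pi^{(2)}}]$ for a relabeling in $\Pi(\bar{\mathbf M})$ with $\bar{\mathbf M}\subset\mathbf M$ the reduced matching, and then applying \eqref{eqn:old basis proof ingredients} once, rather than treating active and inactive pieces separately.
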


\begin{proof}[Proof of Proposition~\ref{eq:boundbar}]

 Write $G^{(1)} = (G_1^{(1)}, \ldots, G^{(1)}_{\#\mathrm{CC}_{G^{(1)} }})$, and $G^{(2)} = (G_1^{(2)}, \ldots, G^{(2)}_{\#\mathrm{CC}_{G^{(2)} }})$ for the decomposition of $G^{(1)},G^{(2)}$ into their resp.~$\#\mathrm{CC}_{G^{(1)} },\#\mathrm{CC}_{G^{(2)} }$ connected components. Write also $\pi_1^{(1)}, \ldots, \pi^{(1)}_{\#\mathrm{CC}_{G^{(1)} }}$, and $\pi_1^{(2)}, \ldots, \pi^{(2)}_{\#\mathrm{CC}_{G^{(2)} }}$ for their respective labelings.
 \smallskip

\noindent
{\bf Proof of 1):} If $\mathbf M \not\in \mathcal M^\star$, there exists one connected component belonging to either $G^{(1)}$ or $G^{(2)}$, whose nodes are not matched in $\mathbf M$. Assume w.l.o.g.~that this connected component is $G_{1}^{(1)}$.
By Equation~\eqref{eq:indep}, we have
\begin{equation}\mathbb E\left[\overline{P}_{G^{(1)},\pi^{(1)}} \overline{P}_{G^{(2)},\pi^{(2)}}\right] = \mathbb E\left[\overline{P}_{G_1^{(1)},\pi^{(1)}}\right] \mathbb E\left[\prod_{l=2}^{\#\mathrm{CC}_{G^{(1)} }} \overline{P}_{G_l^{(1)},\pi^{(1)}} \times \overline{P}_{G^{(2)},\pi^{(2)}}\right]=0\enspace ,\end{equation}
since $\mathbb E[\overline{P}_{G^{(1)}_{1}, \pi^{(1)}}]=0$ according to the centering~\eqref{eq:cross}.
\smallskip

\noindent
{\bf Proof of 2):} From the identity 
\begin{equation}\prod_{\ell=1}^L (a_{\ell}-b_{\ell})=\sum_{S\subset [L]} (-1)^{|S|} \prod_{\ell \notin S} a_{\ell} \prod_{\ell \in S} b_{\ell}\enspace ,\end{equation}
we derive
\begin{align*}
    \mathbb{E}\left[\overline{P}_{G^{(1)}, \pi^{(1)}}\overline{P}_{G^{(2)}, \pi^{(2)}}\right] = &\sum_{\substack{S_1 \subset [\#\mathrm{CC}_{G^{(1)} }]\\ S_2 \subset [\#\mathrm{CC}_{G^{(2)} }]}} (-1)^{|S_1|+|S_2|}\mathbb E\left[\prod_{i\in [\#\mathrm{CC}_{G^{(1)} }]\setminus S_1} P_{G^{(1)}_i,\pi_i^{(1)}} \prod_{i\in [\#\mathrm{CC}_{G^{(2)} }]\setminus S_2} P_{G^{(2)}_i,\pi_i^{(2)}}\right] \\ & \hspace{3cm}\times\mathbb E\left[\prod_{i\in S_1} P_{G^{(1)}_i,\pi_i^{(1)}} \right]\mathbb E\left[\prod_{i\in S_2} P_{G^{(2)}_i,\pi_i^{(2)}}\right]\enspace ,
\end{align*}
Then, the following lemma holds.
\begin{lemma}\label{lem:unioncomp}
For any any $S_1 \subset [\#\mathrm{CC}_{G^{(1)} }]$ and any $S_2 \subset [\#\mathrm{CC}_{G^{(2)} }]$, we have 
\begin{align*}
0 \leq \mathbb E\left[\prod_{i\in [\#\mathrm{CC}_{G^{(1)} }]\setminus S_1} P_{G^{(1)}_i,\pi_i^{(1)}} \prod_{i\in [\#\mathrm{CC}_{G^{(2)} }]\setminus S_2} P_{G^{(2)}_i,\pi_i^{(2)}}\right]& 
\mathbb E\left[\prod_{i\in S_1} P_{G^{(1)}_i,\pi_i^{(1)}} \right]\mathbb E\left[\prod_{i\in S_2} P_{G^{(2)}_i,\pi_i^{(2)}}\right]\\
&\leq \left(\frac{k}{n}\right)^{(|S_1| + |S_2|)/2} \mathbb{E}\left[P_{G^{(1)}, \pi^{(1)}}P_{G^{(2)}, \pi^{(2)}}\right]\enspace . 
\end{align*}
\end{lemma}

This leads to:
\begin{align*}
    \frac{\left|\mathbb{E}\left[\overline{P}_{G^{(1)}, \pi^{(1)}}\overline{P}_{G^{(2)}, \pi^{(2)}}\right] - \mathbb{E}\left[P_{G^{(1)}, \pi^{(1)}}P_{G^{(2)}, \pi^{(2)}}\right]\right|}{\mathbb{E}\left[P_{G^{(1)}, \pi^{(1)}}P_{G^{(2)}, \pi^{(2)}}\right]} &\leq  \sum_{S_1 \subset [\#\mathrm{CC}_{G^{(1)} }]\enspace ,  S_2 \subset [\#\mathrm{CC}_{G^{(2)} }]: |S_1|\lor |S_2| \geq 1} \left(\frac{k}{n}\right)^{(|S_1| + |S_2|)/2}\\
     &\leq  \sum_{s_1 \leq D , s_2\leq D: s_1\lor s_2 \geq 1} D^{s_1+s_2}\left(\frac{k}{n}\right)^{(s_1 + s_2)/2}\\
    &\leq D^{-3c_{0}}\enspace ,
\end{align*}
by Equation~\eqref{eq:signal1} with $c_{0} \geq 5$ and $D \geq 2$.
\begin{proof}[Proof of Lemma~\ref{lem:unioncomp}]
Define the matching $\mathbf M$ from $M$  by removing all node pairs such that a least one node lies in the connected components indexed by $S_1$ or $S_2$. Then, we take  $(\overline{\pi}^{(1)}, \overline{\pi}^{(2)}) \in \Pi(\overline{ \mathbf M})$. Note that, without loss of generality, we can  take $\overline{\pi}^{(1)} = \pi^{(1)}$, and $\overline{\pi}^{(2)}$ restricted to nodes that do not belong to a match in $\mathbf M$ is equal to $\pi^{(2)}$. 
Equipped with this notation, we have 
\begin{align}\label{eq:coolio}
     \mathbb E\left[\prod_{i\in [\#\mathrm{CC}_{G^{(1)} }]\setminus S_1} P_{G^{(1)}_i,\pi_i^{(1)}} \prod_{i\in [\#\mathrm{CC}_{G^{(2)} }]\setminus S_2} P_{G^{(2)}_i,\pi_i^{(2)}}\right]
\mathbb E\left[\prod_{i\in S_1} P_{G^{(1)}_i,\pi_i^{(1)}} \right]\mathbb E\left[\prod_{i\in S_2} P_{G^{(2)}_i,\pi_i^{(2)}}\right] = \mathbb E\left[P_{G^{(1)}, \overline{\pi}^{(1)}}P_{G^{(2)}, \overline{\pi}^{(2)}}\right]\enspace , 
\end{align}
We write $ G_\Delta = ( V_\Delta,  E_\Delta),  G_\cap=( V_\cap,  E_\cap),   G_\cup=( V_\cup,  E_\cup)$ for the resp.~symmetric difference, intersection and union graphs corresponding to the labeled graphs $ \pi^{(1)}(G^{(1)}),  \pi^{(2)}(G^{(2)})$ and also $\overline{G}_\Delta = (\overline{V}_\Delta, \overline{E}_\Delta), \overline{G}_\cap=(\overline{V}_\cap, \overline{E}_\cap),  \overline{G}_\cup=(\overline{V}_\cup, \overline{E}_\cup)$ for the resp.~symmetric difference, intersection and union graphs corresponding to the labeled graphs $\overline{\pi}^{(1)}(G^{(1)}), \overline{\pi}^{(2)}(G^{(2)})$.

By Equation~\eqref{eqn:old basis proof ingredients}, we have
\begin{align*}
   \mathbb{E}\left[P_{G^{(1)}, \pi^{(1)}}P_{G^{(2)}, \pi^{(2)}}\right] 
       = \left(\frac{\lambda}{\overline{q}}\right)^{|E_{\Delta}|}\pa{\frac{k}{n}}^{|V_{\Delta}|} \overline{q}^{|E_{\cup}|},~~~~\mathrm{and}~~~~\mathbb{E}\left[P_{G^{(1)}, \overline{\pi}^{(1)}}P_{G^{(2)}, \overline{\pi}^{(2)}}\right] 
       = \left(\frac{\lambda}{\overline{q}}\right)^{|\overline{E}_{\Delta}|}\pa{\frac{k}{n}}^{|\overline{V}_{\Delta}|} \overline{q}^{|\overline{E}_{\cup}|}\enspace.
\end{align*}
Since $\overline{\mathbf M} \subset \mathbf M$, we have
\begin{equation}|\overline{E}_{\Delta}| \geq |E_\Delta|~~~\mathrm{and}~~~~|\overline{E}_{\cup}| \geq |E_{\cup}|\enspace .\end{equation}
So that since $\lambda \leq \overline{q} =1/4$
\begin{align*}
   \mathbb{E}\left[P_{G^{(1)}, \overline{\pi}^{(1)}}P_{G^{(2)}, \overline{\pi}^{(2)}}\right] 
       \leq \pa{\frac{k}{n}}^{|\overline{V}_{\Delta}| - | V_{\Delta}|} \mathbb{E}\left[P_{G^{(1)}, \pi^{(1)}}P_{G^{(2)}, \pi^{(2)}}\right]\enspace .
\end{align*}
In addition, again since  $\overline{\mathbf M} \subset \mathbf M$, we have 
\begin{equation}|\overline{V}_\Delta|= |V_\Delta| + |\mathbf M \setminus \overline{\mathbf M}|\enspace .\end{equation}
On $\mathcal{M}^\star$, each connected component indexed by $S_1,S_2$ must contain at least one matched node in $\mathbf M$, which cannot be in $\overline{\mathbf M}$, so
we have
\begin{equation}2|\mathbf M \setminus \overline{\mathbf M}| \geq |S_1|+ |S_2|\enspace .\end{equation}
Combining the last three equations concludes the proof of this lemma.
\end{proof}
\end{proof}
\medskip

\noindent
{\bf Step 2: Entry-wise control of the Gram matrix.} 
A first step towards deriving a bound on the operator norm of $\Gamma-I$, is to derive a bound for each entry.
Building on the orthogonality between many $\overline{P}_{G^{(1)}, \pi^{(1)}}$ and $\overline{P}_{G^{(2)}, \pi^{(2)}}$, and on the proximity between $\overline{P}_{G,\pi}$ and $P_{G,\pi}$, we prove below that $\Gamma$
is entrywise  close to the identity, which is the core of the proof of Proposition~\ref{prop:ortho}.
\begin{proposition}\label{prop:scalprod}
    Consider two templates $G^{(1)},G^{(2)} \in \mathcal G_{\leq D}$. We have 
    \begin{equation*}
     |\Gamma_{G^{(1)}, G^{(2)}} -\mathbf 1\{G^{(1)} = G^{(2)}\}|=    \left|\mathbb E\left[\Psi_{G^{(1)}}\Psi_{G^{(2)}}\right] - \mathbf 1\{G^{(1)} = G^{(2)}\}\right|\leq 2D^{-3c_{0} d (G^{(1)}, G^{(2)})\lor 1}\enspace,
    \end{equation*}
    with $d$ the edit distance defined by~\eqref{eq:definition:edit:distance}. 
\end{proposition}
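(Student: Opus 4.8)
The plan is to expand $\mathbb{E}[\Psi_{G^{(1)}}\Psi_{G^{(2)}}]$ as a sum over node matchings $\mathbf{M}\in\mathcal{M}$ between $G^{(1)}$ and $G^{(2)}$, to isolate the contribution of the perfect matchings as a main term equal to $\mathbf{1}\{G^{(1)}=G^{(2)}\}$ up to an $O(D^{-3c_0})$ error, and to show that every other matching contributes a quantity that is geometrically small in $|E_\Delta(\mathbf{M})|$ --- hence, after summation, geometrically small in the edit distance $d(G^{(1)},G^{(2)})$.

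\textbf{Step 1: expansion and main term.} Since $\mathbb{E}[\overline{P}_{G^{(1)},\pi^{(1)}}\overline{P}_{G^{(2)},\pi^{(2)}}]$ depends on $(\pi^{(1)},\pi^{(2)})$ only through the induced matching $\mathbf{M}$, and there are $|\Pi(\mathbf{M})| = n!/(n-|V_\cup|)!$ such pairs, I would write
$$\mathbb{E}\left[\Psi_{G^{(1)}}\Psi_{G^{(2)}}\right] = \frac{1}{\sqrt{\mathbb{V}(G^{(1)})\mathbb{V}(G^{(2)})}}\sum_{\mathbf{M}\in\mathcal{M}}|\Pi(\mathbf{M})|\,\mathbb{E}\left[\overline{P}_{G^{(1)},\pi^{(1)}_{\mathbf{M}}}\overline{P}_{G^{(2)},\pi^{(2)}_{\mathbf{M}}}\right].$$
By Proposition~\ref{eq:boundbar}, terms with $\mathbf{M}\notin\mathcal{M}^\star$ vanish, and for $\mathbf{M}\in\mathcal{M}^\star$ one may replace $\overline{P}$ by $P$ at the cost of a factor $1+O(D^{-3c_0})$, with $\mathbb{E}[P_{G^{(1)},\pi^{(1)}}P_{G^{(2)},\pi^{(2)}}] = (\lambda/\bar{q})^{|E_\Delta|}(k/n)^{|V_\Delta|}\bar{q}^{|E_\cup|}\geq 0$ by~\eqref{eqn:old basis proof ingredients}. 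A perfect matching $\mathbf{M}\in\mathcal{M}_{\mathrm{PM}}$ forces $G^{(1)}=G^{(2)}=:G$, $E_\Delta=\emptyset$, $E_\cup=E$, so its $P$-term equals $\bar{q}^{|E|}$; as such matchings are precisely the automorphisms of $G$, $\sum_{\mathbf{M}\in\mathcal{M}_{\mathrm{PM}}}|\Pi(\mathbf{M})| = \frac{n!}{(n-|V|)!}|\mathrm{Aut}(G)|$, and these terms contribute exactly $\mathbb{V}(G)/\mathbb{V}(G)=1$ in the $P$-version, hence $1+O(D^{-3c_0})$ overall. This produces the $\mathbf{1}\{G^{(1)}=G^{(2)}\}$ on the right-hand side up to $O(D^{-3c_0})$; when $G^{(1)}\neq G^{(2)}$ there is no perfect matching, and no such term.

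\textbf{Step 2: one non-perfect matching.} Fix $\mathbf{M}\in\mathcal{M}^\star\setminus\mathcal{M}_{\mathrm{PM}}$. The ratio $\frac{n!/(n-|V_\cup|)!}{\sqrt{(n!/(n-|V^{(1)}|)!)(n!/(n-|V^{(2)}|)!)}}$ equals $\sqrt{\tfrac{(n-|V^{(1)}|)!}{(n-|V_\cup|)!}\cdot\tfrac{(n-|V^{(2)}|)!}{(n-|V_\cup|)!}}$, which is at most $n^{(|U^{(1)}|+|U^{(2)}|)/2} = n^{(|V_\Delta|-|\mathbf{M}_{\mathrm{SM}}|)/2}$ by~\eqref{eq:unmatched} and~\eqref{eq:PMSM}. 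Combined with $|E_\cup| - \tfrac12(|E^{(1)}|+|E^{(2)}|) = \tfrac12|E_\Delta|$ and $|\mathrm{Aut}(G^{(i)})|\geq1$, the contribution of $\mathbf{M}$ to the sum is at most $(1+O(D^{-3c_0}))\,|\mathrm{Aut}(G^{(1)})|^{-1/2}|\mathrm{Aut}(G^{(2)})|^{-1/2}\big(\tfrac{\lambda}{\sqrt{\bar q}}\big)^{|E_\Delta|}\big(\tfrac{k}{\sqrt n}\big)^{|V_\Delta|}n^{-|\mathbf{M}_{\mathrm{SM}}|/2}$. Since $\mathbf{M}\in\mathcal{M}^\star$ gives $\#\mathrm{CC}_\Delta\leq|\mathbf{M}_{\mathrm{SM}}|$, hence $|V_\Delta|\leq|E_\Delta|+\#\mathrm{CC}_\Delta\leq|E_\Delta|+|\mathbf{M}_{\mathrm{SM}}|$, I would split into $k\leq\sqrt n$ (where $(k/\sqrt n)^{|V_\Delta|}\leq1$) and $k>\sqrt n$ (where $(k/\sqrt n)^{|V_\Delta|}\leq(k/\sqrt n)^{|E_\Delta|+|\mathbf{M}_{\mathrm{SM}}|}$ and $(k/\sqrt n)^{|\mathbf{M}_{\mathrm{SM}}|}n^{-|\mathbf{M}_{\mathrm{SM}}|/2}=(k/n)^{|\mathbf{M}_{\mathrm{SM}}|}$); using $\lambda/\sqrt{\bar q}\leq1$, in both cases the term is at most $2\,|\mathrm{Aut}(G^{(1)})|^{-1/2}|\mathrm{Aut}(G^{(2)})|^{-1/2}\big(\tfrac{\lambda k}{\sqrt{\bar q n}}\lor\tfrac{\lambda}{\sqrt{\bar q}}\lor\tfrac kn\big)^{|E_\Delta|}\leq 2\,|\mathrm{Aut}(G^{(1)})|^{-1/2}|\mathrm{Aut}(G^{(2)})|^{-1/2}D^{-8c_0|E_\Delta|}$ by~\eqref{eq:signal1}, with $|E_\Delta|\geq d(G^{(1)},G^{(2)})\lor1$.

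\textbf{Step 3: summation, and the main obstacle.} It remains to sum this over $\mathbf{M}\in\mathcal{M}^\star\setminus\mathcal{M}_{\mathrm{PM}}$. Writing $N_m$ for the number of such matchings with $|E_\Delta|=m$, I need $N_m\leq|\mathrm{Aut}(G^{(1)})|^{1/2}|\mathrm{Aut}(G^{(2)})|^{1/2}(CD)^{C'm}$ for absolute constants $C,C'$; this is obtained by organising matchings according to their shadow $(\bar U_1,\bar U_2,\underline{\mathbf{M}})$, which determines $G_\Delta$ (so $m$ and $|V_\Delta|\leq 2m$ are fixed on it), admits only $D^{O(m)}$ choices, and whose interior --- a perfectly matched common subgraph of $G^{(1)}$ and $G^{(2)}$ obtained by deleting $\leq 2m\leq 4D$ nodes --- can be completed in at most $|\mathrm{Aut}(G^{(1)})|^{1/2}|\mathrm{Aut}(G^{(2)})|^{1/2}D^{O(m)}$ ways. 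The $|\mathrm{Aut}|^{1/2}$ factor is genuinely needed here, since highly symmetric templates (cliques, balanced unions of paths, $\dots$) admit many near-perfect matchings, and it cancels the $|\mathrm{Aut}(G^{(1)})|^{-1/2}|\mathrm{Aut}(G^{(2)})|^{-1/2}$ carried along from Step~2. Thus the off-diagonal sum is at most $2\sum_{m\geq d\lor1}(CD)^{C'm}D^{-8c_0m}\leq\tfrac12 D^{-3c_0(d(G^{(1)},G^{(2)})\lor1)}$, once $c_0$ is a large enough absolute constant (so that $8c_0-C'\geq3c_0$ and the geometric series is dominated by twice its first term). Adding the $O(D^{-3c_0})$ main-term error of Step~1 --- present only when $d(G^{(1)},G^{(2)})=0$, i.e.\ $d\lor1=1$ --- yields the claimed $2D^{-3c_0(d(G^{(1)},G^{(2)})\lor1)}$. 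The combinatorial estimate on $N_m$ is the one non-mechanical step; everything else is bookkeeping, and the slack between the exponent $8c_0$ available from~\eqref{eq:signal1} and the exponent $3c_0$ required is exactly what absorbs the $(CD)^{C'm}$ overhead.
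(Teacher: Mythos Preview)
Your proposal is correct and follows essentially the same approach as the paper: both expand over matchings via Proposition~\ref{eq:boundbar}, isolate $\mathcal{M}_{\mathrm{PM}}$ as the main term, organize the remainder by shadows, invoke the bound $|\mathcal{M}_{\mathrm{shadow}}|\leq\min(|\mathrm{Aut}(G^{(1)})|,|\mathrm{Aut}(G^{(2)})|)$ to cancel the $|\mathrm{Aut}|^{-1/2}$ factors, and use~\eqref{eq:signal1} for geometric decay. The only difference is bookkeeping: the paper tracks $(u_1,u_2,m_{\mathrm{SM}})$ separately in the exponent and sums over these, while you collapse everything onto the single index $m=|E_\Delta|$ (using $|V_\Delta|\leq 2m$ to bound the number of shadows by $D^{O(m)}$) and your case split $k\lessgtr\sqrt{n}$ replaces the paper's factorisation $\big(\tfrac{\lambda k}{\sqrt{\bar q n}}\big)^{U}\big(\tfrac{\lambda}{\sqrt{\bar q}}\big)^{|E_\Delta|-U}\big(\tfrac{k}{n}\big)^{|V_\Delta|-U}$ --- both routes exploit the same inequalities $|E_\Delta|\geq|V_\Delta|-\#\mathrm{CC}_\Delta$ and $\#\mathrm{CC}_\Delta\leq|\mathbf{M}_{\mathrm{SM}}|$ on $\mathcal{M}^\star$.
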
 

\begin{proof}[Proof of Proposition~\ref{prop:scalprod}]
We have by definition:
\begin{align*}
     \mathbb E\left[\Psi_{G^{(1)}}\Psi_{G^{(2)}}\right]  &= \sum_{\mathbf{M}\in\mathcal{M}}\sum_{(\pi^{(1)}, \pi^{(2)})\in \Pi(\mathbf{M})}\frac{1}{\sqrt{\mathbb{V}(G^{(1)})\mathbb{V}(G^{(2)})}}\mathbb{E}\left[\overline{P}_{G^{(1)}, \pi^{(1)}}\overline{P}_{G^{(2)}, \pi^{(2)}}\right]\\
     &= \sum_{\mathbf{M}\in\mathcal{M}^{\star}}\sum_{(\pi^{(1)}, \pi^{(2)})\in \Pi(\mathbf{M})}\frac{1}{\sqrt{\mathbb{V}(G^{(1)})\mathbb{V}(G^{(2)})}}\mathbb{E}\left[\overline{P}_{G^{(1)}, \pi^{(1)}}\overline{P}_{G^{(2)}, \pi^{(2)}}\right]\enspace , 
    \end{align*}
where the second line follows from Proposition~\ref{eq:boundbar}. \smallskip

\noindent
{\bf Step 2a: Decomposition of the scalar product over $\mathcal M^\star \setminus \mathcal M_{\mathrm{PM}}$ and $\mathcal M_{\mathrm{PM}}$.} The set $\mathcal{M}_{\mathrm{PM}}$ is non-empty only if $G^{(1)} = G^{(2)}$. And if $G^{(1)} = G^{(2)}$, we have for any $\pi \in \Pi_{V^{(1)}}$
\begin{equation}\sum_{\mathbf{M}\in\mathcal{M}_{\mathrm{PM}}}\sum_{(\pi^{(1)}, \pi^{(2)})\in \Pi(\mathbf{M})}\frac{1}{\sqrt{\mathbb{V}(G^{(1)})\mathbb{V}(G^{(2)})}}\mathbb{E}\left[\overline{P}_{G^{(1)}, \pi^{(1)}}\overline{P}_{G^{(2)}, \pi^{(2)}}\right] =  \frac{n!|\mathrm{Aut}(G^{(1)})|\mathbb{E}\left[\overline{P}_{G^{(1)}, \pi}^2\right]}{\left(n-|V^{(1)}|\right)!\ \mathbb{V}(G^{(1)})}  =  \frac{\mathbb E[\overline{P}_{G^{(1)}, \pi}^2]}{\overline{q}^{|E^{(1)}|}}\enspace ,\end{equation}
since  
\begin{align}\label{eq:boubou}
    |\mathcal{M}_{\mathrm{PM}}| =|\mathrm{Aut}(G^{(1)})|~~~~\mathrm{and}~~~~|\Pi(\mathbf{M})| = \frac{n!}{\left(n-(|V^{(1)}|+ |V^{(2)}| - |\mathbf M|)\right)!}\enspace ,
\end{align}
and by definition of $\mathbb V(G^{(1)})$. 
Equation~\eqref{eqn:old basis proof ingredients} ensures that $\mathbb E[P_{G^{(1)}, \pi^{(1)}}^2] = \overline{q}^{|E^{(1)}|}$, so
 by Proposition~\ref{eq:boundbar}  we have
\begin{equation}(1-D^{-3c_{0}})\overline{q}^{|E^{(1)}|}\leq \mathbb E[\overline{P}_{G^{(1)}, \pi^{(1)}}^2] \leq  \overline{q}^{|E^{(1)}|}(1+D^{-3c_{0}})\enspace .\end{equation}
Hence
\begin{align*}
     &\left|\mathbb E\left[\Psi_{G^{(1)}}\Psi_{G^{(2)}}\right] - \mathbf 1\{G^{(1)} = G^{(2)}\} \right|\\
     &\leq \left|\sum_{\mathbf{M}\in\mathcal{M}^{\star}\setminus \mathcal M_{\mathrm{PM}}}\sum_{(\pi^{(1)}, \pi^{(2)})\in \Pi(\mathbf{M})}\frac{1}{\sqrt{\mathbb{V}(G^{(1)})\mathbb{V}(G^{(2)})}}\mathbb{E}\left[\overline{P}_{G^{(1)}, \pi^{(1)}}\overline{P}_{G^{(2)}, \pi^{(2)}}\right]\right| + D^{-3c_{0}}
     =: A  + D^{-3c_{0}}\enspace.
    \end{align*}

\noindent
{\bf Step  2b: Making $A$ explicit as a sum of $A_{\mathbf{M}}$.} Observe that for any $\mathbf M \in \mathcal M$, we have that $\mathbb{E}\left[\overline{P}_{G^{(1)}, \pi^{(1)}}\overline{P}_{G^{(2)}, \pi^{(2)}}\right] = E_{\mathbf M}$ is constant for any $(\pi^{(1)}, \pi^{(2)})\in \Pi(\mathbf{M})$. So that by Equation~\eqref{eq:boubou}
\begin{align}
    A&= \left|\frac{1}{\sqrt{\mathbb{V}(G^{(1)})\mathbb{V}(G^{(2)})}}\sum_{\mathbf{M}\in\mathcal{M}^{\star}\setminus \mathcal{M}_{\mathrm{PM}}}\sum_{(\pi^{(1)}, \pi^{(2)})\in \Pi(\mathbf{M})}E_{\mathbf M}\right|\nonumber\\
    &= \left|\frac{1}{\sqrt{\mathbb{V}(G^{(1)})\mathbb{V}(G^{(2)})}}\sum_{\mathbf{M}\in\mathcal{M}^{\star}\setminus \mathcal{M}_{\mathrm{PM}}}\frac{n!}{\left(n-(|V^{(1)}|+ |V^{(2)}| - |\mathbf M|)\right)!}E_{\mathbf M}\right|\enspace .\nonumber
    \end{align}
    By Proposition~\ref{eq:boundbar} and Equation~\eqref{eqn:old basis proof ingredients}, we have
    \begin{equation}E_{\mathbf M} \leq \lambda^{|E_{\Delta}|}\left(\frac{k}{n}\right)^{|V_{\Delta}|}  \overline{q}^{|E_{\cap}|}
    \left(1+D^{-3c_{0}}\right)\enspace ,\end{equation}
where we recall that $|E_{\Delta}|$, $|V_{\Delta}|$, $|E_{\cap}|$ only depend on the matching $\mathbf{M}$ as all graphs $G_{\Delta}$ (resp. $G_{\cap}$) are isomorphic for $(\pi^{(1)},\pi^{(2)})\in \Pi(\mathbf{M})$. ~\\
Since $\frac{n!}{\left(n-(|V^{(1)}|+ |V^{(2)}| - |\mathbf M|)\right)!} \frac{\sqrt{(n-|V^{(1)|})!(n-|V^{(2)|})!}}{n!}\leq n^{(|U^{(1)}| + |U^{(2)}|)/2}$ where we recall that $|U^{(a)}|$ is the number of unmatched nodes in $G^{(a)}$, and by definition of $\mathbb V(G)$, we have
    \begin{align}
    A\leq & \frac{1}{\sqrt{\left|\mathrm{Aut}(G^{(1)})\right|\left|\mathrm{Aut}(G^{(2)})\right|}}\sum_{\mathbf{M}\in\mathcal{M}^{\star}\setminus \mathcal{M}_{\mathrm{PM}}}n^{(|U^{(1)}| + |U^{(2)}|)/2}\left(\frac{\lambda}{\sqrt{\overline{q}}}\right)^{|E_{\Delta}|}\left(\frac{k}{n}\right)^{|V_{\Delta}|} 
    \left(1+D^{-3c_{0}}\right)\nonumber\\
        \leq & \frac{2}{\sqrt{\left|\mathrm{Aut}(G^{(1)})\right|\left|\mathrm{Aut}(G^{(2)})\right|}}\sum_{\mathbf{M}\in\mathcal{M}^{\star}\setminus \mathcal{M}_{\mathrm{PM}}}\left(\frac{\lambda k}{\sqrt{\overline{q} n}}\right)^{|U^{(1)}| + |U^{(2)}|}\pa{\lambda\over \sqrt{\overline{q}}}^{|E_{\Delta}| - |U^{(1)}| - |U^{(2)}|}\left(\frac{k}{n}\right)^{|V_{\Delta}| - |U^{(1)}| - |U^{(2)}|}\enspace ,\nonumber
\end{align}
using that $D \geq 2$ and $c_{0} \geq 4$, and rearranging terms in the last line. Write $A_{\mathbf{M}}$ for the summand in the last line. \smallskip

\noindent
{\bf Step 2c: Bounding of $A$ by summing over shadows.} Recall we define shadows and  $\mathcal M_{\mathrm{shadow}}$ in Section~\ref{sec:graph:definition}. We now regroup the sum inside $A$ by enumerating  all possible matchings  that are compatible with a  shadow. We get 
\begin{align*}
       A &\leq  \frac{2}{\sqrt{|\mathrm{Aut}(G^{(1)})| |\mathrm{Aut}(G^{(2)})|}}\sum_{\substack{U^{(1)} \subset V^{(1)}, U^{(2)} \subset V^{(2)}, \\\underline{\mathbf M} \in \mathcal M\setminus \mathcal M_{\mathrm{PM}}}}
       \quad \sum_{\mathbf M \in \mathcal M_{\mathrm{shadow}}(U^{(1)},U^{(2)}, \underline{\mathbf M})} A_{\mathbf{M}}\enspace .
\end{align*}
We have the following control for the cardinality of $\mathcal M_{\mathrm{shadow}}$:
    \begin{equation}|\mathcal M_{\mathrm{shadow}}(U^{(1)}, U^{(2)}, \underline{\mathbf M} )| \leq \min(|\mathrm{Aut}(G^{(1)})|, |\mathrm{Aut}(G^{(2)})|)\enspace ,\end{equation}
    see  Lemma~\ref{lem:shadow} and its proof. Observe that two matchings $\mathbf M$ and $\mathbf M'$ that belong to $\mathcal M_{\mathrm{shadow}}(U^{(1)},U^{(2)}, \underline{\mathbf M})$ have the same difference graph $G_{\Delta}$. Hence
\begin{align}
       A &\leq  2 
       \sum_{\substack{U^{(1)} \subset V^{(1)}, U^{(2)} \subset V^{(2)},\\ \underline{\mathbf M} \in \mathcal M\setminus \mathcal M_{\mathrm{PM}}}}  A_{\mathbf{M}}\enspace .\label{eqn:sum for proof sketch}
\end{align}

\noindent
{\bf Step 2d: Bounding $A_{\mathbf{M}}$.}  
A key observation is that for any graph $G_{\Delta} = (V_{\Delta},E_{\Delta})$ without isolated nodes, we have  
$|E_{\Delta}| \geq |V_{\Delta}| - \#\mathrm{CC}_{\Delta}$. 
 Since $|\mathbf{M}_{\mathrm{SM}}| + |U^{(1)}| + |U^{(2)}| = |V_{\Delta}|$, it follows
\begin{align*}
|E_{\Delta}| &\geq  |U^{(1)}| + |U^{(2)}| &\text{if }\mathbf M \in \mathcal M^\star,~~~~~ \mathrm{since~in~this~case}~~|\mathbf{M}_{\mathrm{SM}}| - \#\mathrm{CC}_{\Delta} \geq 0;\\
|E_{\Delta}| &\geq d(G^{(1)}, G^{(2)})\lor 1,~~~&\text{by definition of the edit distance and if } \mathbf M\not\in \mathcal M_{\mathrm{PM}}.
\end{align*}
Hence, the signal assumption~\eqref{eq:signal1} ensures that for $\mathbf M \in \mathcal M^\star\setminus \mathcal M_{\mathrm{PM}}$
\begin{align*}
    A_{\mathbf{M}} &\leq D^{-8c_{0}(|U^{(1)}| + |U^{(2)}|)} D^{-8c_{0} ( |E_\Delta| -(|U^{(1)}| + |U^{(2)}|))} D^{-8c_{0}|\mathbf M_{\mathrm{SM}}|}\leq D^{-8c_{0}[|E_\Delta| + |\mathbf M_{\mathrm{SM}}|]}\\ 
    &\leq D^{-4c_{0}\left[d(G^{(1)}, G^{(2)})\lor 1+|U^{(1)}|+ |U^{(2)}| + |\mathbf M_{\mathrm{SM}}|\right]}\enspace.
\end{align*}

\noindent
{\bf Step 2e: Final bound on $A$.} Plugging this bound on $A_{G_{\Delta}}$, back in Equation~\eqref{eqn:sum for proof sketch}, we get
\begin{align}
       A &\leq  2 
       \sum_{U^{(1)} \subset V^{(1)}, U^{(2)} \subset V^{(2)}, \underline{\mathbf M} \in \mathcal M\setminus \mathcal M_{\mathrm{PM}}}  D^{-4c_{0}\left[d(G^{(1)}, G^{(2)})\lor 1+|U^{(1)}|+ |U^{(2)}| + |\mathbf M_{\mathrm{SM}}|\right]}\enspace .
\end{align}
So, when we enumerate over all possible sets $U^{(1)}, U^{(2)}, \underline{\mathbf M} $ that have respective cardinality $u_1$, $u_2$, and  $m$, and since these sets have  cardinalities bounded resp.~by $(2D)^{u_1}, (2D)^{u_2}$ and $(2D)^{2m}$, we obtain
\begin{align*}
       A &\leq  2
       \sum_{u_1, u_2, m \geq 0}(2D)^{u_1+u_2+2m}  D^{-4c_{0}\left[d(G^{(1)}, G^{(2)})\lor 1+u_1+u_2+m\right]} \leq D^{-3c_{0}  (d(G^{(1)},G^{(2)})\lor 1)}\enspace ,
\end{align*}
using again that $c_{0} \geq 5$ and $D \geq 2$. 
\end{proof}
\medskip

\noindent{\bf Step 3. From entrywise bound to operator norm bound.}
In Step 2, we proved an entrywise bound on $\Gamma-I$. To prove  Proposition~\ref{prop:ortho},
it remains to provide a bound in operator norm.
Since for symmetric matrices the $\ell^2\to\ell^2$ operator norm can be upper bounded by the $\ell^{\infty}\to\ell^{\infty}$ operator norm, which is the maximum of the $\ell^1$-norm of the rows, we can translate the entrywise bound of Proposition~\ref{prop:scalprod} to a bound in $\ell^2\to\ell^2$ operator norm
\begin{equation*}
\|\Gamma - I\|_{op}\leq \max_{G^{(1)}}\left|\Gamma_{G^{(1)}, G^{(1)}}-1\right|+  \sum_{G^{(2)} \in \mathcal G_{\leq D}, G^{(2)} \neq G^{(1)} }\left|\Gamma_{G^{(1)}, G^{(2)}}\right|\enspace .
\end{equation*} 
We have the following lemma. 
\begin{lemma}\label{lem:combinationarial:distance}
Fix a template $G^{(1)}$. For any positive integer $u$, we have
\begin{equation*}
\#\{G^{(2)}\in \mathcal{G}_{\leq D}: d( G^{(1)}, G^{(2)}) = u \} \leq (2u+2D)^{2u}\enspace . 
\end{equation*}
\end{lemma}

\begin{proof}[Proof of Lemma~\ref{lem:combinationarial:distance}]
If $d( G^{(1)}, G^{(2)}) = u$, this entails that there exist labelings $\pi^{(1)}$ and $\pi^{(2)}$ of these two templates such that the edit distance between the labelled graphs is equal to $u$. For a given graph with $v$ nodes, the number of graphs at edit distance equal to $u$ is at most $v^{2u}$. Since $G^{(1)}$ has a most $2D$ nodes and the number of additional nodes given by the labeling of $G^{(2)}$ is at most $2u$, the result follows. 
\end{proof}

We also use that, if $G^{(2)} \neq G^{(1)}$, then $d(G^{(1)}, G^{(2)})\geq 1$ as they are not isomorphic. It then follows from Proposition~\ref{prop:scalprod} and  Lemma~\ref{lem:combinationarial:distance} that
\begin{align*}
    \sum_{G^{(2)} \in \mathcal G_{\leq D}, G^{(2)} \neq G^{(1)} }\left|\Gamma_{G^{(1)}, G^{(2)}}\right|
    &\leq \sum_{G^{(2)} \in \mathcal G_{\leq D}, G^{(2)} \neq G^{(1)}} 2D^{-3c_{0}d(G^{(1)},G^{(2)})}\\
    &\leq \sum_{2D \geq  u \geq 1} |\{G^{(2)}: d( G^{(1)}, G^{(2)}) = u \}|  2D^{-3c_{0}u}\enspace\\
    &\leq \sum_{2D \geq u\geq 1} 2(2u+2D)^{2u}  D^{-3c_{0}u}\enspace
    \leq \sum_{2D \geq u\geq 1} 2D^{-3(c_{0}-2)u} \leq  D^{-c_{0}}\enspace ,
\end{align*}
since $D \geq 2$ and $c_{0} \geq 5$. 
Using Proposition~\ref{prop:scalprod}, to bound the remaining term $\left|\Gamma_{G^{(1)}, G^{(1)}}-1\right|$, we conclude the proof of Proposition~\ref{prop:ortho}

\section{Almost orthonormality of $(\Psi_{G})_{G\in \mathcal G_{\leq D}}$ under general conditions}\label{sec:main_orthonormalite}

In this section, we establish that the almost orthonormality of the family $(\Psi_{G})_{G\in \mathcal G_{\leq D}}$ ---\ resp.~$(\Psi^{(1,2)}_{G})_{G\in \mathcal G^{(1,2)}_{\leq D}}$ ---\ actually holds under some generic conditions, which are easy to check in our general model~\eqref{eq:definition:sample:graph}.  
To motivate these conditions, we explain where they are needed to extend the proof arguments of Section~\ref{s:core}.
We first state the following signal restriction on $\lambda, k, q$, that we will need in all models.
 \begin{condition}[\texttt{C-Signal}]\label{as:signal}
 We assume that $\Theta_{ij}\in [0,\lambda]$. For some  constant $c_{\texttt{s}}>1$, we have 
\begin{equation}\left(\frac{k}{n} \right) \lor \left(\frac{\lambda k}{\sqrt{n q}} \right) \lor \left(\frac{\lambda}{q}\right)  \leq D^{-8c_{\texttt{s}}}\enspace .\end{equation}
\end{condition}
This condition matches that in Theorems~\ref{thm:isorefo}~and~\ref{thm:isorefo2} and has been discussed just below Theorem~\ref{thm:isorefo}. In this general setting, $k$ plays the role of a sparsity and $\lambda$ of the signal.  
In what follows, we distinguish between the \condinvariance scheme, which is simpler,  and the \condinvarianceWR scheme, where the independence property from Equation~\eqref{eq:indep} is lost, and we require an additional condition.

\subsection{ \condinvariance scheme}

The first part of Proposition~\ref{eq:boundbar} is still true in our more general models under \condinvariance, as, by independence of the labels $(z_i)$'s Equation~\ref{eq:indep} remains true. In the proof of the second part of Proposition~\ref{eq:boundbar}, the key lemma is Lemma~\ref{lem:unioncomp} where we bound each term that appears in the decomposition of $\mathbb{E}\left[\overline{P}_{G^{(1)}, \pi^{(1)}}\overline{P}_{G^{(2)}, \pi^{(2)}}\right] $. For this purpose, we have to control the first and second moment of polynomials $\overline{P}_{G, \pi}$.

\begin{condition}[\texttt{C-Moment}]\label{as:moment}
     For some non-negative constants $c_{\texttt{m}}$, the following holds. For all templates $G= (V,E)$ with less than $D$ edges, and for any labeling  $\pi\in \Pi_V$, we have  
    \begin{equation}\label{eq:as:moment}
        \left|\mathbb{E}\left[P_{G, \pi}\right]\right|\leq 
        \left(D^{c_{\texttt{m}}}\lambda\right)^{|E|} \left(D^{c_{\texttt{m}}}\frac{k}{n}\right)^{|V|-\#\mathrm{CC}_G}\enspace . 
    \end{equation}
\end{condition}
For instance, in~\eqref{eq:comp}, we have proved that \HidSubI\ satisfies $\mathbb{E}\left[P_{G, \pi}\right]= 
       \lambda^{|E|} \left(\frac{k}{n}\right)^{|V|}$ so that~\eqref{eq:as:moment} even holds with an additional factor $(k/n)^{\#CC_G}$. It turns out that~\eqref{eq:as:moment} is sufficient for our purpose.

In what follows, we introduce $p$ and $\overline{p}$ by 
\begin{equation}\label{eq:definition:bar:p}
 p=\lambda + q \ ; \quad \quad  \overline{p} := p(1-q)^2 + (1-p)q^2\enspace , 
\end{equation}
where we observe $\overline{p}= \overline{q} + \lambda(1-2q)\geq \overline{q}$ since we assume that $q\leq 1/2$. In our framework, $p$ corresponds to the maximum connection probability in the random graph.

In a related way to the previous condition, the following condition bounds the covariance and variance of monomials $P_{G, \pi}$ in terms of some characteristics of the graph $G$.
 \begin{condition}[\texttt{C-Variance}]\label{as:gen}
 For some non-negative constants $c_{\texttt{v},1}$, $c_{\texttt{v},2}$, $c_{\texttt{v},3}$,  and some $c_{\texttt{v},4}\geq 1$, the following holds. 
\begin{itemize}
    \item[1] Fix two templates $G^{(1)}=(V^{(1)},E^{(1)})$, $G^{(2)}=(V^{(2)},E^{(2)}) \in \mathcal G_{\leq D}$ and let $\mathbf M \in \mathcal M\setminus  \mathcal{M}_{\mathrm{PM}}$ be a matching. For any $(\pi^{(1)},\pi^{(2)}) \in \Pi(\mathbf M)$ we have 
 \begin{equation*}
        \Big| \mathbb{E}\left[P_{G^{(1)}, \pi^{(1)}} P_{G^{(2)}, \pi^{(2)}}\right] \Big| \leq  c_{\texttt{v},2} (D^{c_{\texttt{v},1}}\lambda)^{|E_{\Delta}|} \overline{p}^{|E_{\cap}|} \left(D^{c_{\texttt{v},1}}\frac{k}{n}\right)^{|V_{\Delta}| - \#\mathrm{CC}_{\Delta} }\enspace .
    \end{equation*}
    \item[2] For any template $G=(V,E) \in \mathcal G_{\leq D}$ and for any $\pi \in \Pi_V$, we have 
\begin{equation*}
        \left|\mathbb{E}\left[P_{G, \pi}^2\right] -   \overline{q}^{|E|}\right|\leq 
        c_{\texttt{v},3}D^{-c_{\texttt{v},4}}\overline{q}^{|E|} \enspace . 
    \end{equation*}
\end{itemize}
\end{condition}
In~\eqref{eqn:old basis proof ingredients}, we have proved that, for \HidSubI\ with $q=1/2$  ---\ so that $\overline{p}=\overline{q}$ ---\ we have 
  $\mathbb{E}\left[P_{G^{(1)}, \pi^{(1)}} P_{G^{(2)}, \pi^{(2)}}\right] = \lambda^{|E_{\Delta}|}\overline{q}^{|E_{\cap}|}\pa{\tfrac{k}{n}}^{|V_{\Delta}|}$. In the first part of the above condition, we only require a bound up to a polynomial factor in $D$  and up to a factor $(k/n)^{\#\mathrm{CC}_{G_{\Delta}}}$. 
Similarly, \eqref{eqn:old basis proof ingredients} enforces that for \HidSubI\ with $q=1/2$ we have $\mathbb{E}\left[P_{G, \pi}^2\right] =   \overline{q}^{|E|}$. The second part of \condmoment only requires that this holds approximately.

Under the above conditions,  we can adapt the proof arguments of Section~\ref{s:core} to establish the almost orthonormality of the $\Psi_G$'s.

\begin{theorem}\label{thm:iso}
    Consider the  \condinvariance scheme and fix $D \geq 2$. Assume that Conditions ~\condsignal,~\condmoment, and~\condvariance are fulfilled  with $c_{\texttt{s}}>1$  large enough in comparison to the other constants.
    Then, for all  $(\alpha_{\emptyset}, (\alpha_G)_{G \in \mathcal G_{\leq D}})$, we have
    \begin{equation}\left(1-c D^{-c_{\texttt{s}}/2}\right)\|\alpha\|^2_2 \leq \mathbb E \left[\left(\alpha_{\emptyset }+ \sum_{G \in \mathcal G_{\leq D}} \alpha_G\Psi_G\right)^2\right]\leq \left(1+ c D^{-c_{\texttt{s}}/2}\right)\|\alpha\|^2_2  \enspace , \end{equation}
where the positive constant $c$ depends on the constants $c_{\texttt{m}},c_{\texttt{v,1}},\ldots, c_{\texttt{v,4}}$.
\end{theorem}

\subsection{\condinvarianceWR scheme}

Under the \condinvarianceWR scheme, polynomials $P_{G^{(1)},\pi^{(1)}}$ and $P_{G^{(2)},\pi^{(2)}}$ with disjoint nodes ---\ that is $\pi^{(1)}(V^{(1)})\cap \pi^{(2)}(V^{(2)})=\emptyset$---\ are not independent anymore, albeit this dependency is arguably quite weak. Therefore, the first part of Proposition~\ref{eq:boundbar} is not going to hold anymore in these models. The purpose of the next condition is to establish that $\mathbb{E}[\overline{P}_{G^{(1)},\pi^{(1)}}\overline{P}_{G^{(2)},\pi^{(2)}}]$ is small enough for matchings $\mathbf{M}$ that do not belong to $\mathcal{M}^\star$. 
Although this condition is arguably quite ad-hoc and technical to define, it turns out to be relatively simple to check in all our models.
\begin{condition}[\texttt{C-Variance-Permutation}]\label{as:moment:without:replacement}
Let  $G^{(1)}=(V^{(1)},E^{(1)})$, $G^{(2)}=(V^{(2)},E^{(2)}) \in \mathcal G_{\leq D}$ be two templates and let $\mathbf M \in \mathcal M\setminus  \mathcal{M}_{\mathrm{PM}}$ be a matching. Consider any $(\pi^{(1)},\pi^{(2)}) \in \Pi(\mathbf M)$. 
In the sequel, we write $\widetilde{\mathbb{E}}$ for the expectation in the model where the  latent assignments $(z_i)$s are sampled independently (that is under \condinvariance). 
Define the graph $\mathcal{N}[z;G_{\cup}]$ with vertices $\omega_0$, $\omega_1, \ldots ,\omega_{\#\mathrm{cc}_{\mathrm{pure}}}$ where, for $i>0$,  $\omega_i$ corresponds to the $i$-th pure connected component of $G_{\Delta}$ and $\omega_0$ corresponds to the collections  the remaining nodes of $G_{\cup}$ ---\ if $V_{\cap}$ is empty, we do not define $\omega_0$.  We set an edge between $\omega_i$  and $\omega_j$ if and only if at least one  vertex $a$ in the node set $\omega_i$ of $G_{\cup}$ shares the same  latent assignment as one vertex $b$ in the node set $\omega_i$ of $G_{\cup}$, that is $z_a=z_b$. 
  Define the event $\cA$ such that the graph $\mathcal{N}[z;G_{\cup}]$ is connected. Then, we have 
 \begin{equation*}
        \Big| \widetilde{\mathbb{E}}\left[\1\{\mathcal{A}\}P_{G^{(1)}, \pi^{(1)}} P_{G^{(2)}, \pi^{(2)}}\right] \Big| \leq  c_{\texttt{vd},2}D^{c_{\texttt{vd},1}} (D^{c_{\texttt{vd},1}}\lambda)^{|E_{\Delta}|} \overline{p}^{|E_{\cap}|} \left(D^{c_{\texttt{vd},1}}\frac{k}{n}\right)^{|V_{\Delta}| - \#\mathrm{CC}_{\Delta} }\left(c_{\texttt{vd},2}\frac{D^{c_{\texttt{vd},1}}}{\sqrt{n}}\right)^{\#\mathrm{CC}_{\mathrm{pure}}}\enspace ,
    \end{equation*}
for some non-negative constants $c_{\texttt{vd},1}$, $c_{\texttt{vd},2}$.
\end{condition}

The following theorem holds under the above assumptions.
\begin{theorem}\label{thm:iso-WR}
    Consider the \condinvarianceWR scheme  and fix $D \geq 2$. Assume that Conditions \condsignal, \condmoment, \condvariance, and \condmomentWR are fulfilled  with $c_{\texttt{s}}>1$  large enough in comparison to the other constants.
    Then, for all  $(\alpha_{\emptyset}, (\alpha_G)_{G \in \mathcal G_{\leq D}})$, we have
    \begin{equation}\left(1-c D^{-c_{\texttt{s}}/2}\right)\|\alpha\|^2_2 \leq \mathbb E \left[\left(\alpha_{\emptyset }+ \sum_{G \in \mathcal G_{\leq D}} \alpha_G\Psi_G\right)^2\right]\leq \left(1+ c D^{-c_{\texttt{s}}/2}\right)\|\alpha\|^2_2  \enspace , \end{equation}
    where $c$ depends on the constants $c_{\texttt{m}},c_{\texttt{v,1}},\ldots, c_{\texttt{vd,2}}$.
\end{theorem}

\subsection{Almost orthonormality for estimation}

For estimation, 
the following theorem holds. It is a generic version of Theorem~\ref{thm:isorefo} and Proposition~\ref{prop:ortho}, in models \condinvarianceWR - and replacing Theorem~\ref{thm:iso} in \condinvariance. 
\begin{theorem}\label{thm:iso-estimation}
Fix any $D\geq 2$.  Under either the conditions of Theorem~\ref{thm:iso} or those of Theorem~\ref{thm:iso-WR}, we have 
    \begin{equation}\left(1-c D^{-c_{\texttt{s}}/2}\right)\|\alpha\|^2_2 \leq \mathbb E \left[\left(\alpha_{\emptyset }+ \sum_{G \in \mathcal G_{\leq D}} \alpha_G\Psi^{(1,2)}_G\right)^2\right]\leq \left(1+ c D^{-c_{\texttt{s}}/2}\right)\|\alpha\|^2_2  \enspace , \end{equation}
    for all  $(\alpha_{\emptyset}, (\alpha_G)_{G \in \mathcal G^{(1,2)}_{\leq D}})$. Here, the positive constant $c$ depends on the other constants in the conditions.  
\end{theorem}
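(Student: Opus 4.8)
The plan is to run, essentially verbatim, the argument of Section~\ref{s:core} and of the proofs of Theorems~\ref{thm:iso} and~\ref{thm:iso-WR}, with the label set $[n]$ replaced by the set $[n]\setminus\{1,2\}$ of \emph{free} labels and the two \emph{roots} $v_1,v_2$ pinned to $1,2$ throughout. Set $\Gamma^{(1,2)}=\big(\mathbb E[\Psi^{(1,2)}_{G^{(1)}}\Psi^{(1,2)}_{G^{(2)}}]\big)_{G^{(1)},G^{(2)}\in\mathcal G^{(1,2)}_{\leq D}}$; I would show, exactly as in Proposition~\ref{prop:ortho}, that $\|\Gamma^{(1,2)}-\mathrm{Id}\|_{op}\leq cD^{-c_{\texttt{s}}/2}$. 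Since the centering in $\overline P^{(1,2)}_{G,\pi}$ keeps the constant $1$ (near-)orthogonal to every $\Psi^{(1,2)}_G$ --- just as in the proofs of Theorems~\ref{thm:iso} and~\ref{thm:iso-WR} --- the $\alpha_\emptyset$ component decouples, and this operator-norm bound gives the claimed two-sided inequality. The only structural novelty is that every admissible pair $(\pi^{(1)},\pi^{(2)})\in\Pi^{(1,2)}_{V^{(1)}}\times\Pi^{(1,2)}_{V^{(2)}}$ sends $v_1\mapsto1$, $v_2\mapsto2$, so every matching $\mathbf M$ entering the expansion of $\Gamma^{(1,2)}_{G^{(1)},G^{(2)}}$ automatically contains the forced pairs $(v_1^{(1)},v_1^{(2)})$ and $(v_2^{(1)},v_2^{(2)})$; accordingly $|\Pi^{(1,2)}(\mathbf M)|=(n-2)!/(n-|V_\cup|)!$ (the two root labels are spent) and $|\mathcal M^{(1,2)}_{\mathrm{PM}}|=|\mathrm{Aut}^{(1,2)}(G)|$ when $G^{(1)}=G^{(2)}=G$.

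First I would note that Conditions \condsignal, \condmoment, \condvariance (and \condmomentWR in the \condinvarianceWR scheme) apply as they stand: because $\Pi^{(1,2)}_V\subset\Pi_V$ the moment and variance bounds hold for the restricted labelings, and the fact that a template in $\mathcal G^{(1,2)}_{\leq D}$ may carry isolated roots is harmless since $P_{G,\pi}$ depends only on the edges of $G$ --- deleting an isolated root changes neither $|E|$ nor $|V|-\#\mathrm{CC}_G$, so the right-hand sides of \condmoment and \condvariance are unchanged. The analogue of Proposition~\ref{eq:boundbar} then goes through: in the \condinvariance scheme, if a non-trivial connected component of $G^{(1)}$ or $G^{(2)}$ --- necessarily one not containing a root, since root-components always meet $\mathbf M$ --- carries no matched node, independence and centering force the cross-expectation to vanish; and on the set $\mathcal M^{\star}$ of matchings meeting every non-trivial component, the union-of-components expansion with Lemma~\ref{lem:unioncomp} (whose proof uses only \condmoment, \condvariance and $\bar{\mathbf M}\subset\mathbf M$) shows $\overline P^{(1,2)}_{G,\pi}$ is relatively within $D^{-3c_{\texttt{s}}}$ of $P^{(1,2)}_{G,\pi}$. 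In the \condinvarianceWR scheme, matchings outside $\mathcal M^{\star}$ are no longer killed by independence; one invokes \condmomentWR, exactly as in the proof of Theorem~\ref{thm:iso-WR}, to bound their contribution by an extra factor $(D^{c_{\texttt{vd},1}}/\sqrt n)^{\#\mathrm{CC}_{\mathrm{pure}}}$, negligible under \condsignal.

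Next I would prove the entry-wise estimate, the analogue of Proposition~\ref{prop:scalprod}. With the rooted edit distance $d^{(1,2)}(G^{(1)},G^{(2)}):=\min_{\mathbf M}|E_\Delta|$ over matchings $\mathbf M$ compatible with the two forced root pairs, I claim $|\Gamma^{(1,2)}_{G^{(1)},G^{(2)}}-\mathbf 1\{G^{(1)}=G^{(2)}\}|\leq 2D^{-3c_{\texttt{s}}(d^{(1,2)}(G^{(1)},G^{(2)})\vee1)}$. Splitting the sum over $\mathcal M^{\star}$ into $\mathcal M^{(1,2)}_{\mathrm{PM}}$ and $\mathcal M^{\star}\setminus\mathcal M^{(1,2)}_{\mathrm{PM}}$, the perfect-matching part equals $\mathbb E[(\overline P^{(1,2)}_{G,\pi})^2]/\bar q^{|E|}$, hence lies in $[1-D^{-3c_{\texttt{s}}},1+D^{-3c_{\texttt{s}}}]$ by the definition~\eqref{eqn:variance of graph:estimation} of $\mathbb V^{(1,2)}$, the identities above, and the second part of \condvariance; the remainder $A$ over $\mathcal M^{\star}\setminus\mathcal M^{(1,2)}_{\mathrm{PM}}$ is treated exactly as in Steps 2b--2e of Section~\ref{s:core}. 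The inputs survive unchanged: the combinatorial inequality $|E_\Delta|\geq|V_\Delta|-\#\mathrm{CC}_\Delta$ (the difference graph $G_\Delta$ has no isolated node by construction), giving $|E_\Delta|\geq|U^{(1)}|+|U^{(2)}|$ on $\mathcal M^{\star}$ and $|E_\Delta|\geq d^{(1,2)}\vee1$ off $\mathcal M^{(1,2)}_{\mathrm{PM}}$; the shadow count $|\mathcal M^{(1,2)}_{\mathrm{shadow}}|\leq\min(|\mathrm{Aut}^{(1,2)}(G^{(1)})|,|\mathrm{Aut}^{(1,2)}(G^{(2)})|)$, whose proof (Lemma~\ref{lem:shadow}) carries over since fixing two labels only shrinks the orbit; and the bookkeeping $\sqrt{\tfrac{(n-2)!}{(n-|V^{(1)}|)!}\tfrac{(n-2)!}{(n-|V^{(2)}|)!}}\cdot\tfrac{(n-2)!}{(n-|V_\cup|)!}\leq n^{(|U^{(1)}|+|U^{(2)}|)/2}$, which reproduces the geometric sums of Step 2e, collapsed by \condsignal. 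Finally, as in Step 3, $\|\Gamma^{(1,2)}-\mathrm{Id}\|_{op}$ is bounded by the maximal $\ell^1$-norm of a row, using that the number of $G^{(2)}\in\mathcal G^{(1,2)}_{\leq D}$ with $d^{(1,2)}(G^{(1)},G^{(2)})=u$ is at most a constant times $(u+D)^{2u}$ (choosing which generic vertex plays a given root role only inflates the count boundedly), and summing the geometric series.

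The computations are routine; the genuine point of attention --- and the main obstacle --- is checking that every graph-combinatorial estimate and counting lemma behind Proposition~\ref{prop:scalprod} (the inequality $|E_\Delta|\geq|V_\Delta|-\#\mathrm{CC}_\Delta$, the shadow count, the count of templates at edit distance $u$, and the automorphism identity for perfect matchings) degrades by at most a bounded multiplicative factor once two labels are pre-assigned and isolated roots are allowed, so that the exponents, and hence the final rate $cD^{-c_{\texttt{s}}/2}$, are unaffected. With that verified, Theorem~\ref{thm:iso-estimation} follows from Theorems~\ref{thm:iso} and~\ref{thm:iso-WR} with no new ideas and with $c$ depending only on the constants in \condmoment, \condvariance, and \condmomentWR.
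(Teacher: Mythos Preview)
Your proposal is correct and follows essentially the same route as the paper's own proof: define the rooted edit distance $d^{(1,2)}$, establish the entry-wise bound on $\Gamma^{(1,2)}$ via the analogue of Proposition~\ref{prop:scal} (the paper's Proposition~\ref{prop:scal2}), handle possibly isolated roots by pruning them so that Lemmas~\ref{lem:gen-1}--\ref{lem:gen-2} apply verbatim (the paper packages this as Lemma~\ref{lem:gen:est}), replace $\mathrm{Aut}$ by $\mathrm{Aut}^{(1,2)}$ in the shadow count, and finish with the same $\ell^1$-row-sum argument. The only cosmetic slip is your bookkeeping display: the factor $(n-2)!/(n-|V_\cup|)!$ should be \emph{divided} by the square root of the two variance proxies, not multiplied, but the intended inequality and its use are clearly the correct ones.
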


\subsection{All conditions are satisfied in our models}

The next proposition states that all six models satisfy the desired conditions. The explicit values for $c_{\texttt{m}}$, $c_{\texttt{v},1}$, $c_{\texttt{v},2}$, $c_{\texttt{v},3}$, $c_{\texttt{v},4}$, $c_{\texttt{v},1}$, and $c_{\texttt{vd},2}$ are given in the proofs.

\begin{proposition}\label{prp:model:conditions}
  Assume that the parameters $(k,n,p,q)$ satisfy~\condsignal with $c_{\texttt{s}}=1$. Then, \HidSubI, \StoBloI, \ToeSerI\ satisfy  Conditions \condmoment and \condvariance. Also, \HidSubP, \StoBloP, \ToeSerP\ satisfy
\condmoment, \condvariance, and \condmomentWR.
\end{proposition}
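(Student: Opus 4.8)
The plan is to verify each of \condmoment, \condvariance and \condmomentWR by conditioning on the latent vector $z$ and reducing everything to elementary combinatorial probabilities. Conditionally on $z$, the entries $Y_{ij}$ are independent with $\bbE[Y_{ij}\mid z]=\Theta_{z_iz_j}$ and $\bbE[Y_{ij}^2\mid z]=\bar q+\Theta_{z_iz_j}(1-2q)\in[\bar q,\bar p]$ (the last inclusion because $0\le\Theta_{z_iz_j}\le\lambda$ and $q\le 1/2$). Hence, exactly as in~\eqref{eqn:old basis proof ingredients} but now for general $q$ and $\Theta$, for any two templates and any $(\pi^{(1)},\pi^{(2)})\in\Pi(\mathbf M)$,
\[
\bbE\cro{P_{G^{(1)},\pi^{(1)}}P_{G^{(2)},\pi^{(2)}}}=\bbE_z\cro{\prod_{e\in E_{\Delta}}\Theta_e\,\prod_{e\in E_{\cap}}\pa{\bar q+\Theta_e(1-2q)}},
\]
where $\Theta_e:=\Theta_{z_{\pi(u)}z_{\pi(v)}}$ for an edge $e=(u,v)$ of the merged labelled graph, and the special cases $G^{(2)}=\emptyset$ and $(G^{(1)},\pi^{(1)})=(G^{(2)},\pi^{(2)})$ produce $\bbE[P_{G,\pi}]$ and $\bbE[P_{G,\pi}^2]$. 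Since $\Theta_e=\lambda\,\1\{\cdot\}$ in all six models, bounding crudely $\bar q\le\bar q+\Theta_e(1-2q)\le\bar p$ reduces every quantity in \condmoment and in part~1 of \condvariance to $\lambda^{\#(\cdot)}\bar p^{\#(\cdot)}\,\bbP_z[\mathrm{Act}(H)]$, where $H$ is a subgraph without isolated nodes ($G$ itself, resp.\ $G_{\Delta}$) and $\mathrm{Act}(H)$ is the event that $\Theta_e\neq0$ for every edge of $H$.

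For the activity probabilities one uses the explicit form of $\Theta$: $\mathrm{Act}(H)=\{z_a\le k\ \forall a\in V_H\}$ for (HS), $\mathrm{Act}(H)$ forces each connected component of $H$ to be monochromatic for (SBM), and for (TS) it forces, along any spanning forest of $H$, each vertex to lie within a window of width $k$ of its already-visited tree-neighbour. Revealing the labels one connected component of $H$ at a time, the first vertex of each component being free, gives in all three structures and under either sampling scheme
\[
\bbP_z\cro{\mathrm{Act}(H)}\ \le\ \pa{\tfrac{4k}{n}}^{|V_H|-\#\mathrm{CC}_H}\qquad\bigl(\text{with the sharper }(k/n)^{|V_H|}\text{ available for (HS)}\bigr);
\]
for \condinvarianceWR one replaces the conditioning denominator $n$ by $n-|V_{\cup}|\ge n/2$ (using that \condsignal forces $n\ge D^8$) and invokes $\tfrac{k-j}{n-j}\le\tfrac{k}{n}$, so that the without-replacement probabilities are dominated by their with-replacement counterparts. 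Plugging this in yields \condmoment and part~1 of \condvariance for all six models, with absolute constants $c_{\texttt{m}},c_{\texttt{v},1}=O(1)$ and $c_{\texttt{v},2}=1$; the discrepancy between $(k/n)^{|V_H|}$ and the required $(k/n)^{|V_H|-\#\mathrm{CC}_H}$ for (HS), and the factor $4$ for (TS), are harmless because $k<n$ and $D\ge 2$. For part~2 of \condvariance, expand $\prod_{e\in E}(\bar q+\Theta_e(1-2q))=\bar q^{|E|}+\sum_{\emptyset\neq S\subseteq E}\bar q^{|E|-|S|}\prod_{e\in S}\Theta_e(1-2q)$; each summand is at most $\bar q^{|E|-|S|}\lambda^{|S|}$ in absolute value, and since $\lambda/\bar q\le 2\lambda/q\le 2D^{-8}$ (by \condsignal with $c_{\texttt{s}}=1$, using $\bar q\ge q/2$) and $|E|\le D$, the total error is $\le\bar q^{|E|}\bigl((1+2D^{-8})^{|E|}-1\bigr)\le 4D^{-7}\bar q^{|E|}$, so part~2 holds for all six models with $c_{\texttt{v},3}=4$, $c_{\texttt{v},4}=7$; this step is the same regardless of the sampling scheme.

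It remains to check \condmomentWR for (HS-P), (SBM-P) and (TS-P). Here $\widetilde{\bbE}$ is the i.i.d.\ expectation, so conditioning on $z$ and bounding the $E_{\cap}$-factors as above,
\[
\Bigl|\widetilde{\bbE}\cro{\1\{\cA\}\,P_{G^{(1)},\pi^{(1)}}P_{G^{(2)},\pi^{(2)}}}\Bigr|\ \le\ \bar p^{|E_{\cap}|}\lambda^{|E_{\Delta}|}\ \widetilde{\bbP}\cro{\cA\cap\mathrm{Act}(G_{\Delta})}.
\]
To bound $\widetilde{\bbP}[\cA\cap\mathrm{Act}(G_{\Delta})]$: on $\cA$, some spanning tree $T$ of the complete graph on the super-nodes $\{\omega_0,\dots,\omega_{\#\mathrm{CC}_{\mathrm{pure}}}\}$ embeds into $\mathcal{N}[z;G_{\cup}]$; union-bounding over $T$ (Cayley's formula) and, for each of its $\#\mathrm{CC}_{\mathrm{pure}}$ edges, over a witnessing pair $(a,b)$ with $z_a=z_b$ (at most $|V_{\cup}|^2\le 16D^2$ choices) costs a $\mathrm{poly}(D)$ factor raised to the power $\#\mathrm{CC}_{\mathrm{pure}}$. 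For a fixed such structure, reveal the labels super-node by super-node along $T$ starting from $\omega_0$: the vertices in $\omega_0$ contribute the activity factors as in the previous paragraph, and each pure component $\omega_i$ contributes a factor $1/n$ (its designated node is forced to equal an already-revealed label) times the activity factors $(4k/n)^{|\omega_i|-1}$ for its remaining vertices; using $|V_{\Delta}|-\#\mathrm{CC}_{\Delta}=\bigl(|\omega_0\cap V_{\Delta}|-(\#\mathrm{CC}_{\Delta}-\#\mathrm{CC}_{\mathrm{pure}})\bigr)+\sum_i(|\omega_i|-1)$ one collects
\[
\widetilde{\bbP}\cro{\cA\cap\mathrm{Act}(G_{\Delta})}\ \le\ \bigl(\mathrm{poly}(D)\bigr)^{\#\mathrm{CC}_{\mathrm{pure}}}\pa{\tfrac{4k}{n}}^{|V_{\Delta}|-\#\mathrm{CC}_{\Delta}}\,n^{-\#\mathrm{CC}_{\mathrm{pure}}}\ ,
\]
and since $n^{-1}\le n^{-1/2}$ this gives \condmomentWR with absolute constants $c_{\texttt{vd},1},c_{\texttt{vd},2}$. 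Finally, \condmoment and \condvariance are inherited unchanged by the permutation models, being just the upper bounds proved above for both schemes, which completes the proof of the proposition.

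The step I expect to be the main obstacle is \condmomentWR: the remaining conditions reduce to the single conditional-expectation identity followed by the uniform ``one free vertex per connected component'' counting, whereas for \condmomentWR one must combine---without double counting---the collision constraints encoded by connectivity of $\mathcal{N}[z;G_{\cup}]$ with the model-specific activity constraints, and check that the resulting estimate is uniform over the three permutation models, the (TS-P) activity bound (the spanning-forest/bandwidth argument under sampling without replacement) being the most delicate of the routine pieces.
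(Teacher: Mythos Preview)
Your proposal is correct and follows essentially the same route as the paper: condition on $z$, reduce to activity probabilities of the form $\bbP_z[\Theta_e\neq 0\ \forall e\in H]$, bound these by a ``one free vertex per connected component'' argument (spanning forest for (TS), monochromaticity for (SBM), explicit for (HS)), handle part~2 of \condvariance via $\lambda/\bar q=O(D^{-8})$, and for \condmomentWR union-bound over spanning trees of the super-node graph (Cayley) and witness pairs, picking up $1/n$ per collision. The only cosmetic difference is that you package the activity bound uniformly as $(4k/n)^{|V_H|-\#\mathrm{CC}_H}$ for all six models, whereas the paper treats each model separately with slightly sharper constants; your sequential-revelation bookkeeping in the \condmomentWR step is equivalent to the paper's conditioning on the witness event $\mathcal{C}_W$.
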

Then, Theorem~\ref{thm:isorefo} is a straighforward consequence of Theorems~\ref{thm:iso} and~\ref{thm:iso-WR} and Proposition~\ref{prp:model:conditions}, whereas Theorem~\ref{thm:isorefo2} is  a consequence of Theorem~\ref{thm:iso-estimation} and Proposition~\ref{prp:model:conditions}.

\section{Discussion}

\subsection{Flexibility of the almost orthonormal basis}

In this work, we introduced the polynomial basis $(\Psi_G)_{G\in\mathcal{G}_{\leq D}}$, which turns out to be almost orthonormal for a variety of permutation-invariant graph models. 
This almost orthonormality can be readily exploited to establish low-degree (LD) lower bounds for other testing and functional estimation problems, such as testing the value of $k$ or $\lambda$. 
Moreover, it enables a tighter connection between LD upper and lower bounds 
by finding the decomposition in the basis $\Psi_G$ of a polynomial that nearly attains $\mathrm{Corr}_{\leq D}$ and $\mathrm{Adv}_{\leq D}$.

To illustrate the flexibily of our approach, we mention some subsequent application of it in~\cite{carpentier2025phase,carpentier2025phase2}
to SBM with a large number $K=n/k$ number of groups. In a striking paper, Chin et al.~\cite{pmlr-v291-chin25a} have recently shown that, at least in the regime where $q$ scales in $1/n$, it is possible to recover the groups when $K\geq \sqrt{n}$ below the Kesten-Stigun threshold that what longstandingly  conjectured~\cite{Decelle2011} to be the computational barrier. Their procedure is based on numbers of non-bactracking paths in the graph. However, they did not provide a matching LD lower bound. By considering an almost-orthonormal basis similar to $\Psi_G$ but with a  different normalization,~\cite{carpentier2025phase} have established a LD lower bould for all regimes of $q$. They also introduced in~\cite{carpentier2025phase,carpentier2025phase2} new efficient procedures based on motif counting that match this LD lower bound.  The choices of the motifs --cliques, self-avoiding path, blow-up graphs~\cite{carpentier2025phase2}-- actually depends on the sparsity $q$. Importantly, in these works, the almost-orthonormal basis provides strong insights that are instrumental for the construction of these new procedures.

Beyond graph data, we expect this approach to extend naturally to other distributional models with transformation invariance. 
Compared with~\cite{SchrammWein22} and~\cite{SohnWein25}, our constructive method is more direct and transparent. 
In particular, it provides an alternative proof strategy to~\cite{SohnWein25} when inverting the overcomplete linear system therein becomes intractable, and an alternative to~\cite{SchrammWein22} when controlling the cumulants  proves difficult, or when the Jensen bound in~\cite{SchrammWein22} is not tight. We illustrated this by establishing LD lower bounds for \condinvarianceWR{} models.

\subsection{Getting sharp results}

Our results can be improved in two main directions: 
\begin{itemize}
    \item First, we have only analyzed the regime $\lambda = o(q)$, see Theorem~\ref{thm:iso}. 
    We conjecture that this restriction is merely an artefact of the proof, and that (after a suitable renormalization) the basis $\Psi_G$ remains almost orthonormal in all regimes where recovery is computationally hard. 
    For \HidSubI\ and \StoBloI, however, certain adjustments are needed in both the variance proxy and the proof.
    In particular, for $\lambda \geq \sqrt{q}$, the variance proxies $\mathbb V(G), \mathbb V^{(1,2)}(G)$ can be significantly smaller than $\mathbb E[\overline{P}_{G}^2]$. In particular, the contribution of perfectly matched nodes in $\mathbb E[\overline{P}_{G^{(1)},\pi^{(1)}}\overline{P}_{G^{(2)},\pi^{(2)}}]$ must be handled more carefully than in Section~\ref{s:core}. This extension was very recently carried out in~\cite{carpentier2025phase}.

%    We did not pursue this direction here, as the necessary renormalization and computations become challenging for general models, and we leave this for future work.  

    \item Second, compared with~\cite{SohnWein25}, our LD lower bounds are tight only up to a poly-$D$ factor. 
    Removing this extra factor is an interesting---\ albeit likely delicate---\ combinatorial problem, which we also leave for future work.
\end{itemize}

\subsection*{Acknowledgements}

The work of A. Carpentier is partially supported by the Deutsche Forschungsgemeinschaft (DFG)- Project-ID 318763901 - SFB1294 ``Data Assimilation'', Project A03,  by the DFG on the Forschungsgruppe FOR5381 ``Mathematical Statistics in the Information Age~-~Statistical Efficiency and Computational Tractability'', Project TP 02 (Project-ID 460867398), and by  the DFG on the French-German PRCI ANR-DFG ASCAI CA1488/4-1 ``Aktive und Batch-Segmentierung, Clustering und Seriation: Grundlagen der KI'' (Project-ID 490860858). The work of the last three authors has also been fully or partially supported by ANR-21-CE23-0035 (ASCAI, ANR) and ANR-19-CHIA-0021-01 (BiSCottE, ANR). The work of A.~Carpentier and N.~Verzelen is also supported by the Universite franco-allemande (UFA) through the college doctoral franco-allemand CDFA-02-25 ``Statistisches Lernen für komplexe stochastische Prozesse''.

\printbibliography

\appendix

\section{Proof of the almost orthonormality results (Theorems~\ref{thm:iso}~and~\ref{thm:iso-WR})}

We show simultaneously both theorems. Define  the symmetric Gram matrix $\Gamma$ of size $|\mathcal G_{\leq D}|+1$ associated to the basis $(1, (\Psi_{G})_{G\in \mathcal{G}_{\leq D}})$ by 
 $\Gamma_{G^{(1)}, G^{(2)}} := \mathbb E[\Psi_{G^{(1)}}\Psi_{G^{(2)}}]$ for any $(G^{(1)}, G^{(2)}) \in \mathcal G_{\leq D}$, $\Gamma_{1,1}=1$, and $\Gamma_{1,G}:=\mathbb{E}[\Psi_{G}]=0$. Write $I$ for the identity matrix. In order to establish  the result, it suffices to  bound the operator norm of 
 $\|\Gamma -I\|_{op}$. The key step of the proof is to control each entry of the matrix $\Gamma$. Recall the distance $d(.,.)$ defined in~\eqref{eq:definition:edit:distance}.

\begin{proposition}\label{prop:scal}
Consider any $D\geq 2$. Under \condinvariance, assume that Conditions \condmoment, \condvariance and~\condsignal are fulfilled and that the constant  $c_{\texttt{s}}>4$ is large compared to the other other ones. Under \condinvarianceWR, assume that Conditions \condmoment, \condvariance, \condmomentWR, and \condsignal are fullfilled with a constant   $c_{\texttt{s}}>4$ that is large enough. 

There exist two positive constants $c$ and $c'$  depending on those arising in Conditions~\condvariance and ~\condmoment (and \condmomentWR in the second case) such that the following holds for any templates $G^{(1)}, G^{(2)}\in \mathcal G_{\leq D}$.
\begin{itemize}
    \item[1] if $G^{(1)} \neq G^{(2)}$:
    \begin{equation}\left|\mathbb E[\Psi_{G^{(1)}}\Psi_{G^{(2)}}]  \right|  \leq c D^{-c_{\texttt{s}} d(G^{(1)},G^{(2)})}\enspace ,\end{equation}
    \item[2] and if $G^{(1)} = G^{(2)}$:
    \begin{equation} \left|\mathbb E[(\Psi_{G^{(1)}})^2] -1 \right|  \leq c'D^{-c_{\texttt{s}}}\enspace .\end{equation}
\end{itemize}
\end{proposition}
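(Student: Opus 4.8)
The plan is to mimic the argument of Section~\ref{s:core} (the proof of Proposition~\ref{prop:scalprod}), replacing each exact identity used there for (HS-I) with $q=1/2$ by the corresponding bound supplied by Conditions~\condmoment, \condvariance\ (and \condmomentWR\ in the \condinvarianceWR\ case). Concretely, I would first expand
$\mathbb E[\Psi_{G^{(1)}}\Psi_{G^{(2)}}]=\frac{1}{\sqrt{\mathbb V(G^{(1)})\mathbb V(G^{(2)})}}\sum_{\mathbf M\in\mathcal M}\sum_{(\pi^{(1)},\pi^{(2)})\in\Pi(\mathbf M)}\mathbb E[\overline P_{G^{(1)},\pi^{(1)}}\overline P_{G^{(2)},\pi^{(2)}}]$, and split the sum over $\mathbf M$ into three regimes: $\mathbf M\notin\mathcal M^\star$, $\mathbf M\in\mathcal M_{\mathrm{PM}}$, and $\mathbf M\in\mathcal M^\star\setminus\mathcal M_{\mathrm{PM}}$. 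For the diagonal term ($G^{(1)}=G^{(2)}$), the $\mathcal M_{\mathrm{PM}}$ contribution is, exactly as in Step~2a, $\mathbb E[\overline P_{G^{(1)},\pi}^2]/\bar q^{|E^{(1)}|}$, and part~2 of \condvariance\ together with the analogue of Proposition~\ref{eq:boundbar} gives that this equals $1+O(D^{-c_{\texttt{v},4}})$; choosing $c_{\texttt s}$ small relative to $c_{\texttt{v},4}$ yields part~2 of the Proposition, so the remaining work is the off-diagonal bound.

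For the $\mathbf M\notin\mathcal M^\star$ regime, under \condinvariance\ this contribution vanishes exactly (the centering~\eqref{eq:cross} kills any pure connected component, via~\eqref{eq:indep}), just as in part~1 of Proposition~\ref{eq:boundbar}. Under \condinvarianceWR\ independence fails, and this is precisely what \condmomentWR\ is designed to handle: expanding $\overline P\,\overline P$ into the $\prod(a-b)$ form, every surviving term forces the graph $\mathcal N[z;G_\cup]$ to be connected (a pure component must be "tied" to the rest through a shared latent label), so I would bound each such term by the right-hand side of \condmomentWR, picking up a factor $(D^{c_{\texttt{vd},1}}/\sqrt n)^{\#\mathrm{CC}_{\mathrm{pure}}}$ per pure component. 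The key point is that this extra $n^{-1/2}$ per pure component, after renormalising by $\sqrt{\mathbb V(G^{(1)})\mathbb V(G^{(2)})}$ and summing over the (polynomially many in $D$) shadows, makes the whole $\mathbf M\notin\mathcal M^\star$ contribution of size $D^{-\Omega(c_{\texttt s})}$, hence negligible. This is the genuinely new piece relative to Section~\ref{s:core}.

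For the main regime $\mathbf M\in\mathcal M^\star\setminus\mathcal M_{\mathrm{PM}}$, I would reproduce Steps~2b--2e verbatim up to the replacement of the exact value of $\mathbb E[P_{G^{(1)},\pi^{(1)}}P_{G^{(2)},\pi^{(2)}}]$ by the bound of part~1 of \condvariance: $|E(\mathbf M)|\le c_{\texttt{v},2}(D^{c_{\texttt{v},1}}\lambda)^{|E_\Delta|}\bar p^{|E_\cap|}(D^{c_{\texttt{v},1}}k/n)^{|V_\Delta|-\#\mathrm{CC}_\Delta}$ (up to the factor from Proposition~\ref{eq:boundbar}, whose general-model version follows from \condmoment\ and part~2 of \condvariance\ exactly as Lemma~\ref{lem:unioncomp} did). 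Here I use $\bar p\ge\bar q$ and \condsignal\ ($\lambda\le qD^{-8c_{\texttt s}}$, $k/n\le D^{-8c_{\texttt s}}$, $\lambda k/\sqrt{nq}\le D^{-8c_{\texttt s}}$) to absorb all the $D^{c_{\texttt{v},1}}$, $c_{\texttt{v},2}$ factors, then combine $|E_\Delta|\ge|U^{(1)}|+|U^{(2)}|$ (on $\mathcal M^\star$) and $|E_\Delta|\ge d(G^{(1)},G^{(2)})\vee 1$ (off $\mathcal M_{\mathrm{PM}}$) as in Step~2d, sum over shadows using the cardinality bound $|\mathcal M_{\mathrm{shadow}}|\le\min(|\mathrm{Aut}(G^{(1)})|,|\mathrm{Aut}(G^{(2)})|)$ (Lemma~\ref{lem:shadow}), and finally enumerate over $u_1,u_2,m$ as a convergent geometric series. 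I expect the main obstacle to be bookkeeping rather than conceptual: carefully tracking how the extra polynomial-in-$D$ and $\bar p$-versus-$\bar q$ slack propagates through the renormalisation $1/\sqrt{\mathbb V(G^{(1)})\mathbb V(G^{(2)})}$ — in particular verifying that the bound $\mathbb V(G)\ge(1-O(D^{-c_{\texttt{v},4}}))\mathbb E[\overline P_G^2]$ needed to control the normalisation holds uniformly — and, in the \condinvarianceWR\ case, making the pure-component factor $(D^{c_{\texttt{vd},1}}/\sqrt n)^{\#\mathrm{CC}_{\mathrm{pure}}}$ interact correctly with the $n$-powers coming from $|\Pi(\mathbf M)|$ and from $\mathbb V(G)$. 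Choosing $c_{\texttt s}$ large compared to $c_{\texttt m},c_{\texttt{v},1},\dots,c_{\texttt{vd},2}$ throughout makes every error term of the claimed form $cD^{-c_{\texttt s}d(G^{(1)},G^{(2)})}$.
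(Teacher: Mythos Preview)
Your plan is correct and follows essentially the same architecture as the paper's proof: decompose over matchings, isolate $\mathcal M_{\mathrm{PM}}$ for the diagonal via part~2 of \condvariance\ and the $\overline P$-to-$P$ comparison (the paper packages this as Lemma~\ref{lem:gen-1}, part~2), bound the $\mathcal M^\star\setminus\mathcal M_{\mathrm{PM}}$ contribution term-by-term via part~1 of \condvariance\ and \condmoment\ (Lemma~\ref{lem:gen-1}, part~1), kill or control the $\mathcal M\setminus\mathcal M^\star$ contribution via independence or \condmomentWR\ (Lemma~\ref{lem:gen-2}), and then sum over shadows using Lemma~\ref{lem:shadow}.

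Two minor corrections. First, in your sentence on the analogue of Proposition~\ref{eq:boundbar} for the cross terms, you invoke ``part~2 of \condvariance'': it is part~1 that is used there (part~2 is only for the exact square). Second, ``choosing $c_{\texttt s}$ small relative to $c_{\texttt{v},4}$'' is the wrong direction; the proposition fixes $c_{\texttt s}$ large and absorbs the $D^{-c_{\texttt{v},4}}$ slack into the constant $c'$ (and into the $k/n$ term of Lemma~\ref{lem:gen-1} via \condsignal).

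One point where the paper does more than your sketch suggests: under \condinvarianceWR, the passage from $\overline P\,\overline P$ to the bound in \condmomentWR\ is not a single expansion. The paper's Lemma~\ref{lem:gen-2} introduces an auxiliary measure $\mathbb P_R$ (independent blocks, each sampled without replacement), and then runs an induction over partitions of the pure components (Lemmas~\ref{lem:recursion:covariance} and~\ref{lem:recursion:CP}) to reduce to the connected-$\mathcal N[z;G_\cup]$ event where \condmomentWR\ applies. Your intuition that centering forces connectivity is right, but the actual bookkeeping of how the $(D^{c}/\sqrt n)^{\#\mathrm{CC}_{\mathrm{pure}}}$ factors accumulate across the partition lattice is the genuinely delicate step you flag, and it is not a one-line expansion. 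Relatedly, after obtaining $\psi[G_\Delta]$ the paper does an extra decomposition (its Step~3) separating pure from non-pure components of $G_\Delta$ before invoking the graph inequalities of Lemma~\ref{lem:countgra}; this is what makes the $n$-powers from $|\Pi(\mathbf M)|$ and the $n^{-1/2}$ per pure component balance correctly, and is slightly more than ``reproduce Steps~2b--2e verbatim''.
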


It is easy to conclude from this last proposition. 
Since the row and the column of $\Gamma$ corresponding to the element 1 of the basis is zero outside the diagonal term, we only have to consider the submatrix of $\Gamma$ corresponding to $G$, $G\in \mathcal{G}_{\leq D}$. Since the operator norm of a symmetric matrix is bounded by the maximum $\ell_1$ norm of its rows, we have 
\begin{equation*}
\|\Gamma - I\|_{op}\leq \max_{G^{(1)}}\left\{\left|\Gamma_{G^{(1)}, G^{(1)}}-1\right|+  \sum_{G^{(2)} \in \mathcal G_{\leq D}, G^{(2)} \neq G^{(1)} }\left|\Gamma_{G^{(1)}, G^{(2)}}\right|\right\} \enspace . 
\end{equation*} 
To bound the latter sum, we use that, for a fixed template $G^{(1)}_{\leq D}$, the number of templates $G^{(2)}_{\leq D}$ such that $d( G^{(1)}, G^{(2)}) = u$ is bounded by $(u+D)^{2u}$. 
When $G^{(2)}\in \mathcal G_{\leq D}$ differs from $G^{(1)}$, it is, by definition, not-isomorphic to $G^{(2)}$ and $d(G^{(1)}, G^{(2)})\geq 1$. It then follows from Proposition~\ref{prop:scal} that
\begin{align*}
    \sum_{G^{(2)} \in \mathcal G_{\leq D}, G^{(2)} \neq G^{(1)} }\left|\Gamma_{G^{(1)}, G^{(2)}}\right|
    &\leq \sum_{G^{(2)} \in \mathcal G_{\leq D}, G^{(2)} \neq G^{(1)}} cD^{-c_{\texttt{s}}d(G^{(1)},G^{(2)})}\\
    &\leq \sum_{2D \geq  u \geq 1} |\{G^{(2)}: d( G^{(1)}, G^{(2)}) = u \}| c D^{-c_{\texttt{s}}u}\enspace\\
    &\leq \sum_{2D \geq u\geq 1} (u+D)^{2u} c D^{-c_{\texttt{s}}u}\enspace\\
    &\leq \sum_{2D \geq u\geq 1} c'D^{-(c_{\texttt{s}}-6)u} \leq c D^{-c_{\texttt{s}}/2}\enspace ,
\end{align*}
since $D \geq 2$ provided we have $c_{\texttt{s}} \geq 12$. 
Applying the second part of Proposition~\ref{prop:scal}, we conclude that 
\begin{equation*}
\|\Gamma - I\|_{op}\leq c D^{-c_{\texttt{s}}/2} + c'D^{-c_{\texttt{s}}}\enspace . 
\end{equation*}

\subsection{Proof of Proposition~\ref{prop:scal}}

  We first state the following lemmas, whose proof are postponed to the end of the subsection. Given two templates $G^{(1)}$, $G^{(2)}$ and labeling $\pi^{(1)}$ and $\pi^{(2)}$, recall that $G_{\Delta}$ stands for the labelled graph corresponding to a symmetric difference between $\pi^{(1)}[G^{(1)}]$ and  $\pi^{(2)}[G^{(2)}]$. Also, recall the collection $\mathcal{M}^{\star}$  of matchings  of two templates $G^{(1)}$ and $G^{(2)}$ that does not lead to any pure connected component.

 \begin{lemma}\label{lem:gen-1} Consider both  \condinvariance and \condinvarianceWR. 
    Suppose that Conditions~\condmoment and~\condvariance are fulfilled and that \condsignal is fulfilled with a constant $c_{\texttt{s}}$ large enough compared the constants arising in the other conditions. 
\begin{itemize}
    \item[1] Let $G^{(1)}, G^{(2)} \in \mathcal G_{\leq D}$ be two templates and let $\mathbf M \in \mathcal M^\star\setminus  \mathcal{M}_{\mathrm{PM}}$ be a matching. For any $(\pi^{(1)},\pi^{(2)}) \in \Pi(\mathbf M)$, we have $\Big| \mathbb{E}\left[\overline{P}_{G^{(1)}, \pi^{(1)}} \overline{P}_{G^{(2)}, \pi^{(2)}}\right] \Big|\leq  \psi[G_{\Delta}]$ where 
 \begin{align}\label{eq:definition:psi:Gdelta}
   \psi[G_{\Delta}]:=  c_{\texttt{v},2} D^{2}  \overline{p}^{|E_{\cap}|} (D^{c_{\texttt{v},1}}\lambda)^{|E_{\Delta}|} \left(\frac{D^{1+c_{\texttt{v},1}\vee c_{\texttt{m}} }k}{n}\right)^{|V_{\Delta}| - \#\mathrm{CC}_{\Delta}}\enspace . 
    \end{align} 
\item[2] Also, for any template $G=(V,E) \in \mathcal G_{\leq D}$ and any $\pi \in \Pi_V$, we have 
\begin{equation*}
        \left|\mathbb{E}\left[\overline{P}_{G, \pi}^2\right]- \overline{q}^{|E|}\right| \leq  \left[2c_{\texttt{v},2} D^{4+ c_{\texttt{v},1}\vee c_{\texttt{m}}}\frac{k}{n}+c_{\texttt{v},3}D^{-c_{\texttt{v},4}}\right]\overline{q}^{|E|}\enspace .
    \end{equation*}
\end{itemize}
\end{lemma}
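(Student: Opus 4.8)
## Proof plan for Lemma~\ref{lem:gen-1}

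\textbf{Overall strategy.} The plan is to replicate the two-part structure of Proposition~\ref{eq:boundbar} (Steps 1 in the (HS-I) core proof), but now using the generic Conditions \condmoment{} and \condvariance{} in place of the explicit computations~\eqref{eqn:old basis proof ingredients}--\eqref{eq:comp}. The key point is that the centering~\eqref{eq:cross} turns the product $\overline{P}_{G^{(1)},\pi^{(1)}}\overline{P}_{G^{(2)},\pi^{(2)}}$ into a signed sum over subsets $S_1\subseteq[\#\mathrm{CC}_{G^{(1)}}]$, $S_2\subseteq[\#\mathrm{CC}_{G^{(2)}}]$ of terms of the form $\mathbb{E}[\prod_{i\notin S_1}P_{G^{(1)}_i,\pi_i^{(1)}}\prod_{i\notin S_2}P_{G^{(2)}_i,\pi_i^{(2)}}]\cdot \mathbb{E}[\prod_{i\in S_1}P_{G^{(1)}_i,\pi_i^{(1)}}]\cdot\mathbb{E}[\prod_{i\in S_2}P_{G^{(2)}_i,\pi_i^{(2)}}]$, exactly as in the expansion preceding Lemma~\ref{lem:unioncomp}. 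Each such term is itself of the form $\mathbb{E}[P_{G^{(1)},\bar\pi^{(1)}}P_{G^{(2)},\bar\pi^{(2)}}]$ (for the cross term) times products of first moments $\mathbb{E}[P_{\cdot,\cdot}]$ (for the split-off components), so Part 1 of \condvariance{} and \condmoment{} apply directly.

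\textbf{Part 1.} I would fix a matching $\mathbf{M}\in\mathcal{M}^\star\setminus\mathcal{M}_{\mathrm{PM}}$ and expand $\mathbb{E}[\overline{P}_{G^{(1)},\pi^{(1)}}\overline{P}_{G^{(2)},\pi^{(2)}}]$ as above. The ``leading'' term $S_1=S_2=\emptyset$ is bounded by Part 1 of \condvariance{}: it gives $c_{\texttt{v},2}(D^{c_{\texttt{v},1}}\lambda)^{|E_\Delta|}\bar p^{|E_\cap|}(D^{c_{\texttt{v},1}}k/n)^{|V_\Delta|-\#\mathrm{CC}_\Delta}$. For the remaining terms, as in Lemma~\ref{lem:unioncomp}, removing connected components from the matching only increases $|E_\Delta|$, $|E_\cup|$ and decreases $|V_\cap|$; since we are on $\mathcal{M}^\star$, every split-off component carries at least one matched node, so each contributes an extra factor roughly $(D^{c_{\texttt{m}}\vee c_{\texttt{v},1}}k/n)^{1/2}$ per unit of $|S_1|+|S_2|$ (combining the first-moment bound of \condmoment{} with the degradation of the cross-term bound, and using that the connected components removed were genuine connected components so $\#\mathrm{CC}_\Delta$ increases appropriately). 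Summing the geometric series over $S_1,S_2$ — there are at most $D^{|S_1|+|S_2|}$ subsets, each killed by the small factor under \condsignal{} — contributes only a bounded multiplicative constant, absorbed into the $D^2$ and $c_{\texttt{v},2}$ in $\psi[G_\Delta]$. This yields the claimed bound~\eqref{eq:definition:psi:Gdelta}. Under \condinvarianceWR{} the independence used to factor $\mathbb{E}[\prod_{i\in S_1}\cdots]\mathbb{E}[\prod_{i\in S_2}\cdots]$ from the rest fails, but this is precisely what \condmomentWR{} is designed to repair: the event $\mathcal{A}$ there encodes exactly the ``all pure components are connected through shared labels'' structure, and the extra $(D^{c_{\texttt{vd},1}}/\sqrt n)^{\#\mathrm{CC}_{\mathrm{pure}}}$ factor provides the needed smallness; I would split the expectation according to whether $\mathcal{A}$ holds and bound each piece.

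\textbf{Part 2.} This is the diagonal case $G^{(1)}=G^{(2)}=G$, $\pi^{(1)}=\pi^{(2)}=\pi$, so $\mathbf{M}$ matches all nodes to themselves and $G_\Delta$ is empty. Expanding $\overline{P}_{G,\pi}^2=\prod_l(P_{G_l,\pi}-\mathbb{E}[P_{G_l,\pi}])^2$, the leading term $\prod_l \mathbb{E}[P_{G_l,\pi}^2]$ is controlled by Part 2 of \condvariance{}, giving $\bar q^{|E|}(1+O(D^{-c_{\texttt{v},4}}))$ after multiplying over the at most $D$ components (each factor $1\pm c_{\texttt{v},3}D^{-c_{\texttt{v},4}}$). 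The cross terms involve at least one split-off component and, via \condmoment{}, pick up a factor of order $D^{c_{\texttt{m}}}k/n$ per such component (a matched node that is ``used up'' by the centering); summing over the $O(D)$ components and using \condsignal{} bounds their total contribution by $2c_{\texttt{v},2}D^{4+c_{\texttt{v},1}\vee c_{\texttt{m}}}(k/n)\bar q^{|E|}$ — being slightly generous with the polynomial power of $D$ to absorb combinatorial factors. Adding the two error sources gives the stated bound.

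\textbf{Main obstacle.} The delicate point is the bookkeeping in Part 1 under the \condinvarianceWR{} scheme: one must verify that after splitting off components in $S_1,S_2$ and invoking \condmomentWR{} on the (now non-independent) residual product, the graph-theoretic quantities $|E_\Delta|,|V_\Delta|,\#\mathrm{CC}_\Delta,\#\mathrm{CC}_{\mathrm{pure}}$ for the modified matching $\bar{\mathbf{M}}$ compare correctly to those of $\mathbf{M}$, so that the extra $n^{-1/2}$ factors per pure component and the $k/n$ factors per removed matched node combine to beat the $D^{|S_1|+|S_2|}$ enumeration under \condsignal{}. This is conceptually the same monotonicity argument as in Lemma~\ref{lem:unioncomp}, but the presence of both ``unmatched'' and ``pure-component'' penalties, and the need to keep the $\bar p^{|E_\cap|}$ factor intact (rather than degrading it), makes the inequality-chasing more intricate; I would handle it by proving a clean sub-multiplicativity statement: removing one connected component from the matching multiplies the bound by at most $(D^{c}k/n)^{1/2}$, then iterate.
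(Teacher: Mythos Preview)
Your overall decomposition is right, and for Part~1 under \condinvariance{} and for Part~2 your plan essentially matches the paper's proof. But there is a genuine misunderstanding in how you handle the \condinvarianceWR{} case.

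\medskip

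\textbf{The main gap.} You write that under \condinvarianceWR{} ``the independence used to factor $\mathbb{E}[\prod_{i\in S_1}\cdots]\mathbb{E}[\prod_{i\in S_2}\cdots]$ from the rest fails'' and that \condmomentWR{} must repair it. This is not correct, and indeed \condmomentWR{} is not even among the hypotheses of the lemma. The point you are missing is that the factors $\prod_{i\in S_1}\mathbb{E}[P_{G^{(1)}_i,\pi^{(1)}_i}]$ and $\prod_{i\in S_2}\mathbb{E}[P_{G^{(2)}_i,\pi^{(2)}_i}]$ are \emph{deterministic constants}: they are exactly the centering constants in the definition~\eqref{eq:cross} of $\overline{P}_{G,\pi}$. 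The expansion
\[
\mathbb{E}\!\left[\overline{P}_{G^{(1)},\pi^{(1)}}\overline{P}_{G^{(2)},\pi^{(2)}}\right]
=\sum_{S_1,S_2}(-1)^{|S_1|+|S_2|}\,
\mathbb{E}\!\left[P_{G^{(1)}_{-S_1},\pi^{(1)}}P_{G^{(2)}_{-S_2},\pi^{(2)}}\right]
\prod_{i\in S_1}\mathbb{E}[P_{G^{(1)}_i}]
\prod_{i\in S_2}\mathbb{E}[P_{G^{(2)}_i}]
\]
is therefore a purely algebraic identity, valid under any probability measure. One then applies \condmoment{} to each single-component first moment and Part~1 of \condvariance{} to the cross term $\mathbb{E}[P_{G^{(1)}_{-S_1}}P_{G^{(2)}_{-S_2}}]$; both conditions are assumed to hold under either sampling scheme. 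Your entire ``Main obstacle'' paragraph is thus chasing a phantom difficulty, and your proposed route through the event $\mathcal{A}$ and the $n^{-1/2}$ penalties of \condmomentWR{} is not available here. That condition enters only in Lemma~\ref{lem:gen-2}, for matchings $\mathbf{M}\notin\mathcal{M}^\star$.

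The same confusion resurfaces in Part~2: the leading term after expanding is $\mathbb{E}[\prod_l P_{G_l,\pi}^2]=\mathbb{E}[P_{G,\pi}^2]$, \emph{not} $\prod_l\mathbb{E}[P_{G_l,\pi}^2]$. Part~2 of \condvariance{} applies directly to the full graph $G$, so no factorization over components is needed or used.

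\medskip

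\textbf{One combinatorial point you underspecify.} In Part~1 the paper collects the exponents of $k/n$ coming from the cross term and from the split-off components into a single quantity
\[
R:=|V_{\Delta,-S_1,-S_2}|-\#\mathrm{CC}_{\Delta,-S_1,-S_2}+\sum_{i\in S_1}(|V^{(1)}_i|-1)+\sum_{i\in S_2}(|V^{(2)}_i|-1),
\]
and proves the clean lower bound $R\geq |V_\Delta|-\#\mathrm{CC}_\Delta$ (together with the easy $R\geq |S_1|+|S_2|$). This replaces your iterated ``each split-off component contributes $(k/n)^{1/2}$'' argument by a single inequality, and makes the bookkeeping with $|E_\cap|$, $|E_\Delta|$, $\bar p$ versus $\bar q$ straightforward. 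Your sub-multiplicativity idea would probably also go through, but the $R$-inequality is what makes the paper's proof short.
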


 \begin{lemma}\label{lem:gen-2}
Let $G^{(1)}, G^{(2)} \in \mathcal G_{\leq D}$ be two templates and let $\mathbf M \in \mathcal M\setminus  \mathcal{M}^\star$ be a matching with a least one pure connected component. We consider two cases: 
\begin{itemize}
    \item[1] Under \condinvariance, we have  $\mathbb{E}\left[\overline{P}_{G^{(1)}, \pi^{(1)}} \overline{P}_{G^{(2)}, \pi^{(2)}}\right]=0$. By convention, we define $\psi(G_\Delta)=0$. 
    \item[2] Under \condinvarianceWR, we assume that Conditions~\condmoment, ~\condvariance, and ~\condmomentWR  are fullfilled and that \condsignal is fulfilled with a constant $c_{\texttt{s}}$ large enough.  Then, we have 
\begin{equation*}
\left|\mathbb{E}\left[\overline{P}_{G^{(1)}, \pi^{(1)}} \overline{P}_{G^{(2)}, \pi^{(2)}}\right]\right|\leq \psi[G_{\Delta}]\enspace, 
\end{equation*}
where we define in this case 
\begin{align}\label{eq:definition:psi:Gdelta:WR}
   \psi[G_{\Delta}]:=  c_0 D^{c_1}  \overline{p}^{|E_{\cap}|} (D^{c_1}\lambda)^{|E_{\Delta}|} \left(\frac{D^{c_1 }k}{n}\right)^{|V_{\Delta}| - \#\mathrm{CC}_{\Delta}}
   \left[c_0\frac{D^{c_1}}{\sqrt{n}}\right]^{\#\mathrm{CC}_{\mathrm{pure}}}
   \enspace , 
    \end{align}
for some constants $c_0$ and $c_1$ that only depend on those in~\condmoment,~\condvariance, and~\condmomentWR. 
\end{itemize} 
\end{lemma}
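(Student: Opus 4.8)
I would prove the identity $\mathbb E[\overline P_{G^{(1)},\pi^{(1)}}\overline P_{G^{(2)},\pi^{(2)}}]=0$ exactly as Proposition~\ref{eq:boundbar}(1), the only model-specific ingredient used there being the factorisation~\eqref{eq:indep}. This factorisation holds verbatim in any \condinvariance model, since the latent labels $(z_i)_{i\in[n]}$ are independent, so that the entries of $Y$ carried by two disjoint node sets are independent. Concretely, as $\mathbf M\notin\mathcal M^\star$, some connected component $G^{(1)}_1$ (say) of $G^{(1)}$ has all its $\pi^{(1)}$-labels disjoint from every other labelled component; writing $\overline P_{G^{(1)},\pi^{(1)}}=\big(P_{G^{(1)}_1,\pi^{(1)}}-\mathbb E[P_{G^{(1)}_1,\pi^{(1)}}]\big)\cdot\prod_{l\geq 2}\big(P_{G^{(1)}_l,\pi^{(1)}}-\mathbb E[P_{G^{(1)}_l,\pi^{(1)}}]\big)$ and pulling the first factor out of the expectation by independence, we are left with its mean, which is $0$. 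With the convention $\psi[G_\Delta]:=0$ the claimed bound is then an equality.

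\textbf{Part 2: the \condinvarianceWR scheme.} Under permutation sampling~\eqref{eq:indep} is lost and the above cancellation becomes only approximate: the dependence between a ``pure'' component and the rest of $G_\cup$ is created solely by the constraint that all latent labels on $V_\cup$ be distinct, hence is weak. The plan is to transfer the computation to the independent-sampling model (where \condmomentWR supplies a ready-made bound) using that $z|_{V_\cup}$ under \condinvarianceWR is equal in distribution to $z|_{V_\cup}$ under \condinvariance conditioned on being injective. Peeling off the pure components one at a time, and decomposing each contribution according to the induced pattern of latent-label coincidences (for instance by Möbius inversion over the partition lattice of $V_\cup$), I would reduce the $\mathbf M\notin\mathcal M^\star$ term of interest to a bounded number of quantities of the form $\widetilde{\mathbb E}\big[\1\{\cdots\}\,P_{H^{(1)},\pi^{(1)}}P_{H^{(2)},\pi^{(2)}}\big]$, where $H^{(i)}$ is a subgraph of $G^{(i)}$, the indicator forces enough coincidences $z_a=z_b$ to link every pure component to the core $\omega_0$ (so $\mathcal N[z;G_\cup]$ is connected, i.e.\ the event $\mathcal A$ of \condmomentWR holds), and the numerical prefactors are products of centerings $\mathbb E[P_{G_l,\pi}]$ controlled by \condmoment. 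Each coincidence costs a factor $\widetilde{\mathbb P}[z_a=z_b]=1/n$ and there are at most $D^{O(\#\mathrm{CC}_{\mathrm{pure}})}$ admissible coincidence patterns, so applying \condmomentWR to each term and summing reproduces the shape $\bar p^{|E_\cap|}(D^{c_1}\lambda)^{|E_\Delta|}(D^{c_1}k/n)^{|V_\Delta|-\#\mathrm{CC}_\Delta}$ together with a factor $(D^{c_1}/n)^{\#\mathrm{CC}_{\mathrm{pure}}}$, which I weaken to $(c_0 D^{c_1}/\sqrt n)^{\#\mathrm{CC}_{\mathrm{pure}}}$ to land on~\eqref{eq:definition:psi:Gdelta:WR}. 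The normalisation $\widetilde{\mathbb P}[\text{injective}]^{-1}=1+O(|V_\cup|^2/n)$ and all error terms are absorbed into $c_0,c_1$ via~\condsignal.

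\textbf{Main obstacle.} Everything delicate is in Part 2, and the crux is this transfer step. First, one must keep precise track of the mismatch between centering under the permutation model and under the conditional independent model: the means $\mathbb E_{\mathrm{perm}}[P_{G_l,\pi}]$ differ from their independent-model counterparts, and so do the higher-order centerings generated when expanding the $\overline P$'s. Second, one must organise the decomposition so that the contributions in which some pure component is \emph{not} tied to the core genuinely cancel, rather than being merely individually small: an individual such term is only $O(\mathrm{poly}(D)/n)$, which is not enough once summed over exponentially many patterns, so the cancellation must be exact (it is the analogue of the mean-zero cancellation of Part 1). Grouping the coincidence patterns by the super-node graph $\mathcal N[z;G_\cup]$ and recognising the surviving group as $\mathcal A$ is exactly what makes \condmomentWR the right hypothesis to isolate; verifying \condmomentWR itself in each of (HS-P), (SBM-P), (TS-P) is a separate model-by-model computation, carried out in the proof of Proposition~\ref{prp:model:conditions}.
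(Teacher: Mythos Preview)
Your Part~1 is exactly the paper's argument, and your identification of the main obstacle in Part~2 is spot on: the whole difficulty is engineering the \emph{exact} cancellation for untied pure components after transferring away from the permutation model.

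Where your route diverges from the paper is in the transfer step and the lattice you invert over. You propose to pass directly to the fully independent model $\widetilde{\mathbb P}$ and M\"obius-invert over partitions of $V_\cup$. The paper instead introduces an \emph{intermediate} distribution $\mathbb P_R$: group the nodes of $G_\cup$ into the $r=\#\mathrm{CC}_{\mathrm{pure}}$ pure components together with the ``core'' $\omega_0$, sample each block's latent labels without replacement (as under $\mathbb P$), but make the blocks mutually independent. The point of $\mathbb P_R$ is that the marginal on each block coincides with $\mathbb P$, so the centered factors $\overline P_a$ remain exactly mean-zero under $\mathbb P_R$. This is precisely the mechanism that delivers the exact cancellation you flag as the crux: writing $\mathbb E[\prod_a\overline P_a]=\mathbb P_R(\mathcal E)^{-1}\,\mathbb E_R[\mathbf 1\{\mathcal E\}\prod_a\overline P_a]$ and then M\"obius-inverting over partitions of the \emph{block set} $\{0,1,\ldots,r\}$ (not of $V_\cup$) reduces everything to products $\prod_B\mathbb E_R[\mathbf 1\{\mathcal A_B\}\prod_{a\in B}\overline P_a]$; each such factor with $|B|=1$ vanishes or is controlled by Lemma~\ref{lem:gen-1}, and for $|B|\geq 2$ the connectivity event $\mathcal A_B$ is exactly the hypothesis of \condmomentWR after a further pass to $\widetilde{\mathbb P}$.

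If you go straight to $\widetilde{\mathbb P}$ as you propose, you lose $\widetilde{\mathbb E}[\overline P_a]=0$ (the centering constants differ by $O(D^2/n)$), and the ``untied'' contributions no longer cancel identically; you would then have to recover this cancellation by an additional inclusion--exclusion over the within-block injectivity events, which effectively reinvents $\mathbb P_R$. Your plan is not wrong, but the missing idea is this two-step transfer $\mathbb P\to\mathbb P_R\to\widetilde{\mathbb P}$, with the M\"obius inversion living on the small lattice of partitions of $[0{:}r]$ rather than of $V_\cup$.
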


Provided that the constant $c_{\texttt{s}}$ arising  in Condition~\condsignal is large enough, the conditions of the above lemma are fulfilled.  Fix any template $G^{(1)}$ and $G^{(2)}$ in $\mathcal G_{\leq D}$.

\paragraph{Step 1: Sum of covariances.}  We distinguish perfect matchings and non-perfect matchings ---\ see Section~\ref{sec:graph:definition} for definitions.  We start from the decomposition
\begin{align*}
    \mathbb E[\Psi_{G^{(1)}}\Psi_{G^{(2)}}] &= \frac{1}{\sqrt{\mathbb V(G^{(1)})\mathbb V(G^{(2)})}}\sum_{\pi^{(1)}\in \Pi_{V^{(1)}}, \pi^{(2)}\in \Pi_{V^{(2)}}}\mathbb E[\overline{P}_{G^{(1)},\pi^{(1)}} \overline{P}_{G^{(2)}, \pi^{(2)}}] \\
    &= \frac{1}{\sqrt{\mathbb V(G^{(1)})\mathbb V(G^{(2)})}} \sum_{\mathbf M \in \mathcal M}\sum_{(\pi^{(1)}, \pi^{(2)}) \in \Pi(\mathbf M)}\mathbb E[\overline{P}_{G^{(1)},\pi^{(1)}} \overline{P}_{G^{(2)}, \pi^{(2)}}] \\
    &= \frac{1}{\sqrt{\mathbb V(G^{(1)})\mathbb V(G^{(2)})}} \Bigg[\sum_{\mathbf M \in  \mathcal M_{\mathrm{PM}}}\sum_{(\pi^{(1)}, \pi^{(2)}) \in \Pi(\mathbf M)}\mathbb E[\overline{P}_{G^{(1)},\pi^{(1)}} \overline{P}_{G^{(2)}, \pi^{(2)}}] \\
    &+ \sum_{\mathbf M \in \mathcal M\setminus \mathcal M_{\mathrm{PM}}}\sum_{(\pi^{(1)}, \pi^{(2)}) \in \Pi(\mathbf M)}\mathbb E[\overline{P}_{G^{(1)},\pi^{(1)}} \overline{P}_{G^{(2)}, \pi^{(2)}}] \Bigg]\enspace . 
\end{align*}
If $G^{(1)}\neq G^{(2)}$, there does not exist any perfect matching ---\ see Section~\ref{sec:graph:definition}. Hence, we have 
\begin{equation}\label{eq:prop7:different_skel}
     \Big|\mathbb E[\Psi_{G^{(1)}}\Psi_{G^{(2)}}]\Big| \leq  \Bigg| \frac{1}{\sqrt{\mathbb V(G^{(1)})\mathbb V(G^{(2)})}}\sum_{\mathbf M \in \mathcal M\setminus \mathcal M_{\mathrm{PM}}}\sum_{(\pi^{(1)}, \pi^{(2)}) \in \Pi(\mathbf M)}\mathbb E[\overline{P}_{G^{(1)},\pi^{(1)}} \overline{P}_{G^{(2)}, \pi^{(2)}}]\Bigg|\enspace .
\end{equation}
Conversely, if $G^{(1)}=G^{(2)}$,  the number of perfect matchings is, by definition, the size of the automorphism group, that is $|\mathcal{M}_{\mathrm{PM}}| = |\mathrm{Aut}(G^{(1)})|$. Besides, for such a perfect matching $\mathbf M$, the number of possible labelings is simply $|\Pi(\mathbf{M})| = \tfrac{n!}{(n - |V^{(1)}|)!}$.
 By Lemma~\ref{lem:gen-1}, and the definition~\eqref{eqn:variance of graph} of $\mathbb V(G^{(1)})$, we get  
\begin{align}\nonumber
\left|\mathbb E[\Psi_{G^{(1)}}\Psi_{G^{(2)}}] - 1\right|& \leq c \left[\frac{kD^{4+ c_{\texttt{v},1}\vee c_{\texttt{m}}}}{n} + D^{-c_{\texttt{v},4}}\right]   +  \Bigg| \frac{1}{\sqrt{\mathbb V(G^{(1)})\mathbb V(G^{(2)})}}\sum_{\mathbf M \in \mathcal M\setminus \mathcal M_{\mathrm{PM}}}\sum_{(\pi^{(1)}, \pi^{(2)}) \in \Pi(\mathbf M)}\mathbb E\left[\overline{P}_{G^{(1)},\pi^{(1)}} \overline{P}_{G^{(2)}, \pi^{(2)}}\right]\Bigg|\\ 
&\leq  c_0 D^{-c_{\texttt{s}}}   +  \Bigg| \frac{1}{\sqrt{\mathbb V(G^{(1)})\mathbb V(G^{(2)})}}\sum_{\mathbf M \in \mathcal M\setminus \mathcal M_{\mathrm{PM}}}\sum_{(\pi^{(1)}, \pi^{(2)}) \in \Pi(\mathbf M)}\mathbb E\left[\overline{P}_{G^{(1)},\pi^{(1)}} \overline{P}_{G^{(2)}, \pi^{(2)}}\right]\Bigg|
\enspace , \label{eq:prop7:same_skel}
\end{align}
where we used Condition~\condsignal and  that $c_{\texttt{s}}$ is large enough in the last line and where $c$ and $c_0$ are positive constants that depend on the constant arising in the conditions. 
Hence, we just need to bound
\begin{equation}\label{eq:definition_A}
A := \Bigg| \frac{1}{\sqrt{\mathbb V(G^{(1)})\mathbb V(G^{(2)})}}\sum_{\mathbf M \in \mathcal M\setminus \mathcal M_{\mathrm{PM}}}\sum_{(\pi^{(1)}, \pi^{(2)}) \in \Pi(\mathbf M)}\mathbb E[\overline{P}_{G^{(1)},\pi^{(1)}} \overline{P}_{G^{(2)}, \pi^{(2)}}]\Bigg|\enspace .
\end{equation}
In light of~\eqref{eq:prop7:different_skel} and~\eqref{eq:prop7:same_skel}, it suffices to establish that 
\begin{equation}\label{eq:objective_A}
       A \leq   c''D^{-c_{\texttt{s}}  (d(G^{(1)},G^{(2)})\lor 1)}\enspace. 
\end{equation}
We start from Lemmas~\ref{lem:gen-1}~and~\ref{lem:gen-2}. 
\begin{align*}
    A
    &\leq \frac{1}{\sqrt{\mathbb V(G^{(1)})\mathbb V(G^{(2)})}}\sum_{\mathbf M \in \mathcal M\setminus \mathcal M_{\mathrm{PM}}}\sum_{(\pi^{(1)}, \pi^{(2)}) \in \Pi(\mathbf M)} \psi[G_{\Delta}]\enspace . 
\end{align*}
For fixed $(G^{(1)}, G^{(2)}, \mathbf{M})$, the number $|\Pi(\mathbf{M})|$ of possible labelings that are compatible with $\mathbf{M}$ is~\\  $\frac{n!}{(n - (|V^{(1)}|+ |V^{(2)}| - |\mathbf M|))!}$\enspace . It then follows from the definition~\eqref{eqn:variance of graph}  of $\mathbb V(G)$ that 
\begin{align*}
       A &\leq \frac{1}{\overline{q}^{(|E^{(1)}|+ |E^{(2)}|)/2}\sqrt{|\mathrm{Aut}(G^{(1)})| |\mathrm{Aut}(G^{(2)})|}}  \sum_{\mathbf M \in \mathcal M\setminus \mathcal M_{\mathrm{PM}}}\frac{\sqrt{(n-|V^{(1)}|)!(n-|V^{(2)}|)!}}{(n - (|V^{(1)}|+ |V^{(2)}| - |\mathbf M|))!} \psi[G_{\Delta}]\enspace .
\end{align*}

Since $|\mathbf{M}|\leq |V^{(1)}|\wedge |V^{(2)}|$, it follows that $(n-|V^{(1)}|)![(n - (|V^{(1)}|+ |V^{(2)}| - |\mathbf M|))!]^{-1}\leq n^{|V^{(2)}|-|\mathbf{M}|}$ and $(n-|V^{(2)}|)![(n - (|V^{(1)}|+ |V^{(2)}| - |\mathbf M|))!]^{-1}\leq n^{|V^{(1)}|-|\mathbf{M}|}$. We arrive at
\begin{align*}
       A &\leq \frac{1}{\overline{q}^{(|E^{(1)}|+ |E^{(2)}|)/2}\sqrt{|\mathrm{Aut}(G^{(1)})| |\mathrm{Aut}(G^{(2)})|}}  \sum_{\mathbf M \in \mathcal M\setminus \mathcal M_{\mathrm{PM}}}n^{ (|V^{(1)}|+|V^{(2)}|)/2 - |\mathbf M|}  \psi[G_{\Delta}] \\
       &\leq \frac{1}{\overline{q}^{(|E^{(1)}|+ |E^{(2)}|)/2}\sqrt{|\mathrm{Aut}(G^{(1)})| |\mathrm{Aut}(G^{(2)})|}}  \sum_{\mathbf M \in \mathcal M\setminus \mathcal M_{\mathrm{PM}}}n^{ (|U^{(1)}|+|U^{(2)}|)/2}  \psi[G_{\Delta}] \enspace , 
\end{align*}
where we used Equation~\eqref{eq:unmatched} in the last line and we recall that $U^{(1)}$ and $U^{(2)}$ are the sets of nodes in $G^{(1)}$ and $G^{(2)}$ that are not matched ---\ see again Section~\ref{sec:graph:definition} for definitions. 

\paragraph{Step 2: Building up on Lemmas~\ref{lem:gen-1} and~\ref{lem:gen-2}.}
Define $A_0 = A\sqrt{|\mathrm{Aut}(G^{(1)})| |\mathrm{Aut}(G^{(2)})|}$.  Also, we write $U = |U^{(1)}|+|U^{(2)}|$. 
Recall the definitions of $\psi[G_{\Delta}]$ from Lemmas~\ref{lem:gen-1} and~\ref{lem:gen-2}.  Equipped with this notation, we have  
\begin{equation*}
   \psi[G_{\Delta}]:=  c_0 D^{c_1}  \overline{p}^{|E_{\cap}|} (D^{c_{1}}\lambda)^{|E_{\Delta}|} \left(\frac{D^{c_1 }k}{n}\right)^{|V_{\Delta}| - \#\mathrm{CC}_{\Delta}}\left[c_0\frac{D^{c_1}}{\sqrt{n}} \right]^{\#\mathrm{CC}_{\mathrm{pure}}}\enspace ,     
\end{equation*}
with the convention $(1/0)^0= 0$ and where $c_0$ and $c_1$ depend on $c_{\texttt{m}}$, and $c_{\texttt{v},1}\ldots,  c_{\texttt{v},4}$, and possibly $c_{\texttt{vd},1}$, $c_{\texttt{vd},2}$.

\begin{align*}
       A_0 &\leq  c_0D^{c_1}
       \sum_{\mathbf M \in \mathcal M\setminus \mathcal M_{\mathrm{PM}}}n^{U/2} 
        \left(\frac{\overline{p}}{\overline{q}}\right)^{(|E^{(1)}|+|E^{(2)}|)/2} \left(D^{c_{1}}\frac{\lambda}{\sqrt{\overline{p}}}\right)^{|E_{\Delta}|} \left(\frac{D^{c_1 }k}{n}\right)^{|V_{\Delta}| - \#\mathrm{CC}_{\Delta}}\left[c_{0} \frac{D^{c_{1}}}{\sqrt{n}} \right]^{\#\mathrm{CC}_{\mathrm{pure}}}\enspace .
\end{align*}
By Condition~\condsignal and definition~\eqref{eq:definition:bar:p}, we have 
\begin{equation}\label{eq:upper_bar_r}
    \frac{\overline{p}}{\overline{q}} = 1 + \frac{\lambda(1 - 2q)}{q(1 - q)}\leq 1 + \frac{\lambda}{q(1 - q)}\leq 1 + \frac{D^{-8c_{\texttt{s}}}}{1 - q}\leq 1 + 2D^{-8c_{\texttt{s}}}\leq 1+ D^{-4} \enspace .  
\end{equation}
 In the last inequality, we used that $c_{\texttt{s}}>1$ and that $D\geq 2$. As a consequence $(1+D^{-4})^{2D}\leq 2$ and we deduce that 
\begin{align}
       A_0 &\leq
       2c_0D^{c_1}\sum_{\mathbf M \in \mathcal M\setminus \mathcal M_{\mathrm{PM}}} n^{U/2} \nonumber
       \left(D^{c_{1}}\frac{\lambda}{\sqrt{\overline{q}}}\right)^{|E_{\Delta}|} \left(\frac{D^{c_1 }k}{n}\right)^{|V_{\Delta}| - \#\mathrm{CC}_{\Delta}}\left[c_{0} \frac{D^{c_{1}}}{\sqrt{n}} \right]^{\#\mathrm{CC}_{\mathrm{pure}}}\\ & := 2 c_0D^{c_1}\sum_{\mathbf M \in \mathcal M\setminus \mathcal M_{\mathrm{PM}}} A_{\mathbf{M}}\enspace .     \label{eq:definition_A_0}
\end{align}

\paragraph{Step 3: Relying on the graph properties of $G_{\Delta}$ for $A_{\mathbf{M}}$.} 
Let us decompose $(G^{(1)},G^{(2)},\mathbf{M})$ into $(G'^{(1)}, G'^{(2)},\mathbf{M}, G^{(3)})$ where, in $G^{'(1)}$ (resp. $G^{'(2)}$),  we have removed all the pure connected components  of $G^{(1)}$ (resp. $G^{(2)}$) and we gather all these connected components in $G^{(3)}$. For $(G'^{(1)}, G'^{(2)},\mathbf{M})$, we can then define  the number $U'$ of unmatched nodes and the intersection graph $G'_{\Delta}$ with $\mathrm{CC}'_{\Delta}$ connected components. Equipped with this notation, we have $\#\mathrm{CC}_{\mathrm{pure}}= \#\mathrm{CC}_{G^{(3)}}$, $|V_{\Delta}|= |V'_{\Delta}|+ |V^{(3)}|$, $\#\mathrm{CC}_{\Delta}=\#\mathrm{CC}'_{\Delta}+ \#\mathrm{CC}_{G^{(3)}}$ and $U= U' + |V^{(3)}|$. Then, we reorganize $A_{\mathbf{M}}$ as follows
\begin{align*}
A_{\mathbf{M}} & =  
       \left(D^{c_{1}}\frac{\lambda}{\sqrt{\overline{q}}}\right)^{|E'_{\Delta}|}\left(\frac{D^{c_1 }k}{\sqrt{n}}\right)^{|U'|} \left(\frac{D^{c_1 }k}{n}\right)^{|V'_{\Delta}|-|U'| - \#\mathrm{CC}'_{\Delta}} \\ & \quad \quad  \cdot 
       \left(D^{c_{1}}\frac{\lambda}{\sqrt{\overline{q}}}\right)^{|E^{(3)}|}  \left(\frac{D^{c_1 }k}{\sqrt{n}}\right)^{|V^{(3)}| -  \#\mathrm{CC}_{G^{(3)}}} (c_{0} D^{c_{1}})^{\#\mathrm{CC}_{G^{(3)}}}\\
       &\stackrel{(a)}{\leq} \left(D^{c_{1}}\frac{\lambda}{\sqrt{\overline{q}}}\right)^{|E'_{\Delta}|}\left(\frac{D^{c_1 }k}{\sqrt{n}}\right)^{|U'|} \left(\frac{D^{c_1 }k}{n}\right)^{|\mathbf{M}_{\mathrm{SM}}| - \#\mathrm{CC}'_{\Delta}} 
       \left(D^{c_{1}}\frac{\lambda}{\sqrt{\overline{q}}}\right)^{|E^{(3)}|}  \left(\frac{c_0 D^{2c_1 }k}{\sqrt{n}}\right)^{|V^{(3)}| -  \#\mathrm{CC}_{G^{(3)}}} \\ 
              &\leq \left(D^{c_{1}}\frac{\lambda}{\sqrt{\overline{q}}}\right)^{|E'_{\Delta}|- |U'|}\left(\frac{D^{2c_1 }k\lambda}{\sqrt{n\overline{q}}}\right)^{|U'|} \left(\frac{D^{c_1 }k}{n}\right)^{|\mathbf{M}_{\mathrm{SM}}| - \#\mathrm{CC}'_{\Delta}} \\ & \quad \quad \cdot 
       \left(D^{c_{1}}\frac{\lambda}{\sqrt{\overline{q}}}\right)^{|E^{(3)}|-|V^{(3)}| +  \#\mathrm{CC}_{G^{(3)}}}  \left(\frac{c_0 D^{3c_1 }k\lambda }{\sqrt{n}\overline{q}}\right)^{|V^{(3)}| -  \#\mathrm{CC}_{G^{(3)}}}  \enspace ,
\end{align*} 
where we used in $(a)$ that $|V'_{\Delta}|= |U'|+ |\mathbf{M}_{\mathrm{SM}}|$ as the nodes in $G'_{\Delta}$ are either unmatched or semi-matched and that $|V^{(3)}|\geq 2\#\mathrm{CC}_{G^{(3)}}$ as all connected components have at least two nodes. Let us show that all the exponents in the above bound $A_{\mathbf{M}}$ are nonnegative. 
 For any graph $G=(V,E)$, we have $|E|-|V| +  \#\mathrm{CC}_{G}\geq 0$ and $|V|\geq \#\mathrm{CC}_{G}$. 
\begin{lemma}\label{lem:countgra}
We have 
\begin{eqnarray*}
 |\mathbf M_{\mathrm{SM}}|\geq  \#\mathrm{CC}'_{\Delta}\;, \qquad |E'_{\Delta}| \geq  U' \enspace . 
\end{eqnarray*}
\end{lemma}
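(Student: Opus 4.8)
The plan is to prove both inequalities by working connected component by connected component inside $G'_{\Delta}$, exploiting the single structural fact that, once the pure connected components of $G^{(1)}$ and $G^{(2)}$ have been stripped off into $G^{(3)}$, the graph $G'_{\Delta}$ has no pure connected component at all; equivalently, its connected components are exactly the non-pure connected components of $G_{\Delta}$ (removing a pure component of $G^{(i)}$ only deletes from $G_{\Delta}$ the corresponding pure component, since such a component is a union of full connected components of $G^{(i)}$ disjoint from everything else).

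The first step is to record the key claim: \emph{every connected component $C$ of $G'_{\Delta}$ contains at least one semi-matched node}. Indeed, every node of $V'_{\Delta}\subset V_{\Delta}$ is either unmatched (on the $G^{(1)}$-side or the $G^{(2)}$-side) or semi-matched, since perfectly matched nodes are pruned when forming the symmetric-difference graph. Suppose $C$ had no semi-matched node; then all of its nodes are unmatched. If $C$ contained both an unmatched $G^{(1)}$-node and an unmatched $G^{(2)}$-node, a path joining them inside $C$ would have to cross a node incident in $G_{\cup}$ to both a $G^{(1)}$-edge and a $G^{(2)}$-edge, hence a shared — thus matched, thus semi-matched — node, a contradiction. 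So $C$ would consist of unmatched nodes from a single side only, i.e. $C$ would be a pure connected component, which is excluded in $G'_{\Delta}$. This proves the claim. Since semi-matched nodes lying in distinct components are distinct, and every semi-matched node of $(G^{(1)},G^{(2)},\mathbf M)$ survives into $G'_{\Delta}$ (a matched node cannot belong to a pure component, nor to a removed component of $G^{(1)}$ or $G^{(2)}$), we conclude $|\mathbf M_{\mathrm{SM}}|\geq \#\mathrm{CC}'_{\Delta}$, the first inequality.

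For the second inequality, fix a connected component $C$ of $G'_{\Delta}$ and write $v_C$, $e_C$, $u_C$ for its numbers of nodes, edges, and unmatched nodes. Connectivity gives $e_C\geq v_C-1$, and the claim above gives $u_C\leq v_C-1$ because $C$ possesses at least one node that is not unmatched (the semi-matched one). Hence $u_C\leq e_C$, and summing over all connected components of $G'_{\Delta}$ yields $U'=\sum_C u_C\leq\sum_C e_C=|E'_{\Delta}|$.

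The only genuinely delicate point — and the step I would treat most carefully — is the dichotomy in the first step: one must verify that a connected component of the symmetric-difference graph which avoids all semi-matched nodes is necessarily pure, and that "removing the pure connected components of $G^{(1)}$ and $G^{(2)}$" deletes precisely the pure connected components of $G_{\Delta}$ (so that the components of $G'_{\Delta}$ are the non-pure ones, and semi-matched nodes are never lost). This rests entirely on the definitions of matched / semi-matched / perfectly matched nodes and of pure components; once that bookkeeping is in place, the remaining per-component counting is elementary.
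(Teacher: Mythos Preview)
Your proof is correct and follows essentially the same approach as the paper's. The paper's argument is terser --- it simply notes that each connected component of $G'_{\Delta}$ contains a matched (hence semi-matched) node, and for the second inequality uses the global bound $|E'_{\Delta}|\geq |V'_{\Delta}|-\#\mathrm{CC}'_{\Delta}$ together with $|V'_{\Delta}|=U'+|\mathbf M_{\mathrm{SM}}|$ and the first inequality --- whereas you spell out the path-crossing reason a component without semi-matched nodes must be pure, and derive $|E'_{\Delta}|\geq U'$ component by component; these are the same argument at different levels of detail.
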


Hence, relying on Condition~\condsignal, we obtain
\begin{align}\label{eq:upper:AM}
A_{\mathbf{M}}\leq   D^{-6c_{\texttt{s}}[|E'_{\Delta}|+ |E^{(3)}|+ |\mathbf{M}_{\mathrm{SM}}| - \#\mathrm{CC}'_{\Delta} ] } \enspace . 
\end{align}
Since $|\mathbf{M}_{\mathrm{SM}}| \geq  \#\mathrm{CC}'_{\Delta}$ and $|E'_{\Delta}|+ |E^{(3)}|= |E_{\Delta}|\geq d(G^{(1)},G^{(2)})\vee 1$ by definition~\eqref{eq:definition:edit:distance} of $d(\cdot,\cdot)$, it follows that 
\begin{equation*}
|E'_{\Delta}|+ |E^{(3)}|+ |\mathbf{M}_{\mathrm{SM}}| - \#\mathrm{CC}'_{\Delta} \geq d(G^{(1)},G^{(2)}) \vee 1\enspace . 
\end{equation*}
Also, since each connected compoment of $G^{(3)}$ has at least two nodes, we deduce that $|E^{(3)}|\geq |V^{(3)}|/2$. Since $|E'_{\Delta}|\geq U'$ and $U'+|\mathbf{M}_{\mathrm{SM}}|= |V'_{\Delta}|\geq 2 \#\mathrm{CC}'_{\Delta}$ as each connected component of $G'_{\Delta}$ has at least two nodes, we conclude that  
\begin{equation*}
|E'_{\Delta}|+ |E^{(3)}|+ |\mathbf{M}_{\mathrm{SM}}| - \#\mathrm{CC}'_{\Delta}\geq \frac{|V^{(3)}|}{2} + \frac{U'+ |\mathbf{M}_{\mathrm{SM}}|}{2} =   \frac{U +  |\mathbf{M}_{\mathrm{SM}}|}{2} \enspace  . 
\end{equation*}
Gathering the two previous bounds in~\eqref{eq:upper:AM}, we get 
\begin{equation*}
A_{\mathbf{M}}\leq    D^{-3c_{\texttt{s}}[[U+ |\mathbf M_{\mathrm{SM}}|]  \lor d(G^{(1)},G^{(2)})\lor 1]} \enspace . 
\end{equation*}
Coming back to~\eqref{eq:definition_A_0} and using again that $c_{\texttt{s}}$ is large enough, we get 

\begin{align*}
       A_0 &\leq 2 c_0
       \sum_{\mathbf M \in \mathcal M\setminus \mathcal M_{\mathrm{PM}}}\left(D^{-2c_{\texttt{s}} }\right)^{ [U+ |\mathbf M_{\mathrm{SM}}|]  \lor d(G^{(1)},G^{(2)})\lor 1}\enspace . 
\end{align*}

\paragraph{Step 4: Summing over shadows.} Recall the definition of shadows and of $\mathcal{M}_{\mathrm{shadow}}$ in Section~\ref{sec:graph:definition}. We now regroup the sum inside $A$ by enumerating  all possible matchings that are compatible with a specific shadow. Recall also the definition of $A_0$. We get 
\begin{align*}
       A &\leq  \frac{2c_0 }{\sqrt{|\mathrm{Aut}(G^{(1)})| |\mathrm{Aut}(G^{(2)})|}}\\ 
       &\sum_{\substack{U^{(1)} \subseteq V^{(1)},\\ U^{(2)} \subseteq V^{(2)},\\ \underline{\mathbf M} \in \mathcal M\setminus \mathcal M_{\mathrm{PM}}}} \quad 
       \sum_{\mathbf M \in \mathcal M_{\mathrm{shadow}}(U^{(1)},U^{(2)}, \underline{\mathbf M})}\left(D^{-2c_{\texttt{s}} }\right)^{ [U+ |\mathbf M_{\mathrm{SM}}|]  \lor d(G^{(1)},G^{(2)})\lor 1} \enspace .
\end{align*}
We have the following control for $\mathcal M_{\mathrm{shadow}}$.
\begin{lemma}\label{lem:shadow}
For any $U_1$, $U_2$, and $\underline{\mathbf M}$, we have
    \begin{equation}|\mathcal M_{\mathrm{shadow}}(U_1, U_2, \underline{\mathbf M} )| \leq \min(|\mathrm{Aut}(G^{(1)})|, |\mathrm{Aut}(G^{(2)})|\enspace .\end{equation}
\end{lemma}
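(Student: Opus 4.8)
The plan is to show that, once a reference matching in the shadow is fixed, the other matchings in the same shadow are obtained from it by "twisting" by an automorphism; concretely, I would build an injection from $\mathcal M_{\mathrm{shadow}}(U_1, U_2, \underline{\mathbf M})$ into $\mathrm{Aut}(G^{(1)})$, and by the symmetric construction an injection into $\mathrm{Aut}(G^{(2)})$, which gives the bound since $\min(|\mathrm{Aut}(G^{(1)})|,|\mathrm{Aut}(G^{(2)})|)$ is at least both cardinalities. If $\mathcal M_{\mathrm{shadow}}$ is empty the claim is trivial, so I would fix some $\mathbf M^{\circ}\in\mathcal M_{\mathrm{shadow}}$. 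First I would record the structural facts that only depend on the shadow: for $i\in\{1,2\}$ the set $S^{(i)}:=V^{(i)}\setminus U_i$ of matched nodes of $G^{(i)}$, its subset of semi-matched nodes (those appearing in $\underline{\mathbf M}$), and hence the set $W^{(i)}$ of perfectly matched nodes are all determined by $(U_1,U_2,\underline{\mathbf M})$. Any $\mathbf M'\in\mathcal M_{\mathrm{shadow}}$ restricts to the fixed bijection $\underline{\mathbf M}$ on the semi-matched nodes and to some bijection $W^{(1)}\to W^{(2)}$, and $\mathbf M'$ is entirely recovered from that data, the nodes of $U_1,U_2$ being unmatched.

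Next I would associate to each $\mathbf M'\in\mathcal M_{\mathrm{shadow}}$ the permutation $\sigma_{\mathbf M'}\colon V^{(1)}\to V^{(1)}$ equal to $(\mathbf M')^{-1}\circ\mathbf M^{\circ}$ on $S^{(1)}$ and to the identity on $U_1$; this is a genuine bijection since $\mathbf M^{\circ}$ and $\mathbf M'$ both map $S^{(1)}$ bijectively onto $S^{(2)}$, and it fixes the semi-matched nodes because there both matchings coincide with $\underline{\mathbf M}$. Injectivity of $\mathbf M'\mapsto\sigma_{\mathbf M'}$ is immediate: on $S^{(1)}$ one has $\mathbf M'=\mathbf M^{\circ}\circ\sigma_{\mathbf M'}^{-1}$, and $U_1$ is unmatched, so $\mathbf M'$ is reconstructed from $\sigma_{\mathbf M'}$. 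The real work is to prove $\sigma_{\mathbf M'}\in\mathrm{Aut}(G^{(1)})$; since $\sigma_{\mathbf M'}$ permutes a finite set it is enough to check it sends every edge of $G^{(1)}$ to an edge. I would isolate two elementary facts: (i) every edge of $G^{(1)}$ incident to a perfectly matched node is a matched edge, hence its other endpoint is matched and therefore lies in $S^{(1)}$, never in $U_1$; and (ii) a matched edge of $G^{(1)}$ is carried by $\mathbf M^{\circ}$ to an edge of $G^{(2)}$ incident to $W^{(2)}$, which is then likewise a matched edge for $\mathbf M'$, so pulling it back through $(\mathbf M')^{-1}$ returns an edge of $G^{(1)}$. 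Plugging (i)--(ii) into a short case analysis on an edge $(a,b)\in E^{(1)}$ — both endpoints in $W^{(1)}$; one endpoint in $W^{(1)}$ and the other semi-matched (it cannot be unmatched, by (i)); neither endpoint in $W^{(1)}$, in which case $\sigma_{\mathbf M'}$ fixes both — yields $(\sigma_{\mathbf M'}(a),\sigma_{\mathbf M'}(b))\in E^{(1)}$ in every case, using that $\sigma_{\mathbf M'}$ is the identity on semi-matched and unmatched nodes and coincides with $(\mathbf M')^{-1}\circ\mathbf M^{\circ}$ on $W^{(1)}$.

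This gives $|\mathcal M_{\mathrm{shadow}}(U_1,U_2,\underline{\mathbf M})|\le|\mathrm{Aut}(G^{(1)})|$. Running the same argument with the roles of $G^{(1)}$ and $G^{(2)}$ exchanged — now using the permutation of $V^{(2)}$ equal to $\mathbf M^{\circ}\circ(\mathbf M')^{-1}$ on $S^{(2)}$ and the identity on $U_2$ — gives $|\mathcal M_{\mathrm{shadow}}(U_1,U_2,\underline{\mathbf M})|\le|\mathrm{Aut}(G^{(2)})|$, and taking the minimum finishes the proof. The step I expect to be the main obstacle is the automorphism verification: one has to be careful that an edge incident to a perfectly matched node can never ``escape'' to an unmatched node, and that edges with a semi-matched endpoint are treated consistently by $\mathbf M^{\circ}$ and $\mathbf M'$ since both agree with $\underline{\mathbf M}$ there; once these points are nailed down the rest is bookkeeping.
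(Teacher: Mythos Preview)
Your proposal is correct and follows essentially the same approach as the paper: fix a reference matching in the shadow, send any other matching $\mathbf M'$ to the permutation of $V^{(1)}$ obtained by composing $\mathbf M'$ with the reference (identity off the perfectly matched nodes), and verify by the same three-case edge analysis that this permutation is an automorphism; injectivity and the symmetric argument for $G^{(2)}$ are handled identically. The only cosmetic difference is the direction of composition (your $\sigma_{\mathbf M'}=(\mathbf M')^{-1}\circ\mathbf M^{\circ}$ versus the paper's $(\phi^{(0)})^{-1}\circ\phi$), which is immaterial.
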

Observe that two matchings $\mathbf M$ and $\mathbf M'$ that  belong to $\mathcal M_{\mathrm{shadow}}(U^{(1)},U^{(2)}, \underline{\mathbf M})$ have the same difference graph $G_{\Delta}$ and have a common value of $|\mathbf M_{\mathrm{SM}}|$. Hence, it follows from Lemma~\ref{lem:shadow} that 
\begin{align*}
       A &\leq  2 c_{0}  
       \sum_{\substack{U^{(1)} \subset V^{(1)}, U^{(2)} \subset V^{(2)}, \\ \underline{\mathbf M} \in \mathcal M\setminus \mathcal M_{\mathrm{PM}}}} \left(D^{-2c_{\texttt{s}} }\right)^{[U+ |\mathbf M_{\mathrm{SM}}|]  \lor d(G^{(1)},G^{(2)})\lor 1 }\enspace .
\end{align*}
Now enumerating over all possible sets $U^{(1)}, U^{(2)}, \underline{\mathbf M} $ that have respective cardinality $u_1$, $u_2$, and  $m$ and noting that the size these collections are respectively  bounded by $(2D)^{u_1}, (2D)^{u_2}$ and $(2D)^{2m}$, we conclude that 
\begin{align*}
       A &\leq  2 c_{0}  
       \sum_{u_1, u_2, m \geq 0}(2D)^{u_1+u_2+2m} \left(D^{-2c_{\texttt{s}} }\right)^{ [u_1+u_2+m]  \lor d(G^{(1)},G^{(2)}) \lor 1 } \leq c''D^{-c_{\texttt{s}}  (d(G^{(1)},G^{(2)})\lor 1)}\enspace ,
\end{align*}
assuming that $c_{\texttt{s}} \geq 4$ and $D \geq 2$. 
We have established~\eqref{eq:objective_A} and this concludes the proof.

\subsection{Proof of Lemma~\ref{lem:gen-1}}

First, we consider some $\mathbf{M}\in \mathcal{M}^\star\setminus \mathcal{M}_{\mathrm{PM}}$. Since $\mathbf{M}$ belongs $\mathcal{M}^{\star}$, the matching $\mathbf{M}$ does not let any connected component of $\pi^{(1)}[G^{(1)}]$ and $\pi^{(2)}[G^{(2)}]$ be unmatched. Let us decompose $G^{(1)} = (G^{(1)}_1,\ldots, G^{(1)}_{cc_1})$ and  $G^{(2)} = (G^{(2)}_1,\ldots, G^{(2)}_{cc_2})$ into their $cc_1$ and $cc_2$ connected components. 
Given a set $S_1\subset [cc_1]$, define $G^{(1)}_{-S_1}$ as the subgraph of $G^{(1)}$ such that we have removed the connected components corresponding to $S_1$. 
Write $A:= \big|\mathbb{E}\left[\overline{P}_{G^{(1)}, \pi^{(1)}} \overline{P}_{G^{(2)}, \pi^{(2)}}\right] \big|$.
By definition, we have 

\begin{align*}
\big|A \big|&= \Bigg|\sum_{S_1 \subseteq [cc_1]} \sum_{S_2 \subseteq [cc_2]} \mathbb E \left[P_{G^{(1)}_{- S_1}, \pi^{(1)}} P_{G^{(2)}_{-S_2}, \pi^{(2)}}\right] \prod_{i\in S_1} \mathbb E[P_{G^{(1)}_{i}, \pi^{(1)}}] \prod_{i\in S_2} \mathbb E[P_{G^{(2)}_{i}, \pi^{(2)}}] (-1)^{|S_1|+S_2|}\Bigg|\\
&\leq \sum_{s_1=0}^{cc_1}\sum_{s_2=0}^{cc_2} D^{s_1+s_2} \max_{S_1: |S_1|=s_1 , \ S_2: |S_2|=s_2} \left|\mathbb E \left[P_{G^{(1)}_{- S_1}, \pi^{(1)}} P_{G^{(2)}_{-S_2}, \pi^{(2)}}\right] \prod_{i\in S_1} \mathbb E[P_{G^{(1)}_{i}, \pi^{(1)}}] \prod_{i\in S_2} \mathbb E[P_{G^{(2)}_{i}, \pi^{(2)}}]\right|\enspace . 
\end{align*}
Then, we apply  \condmoment as well as the first part of \condvariance. We write $G_{\Delta; -S_1;-S_2}$ for the symmetric difference graph associated to $G^{(1)}_{- S_1}$ and $G^{(2)}_{- S_2}$. We get

\begin{align*}
\big|A\big|&\leq \sum_{s_1=0}^{cc_1}\sum_{s_2=0}^{cc_2} D^{s_1+s_2} \max_{S_1: |S_1|=s_1 , \ S_2: |S_2|=s_2} c_{\texttt{v},2} (D^{c_{\texttt{v},1}}\lambda)^{|E_{\Delta,-S_1,-S_2}|} \overline{p}^{|E_{\cap;-S_1,-S_2}|} \left(D^{c_{\texttt{v},1}}\frac{k}{n}\right)^{|V_{\Delta,-S_1,-S_2}| - \#\mathrm{CC}_{\Delta, -S_1, -S_2}}  \\ &\hspace{3cm} \prod_{a=1}^2\prod_{i\in S_a}\left(D^{c_{\texttt{m}}}\lambda\right)^{|E_{i}^{(a)}|} \left(D^{c_{\texttt{m}}}\frac{k}{n}\right)^{|V_{i}^{(a)}|-1}\\ 
&\leq \sum_{s_1=0}^{cc_1}\sum_{s_2=0}^{cc_2} D^{s_1+s_2} \max_{S_1: |S_1|=s_1 , \ S_2: |S_2|=s_2} c_{\texttt{v},2} (D^{c_{\texttt{v},1}\vee c_{\texttt{m}}}\lambda)^{|E_{\Delta}|} \overline{p}^{|E_{\cap}|} \left(D^{c_{\texttt{v},1}\vee c_{\texttt{m}}}\frac{k}{n}\right)^{R}\enspace , 
\end{align*}
where we used in the last line that $D^{c_{\texttt{v},1}\vee c_{\texttt{m}}}\lambda\leq \overline{p}\leq 1$ by Equation~\eqref{eq:definition:bar:p} and Condition~\condsignal with  $c_{\texttt{s}}>0$ sufficiently large and where 
\begin{equation}\label{eq:definition:R}
R:= |V_{\Delta,-S_1,-S_2}| - \#\mathrm{CC}_{\Delta, -S_1, -S_2} + \sum_{i\in S_1} (|V_i^{(1)}|-1) +   \sum_{i\in S_2} (|V_i^{(2)}|-1)\enspace . 
\end{equation}
Since the number of nodes $|V_i^{(1)}|$ of each connected component is at least of $2$, we easily check that 
\begin{equation*}
R\geq \sum_{i\in S_1} (|V_i^{(1)}|-1) +   \sum_{i\in S_2} (|V_i^{(2)}|-1) \geq |S_1|+|S_2|\enspace .   
\end{equation*}
Coming back to $A$, we arrive to the bound 
\begin{align*}
\big|A\big|\leq D^{2}\max_{S_1,S_2}
 (D^{c_{\texttt{v},1}\vee c_{\texttt{m}}}\lambda)^{|E_{\Delta}|} \overline{p}^{|E_{\cap}|} \left(D^{1+ c_{\texttt{v},1}\vee c_{\texttt{m}}}\frac{k}{n}\right)^{R}\enspace . 
\end{align*}
 Again, by Condition~\condsignal with  $c_{\texttt{s}}>0$ sufficiently large, we have $D^{1+ c_{\texttt{v},1}\vee c_{\texttt{m}}}k\leq n$. Hence, it remains to lower bound $R$.

\begin{lemma}\label{lem:lower:bound:R}
The quantity $R$ defined in~\eqref{eq:definition:R} satisfies $R \geq V_{\Delta} - \#\mathrm{CC}_{\Delta}$. 
\end{lemma}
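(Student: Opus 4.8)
The plan is to rephrase the quantity $R$ in terms of the rank function of the graphic matroid and then invoke its monotonicity and subadditivity. Concretely, for a set $F$ of labelled edges I would set $r(F):=|\mathcal V(F)|-\mathrm{cc}(F)$, where $\mathcal V(F)$ is the set of endpoints of the edges in $F$ and $\mathrm{cc}(F)$ the number of connected components of the graph $(\mathcal V(F),F)$; equivalently, $r(F)$ is the number of edges of any spanning forest of that graph. I will record the two elementary facts that $r$ is monotone under edge-set inclusion ($F\subseteq F'\Rightarrow r(F)\le r(F')$, since a forest contained in $F$ is a forest contained in $F'$) and subadditive, $r\big(\bigcup_j F_j\big)\le\sum_j r(F_j)$ (take a spanning forest $T$ of the union; each edge of $T$ lies in some $F_j$, and $T\cap F_j$ is a forest, so $|T|\le\sum_j|T\cap F_j|\le\sum_j r(F_j)$). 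With this notation $|V_\Delta|-\#\mathrm{CC}_\Delta=r(E_\Delta)$ and $|V_{\Delta,-S_1,-S_2}|-\#\mathrm{CC}_{\Delta,-S_1,-S_2}=r(E_{\Delta,-S_1,-S_2})$, and, writing $B^{(a)}_i$ for the set of labelled edges of the $i$-th connected component of $G^{(a)}[\pi^{(a)}]$, one has $r(B^{(a)}_i)=|V^{(a)}_i|-1$ because that component is connected (and $\pi^{(a)}$ is injective). Therefore the definition~\eqref{eq:definition:R} reads $R=r(E_{\Delta,-S_1,-S_2})+\sum_{i\in S_1}r(B^{(1)}_i)+\sum_{i\in S_2}r(B^{(2)}_i)$, so it suffices to show $r(E_\Delta)\le R$.

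The heart of the argument is the edge-set inclusion
\[
E_\Delta\ \subseteq\ E_{\Delta,-S_1,-S_2}\ \cup\ \bigcup_{i\in S_1}B^{(1)}_i\ \cup\ \bigcup_{i\in S_2}B^{(2)}_i\enspace .
\]
To check it I would fix a labelled edge $e$ and compare the indicators $a_1=\mathbf 1\{e\in E^{(1)}[\pi^{(1)}]\}$, $a_2=\mathbf 1\{e\in E^{(2)}[\pi^{(2)}]\}$ with the analogues $a_1',a_2'$ for $G^{(1)}_{-S_1}[\pi^{(1)}]$ and $G^{(2)}_{-S_2}[\pi^{(2)}]$. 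Deleting whole connected components only removes edges, so $a_1'=a_1$ unless $e\in\bigcup_{i\in S_1}B^{(1)}_i$, and likewise $a_2'=a_2$ unless $e\in\bigcup_{i\in S_2}B^{(2)}_i$. Hence if $e$ lies in none of the removed components then $(a_1',a_2')=(a_1,a_2)$, so $e\in E_\Delta$ (i.e.\ $a_1\ne a_2$) forces $e\in E_{\Delta,-S_1,-S_2}$; and otherwise $e$ lies in one of the $B^{(a)}_i$ by construction. This gives the inclusion.

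Granting the inclusion, monotonicity and subadditivity of $r$ yield $r(E_\Delta)\le r(E_{\Delta,-S_1,-S_2})+\sum_{i\in S_1}r(B^{(1)}_i)+\sum_{i\in S_2}r(B^{(2)}_i)=R$, i.e.\ $|V_\Delta|-\#\mathrm{CC}_\Delta\le R$, which is the claim. The only point requiring genuine care is the bookkeeping in the edge-set inclusion — keeping straight exactly which edges of the symmetric-difference graph are affected by deleting the components indexed by $S_1$ and $S_2$; the matroid-rank properties of $r$ and the identity $r(B^{(a)}_i)=|V^{(a)}_i|-1$ are routine. I expect this to constitute the full proof, with no serious obstacle; in particular the hypothesis $\mathbf M\in\mathcal M^\star$ used elsewhere in the proof of Lemma~\ref{lem:gen-1} is not needed for this estimate.
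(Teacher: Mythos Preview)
Your proof is correct and rests on the same principle as the paper's---subadditivity of the graphic-matroid rank $r(F)=|\mathcal V(F)|-\mathrm{cc}(F)$---but packages it more cleanly. The paper argues by an induction (on the number of removed components) that reduces the claim to the bare inequality $r(E^{(1)})+r(E^{(2)})\ge r(E_\Delta)$, and then verifies that inequality via vertex and matching counts ($|V^{(1)}|+|V^{(2)}|=|V_\Delta|+|\mathbf M_{\mathrm{PM}}|+|\mathbf M|$ together with $\#\mathrm{CC}_\Delta\ge \#\mathrm{CC}_{G^{(1)}}+\#\mathrm{CC}_{G^{(2)}}-|\mathbf M|$). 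Your route bypasses both the induction and the bookkeeping: the single edge-set inclusion $E_\Delta\subseteq E_{\Delta,-S_1,-S_2}\cup\bigcup_{i\in S_1}B^{(1)}_i\cup\bigcup_{i\in S_2}B^{(2)}_i$, combined with monotonicity and subadditivity of $r$, gives the claim in one line. As you observe, the hypothesis $\mathbf M\in\mathcal M^\star$ is indeed not needed here.
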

Gathering this lemma with our previous bound, we arrive at 
\begin{align}\label{eq:upper:bound:psi:without:pure}
\big|\mathbb{E}\left[\overline{P}_{G^{(1)}, \pi^{(1)}} \overline{P}_{G^{(2)}, \pi^{(2)}}\right] \big| \leq \psi[G_{\Delta}]\enspace , 
\end{align}
in the specific case where $\mathbf{M}\in\mathcal{M}^\star\setminus \mathcal{M}_{\mathrm{PM}}$, that is when $\#\mathrm{CC}_{\mathrm{pure}}=\emptyset$. We have proved the first part of the lemma.

\paragraph{Let us turn to $\mathbb{E}\big[\overline{P}^2_{G,\pi}\big]$.} We start as in the first step of this proof. We decompose $G= (G_1,\ldots, G_{cc})$ into its $cc$ connected components. First, we bound  $ A:= \mathbb{E}\big[\overline{P}^2_{G, \pi}\big] - \mathbb{E}\left[P^2_{G, \pi}\right]$. 
Given $S\subset [cc]$, we write $P_{G_{-S},\pi}$ as the polynomial associated to the graph where we have removed the connected components in $S$. Opening the expression of $\overline{P}$ we derive that 

\begin{align*}
|A|& \leq \Bigg|\sum_{S_1, S_2 \subseteq [cc]: \, S_1\cup S_2 \neq \emptyset}\mathbb E \left[P_{G_{- S_1}, \pi} P_{G_{-S_2}, \pi}\right] \prod_{i\in S_1} \mathbb E[P_{G_{i}, \pi}] \prod_{i\in S_2} \mathbb E[P_{G_{i}, \pi}] (-1)^{|S_1|+S_2|}\Bigg|\\
&\leq \sum_{s_1, s_2=0:\ s_1+s_2>0}^{cc} D^{s_1+s_2} \max_{S_1: |S_1|=s_1 , \ S_2: |S_2|=s_2}\Big| \mathbb E \left[P_{G_{- S_1}, \pi} P_{G_{-S_2}, \pi}\right] \prod_{i\in S_1} \mathbb E[P_{G_{i}, \pi}] \prod_{i\in S_2} \mathbb E[P_{G_{i}, \pi}]\Big|\enspace . 
\end{align*}
Then, we apply \condmoment as well as the first part of \condvariance. We write $G_{\Delta; -S_1;-S_2}$ (resp. $G_{\cap; S_1; S_2}$) for the symmetric difference graph (resp. intersection graph) associated to $G_{- S_1}$ and $G_{- S_2}$. We get 
\begin{align*}
\big|A\big|&\leq \sum_{s_1,s_2: s_1+s_2>0} D^{s_1+s_2} \max_{S_1: |S_1|=s_1 , \ S_2: |S_2|=s_2} c_{\texttt{v},2} (D^{c_{\texttt{v},1}}\lambda)^{|E_{\Delta,-S_1,-S_2}|} \overline{p}^{|E_{\cap;-S_1,-S_2}|} \left(D^{c_{\texttt{v},1}}\frac{k}{n}\right)^{|V_{\Delta,-S_1,-S_2}| - \#\mathrm{CC}_{\Delta, -S_1, -S_2}}  \\ &\hspace{3cm} \prod_{a=1}^2\prod_{i\in S_a}\left(D^{c_{\texttt{m}}}\lambda\right)^{|E_{i}|} \left(D^{c_{\texttt{m}}}\frac{k}{n}\right)^{|V_{i}|-1}\\ 
&\leq \sum_{s_1,s_2: s_1+s_2>0} D^{s_1+s_2} \max_{S_1: |S_1|=s_1 , \ S_2: |S_2|=s_2}  c_{\texttt{v},2} \overline{p}^{|E|} \left(D^{c_{\texttt{v},1}\vee c_{\texttt{m}}}\frac{k}{n}\right)^{s_1+s_2}\enspace , 
\end{align*}
where we used Condition \condsignal with $c_{\texttt{s}}$ large enough  to ensure that $ D^{c_{\texttt{v},1}\vee c_{\texttt{m}}}\frac{k}{n}\leq 1$ and  $ D^{c_{\texttt{v},1}\vee c_{\texttt{m}}}\lambda \leq \overline{p}$ and we used that $|E_{\Delta,-S_1,-S_2}|+ |E_{\cap;-S_1,-S_2}| + \sum_{a,i}|E_{i}|\geq |E|$. We have proved in~\eqref{eq:upper_bar_r} that $\overline{p}\leq \overline{q}(1+D^{-4})$. Since $|E|\leq D$, we arrive at 
\begin{equation*}
\left|\mathbb{E}\left[\overline{P}^2_{G, \pi}\right] - \mathbb{E}\left[P^2_{G, \pi}\right]\right|\leq 2c_{\texttt{v},2} D^{4+ c_{\texttt{v},1}\vee c_{\texttt{m}}}\frac{k}{n}\overline{q}^{|E|}\enspace . 
\end{equation*}
Combining this inequality with Condition \condvariance, we conclude that 
\begin{equation*}
\left|\mathbb{E}\left[\overline{P}^2_{G, \pi}\right] - \overline{q}^{|E|}\right|\leq \left[2c_{\texttt{v},2} D^{4+ c_{\texttt{v},1}\vee c_{\texttt{m}}}\frac{k}{n}+c_{\texttt{v},3}D^{-c_{\texttt{v},4}}\right]\overline{q}^{|E|}\enspace . 
\end{equation*}

\subsection{Proof of Lemma~\ref{lem:gen-2}}

The first statement of the lemma is straightforward. Without loss of the generality, we may assume that there exists a connected component $G'$ of $G^{(1)}$ such that $\pi^{(1)}(V')$ does not intersect $\pi^{(2)}(V^{(2)})$. Writing $\pi^{'}$ for the restriction of $\pi^{(1)}$ to $V'$, $G^{(0)}= (V^{(0)},E^{(0)})$ for the remainder of $G^{(1)}$ after we have removed $G'$, and $\pi^{(0)}$ for the restriction of $\pi^{(1)}$ to $V^{(0)}$, we get 
\begin{equation*}
\overline{P}_{G^{(1)},\pi^{(1)}}\overline{P}_{G^{(2)},\pi^{(2)}}= \overline{P}_{G^{'},\pi^{'}} \overline{P}_{G^{(0)},\pi^{(0)}}\overline{P}_{G^{(2)},\pi^{(2)}} =\left[P_{G^{'},\pi^{'}}- \mathbb{E}\left(P_{G^{'},\pi^{'}}\right) \right] \overline{P}_{G^{(0)},\pi^{(0)}}\overline{P}_{G^{(2)},\pi^{(2)}}\enspace . 
\end{equation*}
Since the  latent assignments $z_i$ are sampled with replacement, $P_{G^{'},\pi^{'}}$ is independent of $\overline{P}_{G^{(0)},\pi^{(0)}}\overline{P}_{G^{(2)},\pi^{(2)}}$ and it follows that $\mathbb{E}\left[\overline{P}_{G^{(1)},\pi^{(1)}}\overline{P}_{G^{(2)},\pi^{(2)}}\right]=0$.

The main challenge in the proof is to consider the setting where we sample  latent assignments without replacement. Indeed, in this case, polynomials associated to indices are not independent and we have to quantify  this  dependence. 
Let $G^{(1)}=(V^{(1)},E^{(1)})$ and $G^{(2)}=(V^{(2)},E^{(2)})$ be two templates with at most $D$ edges, $\mathbf{M}\in \mathcal{M}\setminus \mathcal{M}^\star$ be a matching that leads to a least one connected pure connected component, and let $(\pi^{(1)},\pi^{(2)})\in \Pi(\mathbf{M})$. For short, we write $r=\#\mathrm{CC}_{\mathrm{pure}}\geq 1$ for  the number of such pure connected components. Then, we enumerate $(G^{'(1)},\pi^{'(1)}),\ldots, (G^{'(r)},\pi^{'(r)})$ these pure connected components and their corresponding labelings. Besides, we write $(G^{(0)},\pi^{(0)})$ and $(G^{'(0)},\pi^{'(0)})$ the remainder of $(G^{(1)},\pi^{(1)})$ and of $(G^{(2)},\pi^{(2)})$ after we have removed them. Equipped with this notation, we have the following decomposition
\begin{align*}
\overline{P}_{G^{(1)},\pi^{(1)}}\overline{P}_{G^{(2)},\pi^{(2)}}= \overline{P}_{G^{(0)},\pi^{(0)}}\overline{P}_{G^{'(0)},\pi^{'(0)}}\prod_{a=1}^r \overline{P}_{G'_a,\pi'_a}\enspace . 
\end{align*}
To ease the reading, we write, for $a=1,\ldots,r$, $\overline{P}_a$ for $\overline{P}_{G'_a,\pi'_a}$ and we define
$\overline{P}_0:= \overline{P}_{G^{(0)},\pi^{(0)}}\overline{P}_{G^{'(0)},\pi^{'(0)}} - \mathbb{E}[\overline{P}_{G^{(0)},\pi^{(0)}}\overline{P}_{G^{'(0)},\pi^{'(0)}}]$ as the centered version of $\overline{P}_{G^{(0)},\pi^{(0)}}\overline{P}_{G^{'(0)},\pi^{'(0)}}$. Then, it follows that 
\begin{align}\label{eq:decomposition:produit:polynome}
\mathbb{E}\left[\overline{P}_{G^{(1)},\pi^{(1)}}\overline{P}_{G^{(2)},\pi^{(2)}}\right]= \mathbb{E}\left[\prod_{a=0}^r \overline{P}_{a}\right]
+\mathbb{E}\left[\overline{P}_{G^{(0)},\pi^{(0)}}\overline{P}_{G^{'(0)},\pi^{'(0)}}\right] \mathbb{E}\left[\prod_{a=1}^r \overline{P}_{a}\right]\enspace . 
\end{align}
The quantity $\mathbb{E}\left[\overline{P}_{G^{(0)},\pi^{(0)}}\overline{P}_{G^{'(0)},\pi^{'(0)}}\right]$ has been controlled in Lemma~\ref{lem:gen-1} as the graph $G^{(0)}_\Delta$ that arises from $(G^{(0)},\pi^{(0)})$ and $(G^{'(0)},\pi^{'(0)})$ does not contain any pure connected component. 
Since $\sum_{a=1}^{r}|V'_{a}|+ |V^{(0)}_{\Delta}|= |V_{\Delta}|$ and $\sum_{a=1}^n |E'_{a}| + |E^{(0)}_{\Delta}|=|E_{\Delta}|$,  we only have 
to prove that 
\begin{align}\label{eq:obj_A}
\mathbb{E}\left[\prod_{a=1}^r \overline{P}_{a}\right] & \leq c_{0}D^{c_{1}}\left(D^{c_1}\lambda\right)^{  \sum_{a=1}^r |E^{'(a)}|}\left(D^{c_{1}}\frac{k}{n}\right)^{\sum_{a=1}^r(|V^{'(a)}|-1)}\left[c_0\frac{D^{c_1}}{\sqrt{n}}\right]^{r} \ ; \\
\mathbb{E}\left[\prod_{a=0}^r \overline{P}_{a}\right]& \leq c_0\overline{p}^{|E_{\cap}|}D^{c_1}\left(D^{c_1}\lambda\right)^{|E_{\Delta}| }\left(D^{c_{1}}\frac{k}{n}\right)^{|V_{\Delta}|-\#\mathrm{CC}_{\Delta}}\left[c_0\frac{D^{c_1}}{\sqrt{n}}\right]^{r} \enspace ,  \label{eq:obj_B}
\end{align}
for some  positive quantities $c_0$ and $c_1$ large enough that depend on the constants in Condition \condmoment, \condvariance, \condmomentWR.

In this proof,  $c_0$ and $c_1$ may change from line to line. 
Importantly, each random variable $\overline{P}_i$ is centered and involves different  latent assignments because we sample without replacement. To emphasize these  latent assignments, we write $Z_a= \{z_{\pi^{'(a)}(v)}: v\in V^{'(a)}\}$ for $a=1,\ldots, r$ and we also define $Z_0=  \{z_{\pi^{(0)}(v)}: v\in V^{(0)}\}\cup  \{z_{\pi^{'(0)}(v)}: v\in V^{'(0)}\}$ for the  latent assignments involved in $\overline{P}_0$.

We introduce the probability distribution $\mathbb{P}_R$ where, as in $\mathbb{P}$, each $Z_a$ is sampled without replacement but, contrary to $\mathbb{P}$, the $Z_a$'s are sampled independently. Define the event $\mathcal{E}$ (resp. $\mathcal{E}'$) such that all the $Z_a$'s with $a=1,\ldots, r$ (resp. $a=0,\ldots, r$) are distinct. Then, by definition of $\mathbb{E}_R$, we have 
\begin{align}\label{eq:definition:A}
\mathbb{E}\left[\prod_{a=1}^{r} \overline{P}_a\right]&= \mathbb{P}_R(\mathcal{E})\mathbb{E}_R\left[\mathbf{1}\{\mathcal{E}\}\prod_{a=1}^{r}\overline{P}_a\right]\ ;\\
\mathbb{E}\left[\prod_{a=0}^{r} \overline{P}_a\right]&= \mathbb{P}_R(\mathcal{E}')\mathbb{E}_R\left[\mathbf{1}\{\mathcal{E}\}\prod_{a=0}^{r}\overline{P}_a\right]\enspace .\label{eq:definition:A2}
\end{align}
Our purpose is therefore to upper bound the latter quantity. We shall prove this in a recursive manner.

In the sequel, we write $[0;r]$ for the set $\{0,1,\ldots, r\}$. 
Given a partition $\cB=(B_1,\ldots, B_t)$ of $[r]$ or of $[0;r]$, we define the event  $\mathcal{E}_{\cB}$ such that, for any group $B\neq B'$ in the partition and any $(i,j)\in B\times B'$, we have $Z_i\cap Z_j=\emptyset$. If $\cB$  is the trivial partition (i.e. $|\cB|=1$), we take the convention that the event $\mathcal{E}_{\cB}$ is an event of probability one. 

Given a subset $B\subset [0;r]$, we define the event $\cA_B$ such that, for any $i$ and $i'$ in $B$, there exists a sequence $i_0=i$, $i_1$, \ldots, $i_s= i'$ in $B$ such that $Z_{i_t}\cap Z_{i_{t+1}}\neq \emptyset$ for all $t=0,\ldots, s-1$. In other words, if we draw edges between $i$ and $j$ whenever $Z_i\cap Z_j\neq \emptyset$, then, under the event $\cA_B$, $B$ is connected. Henceforth, under $\cA_B$, we say that the {\it polynomials indexed by $B$ are connected through their  latent assignments}. 
When $|B|=1$, we take the convention that $\cA_B$ is a probability-one event.  

The following lemma is a  consequence of Condition~\condmomentWR. In this lemma,  $E'_{\Delta}$, $V'_{\Delta}$, and $E'_{\cap}$ refer to graphs associated to the intersection and the symmetric difference of  $(G^{(0)},\pi^{(0)})$ and $(G^{'(0)},\pi^{'(0)})$. 
\begin{lemma}\label{lem:recursion:covariance}
Let $B$ be a subset of $[0;r]$. If $0\notin B$, we define 
\begin{align*}
\varphi(B):= c_0D^{c_{1}}\left(D^{c_1}\lambda\right)^{\sum_{a\in B} |E^{'(a)}|}\left(D^{c_{1}}\frac{k}{n}\right)^{\sum_{a\in B} (|V^{'(a)}|-1)}\left[c_{0}\frac{D^{c_1}}{\sqrt{n}}\right]^{|B|} \enspace . 
\end{align*}
If $0\in B$, we define  
\begin{align*}
\varphi(B):=   c_0\overline{p}^{|E'_{\cap}|}D^{c_{1}}\left(D^{c_1}\lambda\right)^{|E'_{\Delta}| + \sum_{a\in B\setminus \{0\}} |E^{'(a)}|}\left(D^{c_{1}}\frac{k}{n}\right)^{|V'_{\Delta}|-\#\mathrm{CC}'_{\Delta}+\sum_{a\in B\setminus\{0\}} (|V^{'(a)}|-1)}\left[c_{0}\frac{D^{c_{1}}}{\sqrt{n}}\right]^{|B|-1}\enspace , 
\end{align*}
Then, we have  $\left|\mathbb{E}_R\left[\mathbf{1}\{\mathcal{A}_{B}\}\prod_{a\in B }\overline{P}_a\right]\right|\leq \varphi(B)$.
\end{lemma}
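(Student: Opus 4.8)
The plan is to prove Lemma~\ref{lem:recursion:covariance} by induction on $|B|$, after a preliminary reduction that removes the edge variables. First I would note that, conditionally on the full latent vector, the sub-collections of edge variables carried by the distinct gadgets $\overline P_a$, $a\in B$, are mutually independent (these gadgets occupy pairwise disjoint edge sets), and that $\overline P_a$ depends on the latents only through $Z_a$. Setting $g_a(Z_a):=\mathbb E_R[\overline P_a\mid Z_a]$ and conditioning on $(Z_a)_{a\in B}$ thus gives $\mathbb E_R[\mathbf 1\{\cA_B\}\prod_{a\in B}\overline P_a]=\mathbb E_R[\mathbf 1\{\cA_B\}\prod_{a\in B}g_a(Z_a)]$; for $a=0$ the quantity $Z_0$ collects the latents of all nodes of $G^{(0)}$ and $G^{'(0)}$, and the factor $\bar p^{|E^{'}_{\cap}|}$ enters $g_0$ through the squared edge variables sitting on the matched edges of $\overline P_{G^{(0)},\pi^{(0)}}\overline P_{G^{'(0)},\pi^{'(0)}}$. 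Because the marginal law of each $Z_a$ under $\mathbb P_R$ equals its law under $\mathbb P$ and each $\overline P_a$ is centered under $\mathbb P$, every $g_a$ is centered, and since the $Z_a$ are $\mathbb P_R$-independent, $\mathbb E_R[\prod_{a\in S}g_a]=0$ for every nonempty $S\subseteq B$. Conditions~\condmoment{} and~\condvariance{} supply the pointwise and $L^1$ estimates on the $g_a$ that the recursion needs.

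\textbf{Induction.} The base case $|B|=1$ is immediate: $\cA_B$ is almost sure and $\mathbb E_R[g_a]=0\leq\varphi(\{a\})$. For the inductive step, I would order $B$, let $b$ be its smallest element, and split $\mathbf 1\{\cA_B\}$ according to the latent-sharing component $D$ of $b$ (on $\cA_B$ one has $D=B$): writing $\mathcal{E}_{\{D,B\setminus D\}}$ for the event of no latent sharing between $D$ and $B\setminus D$,
\[
\mathbf 1\{\cA_B\}=1-\sum_{D:\,b\in D\subsetneq B}\mathbf 1\{\cA_D\}\,\mathbf 1\{\mathcal{E}_{\{D,B\setminus D\}}\}\enspace .
\]
Multiplying by $\prod_{a\in B}g_a$ and taking $\mathbb E_R$, the ``$1$'' term vanishes by the centering property; inside each remaining term I would write $\mathbf 1\{\mathcal{E}_{\{D,B\setminus D\}}\}=1-\mathbf 1\{\text{some latent is shared between }D\text{ and }B\setminus D\}$, and the ``$1$'' piece again vanishes because it factorises as $\mathbb E_R[\mathbf 1\{\cA_D\}\prod_{a\in D}g_a]\cdot\mathbb E_R[\prod_{a\in B\setminus D}g_a]$ with the second factor equal to $0$ (as $D\subsetneq B$). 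This leaves $\mathbb E_R[\mathbf 1\{\cA_B\}\prod_{a\in B}g_a]=\sum_{D:\,b\in D\subsetneq B}\mathbb E_R[\mathbf 1\{\cA_D\}\,\mathbf 1\{\text{sharing }D\leftrightarrow B\setminus D\}\prod_{a\in B}g_a]$, so that every surviving term carries a genuine sharing event.

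\textbf{Controlling the surviving terms.} For fixed $D$, a union bound over sharing pairs $(i,j)$ with $i\in D$, $j\in B\setminus D$ reduces to $\mathbb E_R[\mathbf 1\{\cA_D\}\mathbf 1\{Z_i\cap Z_j\neq\emptyset\}\prod_{a\in B}g_a]$; conditioning on $(Z_a)_{a\neq j}$ and using $\mathbb E_R[g_j]=0$, the integral over $Z_j$ is $\mathbb E_R[g_j(Z_j)\mathbf 1\{Z_j\cap Z_i\neq\emptyset\}]$, and $|\mathbb E_R[g_j(Z_j)\mathbf 1\{Z_j\cap Z_i\neq\emptyset\}]|\le\mathbb E_R[|g_j(Z_j)|\mathbf 1\{Z_j\cap Z_i\neq\emptyset\}]$ is bounded, uniformly in $Z_i$, by $c_0D^{c_1}(D^{c_1}\lambda)^{|E^{'(j)}|}(D^{c_1}k/n)^{|V^{'(j)}|-1}\cdot c_0D^{c_1}/\sqrt n$ — this is exactly Condition~\condmomentWR{} in the degenerate configuration made of the single pure component $G^{'(j)}$ plus one extra ``rest'' vertex (and it can also be obtained directly from~\condmoment). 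Iterating the peeling on $B\setminus D$ produces, per peeled element, one $g$-factor together with one $D^{c_1}/\sqrt n$ bridge factor, while the cluster containing $0$ is controlled in a single step by invoking Condition~\condmomentWR{} itself (after expanding the centering operations inside $\overline P_0$ and the $\overline P_a$), which outputs precisely the $\bar p^{|E^{'}_{\cap}|}$, $(D^{c_1}k/n)^{|V^{'}_{\Delta}|-\#\mathrm{CC}^{'}_{\Delta}}$ and $(D^{c_1}/\sqrt n)^{\#\mathrm{CC}_{\mathrm{pure}}}$ factors in $\varphi(B)$. Collecting everything yields $\varphi(B)$, once $c_{\texttt{s}}$ in~\condsignal{} is large enough that the lower bound $n\geq D^{8c_{\texttt{s}}}$ makes $\sqrt n$ dominate the at most $2^{|B|}D^{O(|B|)}$ combinatorial terms coming from the sums over $D$ and over sharing pairs (legitimate since $|B|\leq D+1$).

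\textbf{Main obstacle.} The delicate point is that each $g_a$ is centered only unconditionally, not conditionally on its own latents $Z_a$, so a naive leaf-by-leaf peeling does not telescope by itself; decomposing along the sharing component of the smallest element, together with the identity $\mathbb E_R[\prod_{a\in S}g_a]=0$ for nonempty $S$, is what forces every surviving contribution to contain at least one true latent-sharing event, and it is Condition~\condmomentWR{} that converts such an event into the correct $1/\sqrt n$ gain (and provides the $\bar p^{|E^{'}_{\cap}|}$ factor once the $\overline P_0$ cluster is involved). Carrying out the nested bookkeeping of clusters, relating the within-block permutation law of $\mathbb P_R$ to the independent-sampling expectation $\widetilde{\mathbb E}$ used in~\condmomentWR{}, and keeping all the constants $c_0,c_1$ consistent through the recursion, is the most technical part of the argument.
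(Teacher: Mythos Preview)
Your inductive scheme differs substantially from the paper's argument, and the step labelled ``union bound over sharing pairs'' hides a real problem. The sharing indicator $\mathbf 1\{\text{sharing }D\leftrightarrow B\setminus D\}$ satisfies $\mathbf 1\{\cup_{(i,j)}E_{ij}\}\le\sum_{(i,j)}\mathbf 1\{E_{ij}\}$, but the integrand $\prod_{a\in B}g_a$ is \emph{signed}, so this inequality does not propagate to the expectations. If you insert absolute values before the union bound, then inside the integral over $Z_j$ you are left with $\mathbb E_R[\,|g_j|\,\mathbf 1\{Z_j\cap Z_i\neq\emptyset\}\mid Z_i]$ rather than $\mathbb E_R[g_j\mathbf 1\{Z_j\cap Z_i\neq\emptyset\}\mid Z_i]$, and the centering $\mathbb E_R[g_j]=0$ you invoke is no longer usable. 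One can repair this particular step by exploiting instead the factorisation $\mathbf 1\{\text{no sharing}\}=\prod_{a\in B\setminus D}\mathbf 1\{Z_a\cap S=\emptyset\}$ with $S=\cup_{a\in D}Z_a$: combined with $\mathbb E_R[g_a]=0$ this yields $\mathbb E_R[\mathbf 1\{\text{sharing}\}\prod_{a\in B\setminus D}g_a\mid S]=\pm\prod_{a\in B\setminus D}h_a(S)$ with $h_a(S)=\mathbb E_R[g_a\mathbf 1\{Z_a\cap S\neq\emptyset\}]$, which indeed gives one $1/\sqrt n$ per element of $B\setminus D$.

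However, this still leaves the $D$-block: you now need to control $\bigl|\mathbb E_R[\mathbf 1\{\cA_D\}\prod_{a\in D}g_a\cdot H((Z_a)_{a\in D})]\bigr|$ with $H=\prod_{a\in B\setminus D}h_a(S)$. Since $H$ genuinely depends on all of $(Z_a)_{a\in D}$, the induction hypothesis on $D$ does not apply, and bounding by $\|H\|_\infty\,\mathbb E_R[\mathbf 1\{\cA_D\}\prod_{a\in D}|g_a|]$ destroys the centering. Controlling this absolute-value expectation by $\varphi(D)$ requires expanding each $|g_a|\le |\mathbb E[P_a\mid Z_a]|+|\mathbb E[P_a]|$ and bounding the resulting positive terms via~\condmomentWR --- which is precisely the route the paper takes. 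The paper's proof is not inductive: it expands $\prod_a\overline P_a=\sum_T(-1)^{|B\setminus T|}\prod_{a\notin T}\mathbb E[P_a]\prod_{a\in T}P_a$, uses the pointwise nonnegativity of $\mathbb E[P_a\mid z]$ to pass from $\mathbb E_R$ to $\widetilde{\mathbb E}$ by a density comparison, then decomposes $\mathbf 1\{\cA\}$ over partitions $\cT$ of $T$ and applies~\condmomentWR block by block. That positivity is exactly what lets the paper avoid the sign obstruction you run into.
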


The following lemma is proved by induction. 
\begin{lemma}\label{lem:recursion:CP}
For any partition $\cB$ of $[r]$ we have 
\begin{align*}
\left| \mathbb{E}_R\left[1\{\mathcal{E}_{\cB}\}\prod_{B\in \cB}\1\{\cA_B\} \prod_{a\in B} \overline{P}_a  \right] \right|\leq  r^{3r} 2^{r} \prod_{i=1}^r \varphi(\{i\})\enspace . 
\end{align*}
For any partition $\cB$ of $[0;r]$,  we have 
\begin{align*}
\left| \mathbb{E}_R\left[1\{\mathcal{E}_{\cB}\}\prod_{B\in \cB}\1\{\cA_B\} \prod_{a\in B} \overline{P}_a  \right] \right|\leq (r+1)^{3(r+1)} 2^{(r+1)} \prod_{i=0}^r \varphi(\{i\}) \enspace . 
\end{align*}

\end{lemma}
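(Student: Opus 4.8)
The plan is to prove the two displays together by induction on the number $r$ of pure components, the statement for a partition of $[0:r]$ being obtained from the one for $[r]$ by treating the index $0$ (carrying the polynomial $\overline{P}_0$) exactly like one more pure component; this only inflates the prefactor from $r^{3r}2^{r}$ to $(r+1)^{3(r+1)}2^{r+1}$. The base case $r=1$ is immediate: the only partition is $\{\{1\}\}$, both $\mathcal{E}_{\cB}$ and $\cA_{\{1\}}$ are probability-one by convention, and $|\mathbb{E}_R[\overline{P}_1]|=0$ because $\overline{P}_1$ is centered.

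Two structural facts drive the inductive step. First, under $\mathbb{P}_R$ the latent blocks $\{Z_a\}$ are mutually independent, and conditionally on $z$ the polynomials $\overline{P}_a$ have pairwise disjoint edge supports, hence are independent; consequently, were the coupling indicator $\1\{\mathcal{E}_{\cB}\}$ absent, the expectation would factorize over the blocks of $\cB$. Second, Lemma~\ref{lem:recursion:covariance} bounds $|\mathbb{E}_R[\1\{\cA_B\}\prod_{a\in B}\overline{P}_a]|$ by $\varphi(B)$ for any group $B$ connected through its latent assignments, and the explicit formula for $\varphi$ gives the submultiplicativity $\varphi(B)\le\prod_{a\in B}\varphi(\{a\})$ as soon as $c_0D^{c_1}\ge 1$: collapsing $|B|$ singletons into one connected group replaces the factor $(c_0D^{c_1})^{|B|}$ by $(c_0D^{c_1})^{1}$ and leaves every other factor unchanged. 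Hence, for any coarser partition $\cB'$ of $[r]$ one has $\prod_{C\in\cB'}\varphi(C)\le\prod_{a=1}^{r}\varphi(\{a\})$, so after the factorization the remaining task is purely to count contributions.

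Concretely, I would expand the coupling indicator by inclusion–exclusion over the cross-block collision events $\mathcal{E}^{c}_{B_s,B_{s'}}:=\{\exists\,i\in B_s,\,j\in B_{s'}:Z_i\cap Z_j\neq\emptyset\}$, namely $\1\{\mathcal{E}_{\cB}\}=\sum_{H}(-1)^{|E(H)|}\prod_{\{s,s'\}\in E(H)}\1\{\mathcal{E}^{c}_{B_s,B_{s'}}\}$ with $H$ ranging over graphs on the set of blocks. Fix $H$, let $\cB^{H}$ be the coarsening of $\cB$ obtained by fusing blocks in a common connected component of $H$, and retain only the edges of a spanning forest $H'$ of $H$ realizing $\cB^{H}$ (dropping edges only relaxes the constraint, at the cost of an auxiliary $\{0,1\}$-factor $g_C$ per block $C$ of $\cB^{H}$). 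Each surviving collision indicator $\1\{\mathcal{E}^{c}_{B_s,B_{s'}}\}$ is replaced, via a union bound, by a sum of at most $r^{2}$ indicators $\1\{Z_i\cap Z_j\neq\emptyset\}$. For each such configuration the event implies $\bigcap_{C\in\cB^{H}}\cA_C$, the collision indicators and the edge supports both respect $\cB^{H}$, so by the first fact the expectation factorizes as $\prod_{C\in\cB^{H}}\mathbb{E}_R[\,g_C\,\1\{\cA_C\}\prod_{a\in C}\overline{P}_a\,]$; each factor is $\le\varphi(C)$ by Lemma~\ref{lem:recursion:covariance} (the superfluous $g_C$ merely restricts the event, so one applies the $z$-conditional, absolute-value form of that estimate that its proof actually delivers), and the submultiplicativity gives $\prod_C\varphi(C)\le\prod_a\varphi(\{a\})$. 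Finally I would bound the number of configurations crudely by (number of coarsenings $\le r^{r}$) $\times$ (number of spanning forests of a given coarsening $\le r^{r}$) $\times$ (pair choices $\le r^{2}$ per forest edge) $\times$ (a sign), which collects into $r^{3r}2^{r}$, yielding the claimed bound.

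The main obstacle is precisely this combinatorial bookkeeping: a naive inclusion–exclusion over all cross-block collision patterns produces $2^{\Theta(r^{2})}$ terms, far more than $r^{O(r)}$, so one genuinely has to re-sum over spanning-forest realizations of each coarsening — rather than over arbitrary collision graphs — to bring the count down to $r^{O(r)}$ while still absorbing the extra $g_C$-indicators. A secondary, more technical point is making the per-block factorization rigorous in the presence of those indicators, which is why the statement of Lemma~\ref{lem:recursion:covariance} must be read (or re-proved) in the stronger form that controls $\mathbb{E}_R[\1\{\cA_C\}\,|\mathbb{E}[\prod_{a\in C}\overline{P}_a\mid z]|]$, not merely $|\mathbb{E}_R[\1\{\cA_C\}\prod_{a\in C}\overline{P}_a]|$.
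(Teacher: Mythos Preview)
Your approach differs from the paper's and carries a genuine gap. The paper avoids graph-level inclusion--exclusion entirely: setting $W_B:=\1\{\cA_B\}\prod_{i\in B}\overline{P}_i$, it uses that the $W_{B_a}$ are independent under $\mathbb{P}_R$ to write
\[
\mathbb{E}_R\Big[\1\{\cE_\cB\}\prod_a W_{B_a}\Big]=\prod_a\mathbb{E}_R[W_{B_a}]-\mathbb{E}_R\Big[\1\{\cE^c_\cB\}\prod_a W_{B_a}\Big],
\]
and then decomposes $\cE^c_\cB\cap\bigcap_a\cA_{B_a}$ as a disjoint union over strictly coarser partitions $\cB'$ (namely, according to the connected-component partition of the collision graph). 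This produces a recursion on the partition lattice which unrolls into a signed sum over chains $\cB=\cB^{(0)}\prec\cB^{(1)}\prec\cdots\prec\cB^{(\ell)}$; the number of such chains is at most $r^{2r}2^{r}$, and at each terminal partition one bounds $\prod_b|\mathbb{E}_R[W_{B'_b}]|\le\prod_b\varphi(B'_b)\le\prod_i\varphi(\{i\})$ using Lemma~\ref{lem:recursion:covariance} and the submultiplicativity of $\varphi$ you correctly identified. No stronger (conditional or absolute-value) version of Lemma~\ref{lem:recursion:covariance} is ever required.

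Your inclusion--exclusion over collision graphs $H$ on the block set yields $2^{\Theta(r^2)}$ signed terms, and the fix you sketch --- re-summing over spanning-forest realizations of each coarsening --- is not actually carried out: distinct graphs $H$ with the same coarsening $\cB^{H}$ carry different residual indicators $g_C$ \emph{and} different signs $(-1)^{|E(H)|}$, so they do not collapse to a single forest contribution without an explicit cancellation argument (which, if done carefully, amounts to the partition-lattice M\"obius identity the paper exploits directly). Your second obstacle is equally real: once auxiliary $\{0,1\}$-factors $g_C$ are inserted into each block, Lemma~\ref{lem:recursion:covariance} as stated no longer applies, because $\prod_a\overline{P}_a$ is signed and dropping an indicator does not control an expectation in absolute value; the pointwise-in-$z$ strengthening you invoke is neither stated nor proved in the paper. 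The partition-lattice recursion sidesteps both difficulties at once.
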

Before establishing these lemmas, let us finish the proof.  Coming back to~\eqref{eq:definition:A} and~\eqref{eq:definition:A2}, we straightforwardly derive from Lemma~\ref{lem:recursion:CP} that    
\begin{align*}
\left|\mathbb{E}\left[\prod_{a=1}^{r} \overline{P}_a\right]\right|\leq r^{3r} 2^{r} \prod_{i=1}^r \varphi(\{i\})  \ ; \quad \quad 
\left|\mathbb{E}\left[\prod_{a=0}^{r} \overline{P}_a\right]\right|\leq (r+1)^{3(r+1)} 2^{(r+1)} \prod_{i=0}^r \varphi(\{i\})\enspace . 
\end{align*}
This yields~\eqref{eq:obj_A}~and~\eqref{eq:obj_B}. The result follows.

\begin{proof}[Proof of Lemma~\ref{lem:recursion:CP}]
We only prove the result for a partition $\cB$ of $[r]$, the  arguments being the same for partitions $\cB$ of $[0;r]$.  Given  $B\subset [r]$, we define $W_B:= \mathbf{1}\{\cA_B\}\prod_{i\in B}\overline{P}_i$. First, for $|\cB|=1$, the result holds by Lemma~\ref{lem:recursion:covariance}.  Consider a partition $\cB$  of $[r]$  with more than one group. Note that, for $B\in \cB$, the 
$W_{B}$'s are independent under $\mathbb{P}_R$. This  entails that 
\begin{equation*}
\mathbb{E}_R\big[\prod_{B\in \cB} W_{B}\big] = \prod_{B\in \cB}\mathbb{E}_R[W_{B}]\enspace. 
\end{equation*}
As a consequence, we get 
\begin{align*}
 \mathbb{E}_R[\mathbf{1}\{\cE_\mathcal{B}\}\prod_{B\in \cB} W_{B}] = \prod_{B\in \cB}\mathbb{E}_R[W_{B}] - \mathbb{E}_R\left[1\{\mathcal{E}^c_{\cB}\}\prod_{B\in \cB} W_{B}\right] \enspace. 
\end{align*}
For two partitions $\cB$ and $\cB'$, we write $\cB\prec \cB'$ if $\cB'$ is (strictly) finer than $\cB$. 
Observe that 
\begin{equation*}
1\{\mathcal{E}^c_{\cB}\}\prod_{B\in \cB} \1\{\cA_{B}\} = \sum_{\cB' \succ \cB}1\{\mathcal{E}_{\cB'}\}\prod_{B'\in \cB'} \mathbf{1}\{\cA_{B'}\}. 
\end{equation*}
We therefore obtain 
\begin{equation*}
\left|\mathbb{E}_R\left[\mathbf{1}\{\cE_\mathcal{B}\}\prod_{B\in \cB} W_{B}\right]\right| \leq  \left|\prod_{B\in \cB}\mathbb{E}_R[W_{B}]\right| +   \sum_{\cB' \succ \cB}\left|\mathbb{E}_R\left[1\{\mathcal{E}_{\cB'}\}\prod_{B'\in \cB'} W_{B'}\right]\right|\enspace .   
\end{equation*}
By a straightforward induction, we get that  
\begin{equation*}
\left|\mathbb{E}_R\left[\mathbf{1}\{\cE_\mathcal{B}\}\prod_{B\in \cB} W_{B}\right]\right|\leq  \sum_{\cB' \succeq \cB }U_{\cB,\cB'}\left|\prod_{B'\in \cB'}\mathbb{E}_R[W_{B'}]\right| \enspace , 
\end{equation*}
where $U_{\cB,\cB'}$ is defined by recursion by $U_{\cB,\cB}=1$ and, for $\cB'\succ \cB$, $U_{\cB,\cB'}:= \sum_{ \cB'': \cB' \succeq  \cB'' \succ  \cB} U_{\cB'',\cB'}$. In fact, for $\cB'\succ \cB$, $U_{\cB,\cB'}$ corresponds the number of sequences of partitions of the form $(\cB^{(0)}, \ldots, \cB^{(l)})$ with $\cB^{(0)}= \cB'$, $\cB^{(l)}= \cB$, and $\cB^{(i-1)}\succ \cB^{(i)}$ for all $i=1,\ldots, l$. Since, going from $\cB'$ to $\cB$ in such a sequence amounts to sequentially merging elements of $\cB'$, one checks that $U_{\cB,\cB'}\leq |\cB'|^{2|\cB'|}2^{|\cB'|}$.  Also, by  Lemma~\ref{lem:recursion:covariance}, we have  $|\mathbb{E}[W_{B}]|\leq \varphi(B)$. Gathering everything, we conclude 
\begin{equation*}
\left|\mathbb{E}_R\left[\mathbf{1}\{\cE_\mathcal{B}\}\prod_{B\in \cB} W_{B}\right]\right|\leq  r^{3r} 2^{r} \prod_{i=1}^r \varphi(\{i\}) \enspace . 
\end{equation*}
\end{proof}

\begin{proof}[Proof of Lemma~\ref{lem:recursion:covariance}]
Without loss of generality, we only have to consider the cases where $B=[0;r]$ or $B=[r]$. We start by considering $B=[r]$ and we write $\cA$ for $\cA_B$.  First, we develop the product
\begin{equation*}
\mathbb{E}_R\left[\mathbf{1}\{\cA\}\prod_{a\in [r]}\overline{P}_a\right]= \sum_{T\subset [r]}\prod_{a\in [r]\setminus T}(-1)^{r-|T|}\mathbb{E}\left[P_{G^{(a)},\pi^{(a)}}\right] \mathbb{E}_R\left[\mathbf{1}\{\cA\}\prod_{a\in T} P_{G^{(a)},\pi^{(a)}} \right]\enspace. 
\end{equation*}
Given $z\in [n]^n$, we consider the restriction $z_{\mathrm{supp}}$ of $z$ to $\cup_{a=1}^r \pi^{(a)}(V^{(a)})$. 
Importantly, for any configuration $z_{\mathrm{supp}}$, we have  $\mathbb{E}[ P_{G^{(a)},\pi^{(a)}}|z_{\mathrm{supp}}]\geq 0$ since the probability of each edge in $Y^\star$ is at least $q$. 
Recall that $\widetilde{\mathbb{P}}$ refers to distribution where all the $z_i$s are sampled independently.
Any configuration $z'_{\mathrm{supp}}$ that satisfies $\cA$ and arises with positive probability under $\mathbb{P}_R$ satisfies

\begin{equation*}
\mathbb{P}_{R}[z_{\mathrm{supp}}=z'_{\mathrm{supp}}]\leq \widetilde{\mathbb{P}}[z_{\mathrm{supp}}=z'_{\mathrm{supp}}]\left(1- \frac{2D}{n}\right)^{-2D}\leq \widetilde{\mathbb{P}}[z_{\mathrm{supp}}=z'_{\mathrm{supp}}]\left(1+  \frac{8D^2}{n}\right)\leq 2\widetilde{\mathbb{P}}[z_{\mathrm{supp}}=z'_{\mathrm{supp}}]\enspace ,
\end{equation*}
where we used that at most $2D$ nodes are involved in  $\cup_{a=1}^r \pi^{(a)}(V^{(a)})$ and use that $D^2/n$ is small enough by Condition \condsignal. 
Using  that, for any configuration $z$, $\widetilde{\mathbb{E}}\left[\mathbf{1}\{\mathcal{A}\}\prod_{a\in T} P_{G^{(a)},\pi^{(a)}}|z \right]\geq 0$,   we obtain 
\begin{align}\label{eq:upper:bound:E_R}
\left|\mathbb{E}_R\left[\mathbf{1}\{\cA\}\prod_{a\in [r]}\overline{P}_a\right]\right|\leq 2 \sum_{T\subset [r]}\prod_{a\in [r]\setminus  T}\mathbb{E}\left[P_{G^{(a)},\pi^{(a)}}\right] \widetilde{\mathbb{E}}\left[\mathbf{1}\{\mathcal{A}\}\prod_{a\in T} P_{G^{(a)},\pi^{(a)}} \right]\enspace . 
\end{align}
To control this term, we sum over the partitions $\cT=(T_1,T_2,\ldots, T_{|\cT|})$ of $T$ and we use the event $\cA_{\cT}= \cap_{i=1}^{|\cT|} \cA_{T_i}$ where we recall that $\cA_{T_i}$ states that the polynomials indexed by $T_i$ are connected through their hidden labels. 
\begin{align*}
\widetilde{\mathbb{E}}\left[\mathbf{1}\{\mathcal{A}\}\prod_{a\in T} P_{G^{(a)},\pi^{(a)}}\right]& \leq \sum_{\mathcal{T}} \widetilde{\mathbb{E}}\left[\mathbf{1}\{\mathcal{A}\}\mathbf{1}\{\mathcal{A}_{\mathcal{T}}\}\prod_{a\in T} P_{G^{(a)},\pi^{(a)}}\right]\\
&\leq \sum_{\mathcal{T}}  \left(\frac{4D^2}{n}\right)^{|\cT|-1} \widetilde{\mathbb{E}}\left[\mathbf{1}\{\mathcal{A}_{\mathcal{T}}\}\prod_{a\in T} P_{G^{(a)},\pi^{(a)}}\right]\\
&\leq   \sum_{\mathcal{T}}  \left(\frac{4D^2}{n}\right)^{|\cT|-1} \prod_{i=1}^{|\cT|} \widetilde{\mathbb{E}}\left[\1\{\mathcal{A}_{T_i}\}\prod_{a\in T_i} P_{G^{(a)},\pi^{(a)}}\right]\enspace , 
\end{align*}
where we used the independence of the sampling design in the second and in the third line. Coming back to~\eqref{eq:upper:bound:E_R}, we arrive at 
\begin{align}\label{eq:upper_bound_E_R2}
\mathbb{E}_R\left[\mathbf{1}\{\cA\}\prod_{a\in [r]}\overline{P}_a\right]\leq 2^{r+1}r^{r} \max_{T\subset [r]}\max_{\cT:\text{partition of T}}\left(\frac{4D^2}{n}\right)^{|\cT|-1} \prod_{a\in [r]\setminus  T}\mathbb{E}\left[P_{G^{(a)},\pi^{(a)}}\right]\prod_{i=1}^{|\cT|} \widetilde{\mathbb{E}}\left[\1\{\mathcal{A}_{T_i}\}\prod_{a\in T_i} P_{G^{(a)},\pi^{(a)}}\right]\enspace . 
\end{align}
To conclude, we rely on Conditions \condmoment and \condmomentWR. This leads us to the desired bound.

Let us turn to the case where $B=[0;r]$. The only difference is that the polynomial $\overline{P}_0$ is now involved. 
\begin{equation*}
\overline{P}_0: =  \overline{P}_{G^{(0)},\pi^{(0)}}\overline{P}_{G^{'(0)},\pi^{'(0)}} -  \mathbb{E}\left[\overline{P}_{G^{(0)},\pi^{(0)}}\overline{P}_{G^{'(0)},\pi^{'(0)}}\right] \enspace . 
\end{equation*}
Denote $cc_0$ and $cc'_0$ the number of connected components of $G^{(0)}$ and $G'^{(0)}$ and write $(G^{(0;i)},\pi^{(0;i)})$ and  $(G^{'(0;i)},\pi^{'(0;i)})$ for the corresponding labelled connected components. Then, arguing as for~\eqref{eq:upper_bound_E_R2}, we get 
\begin{align} \label{eq:upper_bound_E_R3}
& \mathbb{E}_R\left[\mathbf{1}\{\cA\}\prod_{a\in [0;r]}\overline{P}_a\right]\leq 2^{r+2}(r+1)^{r+1}\left[ S_1 + S_2 \right]  \  ; \\  \nonumber
S_1&: =  \max_{T\subset [r]}\left|\mathbb{E}\left[\overline{P}_{G^{(0)},\pi^{(0)}}\overline{P}_{G^{'(0)},\pi^{'(0)}}\right]\right|\max_{\cT}\left(\frac{4D^2}{n}\right)^{|\cT|-1} \prod_{a\in [r]\setminus  T}\mathbb{E}\left[P_{G^{(a)},\pi^{(a)}}\right]\prod_{i=1}^{|\cT|} \widetilde{\mathbb{E}}\left[\1\{\mathcal{A}_{T_i}\}\prod_{a\in T_i} P_{G^{(a)},\pi^{(a)}}\right]  \  ;  \\ \nonumber
S_2&:=  2^{cc_0+cc'_0}\max_{L\subset [cc_0];L'\subset [cc'_0] } \quad \max_{0\in T\subset [0;r]}\max_{\cT}\left(\frac{4D^2}{n}\right)^{|\cT|-1}\prod_{a\in [cc_0]\setminus L} \mathbb{E}\left[P_{G^{(0;a)},\pi^{(0;a)}}\right] \prod_{a\in [cc_0]\setminus L'} \mathbb{E}\left[P_{G^{'(0;a)},\pi^{'(0;a)}}\right] \\  \nonumber &\quad \quad \quad \cdot \prod_{a\in [r]\setminus  T}\mathbb{E}\left[P_{G^{(a)},\pi^{(a)}}\right] \\  \nonumber
&\quad \quad \quad \cdot  \prod_{i: 0\notin T_i} \widetilde{\mathbb{E}}\left[\1\{\mathcal{A}_{T_i}\}\prod_{a\in T_i} P_{G^{(a)},\pi^{(a)}} \right] 
\prod_{i: 0\in T_i} \widetilde{\mathbb{E}}\left[\1\{\mathcal{A}_{T_i}\}\prod_{a\in T_i\setminus \{0\}} P_{G^{(a)},\pi^{(a)}}\prod_{a\in L} P_{G^{(0,a)},\pi^{(0;a)}} \prod_{a\in L'}P_{G^{'(0,a)},\pi^{'(0;a)}}   \right] \enspace . 
\end{align}
All the terms in $S_1$ and $S_2$ are straightforwardly controlled using Conditions \condmoment, \condvariance, \condmomentWR, except for the last expression
\begin{equation*}
\prod_{i: 0\in T_i} \widetilde{\mathbb{E}}\left[\1\{\mathcal{A}_{T_i}\}\prod_{a\in T_i\setminus\{0\}} P_{G^{(a)},\pi^{(a)}}\prod_{a\in L} P_{G^{(0,a)},\pi^{(0;a)}} \prod_{a\in L'}P_{G^{'(0,a)},\pi^{'(0;a)}}  \right]\enspace . 
\end{equation*}
Indeed, since we have left out $\prod_{a\in [cc_0]\setminus L}P_{G^{(0;a)},\pi^{(0;a)}} $ and $\prod_{a\in [cc_0]\setminus L'}P_{G^{(0;a)},\pi^{(0;a)}}$ in the above expression, the event $\cA_{T_i}$ is not of the right form to apply \condmomentWR. For this purpose, we need to form a new graph $\tilde{G}_{\Delta}$ associated to $\prod_{a\in T_i\setminus \{0\}} P_{G^{(a)},\pi^{(a)}}\prod_{a\in L} P_{G^{(0,a)},\pi^{(0;a)}} \prod_{a\in L'}P_{G^{'(0,a)},\pi^{'(0;a)}}$. In comparison to the original graph $G_{\Delta}$, the pure connected components associated to $P_{G^{(a)},\pi^{(a)}}$ with $a\notin T_i$ have been removed whereas some non-pure connected components of $G_{\Delta}$ have possibly been removed or broken into several connected components that are now possibly pure because  of the removal  of $\prod_{a\in [cc_0]\setminus L}P_{G^{(0;a)},\pi^{(0;a)}} $ and $\prod_{a\in [cc_0]\setminus L'}P_{G^{(0;a)},\pi^{(0;a)}}$. Denote $\tilde{r}$ the number of pure connected components associated to $\tilde{G}_{\Delta}$, and define the polynomials $\tilde{P}_1, \ldots,  \tilde{P}_{\tilde{r}}$ associated to the pure connected components and $\tilde{P}_0$ the polynomial associated to all the non-pure connected components in such a way that 
\begin{equation*}
\prod_{a\in T_i\setminus\{0\}} P_{G^{(a)},\pi^{(a)}}\prod_{a\in L} P_{G^{(0,a)},\pi^{(0;a)}} \prod_{a\in L'}P_{G^{'(0,a)},\pi^{'(0;a)}}= \tilde{P}_0 \tilde{P}_1\ldots \tilde{P}_{\tilde{r}}\enspace .  
\end{equation*}
Given a set $B\subset[0;\tilde{r}]$, define the event $\widetilde{\cA}_{B}$ such that the polynomials $\tilde{P}_i$ indexed by $B$ are connected through their  latent assignments. Given  a partition  $\tilde{\cB}$ of $[0;\tilde{r}]$, we write $u(\tilde{\cB})\geq 0$ for the number of groups of $\tilde{\cB}$ which are only made of pure connected components from the original graph $G_{\Delta}$.
Let $S\subset [n]$ be the subset of nodes involved in $\tilde{P}_0 \tilde{P}_1\ldots \tilde{P}_{\tilde{r}}$. Write $z_{S}$ for the restriction of the configuration $z$ to $S$.
Fix a specific configuration $z'_{S}\in [n]^{S}$.  Let $\tilde{\cB}_{z'_{S}}$ be the minimal  partition of $[0;\tilde{r}]$ such that  $\prod_{B\in \tilde{\cB}_{z'_S}}\1\{\tilde{\mathcal{A}}_{B}\} =1$. Then, one easily checks that 
$\tilde{\mathbb{P}}[\cA_{T_i}|z_{S}=z'_S]\leq (4D^2/n)^{u(\tilde{\cB}_{z'_S})}$. This leads us to
\begin{align*}
\tilde{\mathbb{E}}\left[\1\{\mathcal{A}_{T_i}\} \tilde{P}_0 \tilde{P}_1\ldots \tilde{P}_{\tilde{r}} \right] & \leq \sum_{\tilde{\cB} \text{ partition of }[0,\tilde{r}] }\left(\frac{4D^2}{n}\right)^{u(\tilde{\cB})}\widetilde{\mathbb{E}}\left[\left[\prod_{B\in \tilde{\cB}}\1\{\widetilde{\mathcal{A}}_{B}\} \right] \tilde{P}_0 \tilde{P}_1\ldots \tilde{P}_{\tilde{r}} \right]\\
&\leq \sum_{\tilde{\cB}\text{ partition of }[0,\tilde{r}] }\left(\frac{4D^2}{n}\right)^{u(\tilde{\cB})}\prod_{B\in \tilde{\cB}} \widetilde{\mathbb{E}}\left[\1\{\widetilde{\mathcal{A}}_{B}\}  \prod_{l\in B}\tilde{P}_{l} \right]\enspace ,
\end{align*}
where we used the independence of the sampling design in the second line. Finally, we can bound all the expressions in this last expression using  \condmoment, \condvariance, \condmomentWR. Putting everything together and coming back to~\eqref{eq:upper_bound_E_R3} concludes the proof.

%the event $\mathcal{A}_{T_i}$ does not imply that the event $\cA$ of \condmomentWR

%To handle this, we need to rearrange $\prod_{a\in L} P_{G^{(0,a)},\pi^{(0;a)}} \prod_{a\in L'}P_{G^{'(0,a)},\pi^{'(0;a)}}$ in a product $Q_{b_1} Q_{b_2}\ldots Q_{b_s}$ of polynomials with disjoint nodes. Then, as previously we bound $\mathbf{1}_{\mathcal{A}}$ by a sum of events corresponding to partitions of $(T_i\setminus\{0\}) \cup \{b_1,\ldots, b_s\}$  that lead to $\mathcal{A}$, then using again the independence of the sampling, we arrive of expectation of events of the form 
%\begin{equation*}
%\widetilde{\mathbb{E}}\left[\mathbf{1}_{\cA}Q \prod_{i=1,\ldots, t} P_i \right]\enspace , 
%\end{equation*}
%where $\cA$ corresponds to an event where $Q$ and the $P_i$ are connected by the  latent assignments. This quantity is  precisely bounded by Condition \condmomentWR. The result follows by putting everything together. 
\end{proof}

\subsection{Proof of technical lemmas}

\begin{proof}[Proof of lemma~\ref{lem:countgra}]
Each connected component of $G'_{\Delta}$ contains at least a matched node. This node cannot be perfectly matched, otherwise it does not arise in $G'_{\Delta}$. As a consequence, we have $|\mathbf{M}_{\mathrm{SM}}|\geq \#\mathrm{CC}'_{\Delta}$. Besides, $G'_\Delta$ satisfies $|E'_{\Delta}|\geq |V'_{\Delta}|-  \#\mathrm{CC}'_{\Delta}$. By the previous inequality this enforces, that $|E'_{\Delta}|\geq |V'_{\Delta}| - |\mathbf{M}_{\mathrm{SM}}| = U'$. 
\end{proof}

\begin{proof}[Proof of Lemma~\ref{lem:lower:bound:R}]
Each connected component of $G^{(1)}$  is matched at least to another connected component of $G^{(2)}$.
 By a simple induction argument, we are reduced to showing this specific bound for any $G^{(1)}$ and $G^{(2)}$,
\begin{equation}\label{eq:objective:lower:bound:R}
|V^{(1)}|-\#\mathrm{CC}_{G^{(1)}} + |V^{(2)}|-\#\mathrm{CC}_{G^{(2)}}\geq |V_{\Delta}|-\#\mathrm{CC}_{\Delta}\enspace . 
\end{equation}
First, we have  $|V^{(1)}|+|V^{(2)}|= |V_{\Delta}|+ |\mathbf{M}_{\mathrm{PM}}| + |\mathbf{M}|$ by construction. 
Second, each matching can at most connect $2$ connected components that were disconnected. Hence, we have  $\#\mathrm{CC}_{\Delta}\geq \#\mathrm{CC}_{G^{(1)}} + \#\mathrm{CC}_{G^{(2)}}- |\mathbf{M}|$. Gathering the two last bounds leads to~\eqref{eq:objective:lower:bound:R}.
\end{proof}

\begin{proof}[Proof of Lemma~\ref{lem:shadow}]
Fix $\overline{U}_1$, $\overline{U}_2$, and $\underline{\mathbf{M}}$. Then, we define $V^{(1)}_{\mathrm{PM}}$ (resp. $V^{(2)}_{\mathrm{PM}}$) as the set of perfectly matched nodes. By construction, $V^{(1)}_{\mathrm{PM}}$ is the subset of $V^{(1)}$ that are neither  in $\overline{U}_1$ nor in $\underline{\mathbf{M}}$. 
Fix a matching  $\mathbf{M}^{(0)}$ in $\mathcal{M}_{\text{shadow}}(\overline{U}_1, \overline{U}_2, \underline{\mathbf{M}})$ and consider the corresponding bijection $\phi^{(0)}: V^{(1)}_{\mathrm{PM}}\mapsto V^{(2)}_{\mathrm{PM}}$ defined by $\mathbf{M}^{(0)}_{\mathrm{PM}}=\{(v,\phi^{(0)}(v)): v\in V^{(1)}_{\mathrm{PM}}\}$. Now, consider any other matching $\mathbf{M}'$ in $\mathcal{M}_{\text{shadow}}(\overline{U}_1, \overline{U}_2, \underline{\mathbf{M}})$ and build similarly the bijection $\phi':V^{(1)}_{\mathrm{PM}}\mapsto V^{(2)}_{\mathrm{PM}}$ such that  $\mathbf{M}'_{\mathrm{PM}}=\{(v,\phi'(v)): v\in V^{(1)}_{\mathrm{PM}}\}$.  Then, we can define the bijection $\varphi': V^{(1)}\mapsto V^{(1)}$ such that $\varphi'(v)=v$ if $v\notin V^{(1)}_{\mathrm{PM}}$ and $\varphi'(v)= (\phi^{(0)})^{-1}(\phi'(v))$. 
 We claim that $\varphi$ is an automorphism of $G^{(1)}$. Let us conclude the proof before establishing the claim. 
Any two distinct $\mathbf{M}'$ and $\mathbf{M}''$  in $\mathcal{M}_{\text{shadow}}(\overline{U}_1, \overline{U}_2, \underline{\mathbf{M}})$ lead to distinct automorphisms $\varphi'$ and $\varphi''$. Thus, we get
\begin{equation*}
\left|\mathcal{M}_{\text{shadow}}(\overline{U}_1, \overline{U}_2, \underline{\mathbf{M}})\right|\leq |\mathrm{Aut}(G^{(1)})| \enspace . 
\end{equation*}
By symmetry, we also conclude that the cardinality is smaller than $|\mathrm{Aut}(G^{(2)})|$.

Let us prove the claim. Consider any edge $(v_1,v_2)$ in $G^{(1)}$. If neither $v_1$ nor $v_2$ belong to $V^{(1)}_{\mathrm{PM}}$, then $(\varphi(v_1),\varphi(v_2))= (v_1,v_1)$ are connected in $G^{(1)}$.  If $v_1$ belongs to  $V^{(1)}_{\mathrm{PM}}$ and $v_2$ does not belong to $V^{(1)}_{\mathrm{PM}}$, then it follows that $v_2$ is semi-matched, i.e.\ there exists $w\in V^{(2)}$ such that $(v_2,w)\in \mathbf{M}_{\mathrm{SM}}= \underline{\mathbf{M}}$. Since $v_1$ is perfectly matched, it follows that $(\phi(v_1),w)$ are connected in  $G^{(2)}$. By the same argument, we deduce that $((\phi^{(0)})^{-1}(\phi(v_1)),v_2)=(\varphi(v_1),\varphi(v_2))$ are connected in  $G^{(1)}$. Finally, we consider the case where both $v_1$ and $v_2$ belong to $V^{(1)}_{\mathrm{PM}}$. Since both are perfectly matched $(\phi(v_1),\phi(v_2))$ are connected in $G^{(2)}$ and they belong to $V^{(2)}_{\mathrm{PM}}$. Repeating again the argument, we conclude that $(\varphi(v_1),\varphi(v_2))=((\phi^{(0)})^{-1}(\phi(v_1)),(\phi^{(0)})^{-1}(\phi(v_2)))$ are connected in $G^{(1)}$.

\end{proof}

\section{Proof of Proposition~\ref{prp:model:conditions}}

\subsection{\texttt{Independent sampling}, Condition~\ref{as:perinv}}
We start with some notation and general computations. Consider any two templates $G^{(1)}=(V^{(1)},E^{(1)})$, $G^{(2)}=(V^{(2)},E^{(2)})$ in $\mathcal{G}_{\leq D}$ and any two labelings $\pi^{(1)}$ and $\pi^{(2)}$. Write $G_{\cup}=(V_{\cup},E_{\cup})$ for the merged labeled graph of $\pi^{(1)}(G^{(1)})$ and $\pi^{(2)}(G^{(2)})$ and $G_{\Delta}=(V_{\Delta},E_{\Delta})$ for the associated labeled symmetric difference graph ---\ see Section~\ref{sec:graph:definition} for definitions. We may decompose the product of polynomials
\begin{align} \nonumber
P_{G^{(1)}, \pi^{(1)}} P_{G^{(2)}, \pi^{(2)}}&= \prod_{(i,j)\in E^{(1)}} Y_{\pi^{(1)}(i)\pi^{(1)}(i)} \prod_{(i,j)\in E^{(2)}} Y_{\pi^{(2)}(i)\pi^{(2)}(i)}\\
& =  \prod_{(i,j)\in E_{\Delta}} Y_{ij} \prod_{(i,j)\in  E_{\cap} } Y_{ij}^2 \enspace , \label{eq:product}
\end{align}
Recall that, given $z$, the $(Y_{ij})_{1\leq i< j\leq n}$ are independent with $\mathbb{P}[Y_{ij}= (1-q)|z]= q+\Theta_{z_iz_j}$ and $\mathbb{P}[Y_{ij}= -q|z]= 1-q-\Theta_{z_iz_j}$ ---\ see Model~\ref{eq:definition:Y}. In particular, we have 
\begin{align}\label{eq:conditional:expectation}
\mathbb{E}\left[P_{G^{(1)}, \pi^{(1)}} P_{G^{(2)}, \pi^{(2)}}|z\right]= \prod_{(i,j)\in E_{\Delta}} \Theta_{z_iz_j}\prod_{(i,j)\in  E_{\cap}} \left[(1-q)^2(q+\Theta_{z_iz_j})+ q^2(1-q-\Theta_{z_iz_j})\right]\enspace .  
\end{align}
In all the problems  that we consider in this subsection, we have that $\Theta$ only takes two values: $0$ or $\lambda=p-q>0$. Recall the definition of $\overline{p} = p(1-q)^2 + (1-p)q^2$
and $\overline{q} = q(1 - q)$. Since $\overline{p}=\overline{q}+ (p-q)(1-2q)$ and since we assume that $q\leq 1/2$, it follows that $\overline{p}\geq \overline{q}$. In this specific case, the identity~\eqref{eq:conditional:expectation} simplifies to 
\begin{align}\label{eq:conditional:expectation_pq}
\mathbb{E}\left[P_{G^{(1)}, \pi^{(1)}} P_{G^{(2)}, \pi^{(2)}}|z\right]=  \lambda^{|E_{\Delta}|}\prod_{(i,j)\in E_{\Delta}} \mathbf{1}_{\Theta_{z_iz_j}\neq 0} \prod_{(i,j)\in  E_{\cap}}\left[\overline{q}\1_{\Theta_{z_iz_j}=0} + \overline{p}\1_{\Theta_{z_iz_j}\neq 0} \right] \enspace .  
\end{align}
We shall build upon this identity to establish the different conditions for each model.
Similarly, we have the following formula.
\begin{equation}\label{eq:conditional:expectation_pq:first_moment}
\mathbb{E}\left[P_{G^{(1)}, \pi^{(1)}}|z\right]= \lambda^{|E^{(1)}|}\prod_{(i,j)\in E^{(1)}} \mathbf{1}\{\Theta_{z_{\pi^{(1)}(i)}z_{\pi^{(1)}(j)}\neq 0}\}\enspace . 
\end{equation} 
Let us turn to checking our assumptions for the three models.

\paragraph{Hidden subclique model \HidSubI}

 Consider a template $G=(V,E)$ and a labeling $\pi$. In light of~\eqref{eq:conditional:expectation_pq:first_moment}, the conditional expectation of $P_{G,\pi}$ given $z$ is non-zero if and only if $z_{\pi(i)}\leq k$ for all $i\in k$. This leads us to
\begin{equation}\label{eq:first_moment_HS}
 \mathbb{E}\left[P_{G, \pi}\right] =  \lambda^{|E|} \mathbb{E}\left[\prod_{i\in V} \1\{z_{\pi(i)}\leq k\}\right]\enspace , 
\end{equation}
where $\mathbb{E}[\prod_{i\in V} \1\{z_{\pi(i)}\leq k\}]= \left(\frac{k}{n}\right)^{|V|}$. Hence, Condition~\condmoment holds  with $c_{\texttt{m}}=0$.

Consider any two $G^{(1)}$, $G^{(2)}$ and $\pi^{(1)}$, $\pi^{(2)}$. 
Recall that $G_{\cup}$ (resp. $G_{\cap}$, $G_{\Delta}$) stands for the merged (resp. intersection, resp. symmetric difference) graph of $\pi^{(1)}[G^{(1)}]$ and $\pi^{(2)}[G^{(2)}]$. We integrate~\eqref{eq:conditional:expectation_pq} over all  latent assignments $z$ that lead to a non-zero conditional expectation in~\eqref{eq:conditional:expectation_pq}. First, observe that this constrains to have $z_i\leq k$ for all $i$  in $V_{\Delta}$. The vertex set $V_{\cap}$ of the intersection graph $G_{\cap}$ is partitioned into $(V_{\mathrm{PM}}, V_{\mathrm{SM}})$ where we recall that $V_{\mathrm{PM}}$ corresponds to the perfectly matched nodes 
and  $V_{\mathrm{SM}}$ to the semi-matched nodes, that is nodes that are matched but not perfectly matched. By definition, $(V_{\mathrm{PM}},V_{\Delta})$ form a partition of $V_{\cup}$ and $V_{\mathrm{SM}}\subset V_{\Delta}$. As a consequence, to compute $\mathbb{E}\left[P_{G^{(1)}, \pi^{(1)}} P_{G^{(2)}, \pi^{(2)}}\right]$, we have to sum over all possible configurations for $(z_i)_{i\in V_{\mathrm{PM}}}$. For any subset $T\subseteq V_{\cap}$ of nodes,  we denote $E_\cap [T]$ for the edge set of the subgraph of $G_{\cap}$ induced by $T$. 
\begin{equation}\label{eq:second_moment_HS_R}
            \mathbb{E}\left[P_{G^{(1)}, \pi^{(1)}} P_{G^{(2)}, \pi^{(2)}}\right] =   \lambda^{|E_{\Delta}|}   \mathbb{E}[\prod_{i\in V_{\Delta}} \1\{z_i\leq k\}] \sum_{S \subseteq  V_{\mathrm{PM}}} \overline{q}^{| E_{\cap}| - |E_{\cap}[S \cup V_{\mathrm{SM}}]|} \overline{p}^{| E_{\cap}[ S\cup V_{\mathrm{SM}}]|} \mathbb{P}[\{i: z_i\leq k\} \cap V_{\cap} = S ]  \enspace ,  
\end{equation}
where, in the random size model \HidSubI, we have  $\mathbb{P}[\{i: z_i\leq k\} \cap V_{\cap} = S ]= \left(\frac{k}{n}\right)^{|S|}\left(1 - \frac{k}{n}\right)^{|V_{\mathrm{PM}}| - |S|}$.  Noting that $\overline{p} \geq \overline{q}$, we get the following bound
\begin{equation}\label{eq:second_moment_HS_R_bound}
\overline{q}^{|E_{\cap}|}  \lambda^{|E_{\Delta}|}  \mathbb{E}[\prod_{i\in V_{\Delta}} \1\{z_i\leq k\}] \leq \mathbb{E}\left[P_{G^{(1)}, \pi^{(1)}} P_{G^{(2)}, \pi^{(2)}}\right] \leq  \overline{p}^{|E_{\cap}|}  \lambda^{|E_{\Delta}|}  \mathbb{E}[\prod_{i\in V_{\Delta}} \1\{z_i\leq k\}]\enspace , 
\end{equation}
where $\mathbb{E}[\prod_{i\in V_{\Delta}} \1\{z_i=1\}]= \left(k/n\right)^{|V_{\Delta}| }$.
Since $|E_{\cap}|= (|E^{(1)}|+ |E^{(2)}|-|E_{\Delta}|)/2$, \HidSubI\ satisfies the first part of Condition~\condvariance with $c_{\texttt{v},1}=0$ and $c_{\texttt{v},2}=1$. Next, we consider the case where $(G^{(1)},\pi^{(1)})= (G^{(2)},\pi^{(2)})$ so that $E_{\Delta}=\emptyset$. By Condition~\condsignal , we have 
$\lambda\leq q D^{-8}$, so that $\overline{p}= \overline{q}+ \lambda(1-2q)\leq \overline{q}(1+D^{-8})$. It follows from~\eqref{eq:second_moment_HS_R_bound}  that 
\[
\left[\mathbb{E}\left[P^2_{\pi,G}\right]- \overline{q}^{|E|}\right]\leq \overline{q}^{|E|}\left[\left(1+ D^{-8} \right)^{D}-1\right]\leq  2 D^{-7}\overline{q}^{|E|}\enspace . 
\]
The second part of Condition~\condvariance is therefore satisfied with $c_{\texttt{v},3}=2$ and $c_{\texttt{v},4}=7$.

\paragraph{Stochastic Block Model \StoBloI}
As previously, we first work out the moment of polynomials. In order to have $\mathbb{E}\left[P_{G, \pi}|z\right]\neq 0$, it is necessary that $\Theta_{z_{\pi(i)}z_{\pi(j)}}$ is always non-zero for all edges $(i,j)$ of $G$. As a consequence, all nodes in a connected component of $\pi[G]$ should belong the same group of the SBM, so that 
\begin{equation}\label{eq:first_moment_SBM}
 \mathbb{E}\left[P_{G, \pi}\right] =  \lambda^{|E|}\mathbb{P}[\{z \text{ in the same block over each CC}\}]\enspace . 
\end{equation}
where, for \StoBloI, we have $\mathbb{P}[\{z \text{ in the same block over each CC}\}]= \left(\frac{k}{n}\right)^{|V|- \#\mathrm{CC}_{G}}$.  Hence, Condition~\condmoment holds with $c_{\texttt{m}}=0$.

Let us turn to the second moment. Coming back to~\eqref{eq:conditional:expectation_pq}, we see that the conditional expectation of $P_{G^{(1)}, \pi^{(1)}}P_{G^{(2)}, \pi^{(2)}}$ is non-zero only if, inside each connected component of $G_{\Delta}$, all the nodes belong to the same group of the SBM. Write $R(z)$ for the partition of $V_{\cup}$ associated to groups of the SBM and write $R_0$ for the finest partition of $V_{\cup}$ such that all connected components of $G_{\Delta}$ belong to the same group of $R_0$. For any two partitions $R_1$ and $R_2$, we write $R_1\preceq R_2$ if $R_2$ is finer or equal to $R_1$. Finally, we write that $i\stackrel{R}{\sim} j$ when $i$ and $j$ are in the same group according to the partition $R$. Then, we have 
\begin{equation}\label{eq:second_moment_SBM}
 \mathbb{E}\left[P_{G^{(1)}, \pi^{(1)}}P_{G^{(2)}, \pi^{(2)}}\right] =  \lambda^{|E_\Delta|}\sum_{R\preceq R_0}\mathbb{P}[R(z)= R]  \overline{q}^{|E_{\cap}|} \prod_{(i,j)\in E_{\cap}} \left(\frac{\overline{p}}{\overline{q}}\right)^{\1\{i\stackrel{R}{\sim} j\}}\enspace .
\end{equation}
Since $\overline{p}\geq \overline{q}$,  we conclude that 
\begin{equation}\label{eq:second_moment_SBM_bound}
 \overline{q}^{|E_{\cap}|}  \lambda^{|E_{\Delta}|} \mathbb{P}[R(z)\preceq R_0] \leq \mathbb{E}\left[P_{G^{(1)}, \pi^{(1)}}P_{G^{(2)}, \pi^{(2)}}\right] \leq  \overline{p}^{|E_{\cap}|}  \lambda^{|E_{\Delta}|} \mathbb{P}[R(z)\preceq R_0] \enspace  , 
\end{equation}
where, in \StoBloI, we have  $\mathbb{P}[R(z)\preceq R_0]= \left(k/n\right)^{|V_{\Delta}| - \#\mathrm{CC}_{\Delta}}$. Hence, \StoBloI\ satisfies the first part of Condition~\condvariance with $c_{\texttt{v},1}=0$ and $c_{\texttt{v},2}=1$.  Next, we consider the case where $(G^{(1)},\pi^{(1)})= (G^{(2)},\pi^{(2)})$ so that $E_{\Delta}=\emptyset$. By Condition~\condsignal , we have 
$\lambda\leq q D^{-8}$, it follows that $\overline{p}= \overline{q}+ \lambda(1-2q)\leq \overline{q}(1+D^{-8})$.
Thus, as for \HidSubI, we conclude that the second part of Condition~\condvariance is satisfied with $c_{\texttt{v},3}=2$ and $c_{\texttt{v},4}=7$.

\paragraph{ T\oe plitz Seriation \ToeSerI} 
By~\eqref{eq:conditional:expectation_pq:first_moment}, we derive that
\begin{align} \label{eq:first_moment_TS_first} 
 \mathbb{E}\left[P_{G, \pi}\right] & =  \lambda^{|E|}|\mathbb{E}\left[\prod_{(i,j)\in E} \1\{|z_i-z_j|\leq \frac{k}{2}\} \right] \\
 &\leq \lambda^{|E|}\left(\frac{k+1}{n}\right)^{|V|-|\mathrm{CC}_{G}|}\enspace . 
 \label{eq:first_moment_TS}
\end{align}
To establish this upper bound, we considered a subset of $E$ corresponding to a covering forest of $G$ and we used that the probability of $\1\{|z_i-z_j|\leq k/2\}$ is at most $(k+1)/n$. Hence, Condition~\condmoment holds with $c_{\texttt{m}}=1$ since $D\geq 2$.

Let us turn to the second moment. Since $\overline{p}\geq \overline{q}$, arguing similarly as for the SBM case, we get 
\begin{equation}\label{eq:second_moment_TS_bound1}
  \overline{q}^{|E_{\cap}|}  \lambda^{|E_{\Delta}|}  
 \leq   \frac{\mathbb{E}\left[P_{G^{(1)}, \pi^{(1)}}P_{G^{(2)}, \pi^{(2)}}\right]}{ \mathbb{E}\left[\prod_{(i,j)\in E_{\Delta}} \1\{|z_i-z_j|\leq k/2\} \right] } \leq  \overline{p}^{|E_{\cap}|}  \lambda^{|E_{\Delta}|}\enspace . 
\end{equation}
For the upper bound, we can again say that $\mathbb{E}[\prod_{(i,j)\in E_{\Delta}} \1\{|z_i-z_j|\leq k/2\}]\leq [(k+1)/n]^{|V_{\Delta}| - \#\mathrm{CC}_{\Delta}}$. For the lower bound, we use the following argument. Root arbitrarily each connected component of $G_{\Delta}$. If any node $j$ satisfies $|z_{j}-z_{i}|\leq k/4$ where $i$ is the corresponding root of its connected component, we have $\prod_{(i,j)\in E_{\Delta}} \1\{|z_i-z_j|\leq k/2\}=1$. This allows to get bound 
\begin{equation}\label{eq:upper_configuration}
\left(\frac{k}{4n}\right)^{|V_{\Delta}| - \#\mathrm{CC}_{\Delta}}\leq \mathbb{E}[\prod_{(i,j)\in E_{\Delta}} \1\{|z_i-z_j|\leq k/2\}]\leq \left(\frac{k+1}{n}\right)^{|V_{\Delta}| - \#\mathrm{CC}_{\Delta}}\enspace . 
\end{equation}
As a consequence of~\eqref{eq:second_moment_TS_bound1} and~\eqref{eq:upper_configuration}, we readily deduce that the first part of Condition~\condvariance holds with $c_{\texttt{v},1}=c_{\texttt{v},2}=1$. Now, we focus on the $\mathbb{E}[P^2_{G, \pi}]$. Since $E_{\Delta}=\emptyset$, we get 
$\overline{q}^{|E|}\leq \mathbb{E}[P^2_{G, \pi}]  \leq \overline{q}^{|E|} (\tfrac{\overline{p}}{\overline{q}})^{|E|}$. We argue as in the previous case that $(\tfrac{\overline{p}}{\overline{q}})^{|E|}\leq 1+D^{-8}$ to conclude that the second part of Condition~\condvariance holds with $c_{\texttt{v},3}= 2$ and $c_{\texttt{v},4}=7$.

\subsection{\texttt{Permutation sampling}, Condition~\ref{as:perinv:without:replacement}}

\paragraph{Hidden subclique model \HidSubP} Both bounds~\eqref{eq:first_moment_HS} and~\eqref{eq:second_moment_HS_R_bound} are still valid in this model. However, as the sample of the $z_i$'s is now without replacement, this changes the probabilities of the form $ \mathbb{E}[\prod_{i\in V} \1\{z_i\leq k \}]$. In particular, for any fixed $V$, we have 
\begin{equation}\label{eq:control:first:moment:HSF}
\mathbb{E}\left[\prod_{i\in V} \1\{z_i\leq k\}\right]\leq \left(\frac{k}{n-|V|}\right)^{|V|}\enspace .
\end{equation}
For any $V$ such that $|V|\leq 4D$, $(n/(n-|V|))^{|V|}\leq 1+ 32D^2/n$ provided that $32D^2\leq n$. In particular, all the upper bounds of moments for \HidSubI\ are still valid up to an additional multiplicative factor $2$. Then, Condition \condvariance is still valid but with constants $c_{\texttt{v},1}=0$, $c_{\texttt{v},2}=2$, $c_{\texttt{v},3}=2$, and $c_{\texttt{v},4}=7$. Besides,  Condition \condmoment holds with  $c_{\texttt{m}}=1$.

It remains to establish Condition \condmomentWR. For short, we write $cc= \#\mathrm{CC}_{\mathrm{pure}}$. We only consider the case where at least one connected component of $G_{\Delta}$ is not pure, the other case being handled similarly. Recall the graph $\mathcal{N}(z,G_{\cup})$ in the definition of \condmomentWR.
Let $\mathbf{\mathcal{T}}$ denote the collection of all trees overs the vertices $\{\omega_0,\ldots, \omega_{cc}\}$. 
As $\cA$ corresponds to the event where $\mathcal{N}(z,G_{\cup})$ is connected, we can upper bound $\mathbf{1}\{\mathcal{A}\}$ by the $\sum_{\mathcal{T}\in \mathbf{\mathcal{T}}} \mathbf{1}\{\mathcal{N}(z,G_{\cup}) \succeq  \mathcal{T} \}$, where $\mathcal{N}(z,G_{\cup}) \succeq  \mathcal{T}$ means $ \mathcal{T}$ is a subgraph of $\mathcal{N}(z,G_{\cup})$.

\begin{equation*}
            \widetilde{\mathbb{E}}\left[\mathbf{1}\{\cA\}P_{G^{(1)}, \pi^{(1)}} P_{G^{(2)}, \pi^{(2)}}\right]\leq \sum_{\mathcal{T}\in \mathbf{\mathcal{T}}} \widetilde{\mathbb{E}}
            \left[\mathbf{1}\{\mathcal{N}(z,G_{\cup}) \succeq  \mathcal{T} \}P_{G^{(1)}, \pi^{(1)}} P_{G^{(2)}, \pi^{(2)}}\right]\enspace ,  
\end{equation*}
In turn, the existence of a given edge in $\mathcal{N}(z,G_{\cup})$ between $\omega_i$ and $\omega_j$ corresponds to the equality of two  latent assignments in a node corresponding to $\omega_i$ and a node corresponding to $\omega_j$. Let $W$ be a set of couples of nodes in $[n]$. We introduce the event $\mathcal{C}_W$ such that all couples in $W$ share the same  latent assignment. Let $\mathbf{W}_{\mathcal{T}}$ be the collection of all sets $W$ of  $cc$ couples such that $\mathcal{C}_W\subset \{\mathcal{N}(z,G_{\cup}) \succeq  \mathcal{T} \}$.  Since $|\mathbf{\mathcal{T}}|= (cc+1)^{cc-1}$ and since $|\mathbf{W}_{\mathcal{T}}|\leq (4D^2)^{cc}$, we arrive at 
\begin{align}\nonumber
            \widetilde{\mathbb{E}}\left[\mathbf{1}\{\cA\}P_{G^{(1)}, \pi^{(1)}} P_{G^{(2)}, \pi^{(2)}}\right] &\leq \sum_{\mathcal{T}\in \mathbf{\mathcal{T}}}\sum_{W\in \mathbf{W}_{\mathcal{T}}} \widetilde{\mathbb{E}}
            \left[\mathbf{1}\{\mathcal{C}_W\}P_{G^{(1)}, \pi^{(1)}} P_{G^{(2)}, \pi^{(2)}}\right] \\
 &\leq (cc+1)^{cc-1}\left(\frac{4D^2}{n}\right)^{cc}  \max_{\mathcal{T}\in \mathbf{\mathcal{T}}, W\in \mathbf{W}_{\mathcal{T}}}\widetilde{\mathbb{E}}\left[P_{G^{(1)}, \pi^{(1)}} P_{G^{(2)}, \pi^{(2)}}\big|\mathbf{1}\{\mathcal{C}_W\} \right] \enspace . 
            \label{eq:second_moment_HS_F:2}
\end{align}
We point out that the upper bound~\eqref{eq:second_moment_HS_F:2} is also valid for \StoBloP\ and for \ToeSerP\ and we shall use it again. 
Hence, we only have to bound the last conditional expectation.  For \HidSubP, the event $\mathcal{C}_W$ implies that each of the nodes in the $cc$ couples share the same  latent assignment.  Then, arguing as for~\eqref{eq:second_moment_HS_R_bound}, we arrive at 
\begin{equation*}
\widetilde{\mathbb{E}}\left[P_{G^{(1)}, \pi^{(1)}} P_{G^{(2)}, \pi^{(2)}}\big|\mathbf{1}\{\mathcal{C}_W\} \right] \leq \overline{p}^{|E_{\cap}|} \lambda^{|E_{\Delta}|} \left(\frac{k}{n}\right)^{|V_{\Delta}|-\#\mathrm{CC}_{\Delta}} \enspace . 
\end{equation*}
We conclude that 
\begin{align*}
            \widetilde{\mathbb{E}}\left[\mathbf{1}\{\cA\}P_{G^{(1)}, \pi^{(1)}} P_{G^{(2)}, \pi^{(2)}}\right]&\leq (cc+1)^{cc-1}\left(\frac{4D^2}{n}\right)^{cc}  \overline{p}^{|E_{\cap}|} \lambda^{|E_{\Delta}|} \left(\frac{k}{n}\right)^{|V_{\Delta}|-\#\mathrm{CC}_{\Delta}} \nonumber \\ 
            &\leq \overline{p}^{|E_{\cap}|} \lambda^{|E_{\Delta}|} \left(\frac{k}{n}\right)^{|V_{\Delta}|-\#\mathrm{CC}_{\Delta}}\left(\frac{8D^2}{\sqrt{n}}\right)^{cc}\enspace . 
\end{align*}
Hence, \condmomentWR holds with $c_{\texttt{vd},1}=2$ and $c_{\texttt{vd},2}=8$.

\paragraph{Stochastic Block Model \StoBloP}
The first moment expression~\eqref{eq:first_moment_SBM} and the second moment bounds~\eqref{eq:second_moment_SBM_bound} still hold, the only difference being the controls of the probabilities $\mathbb{P}[\{z \text{ constant over each CC}\}]$ and $\mathbb{P}[R(z)\succeq R_0]$.
As for \HidSubP\ and in particular as in~\eqref{eq:control:first:moment:HSF}, we use simple bounds for Hypergeometric distributions to get that
\begin{align}\nonumber
 \mathbb{P}[R(z)\succeq R_0] \left(\frac{k}{n}\right)^{-|V_{\Delta}|+ \#\mathrm{CC}_{\Delta}} &\leq 1 + \frac{32D^2}{n}\leq 2\enspace ,\\  \label{eq:control:first:model:label:sbm}
\left|\mathbb{P}[\{z \text{ constant over each CC}\}]\left(\frac{k}{n}\right)^{-|V|+ \#\mathrm{CC}_{G}}\right|&\leq 2\enspace . 
\end{align} 
where we use that $32D^2\leq n$ by Condition \condsignal with $c_{\texttt{s}}=1$.  Then, all the upper bounds of moments for \StoBloI\ are still valid up to an additional multiplicative factor $(1+D^{-1} )$ and the lower bounds of moments  for \StoBloI are valid up to a multiplicative factor $(1-D^{-1})$. In particular, Condition \condvariance is still valid but with constants $c_{\texttt{v},1}=0$, $c_{\texttt{v},2}=2$, $c_{\texttt{v},3}=2$, and $c_{\texttt{v},4}=7$. Furthermore,  Condition \condmoment holds with $c_{\texttt{m}}=1$.

It remains to establish Condition~\condmomentWR. As for \HidSubP, we only consider the case where $G_{\Delta}$ contains at least one non-pure connected component.
We also start from~\eqref{eq:second_moment_HS_F:2}. Consider  $W\in \mathbf{W}_{\mathcal{T}}$. Under $\mathcal{C}_W$, conditionally to $z$, the expectation of  $P_{G^{(1)}, \pi^{(1)}}P_{G^{(2)}, \pi^{(2)}}$ is equal to zero only if  all the connected components that are connected by an edge in $\mathcal{T}$ belong to the same group of the SBM. Arguing as before, we arrive at
\begin{equation*}
\widetilde{\mathbb{E}}\left[P_{G^{(1)}, \pi^{(1)}}P_{G^{(2)}, \pi^{(2)}}\big|\mathcal{C}_W\right]\leq  \overline{p}^{|E_{\cap}|}  \lambda^{|E_{\Delta}|} \left(\frac{k}{n}\right)^{|V_{\Delta}|-\#\mathrm{CC}_{\Delta}}\enspace . 
\end{equation*}
We conclude that 
\begin{align*}
            \widetilde{\mathbb{E}}\left[\mathbf{1}\{\cA\}P_{G^{(1)}, \pi^{(1)}} P_{G^{(2)}, \pi^{(2)}}\right]&\leq (cc+1)^{cc-1}\left(\frac{4D^2}{n}\right)^{cc}  \overline{p}^{|E_{\cap}|} \lambda^{|E_{\Delta}|} \left(\frac{k}{n}\right)^{|V_{\Delta}|-\#\mathrm{CC}_{\Delta}} \nonumber \\ 
            &\leq \overline{p}^{|E_{\cap}|} \lambda^{|E_{\Delta}|} \left(\frac{k}{n}\right)^{|V_{\Delta}|-\#\mathrm{CC}_{\Delta}}\left(\frac{8D^2}{\sqrt{n}}\right)^{cc}\enspace . 
\end{align*}
Hence, \condmomentWR holds with $c_{\texttt{vd},1}=2$ and $c_{\texttt{vd},2}=8$.

\paragraph{ T\oe plitz Seriation \ToeSerP} Again, we mainly reduce ourselves to  \ToeSerI. Both~\eqref{eq:first_moment_TS_first} and~\eqref{eq:second_moment_TS_bound1} are still valid and we only have to bound quantities of the form  $\mathbb{E}\left[\prod_{(i,j)\in E} \1\{|z_i-z_j|\leq k/2\}\right]$ for some graph $G=(V,E)$ 
with $|V|\leq 4D$. Arguing as for \ToeSerI\ but using the sampling with replacement, we get
\begin{align*}
\mathbb{E}\left[\prod_{(i,j)\in E} \1\{|z_i-z_j|\leq k/2\}\right]\leq \left[\frac{k}{n-4D}\right]^{|V|-\#\mathrm{CC}_G}\leq 2 \left(\frac{k}{n}\right)^{|V|-\#\mathrm{CC}_G}\enspace , 
\end{align*}
since $32D^2\leq n$.   Then, all the upper bounds of moments for \ToeSerP\ are still valid up to an additional multiplicative factor $2$. In particular, Condition \condvariance is still valid but with constants 
$c_{\texttt{v},1}=1$, $c_{\texttt{v},2}=2$, $c_{\texttt{v},3}=2$, and $c_{\texttt{v},4}=7$. Finally,  \condmoment holds with $c_{\texttt{m}}=4$.

It remains to establish \condmomentWR. 
We again start from~\eqref{eq:second_moment_HS_F:2}. Consider  $W\in \mathbf{W}_{\mathcal{T}}$. Arguing as for~\eqref{eq:upper_configuration}, we arrive at 
\begin{equation*}
\widetilde{\mathbb{E}}\left[P_{G^{(1)}, \pi^{(1)}}P_{G^{(2)}, \pi^{(2)}}\big|\mathcal{C}_W\right]\leq  \overline{p}^{|E_{\cap}|}  \lambda^{|E_{\Delta}|} \left(\frac{k+1}{n}\right)^{|V_{\Delta}|-\#\mathrm{CC}_{\Delta}}\enspace . 
\end{equation*}
\begin{align*}
            \widetilde{\mathbb{E}}\left[\mathbf{1}\{\cA\}P_{G^{(1)}, \pi^{(1)}} P_{G^{(2)}, \pi^{(2)}}\right]&\leq (cc+1)^{cc-1}\left(\frac{4D^2}{n}\right)^{cc}  \overline{p}^{|E_{\cap}|} \lambda^{|E_{\Delta}|} \left(\frac{k+1}{n}\right)^{|V_{\Delta}|-\#\mathrm{CC}_{\Delta}} \nonumber \\ 
            &\leq \overline{p}^{|E_{\cap}|} \lambda^{|E_{\Delta}|} \left(\frac{k+1}{n}\right)^{|V_{\Delta}|-\#\mathrm{CC}_{\Delta}}\left(\frac{8D^2}{\sqrt{n}}\right)^{cc}\enspace . 
\end{align*}
Hence, \condmomentWR holds with $c_{\texttt{vd},1}=2$ and $c_{\texttt{vd},2}=8$.

\section{Proof of Theorem~\ref{thm:lowdeg} }

We start from Lemma~\ref{lem:reduction:degree} and then we use almost orthonormality of the basis ---\ see Theorem~\ref{thm:isorefo}. 
\begin{align}\nonumber
    \mathrm{Adv}^2_{\leq D} &= \sup_{\alpha_{\emptyset}, (\alpha_G)_{G\in \mathcal G_{\leq D}}} \frac{\mathbb E^2_{H_1}\left[ \alpha_{\emptyset} + \sum_{G \in \mathcal G_{\leq D}} \alpha_G \Psi_{G}\right]}{\mathbb E\left[\left[\alpha_{\emptyset} +\sum_{G\in \mathcal G_{\leq D}} \alpha_G \Psi_{G}\right]^2\right]} \leq \sup_{\alpha_{\emptyset}, (\alpha_G)_{G\in \mathcal G_{\leq D}}}  \frac{\left[\alpha_{\emptyset}+ \sum_{G\in \mathcal G_{\leq D}} \alpha_G \mathbb E_{H_1}[\Psi_{G}]\right]^2}{(1-D^{-2}) \|\alpha\|^2_2}\\
    & \leq  (1-cD^{-2})^{-1} \left[1 + \sum_{G\in \mathcal G_{\leq D}} \mathbb E^2_{H_1}[\Psi_{G}]\right] \enspace .\label{eq:upper:adv:lowdeg} 
\end{align}
As a consequence, we only have to bound the first moment of the polynomials $\Psi_G$ under the alternative for all our  six models. We simultaneously consider all six models.

\paragraph{Step 1: Moment of $\overline{P}_{G,\pi}$ for a template $G$.} 

Let us denote $r$ the number of connected components of $G$, we write $(G^{(1)},\pi^{(1)},\ldots, G^{(1)},\pi^{(r)})$ for the corresponding decomposition. 
For $i=1,\ldots, r$, define the event $\zeta_{i}$ where no node in $\pi^{(i)}(V^{(i)})$ is altered. Under this event, $P_{G^{(i)},\pi^{(i)}}$ follows the same distribution under $\mathbb{P}_{H_1}$ as that under $\mathbb{P}$. Besides, for any function $f(Y)$ that does not depend on $(Y_{rs})_{(r,s)\in \pi^{(i)}(E^{(i)})}$,  we have $\mathbb E_{H_1}[\mathbf 1\{\zeta_i^c\} P_{G^{(i)},\pi^{(i)}}f(Y)]=0$ as under $\zeta_i^c$, one of the edges involved in $P_{G^{(i)},\pi^{(i)}}$ has probability $q$. Then, developing the polynomial $\overline{P}_{G,\pi}$ and introducing the events $\zeta_i$, we obtain 
\begin{align*}
\mathbb{E}_{H_1}\left[\prod_{k=1}^r  \overline{P}_{G^{(k)},\pi^{(k)}}\right]& = \sum_{T\subset [r]} (-1)^{|T|} \mathbb{E}_{H_1}\left[\prod_{k\in [r]\setminus T} (\mathbf{1}\{\zeta_k^c\} + \mathbf{1}\{\zeta_k\})  P_{G^{(k)},\pi^{(k)}}  \right]\prod_{k\in T}\mathbb{E}[P_{G^{(k)},\pi^{(k)}} ]
\\
&= 
\sum_{T\subset [r]} (-1)^{|T|} \mathbb{E}_{H_1}\left[\prod_{k\in [r]\setminus T}  \mathbf{1}\{\zeta_k\}  P_{G^{(k)},\pi^{(k)}}  \right]\prod_{k\in T}\mathbb{E}[P_{G^{(k)},\pi^{(k)}} ]
\\ 
&= 
\sum_{T\subset [r]} (-1)^{|T|} \mathbb{E}\left[\prod_{k\in [r]\setminus T}  \mathbf{1}\{\zeta_k\}  P_{G^{(k)},\pi^{(k)}}  \right]\prod_{k\in T}\mathbb{E}[P_{G^{(k)},\pi^{(k)}} ]
\\
& = \mathbb{E}\left[\prod_{k\in [r]} \left(\overline{P}_{G^{(k)},\pi^{(k)}} -  \mathbf{1}\{\zeta_k^c\}  P_{G^{(k)},\pi^{(k)}}\right)  \right]\\
& = 
\sum_{T\subset [r]} \mathbb{E}\left[(-1)^{r- |T|}\prod_{k\in [r]\setminus T} \mathbf{1}\{\zeta_k^c\}  P_{G^{(k)},\pi^{(k)}} \prod_{k\in T} \overline{P}_{G^{(k)},\pi^{(k)}} \right]\enspace .
\end{align*}

For the models \HidSubI, \StoBloI, and \ToeSerI, the random variables $\overline{P}_{G^{(k)},\pi^{(k)}}$  are independent and centered. Hence, this simplifies as 
\begin{equation*}
\mathbb{E}_{H_1}\left[\prod_{k=1}^r  \overline{P}_{G^{(k)},\pi^{(k)}}\right]= (-1)^{r} \mathbb{E}\left[\prod_{k\in [r]} \mathbf{1}\{\zeta_k^c\}  P_{G^{(k)},\pi^{(k)}} \right]\enspace . 
\end{equation*}
Then,  one  bounds the latter term for all three models. We conclude that  
\begin{equation}\label{eq:upper:polynomial:independent}
\left|\mathbb{E}_{H_1}\left[\prod_{k=1}^r  \overline{P}_{G^{(k)},\pi^{(k)}}\right]\right|\leq \lambda^{|E|}\left(\frac{2k}{n}\right)^{|V|-\#\mathrm{CC}_G} \left(2D\epsilon\frac{k}{n}\right)^{\#\mathrm{CC}_G}\enspace . 
\end{equation}

For the models \HidSubP, \StoBloP, and \ToeSerP\ additional work is required to account for the dependencies. 
\begin{lemma}\label{lem:upper:polynomial:dependent}
For models \HidSubP, \StoBloP, and \ToeSerP. Under the assumptions of Theorem~\ref{thm:lowdeg}, we have
\begin{equation}\label{eq:upper:polynomial:dependent}
\left|\mathbb{E}_{H_1}\left[\prod_{k=1}^r  \overline{P}_{G^{(k)},\pi^{(k)}}\right]\right|\leq c_1(D^c\lambda)^{|E|} \left(D^c \frac{k}{n}\right)^{|V|-\#\mathrm{CC}_G}\left(c_1D^{c}\left(\frac{k\epsilon}{n}\vee \frac{1}{\sqrt{n}} \right) \right)^{\#\mathrm{CC}_G}\enspace ,
\end{equation}
where $c$ and $c_1$ are numerical constants.
\end{lemma}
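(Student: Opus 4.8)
The plan is to start from the identity obtained in Step~1 of the proof of Theorem~\ref{thm:lowdeg} which, after the cancellations produced by the centering, reads
\[
\mathbb{E}_{H_1}\Big[\prod_{k=1}^{r}\overline{P}_{G^{(k)},\pi^{(k)}}\Big]=\sum_{T\subseteq[r]}(-1)^{r-|T|}\,\mathbb{E}\Big[\prod_{k\in[r]\setminus T}\mathbf{1}\{\zeta_k^c\}\,P_{G^{(k)},\pi^{(k)}}\prod_{k\in T}\overline{P}_{G^{(k)},\pi^{(k)}}\Big],
\]
the outer expectation being taken under the null law of $Y$ together with the (unapplied) alteration randomness, namely the coins $\{c_i\}$ and, for (SBM) and (TS), the altered position $\hat l$. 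Fix $T$ and recall that $\zeta_k$ is the event that no vertex of $G^{(k)}$ is altered. Since the connected components $G^{(1)},\dots,G^{(r)}$ live on pairwise disjoint label sets, for fixed $(z,\hat l)$ the polynomials $P_{G^{(k)},\pi^{(k)}}$ are conditionally independent (and independent of the coins), while the coins attached to different components are independent. Integrating the coins out, and writing $E^{(k)}_z$ for the event that $\Theta_{z_{\pi^{(k)}(u)}z_{\pi^{(k)}(v)}}\neq 0$ for every edge $(u,v)$ of $G^{(k)}$, $p_k:=\mathbb{P}[E^{(k)}_z]$, $b_k(z,\hat l):=\mathbb{P}[\zeta_k^c\mid z,\hat l]$, each summand becomes
\[
\lambda^{|E|}\;\mathbb{E}_{z,\hat l}\Big[\prod_{k\in[r]\setminus T}b_k(z,\hat l)\,\mathbf{1}\{E^{(k)}_z\}\prod_{k\in T}\big(\mathbf{1}\{E^{(k)}_z\}-p_k\big)\Big].
\]
On $E^{(k)}_z$ one has $0\le b_k\le 1-(1-\epsilon)^{|V^{(k)}|}\le\epsilon|V^{(k)}|$, and for (SBM) and (TS) the product $b_k\mathbf{1}\{E^{(k)}_z\}$ vanishes unless the whole component lands in the altered window, an extra $(z,\hat l)$-constraint of probability $O(k/n)$.

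The heart of the argument is a weak-dependency bound for the displayed quantity, obtained by re-running the proof of Lemma~\ref{lem:gen-2}: decouple the permutation $z$ across the $r$ components through the independent-across-components measure $\mathbb{P}_R$, paying a factor $O(D^2/n)$ for every label coincidence one is forced to create, and replace the without-replacement conditional moments of the labels by their with-replacement analogues, which differ only by a constant factor under~\condsignal (as in~\eqref{eq:control:first:moment:HSF},~\eqref{eq:control:first:model:label:sbm} and~\eqref{eq:upper_configuration}). The accounting is that of Lemmas~\ref{lem:recursion:covariance} and~\ref{lem:recursion:CP}: a component $k\in T$ carries the \emph{centered} factor $\mathbf{1}\{E^{(k)}_z\}-p_k$, so its singleton contribution is zero, it must be coupled to another component, and this yields a gain $O(D^{c}/\sqrt n)$ over the base weight $(D^{c}\lambda)^{|E^{(k)}|}(D^{c}k/n)^{|V^{(k)}|-1}$; a component $k\in[r]\setminus T$ carries the decoration $b_k$ and, in addition, the last clique/group/band membership of its vertices, which together cost $O(D^{c}\,\epsilon k/n)$ over the same base weight (for (HS) the $k/n$ is the probability that the last vertex of $G^{(k)}$ lies in the clique; for (SBM) and (TS) it is the probability of the window hit, and if several components in $[r]\setminus T$ would have to hit the same window the constraint only makes the term smaller). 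Multiplying the per-component weights gives, for each $T$,
\[
\Big|\mathbb{E}\Big[\prod_{k\in[r]\setminus T}\mathbf{1}\{\zeta_k^c\}P_{G^{(k)},\pi^{(k)}}\prod_{k\in T}\overline{P}_{G^{(k)},\pi^{(k)}}\Big]\Big|\le c_1(D^{c}\lambda)^{|E|}\Big(D^{c}\tfrac{k}{n}\Big)^{|V|-\#\mathrm{CC}_G}\prod_{k\in[r]\setminus T}\!\Big(D^{c}\tfrac{\epsilon k}{n}\Big)\prod_{k\in T}\tfrac{D^{c}}{\sqrt n}.
\]

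To conclude, bound each of the two factors $\tfrac{\epsilon k}{n}$ and $\tfrac{1}{\sqrt n}$ by $\tfrac{\epsilon k}{n}\vee\tfrac{1}{\sqrt n}$, so that every one of the $2^{r}\le 2^{\#\mathrm{CC}_G}$ summands is at most $c_1(D^{c}\lambda)^{|E|}(D^{c}k/n)^{|V|-\#\mathrm{CC}_G}\big(c_1D^{c}(\tfrac{\epsilon k}{n}\vee\tfrac1{\sqrt n})\big)^{\#\mathrm{CC}_G}$, and absorb the $2^{r}$ into the base of the last factor (enlarging $c_1$), which is exactly~\eqref{eq:upper:polynomial:dependent}. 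I expect the genuinely delicate step to be the second one: for (SBM) and (TS) the decorations $b_k$ are correlated across all components through the single position $\hat l$, while the permutation $z$ correlates them as well, so one must carry these indicator decorations through the recursive covariance expansion of Lemmas~\ref{lem:recursion:covariance}--\ref{lem:recursion:CP} and verify that a component in $T$ really pays a $1/\sqrt n$ rather than merely a bounded factor; this is where essentially all of the work sits.
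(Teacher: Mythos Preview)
Your approach is essentially the paper's, and your per-$T$ bound is correct. The only organizational difference is that the paper, rather than bounding each $T$-summand directly, first splits it as
\[
A_{1;T}:=\mathbb{E}\Big[\textstyle\prod_{k\notin T}\mathbf{1}\{\zeta_k^c\}P_{G^{(k)},\pi^{(k)}}\Big]\cdot\Big|\mathbb{E}\big[\textstyle\prod_{k\in T}\overline{P}_{G^{(k)},\pi^{(k)}}\big]\Big|
\quad\text{and}\quad
A_{2;T}:=\Big|\mathbb{E}\Big[\big(\textstyle\prod_{k\notin T}\mathbf{1}\{\zeta_k^c\}P_{G^{(k)},\pi^{(k)}}-\mathbb{E}[\cdots]\big)\textstyle\prod_{k\in T}\overline{P}_{G^{(k)},\pi^{(k)}}\Big]\Big|.
\]
The term $A_{1;T}$ is handled by the elementary bound~\eqref{eq:upper_bound:expectation:perturbation} for the decorated factor together with Lemma~\ref{lem:gen-2} for the pure-centered product; this gives exactly your $(\epsilon k/n)^{r-|T|}(1/\sqrt n)^{|T|}$ splitting. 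The genuinely delicate part you flag is isolated as $A_{2;T}$ and treated by a dedicated Lemma~\ref{lem:controle:A2T}: one packages the whole decorated product $\prod_{k\notin T}\mathbf{1}\{\zeta_k^c\}P_{G^{(k)},\pi^{(k)}}$ as a single centered block $\overline{P}_0$, then reruns the $\mathbb{P}_R$-decoupling recursion of Lemmas~\ref{lem:recursion:covariance}--\ref{lem:recursion:CP}, with a variant Lemma~\ref{lem:recursion:covariance:2} governing any block $B\ni 0$. The bookkeeping there confirms precisely what you anticipate: the block containing $0$ contributes $(D^c k\epsilon/n)^{|[r]\setminus T|}$ together with $(D^c/\sqrt n)^{|B|-1}$, and every other block $B$ contributes $(D^c/\sqrt n)^{|B|}$, so the total $1/\sqrt n$ exponent is $|T|$ and your refined per-$T$ bound indeed holds (the paper only records the coarser $\vee$-form). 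Your worry that the shared $\hat l$ could spoil the per-component accounting is handled automatically because the entire $[r]\setminus T$ product sits inside the single block $\overline P_0$; one never has to decouple $\hat l$ across decorated components.
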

        \paragraph{Step 2: Bounding $|\mathbb E_{H_1}[\Psi_{G}]|$.}
     By Definition~\eqref{eq:definition:P*G} of $\Psi_G$, we derive from~\eqref{eq:upper:polynomial:independent}~and~\eqref{eq:upper:polynomial:dependent} that, in all six models, we have 
        \begin{equation*}
        |\mathbb E_{H_1}[\Psi_{G}]|  \leq c_1
        \left(D^{c}\frac{\lambda}{\sqrt{\overline{q}}}\right)^{|E|}\left(D^{c}\frac{k}{\sqrt{n}}\right)^{|V|-\#\mathrm{CC}_G} \left[c_1 D^{c} \left(1 + \epsilon\frac{k}{\sqrt{n}}\right)\right]^{\#\mathrm{CC}_G}\enspace . 
        \end{equation*}
        Reorganizing the products, we get 
        \begin{equation*}
        |\mathbb E_{H_1}[\Psi_{G}]|  \leq     c_1\left[c_1 D^{3c} \left(\frac{\lambda k}{\sqrt{\overline{q} n}} + \frac{\lambda k^2\epsilon}{n\sqrt{\overline{q}}}\right)\right]^{\#\mathrm{CC}_G}
        \left(D^{c}\frac{\lambda}{\sqrt{\overline{q}}}\right)^{|E|+\#\mathrm{CC}_G- |V|}\left(D^{2c}\frac{k\lambda}{\sqrt{n\overline{q}}}\right)^{|V|-2\#\mathrm{CC}_G}\enspace . 
        \end{equation*}
        As $G$ does not contain any isolated node, each connected component contains at least two nodes and therefore $\#\mathrm{CC}_G\leq |V|/2$. Besides, any graph satisfies $|E|\geq |V|-\#\mathrm{CC}_G$. Since we assume that $q\leq 1/2$, $\overline{q}$ is larger than $q/2$. Then, relying on the conditions~\eqref{eq:condition:LD:test} of the theorem, and assuming the constant $c_0$ in the latter theorem is large enough, we obtain 
        \begin{equation*}
        |\mathbb E_{H_1}[\Psi_{G}]|  \leq   D^{-c_0|E|/2}\enspace . 
        \end{equation*}

\paragraph{Step 3: Bounding the advantage.} Coming back to~\eqref{eq:upper:adv:lowdeg},  we simply need to enumerate all $G\in \mathcal{G}_{\leq D}$. For this purpose, we use the crude bound that there are no more $v^{2e}$ templates with $v$ nodes and $e$ edges. This allows us to conclude that  
\begin{align*}
    \mathrm{Adv}^2_{\leq D} 
    & \leq  (1-cD^{-2})^{-1} \left[1 + \sum_{G\in \mathcal G_{\leq D}} D^{-c_0|E|}\right] \\
    & \leq  (1-cD^{-2})^{-1} \left[1 + \sum_{v=1}^{2D} \sum_{e=1}^{D} v^{2e} D^{-c_0e}  \right]\\
    &\leq 1 + \frac{c}{D}\enspace , 
\end{align*}
provided that $c_0$ is large enough and where the numerical constant $c$ changed from line to line.  This concludes the proof.

\begin{proof}[Proof of Lemma~\ref{lem:upper:polynomial:dependent}]
In this proof, the positive numerical constants $c$ and $c_1$ may change from line to line. 
\begin{align*}
\left|\mathbb{E}_{H_1}\left[\prod_{k=1}^r  \overline{P}_{G^{(k)},\pi^{(k)}}\right]\right|&\leq  \sum_{T\subset [r]} \left|\mathbb{E}\left[\prod_{k\in [r]\setminus T} \mathbf{1}\{\zeta_k^c\} P_{G^{(k)},\pi^{(k)}} \prod_{k\in T} \overline{P}_{G^{(k)},\pi^{(k)}} \right]\right|\\
&\leq  2^{r}\max_{T\subset [r]}\mathbb{E}\left[\prod_{k\in [r]\setminus T} \mathbf{1}\{\zeta_k^c\}  P_{G^{(k)},\pi^{(k)}}\right]\left|\mathbb{E}\left[\prod_{k\in T} \overline{P}_{G^{(k)},\pi^{(k)}}  \right] \right| \\
& \quad \quad +  2^{r} \max_{T\subset [r]} \left|\mathbb{E}\left[\left(\prod_{k\in [r]\setminus T} \mathbf{1}\{\zeta_k^c\}  P_{G^{(k)},\pi^{(k)}}-\mathbb{E}\left[\prod_{k\in [r]\setminus T} \mathbf{1}\{\zeta_k^c\}  P_{G^{(k)},\pi^{(k)}}\right]\right) \prod_{k\in T} \overline{P}_{G^{(k)},\pi^{(k)}}  \right] \right| \\
& \quad \quad := 2^{r}\left(\max_{T\subset [r]} A_{1;T}+ \max_{T\subset [r]}A_{2;T}\right)\enspace .
\end{align*}
It is easy to control $A_{1;T}$ from our previous computations. Indeed, for \HidSubP, \StoBloP, and \ToeSerP, one easily derives that 
\begin{align}\label{eq:upper_bound:expectation:perturbation}
\mathbb{E}\left[\prod_{k\in B} \mathbf{1}\{\zeta_k^c\}  P_{G^{(k)},\pi^{(k)}}\right] \leq  \lambda^{\sum_{k\in B}|E^{(k)}|}   \left(2\frac{k}{n}\right)^{\sum_{k\in B}|V^{(k)}|}  (2D\epsilon   )^{|B|} \enspace , 
\end{align}
whereas the term $\left|\mathbb{E}\left[\prod_{k\in T} \overline{P}_{G^{(k)},\pi^{(k)}}  \right] \right|$  is either $0$ when $T$ is a singleton or is controlled by Lemma~\ref{lem:gen-2} for more general $T$ since all three models satisfy \condmoment, \condvariance, and \condmomentWR. 
This allows us to derive that 
\begin{equation*}
\max_{T\subset [r]}A_{1,T} \leq  c_1(D^c\lambda)^{|E|} \left(D^c \frac{k}{n}\right)^{|V|-\#\mathrm{CC}_G}\left(c_1D^{c}\left(\frac{k\epsilon}{n}\vee \frac{1}{\sqrt{n}} \right) \right)^{\#\mathrm{CC}_G}\enspace .
\end{equation*}
The control of $A_{2;T}$ requires additional work. 

\begin{lemma}\label{lem:controle:A2T}
All three models \HidSubP, \StoBloP, and \ToeSerP\ satisfy
\begin{equation*}
\max_{T\subset [r]}A_{2,T} \leq  c_1(D^c\lambda)^{|E|} \left(D^c \frac{k}{n}\right)^{|V|-\#\mathrm{CC}_G}\left(c_1D^{c}\left(\frac{k\epsilon}{n}\vee \frac{1}{\sqrt{n}} \right) \right)^{\#\mathrm{CC}_G}\enspace . 
\end{equation*}
\end{lemma}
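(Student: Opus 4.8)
The plan is to reduce the bound on $A_{2;T}$ to the machinery already developed in Section~\ref{sec:main_orthonormalite}, in particular Lemmas~\ref{lem:gen-1} and~\ref{lem:gen-2} together with the moment/variance Conditions \condmoment, \condvariance, \condmomentWR that all three permutation models satisfy by Proposition~\ref{prp:model:conditions}. The quantity $A_{2;T}$ is the expectation of a product of a \emph{centered} block $\prod_{k\notin T}\1\{\zeta_k^c\}P_{G^{(k)},\pi^{(k)}} - \mathbb E[\cdots]$ against the centered polynomials $\prod_{k\in T}\overline P_{G^{(k)},\pi^{(k)}}$. First I would expand $\prod_{k\in T}\overline P_{G^{(k)},\pi^{(k)}}=\prod_{k\in T}(P_{G^{(k)},\pi^{(k)}}-\mathbb E[P_{G^{(k)},\pi^{(k)}}])$ via the inclusion--exclusion identity $\prod_\ell(a_\ell-b_\ell)=\sum_{S}(-1)^{|S|}\prod_{\ell\notin S}a_\ell\prod_{\ell\in S}b_\ell$, so that $A_{2;T}$ becomes a sum over $S\subseteq T$ of terms of the shape
\[
\Big|\mathbb E\Big[\Big(\prod_{k\notin T}\1\{\zeta_k^c\}P_{G^{(k)},\pi^{(k)}}-\mathbb E[\cdots]\Big)\prod_{k\in T\setminus S}P_{G^{(k)},\pi^{(k)}}\Big]\Big|\cdot\prod_{k\in S}\big|\mathbb E[P_{G^{(k)},\pi^{(k)}}]\big|,
\]
with the singleton factors controlled by \condmoment. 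Since $\prod_{k\in S}|\mathbb E[P_{G^{(k)},\pi^{(k)}}]|\le (D^{c_{\texttt{m}}}\lambda)^{\sum_{k\in S}|E^{(k)}|}(D^{c_{\texttt{m}}}k/n)^{\sum_{k\in S}(|V^{(k)}|-1)}$ and each removed component costs at least a factor $k/n$ (two nodes, one connected component), these $S\neq\emptyset$ contributions are dominated (up to $D^{c}$) by the $S=\emptyset$ term, exactly as in the proof of Lemma~\ref{lem:unioncomp}; so it suffices to treat $S=\emptyset$.

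For the $S=\emptyset$ term, the key point is that $\prod_{k\notin T}\1\{\zeta_k^c\}P_{G^{(k)},\pi^{(k)}}$, once centered, behaves like a ``$\overline P$''-type object: its first moment is already subtracted, so when multiplied against $\prod_{k\in T}P_{G^{(k)},\pi^{(k)}}$ the expectation vanishes unless there is a dependency path linking the centered block to the $T$-block. Under \condinvarianceWR this dependency is exactly the weak ``latent-assignment collision'' dependency quantified in \condmomentWR: I would mimic the decomposition of Lemma~\ref{lem:gen-2}, splitting $z$ according to which components collide and inserting the connectivity event $\cA$ on the relevant merged graph. Concretely, conditioning on $z$ makes everything independent across components; the event $\zeta_k^c$ (some node of $G^{(k)}$ is altered) has probability $O(D\epsilon\cdot|V^{(k)}|)$ when $k$ is in the altered region, contributing a factor $\epsilon k/n$ or $\epsilon$ per component; collisions between disjoint components contribute $O(D^2/\sqrt n)$ each, via the same Hypergeometric/permutation estimates used in the proof of Proposition~\ref{prp:model:conditions} (the bound $\mathbb P[\text{two given nodes collide}]\le 4D^2/n$). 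Summing over the polynomially-many partitions of $[r]$ (as in Lemma~\ref{lem:recursion:CP}), each grouping either pays an $\epsilon k/n$ (for a genuinely altered component) or a $1/\sqrt n$ (for a component attached only through a collision), so each of the $\#\mathrm{CC}_G$ connected components contributes at most $c_1D^c(\tfrac{k\epsilon}{n}\vee\tfrac1{\sqrt n})$, while the ``signal/sparsity'' factors $(D^c\lambda)^{|E|}(D^ck/n)^{|V|-\#\mathrm{CC}_G}$ come out of the $\mathbb E[P_{G^{(1)},\pi^{(1)}}P_{G^{(2)},\pi^{(2)}}\mid z]$ formula~\eqref{eq:conditional:expectation_pq} exactly as in Lemma~\ref{lem:gen-1}. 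For (HS-I), (SBM-I), (TS-I) the argument is strictly easier: components are genuinely independent, so only altered components matter and one recovers the sharper $A_{2;T}$ bound directly from~\eqref{eq:upper_bound:expectation:perturbation} and independence, with no collision terms — but the stated (weaker) bound with $\tfrac{k\epsilon}{n}\vee\tfrac1{\sqrt n}$ holds a fortiori.

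The main obstacle I anticipate is the bookkeeping in the permutation case: one must carefully track that after removing the ``$\zeta_k$'' indicators and centering the block, the merged graph and its pure/non-pure connected components still satisfy the structural inequalities ($|E_\Delta|\ge$ number of relevant components, etc.) that make $\psi[G_\Delta]$ in~\eqref{eq:definition:psi:Gdelta:WR} small — i.e., that the alteration does not create spurious low-cost configurations. This is handled by the same graph-counting lemmas (Lemma~\ref{lem:countgra}, Lemma~\ref{lem:lower:bound:R}) already invoked, but applied to the graph obtained after the alteration-induced removals; the only genuinely new ingredient is pairing the factor $D^c(\tfrac{k\epsilon}{n}\vee\tfrac1{\sqrt n})$ with each connected component, and checking that the number of partitions and subsets enumerated is only $\exp(O(D\log D))$, hence absorbed by a single extra power of $D$ in the exponent $c$. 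Once this is in place, combining the $A_{1;T}$ bound (already established) with Lemma~\ref{lem:controle:A2T} and the prefactor $2^r\le 2^D$ yields the claimed bound on $\big|\mathbb E_{H_1}[\prod_k\overline P_{G^{(k)},\pi^{(k)}}]\big|$, completing the proof of Lemma~\ref{lem:upper:polynomial:dependent}.
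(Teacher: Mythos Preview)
Your plan contains a genuine gap in the first reduction step. You propose to expand $\prod_{k\in T}\overline P_{G^{(k)},\pi^{(k)}}$ via inclusion--exclusion and then argue that the $S\neq\emptyset$ terms are dominated by the $S=\emptyset$ term ``exactly as in the proof of Lemma~\ref{lem:unioncomp}.'' This analogy fails: in Lemma~\ref{lem:unioncomp} the two copies of $G$ share all their nodes through the perfect matching, so removing a component loses overlap and costs an extra $(k/n)$; here the components $G^{(k)}$ are pairwise node-disjoint, so pulling a factor $P_{G^{(k)}}$ out as $\mathbb E[P_{G^{(k)}}]$ does not create any additional smallness. More seriously, once you uncenter the $P_k$'s for $k\in T$, each term $\big|\mathbb E[\overline Q_0\prod_{k\in T\setminus S}P_k]\big|$ carries only a \emph{single} smallness factor (from the centering of $Q_0$) rather than the $|T|$ factors you need. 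A back-of-the-envelope computation for (HS-P) with, say, $|T|=r-1$ gives $|\text{term}_S|\asymp \epsilon\,\lambda^{|E|}(k/n)^{|V|}\cdot O(D/k)$, which exceeds the target $(D^ck/n)^{|V|-r}(k\epsilon/n\vee n^{-1/2})^r$ whenever $k\gg n^{(r-1)/(2(r-2))}$ and $r\ge 4$. The signed sum $\sum_S(-1)^{|S|}\text{term}_S$ is small only because of cancellations that your term-by-term bound discards.

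The paper's route avoids this by \emph{not} expanding: it treats $\overline P_0:=\prod_{k\notin T}\mathbf 1\{\zeta_k^c\}P_{G^{(k)},\pi^{(k)}}-\mathbb E[\cdots]$ as one additional centered block and applies the recursion of Lemma~\ref{lem:recursion:CP} directly to $\mathbb E\big[\prod_{i=0}^{|T|}\overline P_i\big]$, first passing to $\mathbb E_R$. Blocks $B\not\ni 0$ are handled by the existing Lemma~\ref{lem:recursion:covariance}; the genuinely new input is Lemma~\ref{lem:recursion:covariance:2}, which bounds $\big|\mathbb E_R[\mathbf 1\{\cA_B\}\prod_{i\in B}\overline P_i]\big|$ for $0\in B$ and produces the mixed factor $(c_0D^{c_1}/\sqrt n)^{|B|-1}(c_0D^{c_1}k\epsilon/n)^{s}$, with $s$ the number of altered components. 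Your second paragraph (``each grouping either pays an $\epsilon k/n$ or a $1/\sqrt n$'') is in fact describing exactly this mechanism, but it only works if you keep all $|T|+1$ blocks centered so that the trivial-partition contribution in the recursion vanishes at every step. So the fix is simple: drop the inclusion--exclusion expansion entirely, apply the recursion to the fully centered product, and prove the analogue of Lemma~\ref{lem:recursion:covariance} for the composite altered block---which is the paper's Lemma~\ref{lem:recursion:covariance:2}.
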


Gathering these two bounds, we conclude that 
\begin{equation*}
\left|\mathbb{E}_{H_1}\left[\prod_{k=1}^r  \overline{P}_{G^{(k)},\pi^{(k)}}\right]\right|\leq c_1(D^c\lambda)^{|E|} \left(D^c \frac{k}{n}\right)^{|V|-\#\mathrm{CC}_G}\left(c_1D^{c}\left(\frac{k\epsilon}{n}\vee \frac{1}{\sqrt{n}} \right) \right)^{\#\mathrm{CC}_G}\enspace . 
\end{equation*}

\end{proof}

\begin{proof}[Proof of Lemma~\ref{lem:controle:A2T}]
We use a similar approach to the proof of Lemma~\ref{lem:gen-2}. Let us slightly change the notation in order to be able to be able to adapt the arguments. Let us assume that we are given  $(G^{'(1)},\pi^{'(1)})$, \ldots, $(G^{'(s)},\pi^{'(s)})$ and $(G^{(1)},\pi^{(1)})$, \ldots, $(G^{(r)},\pi^{(r)})$ whose nodes are all distinct. For $i=1,\ldots, s$, we define  the event $\xi'_i$ for the polynomial  $(G^{'(i)},\pi^{'(i)})$. Then, we write
\begin{equation*}
P_0 := \prod_{i=1}^s \mathbf{1}\{\xi^{'c}_i\} \overline{P}_{G^{'(i)},\pi^{'(i)}} \enspace , 
\end{equation*}
Also,  $\overline{P}_0 := P_0 - \mathbb{E}[P_0]$. Furthermore, for $i=1,\ldots, r$,  we write $P_i:= P_{G^{(i)},\pi^{(i)}}$ and 
$\overline{P}_i:= \overline{P}_{G^{(i)},\pi^{(i)}}$. Define $e:= \sum_{i=1}^r |E^{(i)}|$ and $e':= \sum_{i=1}^s |E^{'(i)}|$, $v:= \sum_{i=1}^r |V^{(i)}|$, and $v':= \sum_{i=1}^s |V^{'(i)}|$.
To establish the lemma, we need to show that 
\begin{equation*}
\left|\mathbb{E}_{H_1}\left[\prod_{i=0}^{r} \overline{P}_i\right]\right|\leq c_1(D^c\lambda)^{e+e'} \left(D^c \frac{k}{n}\right)^{v+v'-r-s}\left(c_1D^{c}\left(\frac{k\epsilon}{n}\vee \frac{1}{\sqrt{n}} \right) \right)^{r+ s}\enspace . 
\end{equation*}
Arguing as in the proof of Lemma~\ref{lem:gen-2}, we have 
\begin{equation*}
\left|\mathbb{E}_{H_1}\left[\prod_{i=0}^{r} \overline{P}_i\right]\right|\leq \left|\mathbb{E}_R\left[\prod_{i=0}^{r} \overline{P}_i\right]\right|\enspace ,
\end{equation*}
where the expectation $\mathbb{E}_{R}(.)$ is with respect to the distribution where the  latent assignments $z_i$ for each  $\overline{P}_a$'s are sampled without replacement but are independent between different $\overline{P}_a$. Arguing exactly as in the proof of Lemma~\ref{lem:recursion:CP}, we observe 
\begin{equation*}
\left|\mathbb{E}_R\left[\prod_{i=0}^{r} \overline{P}_i\right]\right|\leq (r+1)^{3(r+1)}2^{r+1}\max_{\mathcal{B}: \text{partition of [0,r+1]}}\prod_{B\in \mathcal{B} }\left|\mathbb{E}_R\left[\1\{\cA_{B}\}\prod_{i\in B}\overline{P}_i\right]\right|\enspace , 
\end{equation*}
where $\mathcal{A}_B$ is defined as in the proof of Lemma~\ref{lem:gen-2}. If $0\notin B$, we can simply rely on Lemma~\ref{lem:recursion:covariance} which states that 
\begin{equation*}
\left|\mathbb{E}_R\left[\1\{\cA_{B}\}\prod_{i\in B}\overline{P}_i\right]\right|\leq c_1D^{c}\left(D^{c}\lambda\right)^{\sum_{a\in B} |E^{(a)}|}\left(D^{c}\frac{k}{n}\right)^{\sum_{a\in B} (|V^{(a)}|-1)}\left[c\frac{c_1D^{c}}{\sqrt{n}}\right]^{|B|} \enspace . 
\end{equation*}
When $0\in B$, we adapt the proof of Lemma~\ref{lem:recursion:covariance} to establish the following bound 
\begin{lemma}\label{lem:recursion:covariance:2}
    For any subset $B\subset[0;r]$ such that $0\in B$, we have 
    \begin{equation*}
\left|\mathbb{E}_R\left[\1\{\cA_{B}\}\prod_{i\in B}\overline{P}_i\right]\right|\leq   c_1D^{c}\left(D^{c}\lambda\right)^{a}\left(D^{c}\frac{k}{n}\right)^{b}\left[c\frac{c_1D^{c}}{\sqrt{n}}\right]^{|B|-1}\left(c_0 \frac{D^{c_1}k\epsilon}{n} \right) ^{s } \enspace , 
    \end{equation*}
    where  $a=\sum_{i\in B\setminus \{0\}}|E^{(i)}|+\sum_{i=1}^s|E^{'(i)}|$ and $b= \sum_{i\in B\setminus \{0\}}(|V^{(i)}|-1)+(\sum_{i=1}^s|V^{'(i)}|-1)$. 
\end{lemma}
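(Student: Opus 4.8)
The plan is to adapt the proof of Lemma~\ref{lem:recursion:covariance} in its $0\in B$ regime, treating the $s$ labelled connected components $(G^{'(i)},\pi^{'(i)})$, $i=1,\dots,s$, that enter $P_0$ exactly as that proof treats the connected components of $G^{(0)}$ and $G^{'(0)}$; the only genuinely new object is the collection of alteration indicators $\mathbf 1\{\xi^{'c}_i\}$. Without loss of generality take $B=[0,r]$. The first step is the reduction already used in Lemmas~\ref{lem:gen-2} and~\ref{lem:recursion:covariance}: at most $2D$ latent coordinates are attached to the blocks of $B$, so by \condsignal (which makes $D^2/n$ small) passing from the without-replacement law on those coordinates to the fully independent law $\widetilde{\mathbb P}$ costs at most a factor $2$; it therefore suffices to bound the analogue of $\mathbb E_R[\mathbf 1\{\cA_B\}\,\overline P_0\prod_{i=1}^r\overline P_i]$ with all latent coordinates sampled independently, the alterations being retained inside $P_0$.

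The key new step eliminates the random polynomials sitting inside $\overline P_0$. Conditionally on the latent assignments and on the alterations the edges are independent, and since each $G^{'(i)}$ is connected with at least one edge, any altered node of $\pi^{'(i)}(V^{'(i)})$ carries an incident edge of conditional mean $0$; hence $\mathbf 1\{\xi^{'c}_i\}\,\widetilde{\mathbb E}[P_{G^{'(i)},\pi^{'(i)}}\mid z,\mathrm{alt}]=0$ identically. Because the $G^{'(i)}$'s are vertex-disjoint from each other and from the $G^{(i)}$'s, and since $\mathbf 1\{\cA_B\}$ and $\prod_i\mathbf 1\{\xi^{'c}_i\}$ are $(z,\mathrm{alt})$-measurable, expanding $\prod_{i=1}^s\overline P_{G^{'(i)},\pi^{'(i)}}$ over subsets $J\subseteq[s]$ shows that every term keeping a factor $P_{G^{'(i)},\pi^{'(i)}}$ with $i\in J\neq\emptyset$ has vanishing conditional expectation, hence contributes $0$ — and likewise for the centering $\mathbb E[P_0]$. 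Consequently, inside the expectation one may replace $\overline P_0$ by $\big(\prod_{i=1}^s\mathbb E[P_{G^{'(i)},\pi^{'(i)}}]\big)\big(\mathbf 1\{\cap_{i=1}^s\xi^{'c}_i\}-\mathbb E[\mathbf 1\{\cap_{i=1}^s\xi^{'c}_i\}]\big)$. By \condmoment, $\prod_{i=1}^s|\mathbb E[P_{G^{'(i)},\pi^{'(i)}}]|\le (D^{c_{\texttt{m}}}\lambda)^{\sum_i|E^{'(i)}|}(D^{c_{\texttt{m}}}k/n)^{\sum_i|V^{'(i)}|-s}$, which produces most of the $(D^c\lambda)^a(D^ck/n)^b$ factor; and a union bound over the $\le D$ nodes of each $G^{'(i)}$ — each lying in the planted structure with probability $\le 2k/n$ and then altered with an extra independent factor $\epsilon$, uniformly across the six alteration rules — gives $\widetilde{\mathbb P}[\cap_{i=1}^s\xi^{'c}_i]\le (2Dk\epsilon/n)^s$, which is the $(c_0D^{c_1}k\epsilon/n)^s$ factor. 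The remaining $(k/n)^{s-1}$ (to turn $\sum_i|V^{'(i)}|-s$ into $\sum_i|V^{'(i)}|-1$) and one $D^c/\sqrt n$ per block will be recovered from the connectivity event $\cA_B$ in the final step.

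After this reduction $\overline P_0$ has become a bounded, essentially centered, $(z,\mathrm{alt})$-measurable scalar times a constant, so the remaining quantity has exactly the form handled in the $0\in B$ part of the proof of Lemma~\ref{lem:recursion:covariance}: one enumerates the partitions of the surviving indices (the blocks of $B$ retained by $\prod_i\overline P_i$, together with the super-block $0$) into groups that are ``connected through their latent assignments'', paying a factor $4D^2/n$ for each merge beyond the first, then applies \condmoment to the dropped components, the second part of \condvariance (through Lemma~\ref{lem:gen-1}) to the singleton/variance groups, and \condmomentWR to the multi-component groups. As there, \condmomentWR cannot be invoked verbatim on the group containing $0$ because components have been dropped; one forms the reduced symmetric-difference graph $\widetilde G_\Delta$, re-splits it into pure and non-pure connected components (each newly created pure component costing another $4D^2/n$) and only then applies \condmomentWR. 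Collecting the exponents — each group contributing the $(k/n)^{|V_\Delta|-\#\mathrm{CC}_\Delta}$ and the $D^c/\sqrt n$-per-pure-component of \condmomentWR, which supplies the leftover $(k/n)^{s-1}$ and the $(cc_1D^c/\sqrt n)^{|B|-1}$ — reproduces the stated bound, all combinatorial prefactors ($r^{O(r)}$, $2^{O(r)}$, $(4D^2)^{O(r)}$) being absorbed into $c_1$ and $D^c$ exactly as in Lemma~\ref{lem:recursion:covariance}.

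I expect the main obstacle to be this last bookkeeping step: checking that, once the $G^{'(i)}$'s have been folded into a single super-block and some components have been dropped, the exponents of $\lambda$, of $k/n$ and of $1/\sqrt n$ recombine exactly into $(D^c\lambda)^a(D^ck/n)^b(cc_1D^c/\sqrt n)^{|B|-1}(c_0D^{c_1}k\epsilon/n)^s$. This relies on the same elementary graph inequalities as Lemma~\ref{lem:gen-2} ($|E|\ge|V|-\#\mathrm{CC}$, every connected component has at least two nodes, every matched node surviving in $G_\Delta$ is semi-matched), but these must now be propagated through the extra layer created by the indicators and by the bundling of the $s$ components inside $\overline P_0$.
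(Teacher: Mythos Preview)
Your ``key new step'' contains a genuine error. You claim that under $\widetilde{\mathbb P}$, conditionally on $z$ and the alteration variables, an altered node carries an incident edge of conditional mean $0$, so that $\mathbf 1\{\xi^{'c}_i\}\,\widetilde{\mathbb E}[P_{G^{'(i)},\pi^{'(i)}}\mid z,\mathrm{alt}]=0$. This would be correct under $\mathbb P_{H_1}$, but the expectation in Lemma~\ref{lem:recursion:covariance:2} is taken under (a variant of) $\mathbb P$, the \emph{un}-altered null distribution. The reduction in Step~1 of the proof of Theorem~\ref{thm:lowdeg} precisely replaces $\mathbb E_{H_1}$ by $\mathbb E$, and under $\mathbb P$ the edge law depends only on $z$, not on the alteration indicators: $\widetilde{\mathbb E}[Y_{ij}\mid z,\mathrm{alt}]=\Theta_{z_iz_j}$ regardless of whether node $i$ or $j$ is ``altered''. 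That this conditional expectation is not zero is witnessed explicitly by~\eqref{eq:upper_bound:expectation:perturbation}, which gives a strictly positive bound for $\mathbb E[\prod_k \mathbf 1\{\zeta_k^c\}P_{G^{(k)},\pi^{(k)}}]$. Consequently your replacement of $\overline P_0$ by a constant times a centered indicator is invalid, and the rest of your argument, which rests on this simplification, does not go through.

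The paper's proof keeps the random polynomials $P_{G^{'(i)},\pi^{'(i)}}$ inside $P_0$ throughout. It runs the same expansion as in Lemma~\ref{lem:recursion:covariance} to arrive at~\eqref{eq:upper_bound_E_R:bis}, then bounds each ingredient: $\mathbb E[P_0]$ via~\eqref{eq:upper_bound:expectation:perturbation}, $\mathbb E[P_i]$ via \condmoment, and $\widetilde{\mathbb E}[\mathbf 1\{\mathcal A_{T_i}\}\prod_{a\in T_i}P_a]$ via \condmomentWR when $0\notin T_i$. For the remaining group $T'$ with $0\in T'$, the bound is obtained by a direct computation in each of the three permutation models: for the product $\prod_i\mathbf 1\{\xi^{'c}_i\}P_{G^{'(i)},\pi^{'(i)}}\prod_{a\in T'\setminus\{0\}}P_a$ to have nonzero conditional mean one needs all relevant nodes to lie in the planted structure (yielding the $(D^ck/n)^b$ factor), the connectivity event $\mathcal A_{T'}$ costs $(D^c/\sqrt n)^{|T'|-1}$, and on top of this each $\xi^{'c}_i$ forces at least one node of $G^{'(i)}$ into the alterable region with a successful $\epsilon$-coin, contributing the extra $(D^ck\epsilon/n)^s$. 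There is no cancellation step; the $\epsilon$-factor comes purely from the probability of the alteration event, not from any vanishing of edge expectations.
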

We conclude by gathering all the corresponding bounds. 
\end{proof}

\begin{proof}[Proof of Lemma~\ref{lem:recursion:covariance:2}]
Without loss of generality we assume that $B=[0;r]$. The approach closely follows that of the proof of Lemma~\ref{lem:recursion:covariance}. In particular, we introduce the expectation $\widetilde{\mathbb{E}}$ with respect to the distribution where we sample  latent assignments $z_i$s with replacement. 
\begin{align}\label{eq:upper_bound_E_R:bis}
\mathbb{E}_R\left[\mathbf{1}\{\cA_{[0;r]}\}\prod_{i=0}^r \overline{P}_a\right]\leq 2^{r+1}(r+1)^{r+1} \max_{T\subset [0;r]}\max_{\cT:\text{partition of T}}\left(\frac{4D^2}{n}\right)^{|\cT|-1} \prod_{a\in [r]\setminus  T}\mathbb{E}\left[P_a\right]\prod_{T'\in \cT} \widetilde{\mathbb{E}}\left[\1\{\mathcal{A}_{T'}\}\prod_{a\in T'} P_a \right]\enspace . 
\end{align}
The term $\mathbb{E}[P_0]$ is controlled in~\eqref{eq:upper_bound:expectation:perturbation}. For $i=1,\ldots, r$, the quantities $\mathbb{E}[P_i]$ are controlled by Condition \condmoment which is fullfilled for all three models. If $0\notin T'$,  the term $\widetilde{\mathbb{E}}\left[\1\{\mathcal{A}_{T'}\}\prod_{a\in T'} P_a \right]$ is also controlled by \condmomentWR. Hence, we only have to control the expression $\widetilde{\mathbb{E}}\left[\1\{\mathcal{A}_{T'}\}\prod_{a\in T'} P_a \right]$ with $0\in T'$. For all three models \HidSubP, \StoBloP, and \ToeSerP, we finally need to control this expectation. We claim that, for all these models, we have 
\begin{equation}\label{eq:upper:E:R:2}
\widetilde{\mathbb{E}}\left[\1\{\mathcal{A}_{T'}\}\prod_{a\in T'} P_a \right]\leq  c_0(D^{c_1}\lambda)^{a } \left(D^{c_1} \frac{k}{n}\right)^{b}\left(c_0 \frac{D^{c_1}}{\sqrt{n}} \right) ^{|T'|-1 }\left(c_0 \frac{D^{c_1}k\epsilon}{n} \right) ^{s }\enspace , 
\end{equation} 
where $a=\sum_{i\in T'\setminus \{0\}}|E^{(i)}|+\sum_{i=1}^s|E^{'(i)}|$ and $b= \sum_{i\in T'\setminus \{0\}}(|V^{(i)}|-1)+(\sum_{i=1}^s|V^{'(i)}|-  1)$. We only prove this claim for \StoBloP, the arguments being quite similar for the other models. Each of the nodes in a connected component must belong to the same group of the SBM; this occurs with probability $(k/n)^{b}$. We also have the additional restriction that that connected components indexed by $T'$ are connected through their hidden labels, which occurs with probability $(D^2/n)^{|T'|}\leq (D^2/\sqrt{n})^{|T'|-1}$. Besides, each of the $s$ connected components belong to an altered group of the SBM, which occurs with probability $k\epsilon /n$. The bound~\eqref{eq:upper:E:R:2} follows.  Gathering all these bounds in~\eqref{eq:upper_bound_E_R:bis} leads to the desired result. 
\end{proof}

\section{Proofs for LD estimation problems}

\subsection{Proof of Theorem~\ref{thm:iso-estimation}}

This proof closely follows that of Theorem~\ref{thm:iso} and we only emphasize the few differences. In particular, we define the Gram matrix $\Gamma$ of size $|\mathcal{G}_{\leq D}^{(1,2)}|+1$ associated to the basis $(1, (\Psi^{(1,2)}_G)_{G\in \mathcal{G}_{\leq D}^{(1,2)}})$ by 
 $\Gamma_{G^{(1)}, G^{(2)}} = \mathbb E[\Psi^{(1,2)}_{G^{(1)}}\Psi^{(1,2)}_{G^{(2)}}]$ for any $(G^{(1)}, G^{(2)}) \in \mathcal G^{(1,2)}_{\leq D}$, $\Gamma_{1,1}=1$, and $\Gamma_{1,G}=\mathbb{E}[\Psi^{(1,2)}_{G}]=0$ for $G\in \mathcal G^{(1,2)}_{\leq D}$. First, we bound the individual terms of $\Gamma$ by stating a counterpart of Proposition~\ref{prop:scal}.

 For this purpose, we need to define a variant of $d(G^{(1)},G^{(2)})$. Let 
 \begin{equation}\label{eq:definition:edit:distance:estimation}
d^{(1,2)}(G^{(1)}, G^{(2)}) := \min_{\mathbf M \in \mathcal M^{(1,2)}} |E_{\Delta}| \enspace .
\end{equation}
Note that $d^{(1,2)}(G^{(1)}, G^{(2)})=0$ if and only if $G^{(1)}$ and $G^{(2)}$ are equivalent.

 \begin{proposition}\label{prop:scal2}
  Fix $D\geq 2$.  Under \condinvariance, we assume that Conditions~\condvariance,~\condmoment and~\condsignal are fulfilled and that the constant  $c_{\texttt{s}}>4$ is large compared to the other other ones. Under \condinvarianceWR, we assume that Conditions~\condvariance,~\condmoment,~\condmomentWR, and ~\condsignal are fulfilled and that the constant  $c_{\texttt{s}}>4$ is large compared to the other other ones.  There exists two positive constants $c$ and $c'$  depending on those arising in Conditions~\condvariance, ~\condmoment, and possibly \condmomentWR such that the following holds for any templates $G^{(1)}, G^{(2)}\in \mathcal G_{\leq D}$.
\begin{itemize}
    \item[1] if $G^{(1)} \neq G^{(2)}$:
    \begin{equation}\left|\mathbb E[\Psi^{(1,2)}_{G^{(1)}}\Psi^{(1,2)}_{G^{(2)}}]  \right|  \leq c D^{-c_{\texttt{s}} d^{(1,2)}(G^{(1)},G^{(2)})}\enspace ,\end{equation}
    \item[2] and if $G^{(1)} = G^{(2)}$:
    \begin{equation} \left|\mathbb E[(\Psi^{(1,2)}_{G^{(1)}})^2] -1 \right|  \leq c'D^{-c_{\texttt{s}}}\enspace .\end{equation}
\end{itemize}
 \end{proposition}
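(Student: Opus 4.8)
The proof of Proposition~\ref{prop:scal2} follows the exact same blueprint as that of Proposition~\ref{prop:scal}, with the two distinguished nodes $v_1,v_2$ playing the role of pinned labels throughout. The plan is to first establish the analogues of Lemmas~\ref{lem:gen-1} and~\ref{lem:gen-2} for the polynomials $\overline{P}^{(1,2)}_{G,\pi}$, where now $\pi$ ranges only over $\Pi_V^{(1,2)}$, i.e. injections sending $v_1\mapsto 1$, $v_2\mapsto 2$. The key observation is that Conditions~\condmoment, \condvariance, and \condmomentWR are statements about $\mathbb{E}[P_{G,\pi}]$, $\mathbb{E}[P_{G^{(1)},\pi^{(1)}}P_{G^{(2)},\pi^{(2)}}]$ and $\widetilde{\mathbb{E}}[\1\{\mathcal{A}\}P_{G^{(1)},\pi^{(1)}}P_{G^{(2)},\pi^{(2)}}]$ for \emph{arbitrary} labelings $\pi$, and hence apply verbatim when $\pi$ is constrained to fix $v_1,v_2$. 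The centering in $\overline{P}^{(1,2)}_{G,\pi}$ is the same product-over-connected-components centering, so the vanishing statement "if $\mathbf{M}\notin\mathcal{M}^\star$ then $\mathbb{E}[\overline{P}^{(1,2)}_{G^{(1)},\pi^{(1)}}\overline{P}^{(1,2)}_{G^{(2)},\pi^{(2)}}]=0$" (under \condinvariance) and its \condinvarianceWR counterpart go through unchanged, since the relevant independence/weak-dependence arguments only use that a whole connected component has no matched node. Thus $|\mathbb{E}[\overline{P}^{(1,2)}_{G^{(1)},\pi^{(1)}}\overline{P}^{(1,2)}_{G^{(2)},\pi^{(2)}}]|\leq\psi[G_\Delta]$ with exactly the same $\psi$ from~\eqref{eq:definition:psi:Gdelta}, \eqref{eq:definition:psi:Gdelta:WR}.

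The second step is the combinatorial summation. Here the only genuine change is bookkeeping: a matching $\mathbf{M}$ between $G^{(1)}$ and $G^{(2)}$ compatible with $\pi^{(1)},\pi^{(2)}\in\Pi^{(1,2)}$ must match $v_1^{(1)}$ with $v_1^{(2)}$ and $v_2^{(1)}$ with $v_2^{(2)}$ (since both are forced to labels $1,2$), so effectively we work with templates in $\mathcal{G}^{(1,2)}_{\leq D}$, automorphisms become $\mathrm{Aut}^{(1,2)}(G)$, and the counting factor $\frac{n!}{(n-|V|)!}$ becomes $\frac{(n-2)!}{(n-|V|)!}$, matching the variance proxy $\mathbb{V}^{(1,2)}(G)$ of~\eqref{eqn:variance of graph:estimation}. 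The quantities $|\Pi^{(1,2)}(\mathbf{M})| = \frac{(n-2)!}{(n-(|V^{(1)}|+|V^{(2)}|-|\mathbf{M}|))!}$ and the bounds $\frac{(n-|V^{(1)}|)!}{(n-(|V^{(1)}|+|V^{(2)}|-|\mathbf{M}|))!}\leq n^{|V^{(2)}|-|\mathbf{M}|}$ carry over (the shift by $2$ only helps). The shadow lemma (Lemma~\ref{lem:shadow}) needs its obvious $(1,2)$-version: $|\mathcal{M}^{(1,2)}_{\mathrm{shadow}}(U_1,U_2,\underline{\mathbf{M}})|\leq\min(|\mathrm{Aut}^{(1,2)}(G^{(1)})|,|\mathrm{Aut}^{(1,2)}(G^{(2)})|)$, whose proof is identical — the automorphism constructed there automatically fixes $v_1,v_2$ since those are unmatched-or-perfectly-matched-to-themselves under the pinning. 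Running Steps~1--4 of the proof of Proposition~\ref{prop:scal} then yields, for $G^{(1)}\neq G^{(2)}$ in $\mathcal{G}^{(1,2)}_{\leq D}$, $|\mathbb{E}[\Psi^{(1,2)}_{G^{(1)}}\Psi^{(1,2)}_{G^{(2)}}]|\leq cD^{-c_{\texttt{s}}(d^{(1,2)}(G^{(1)},G^{(2)})\vee 1)}$, using that $G^{(1)}\neq G^{(2)}$ in $\mathcal{G}^{(1,2)}_{\leq D}$ is equivalent to non-equivalence, equivalently $d^{(1,2)}(G^{(1)},G^{(2)})\geq 1$. For $G^{(1)}=G^{(2)}$, the diagonal estimate $|\mathbb{E}[(\Psi^{(1,2)}_{G^{(1)}})^2]-1|\leq c'D^{-c_{\texttt{s}}}$ comes from the $\mathcal{M}^{(1,2)}_{\mathrm{PM}}$-term giving $\mathbb{E}[\overline{P}^{(1,2),2}_{G,\pi}]/\bar{q}^{|E|} = 1+O(D^{-c_{\texttt{v},4}}\vee D^4 k/n)$ via the second part of Lemma~\ref{lem:gen-1} (which only involves a single $\pi$ and applies directly), plus the off-perfect-matching remainder bounded as above.

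Finally, to pass from the entry-wise bound to $\|\Gamma-\mathrm{Id}\|_{op}$, one repeats the argument following Proposition~\ref{prop:scal}: bound the operator norm of the symmetric matrix by the maximal $\ell_1$-norm of a row, use that for fixed $G^{(1)}\in\mathcal{G}^{(1,2)}_{\leq D}$ the number of $G^{(2)}\in\mathcal{G}^{(1,2)}_{\leq D}$ with $d^{(1,2)}(G^{(1)},G^{(2)})=u$ is at most $(u+D)^{2u}$ (the two pinned nodes only reduce this count), and sum the geometric series in $D^{-c_{\texttt{s}}}$. This gives $\|\Gamma-\mathrm{Id}\|_{op}\leq cD^{-c_{\texttt{s}}/2}$, which combined with Lemma~\ref{lem:reduction:degree2}-style variational identity $\mathbb{E}[(\alpha_\emptyset+\sum_G\alpha_G\Psi^{(1,2)}_G)^2]=\alpha^{\mathsf T}\Gamma\alpha$ yields the two-sided bound $(1-cD^{-c_{\texttt{s}}/2})\|\alpha\|_2^2\leq\mathbb{E}[(\cdots)^2]\leq(1+cD^{-c_{\texttt{s}}/2})\|\alpha\|_2^2$, i.e. Theorem~\ref{thm:iso-estimation}. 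The main obstacle — and it is a mild one — is verifying carefully that every combinatorial identity linking node counts, automorphism groups and labeling counts survives the pinning of $v_1,v_2$; concretely, that the decomposition $|V^{(1)}|+|V^{(2)}|=|V_\Delta|+|\mathbf{M}_{\mathrm{SM}}|+2|\mathbf{M}_{\mathrm{PM}}|$ and the lower bounds on $|E_\Delta|$ in terms of the edit distance $d^{(1,2)}$ still hold when the matching is required to respect the distinguished pair, and that $\#\mathrm{CC}_{\mathrm{pure}}$ and the pure/non-pure split behave correctly when $v_1$ or $v_2$ sits in a connected component by itself (which is allowed for estimation templates). These checks are routine but must be done; no new idea is needed beyond those already deployed in Section~\ref{s:core} and in the proof of Proposition~\ref{prop:scal}.
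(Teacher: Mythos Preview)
Your proposal is correct and follows essentially the same route as the paper. The paper packages the key reduction into a dedicated Lemma~\ref{lem:gen:est} which handles precisely the isolated-$v_1,v_2$ subtlety you flag at the end: when $v_1^{(1)}$ is isolated but $v_1^{(2)}$ is not, one prunes $v_1^{(1)}$ and applies Lemmas~\ref{lem:gen-1}--\ref{lem:gen-2} to the pruned pair, then checks that the resulting $\psi[G''_\Delta]$ is no larger than $\psi[G_\Delta]$ (same $|V_\Delta|,|E_\Delta|,\#\mathrm{CC}_\Delta$, but $\#\mathrm{CC}_{\mathrm{pure}}$ can only increase, which under \condsignal makes $\psi$ smaller). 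This is exactly the ``routine check'' you defer; the rest of your argument --- the bookkeeping with $\mathrm{Aut}^{(1,2)}$, $(n-2)!$, $\Pi^{(1,2)}(\mathbf{M})$, the $(1,2)$-shadow bound, and running Steps~1--4 of Proposition~\ref{prop:scal} --- matches the paper line for line. Note that your final paragraph (the operator-norm summation and the two-sided bound on $\alpha^{\mathsf T}\Gamma\alpha$) is the proof of Theorem~\ref{thm:iso-estimation}, not of Proposition~\ref{prop:scal2} itself.
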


 Then, we establish that $\Gamma$ is diagonal dominant as in the proof of Theorem~\ref{thm:iso}, the only small difference being that we sum over templates in $G^{(1,2)}_{\leq D}$ and that we consider the distance $d^{(1,2)}(G^{(1)},G^{(2)})$. To handle this, we first observe that, as long as $G^{(1)}\neq G^{(2)}$ in $G^{(1,2)}_{\leq D}$, we have $d^{(1,2)}(G^{(1)},G^{(2)})\geq 1$. Also, given a positive integer $u$,  and a given template $G^{(1)}$, the number of templates $G^{(2)}$ in $\mathcal{G}^{(1,2)}_{\leq D}$ such that  $d^{(1,2)}( G^{(1)}, G^{(2)}) = u$ is bounded by $(u+D)^{2u}$. The rest of the proof is unchanged.

\begin{proof}[Proof of Proposition~\ref{prop:scal2}]

This proof closely follows that of Proposition~\ref{prop:scal} up to a few changes. First, we claim that the analogues of Lemmas~\ref{lem:gen-1} and ~\ref{lem:gen-2}  still hold. The proof is postponed to the end of the subsection. 

 \begin{lemma}\label{lem:gen:est}
    Consider the same assumptions as in Lemma~\ref{lem:gen-1} or~\ref{lem:gen-2}.  
\begin{itemize}
    \item[1] Let $G^{(1)}, G^{(2)} \in \mathcal G^{(1,2)}_{\leq D}$ be two templates and let $\mathbf M \in \mathcal M^{(1,2)}\setminus  \mathcal{M}^{(1,2)}_{\mathrm{PM}}$ be a matching. For any $(\pi^{(1)},\pi^{(2)}) \in \Pi^{(1,2)}(\mathbf M)$, we have $\Big| \mathbb{E}\left[\overline{P}^{(1,2)}_{G^{(1)}, \pi^{(1)}} \overline{P}^{(1,2)}_{G^{(2)}, \pi^{(2)}}\right] \Big|\leq  \psi[G_{\Delta}]$ where we recall that $\psi[G_{\Delta}]$ is defined in Lemmas~\ref{lem:gen-1}~and~\ref{lem:gen-2}. 
\item[2] Also, for any template $G=(V,E) \in \mathcal G^{(1,2)}_{\leq D}$ and any $\pi \in \Pi^{(1,2)}_V$, we have 
\begin{equation*}
        \left|\mathbb{E}\left[(\overline{P}^{(1,2)}_{G, \pi })^2\right]- \overline{q}^{|E|}\right| \leq  \left[2c_{\texttt{v},2} D^{4+ c_{\texttt{v},1}\vee c_{\texttt{m}}}\frac{k}{n}+c_{\texttt{v},3}D^{-c_{\texttt{v},4}}\right]\overline{q}^{|E|}\enspace .
    \end{equation*}
\end{itemize}
\end{lemma}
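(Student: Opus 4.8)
\textbf{Plan of proof of Lemma~\ref{lem:gen:est}.} The statement is the exact counterpart, for the two-marked-node basis $(\Psi^{(1,2)}_G)$, of Lemmas~\ref{lem:gen-1} and~\ref{lem:gen-2}. The plan is to re-run the proofs of those two lemmas verbatim, replacing $\Pi_V$ by $\Pi_V^{(1,2)}$, $\mathcal M$ by $\mathcal M^{(1,2)}$, $\overline P_{G,\pi}$ by $\overline P^{(1,2)}_{G,\pi}$, $\mathcal G_{\le D}$ by $\mathcal G^{(1,2)}_{\le D}$, and $\mathrm{Aut}(G)$ by $\mathrm{Aut}^{(1,2)}(G)$ throughout, and to check that the only place where the two-node constraint interacts with the argument is harmless. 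Concretely: the algebraic identities of Step~0 of Section~\ref{s:core} (conditional independence of the $Y_{ij}$ given $z$, and formulas~\eqref{eq:conditional:expectation_pq} and~\eqref{eq:conditional:expectation_pq:first_moment}) hold without any reference to the labelling of nodes, so Conditions~\condmoment, \condvariance and \condmomentWR, already established for all six models in Proposition~\ref{prp:model:conditions}, apply verbatim to the monomials $P_{G_l,\pi}$ appearing inside $\overline P^{(1,2)}_{G,\pi}$, since those monomials are of exactly the same form as before (products of $Y_{\pi(u),\pi(v)}$ over edges), merely with the extra constraints $\pi(v_1)=1$, $\pi(v_2)=2$.

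\textbf{Part 1.} For $\mathbf M\in\mathcal M^{(1,2)}\setminus\mathcal M^{(1,2)}_{\mathrm{PM}}$ under \condinvariance (and for general $\mathbf M$ under \condinvarianceWR), I would expand $\overline P^{(1,2)}_{G^{(1)},\pi^{(1)}}\overline P^{(1,2)}_{G^{(2)},\pi^{(2)}}$ over subsets $S_1,S_2$ of connected components exactly as in the proof of Lemma~\ref{lem:gen-1} (resp.~Lemma~\ref{lem:gen-2}), apply \condmoment to the factored-out components and the first part of \condvariance (resp.~\condmomentWR) to the remaining merged polynomial, and bound the exponent $R$ defined in~\eqref{eq:definition:R} from below by $|V_\Delta|-\#\mathrm{CC}_\Delta$ using Lemma~\ref{lem:lower:bound:R}. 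The combinatorial ingredients — the induction over partitions in Lemma~\ref{lem:recursion:CP}, the covariance bound Lemma~\ref{lem:recursion:covariance}, the reduction $\mathbb P_R$ to independent sampling — make no use of which nodes are marked, so they transfer unchanged. The only subtlety is that when a connected component of $G^{(i)}$ contains one of the marked nodes $v_1$ or $v_2$, the number of possible labelings of that component (and hence the combinatorial prefactors) differs; but since we only ever bound cardinalities of labeling sets from above by $n^{(\ldots)}$ and the variance proxy $\mathbb V^{(1,2)}(G)$ in~\eqref{eqn:variance of graph:estimation} is defined with $(n-2)!/(n-|V|)!$ and $|\mathrm{Aut}^{(1,2)}(G)|$ precisely to absorb this, the bounds are identical. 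Thus $\big|\mathbb E[\overline P^{(1,2)}_{G^{(1)},\pi^{(1)}}\overline P^{(1,2)}_{G^{(2)},\pi^{(2)}}]\big|\le\psi[G_\Delta]$ with the same $\psi[G_\Delta]$.

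\textbf{Part 2.} For the second moment, take $G^{(1)}=G^{(2)}=G$ and $\pi^{(1)}=\pi^{(2)}=\pi\in\Pi_V^{(1,2)}$, so $E_\Delta=\emptyset$. The expansion of $\overline P^{(1,2)}_{G,\pi}{}^2 - P_{G,\pi}^2$ over nonempty $S_1\cup S_2$ goes through as in the last part of the proof of Lemma~\ref{lem:gen-1}, using \condmoment and the first part of \condvariance together with $\bar p\le\bar q(1+D^{-4})$ from~\eqref{eq:upper_bar_r}, to get $|\mathbb E[(\overline P^{(1,2)}_{G,\pi})^2]-\mathbb E[P_{G,\pi}^2]|\le 2c_{\texttt{v},2}D^{4+c_{\texttt{v},1}\vee c_{\texttt{m}}}(k/n)\bar q^{|E|}$; combining with the second part of \condvariance (which bounds $|\mathbb E[P_{G,\pi}^2]-\bar q^{|E|}|$) gives the claimed bound. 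The main — and really only — obstacle is bookkeeping: one must be careful that when a connected component carrying $v_1$ or $v_2$ is removed in the expansion, the resulting $P^{(1,2)}$ is still of the legitimate form (a fixed-injection sum compatible with $\mathbf M$), and that Lemma~\ref{lem:lower:bound:R} still applies because its proof only uses $|V^{(i)}|\ge\#\mathrm{CC}_{G^{(i)}}$ and the edge-count inequality $\#\mathrm{CC}_\Delta\ge\#\mathrm{CC}_{G^{(1)}}+\#\mathrm{CC}_{G^{(2)}}-|\mathbf M|$, both of which are insensitive to the marking. Once this check is done, the proof is complete, and Proposition~\ref{prop:scal2} follows from Lemma~\ref{lem:gen:est} by repeating Steps~1--4 of the proof of Proposition~\ref{prop:scal}, with $d$ replaced by $d^{(1,2)}$ and the shadow bound Lemma~\ref{lem:shadow} replaced by its evident analogue with $\mathrm{Aut}^{(1,2)}$ in place of $\mathrm{Aut}$.
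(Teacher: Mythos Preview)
Your approach is correct in spirit, but the paper takes a shorter route. Rather than re-running the proofs of Lemmas~\ref{lem:gen-1}--\ref{lem:gen-2} with the marked substitutions, the paper observes that the marking only affects the polynomials through the possibility that $v_1$ or $v_2$ is isolated: since $\overline{P}^{(1,2)}_{G,\pi}$ is a product over \emph{non-trivial} connected components, one has $\overline{P}^{(1,2)}_{G,\pi}=\overline{P}_{G',\pi'}$ where $G'$ is $G$ with its isolated marked nodes removed. A short case analysis (neither marked node isolated; both $v_1^{(a)}$ isolated; the asymmetric case) then reduces Part~1 directly to Lemmas~\ref{lem:gen-1}--\ref{lem:gen-2} applied as black boxes to the pruned templates. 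The only residual check is that $\psi[G''_\Delta]\le\psi[G_\Delta]$, where $G''_\Delta$ is the symmetric-difference graph for the pruned pair: this holds because $|E_\Delta|,|V_\Delta|,\#\mathrm{CC}_\Delta,|E_\cap|$ are unchanged by pruning an isolated matched node, and only $\#\mathrm{CC}_{\mathrm{pure}}$ can increase (a previously matched node becomes unmatched), which under \condinvarianceWR makes $\psi$ smaller via the $[c_0 D^{c_1}/\sqrt{n}]^{\#\mathrm{CC}_{\mathrm{pure}}}$ factor. Part~2 follows identically since $\overline{P}^{(1,2)}_{G,\pi}=\overline{P}_{G',\pi'}$ and $|E'|=|E|$.

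Your re-run strategy would also work --- the expansion over non-trivial components \emph{is} the expansion for the pruned graph, so you are implicitly doing the same reduction --- but it duplicates the combinatorics of the earlier lemmas rather than reusing them. Note also that your remarks about labeling counts, $|\mathrm{Aut}^{(1,2)}(G)|$, and $\mathbb V^{(1,2)}(G)$ are misplaced here: those quantities enter only in the proof of Proposition~\ref{prop:scal2} (when summing over labelings), not in Lemma~\ref{lem:gen:est}, which concerns a single fixed pair $(\pi^{(1)},\pi^{(2)})$.
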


As in Step 1 from the proof of Proposition~\ref{prop:scal}, we start from the the identity
\begin{align*}
    \mathbb E[\Psi^{(1,2)}_{G^{(1)}}\Psi^{(1,2)}_{G^{(2)}}]&= \frac{1}{\sqrt{\mathbb V^{(1,2)}(G^{(1)})\mathbb V^{(1,2)}(G^{(2)})}}\sum_{\pi^{(1)}\in \Pi^{(1,2)}_{V^{(1)}}, \pi^{(2)}\in \Pi^{(1,2)}_{V^{(2)}}}\mathbb E[\overline{P}^{(1,2)}_{G^{(1)},\pi^{(1)}} \overline{P}^{(1,2)}_{G^{(2)}, \pi^{(2)}}]\\ 
    &= \frac{1}{\sqrt{\mathbb V^{(1,2)}(G^{(1)})\mathbb V^{(1,2)}(G^{(2)})}} \sum_{\mathbf M \in \mathcal M^{(1,2)}}\sum_{(\pi^{(1)}, \pi^{(2)}) \in \Pi^{(1,2)}(\mathbf M)}\mathbb E[\overline{P}^{(1,2)}_{G^{(1)},\pi^{(1)}} \overline{P}^{(1,2)}_{G^{(2)}, \pi^{(2)}}]\enspace , 
\end{align*}
where $\mathcal M^{(1,2)}$ is the set of all matchings included in $\mathcal M$ that contain $(v^{(1)}_1,v^{(2)}_1), (v^{(1)}_2,v^{(2)}_2)$ and where $\Pi^{(1,2)}(\mathbf M)$ is the set of all pairs $(\pi^{(1)}, \pi^{(2)}) \in \Pi(\mathbf M)$ such that $\pi^{(a)}(v_1^{(a)})=1, \pi^{(a)}(v_2^{(a)})=2$ for $a=1,2$.

Observe that  $|\Pi^{(1,2)}(\mathbf M)| = \frac{(n-2)!}{(n - |V^{(1)}| )!}$ for a perfect matching.  
We proceed similarly to the proof of Proposition~\ref{prop:scal}. For $G^{(1)} \neq G^{(2)}$, we have 
\begin{equation}\label{eq:covariance:estimation}
\left| \mathbb E[\Psi^{(1,2)}_{G^{(1)}}\Psi^{(1,2)}_{G^{(2)}}]\right| \leq A^{(1,2)}\enspace  ,
\end{equation}
whereas, for  $G^{(1)} = G^{(2)}$, we have 
\begin{equation}\label{eq:variance:estimation}
| \mathbb E[\Psi^{(1,2)}_{G^{(1)}}\Psi^{(1,2)}_{G^{(2)}}]-1| \leq A^{(1,2)} + B^{(1,2)}\enspace ,
\end{equation}
where
\begin{equation}A^{(1,2)} := \Bigg| \frac{1}{\sqrt{\mathbb V^{(1,2)}(G^{(1)})\mathbb V^{(1,2)}(G^{(2)})}}\sum_{\mathbf M \in \mathcal M^{(1,2)}\setminus \mathcal M_{\mathrm{PM}}}\sum_{(\pi^{(1)}, \pi^{(2)}) \in \Pi^{(1,2)}(\mathbf M)}\mathbb E[\overline{P}_{G^{(1)},\pi^{(1)}} \overline{P}_{G^{(2)}, \pi^{(2)}}]\Bigg|\enspace ;\end{equation}
\begin{equation}B^{(1,2)} := \mathbf 1\{G^{(1)} = G^{(2)}\}\Bigg| \frac{1}{\overline{q}^{|V^{(1)}|}}\mathbb E[(\overline{P}_{G^{(1)},\pi^{(1)}})^2] - 1\Bigg|\enspace ,\end{equation}
where the last quantity does not depend on the choice of  $\pi^{(1)} \in \Pi_{V^{(1)}}$. It was already proven that in the proof of Proposition~\ref{prop:scal}
that 
\begin{equation}\label{eq:upper:B_12}
B^{(1,2)}\leq c_0 D^{-c_{\texttt{s}}}\enspace . 
\end{equation}
Recall that  $|\Pi^{(1,2)}(\mathbf{M})| = \frac{(n-2)!}{(n - (|V^{(1)}|+ |V^{(2)}| - |\mathbf M| ))!}$. By definition~\eqref{eqn:variance of graph:estimation}  of $\mathbb V^{(1,2)}(G)$ and Lemma~\ref{lem:gen:est}, we get  
\begin{align*}
       A^{(1,2)} &\leq \frac{1}{\overline{q}^{(|E^{(1)}|+ |E^{(2)}|)/2}\sqrt{|\mathrm{Aut}^{(1,2)}(G^{(1)})| |\mathrm{Aut}^{(1,2)}(G^{(2)})|}}  \sum_{\mathbf M \in \mathcal M^{(1,2)}\setminus \mathcal M_{\mathrm{PM}}}\frac{\sqrt{(n-|V^{(1)}|)!(n-|V^{(2)}|)!}}{(n - (|V^{(1)}|+ |V^{(2)}| - |\mathbf M|))!} \psi[G_{\Delta}]\enspace .
\end{align*}
Then, arguing as in the end of Step 1 in the proof of Proposition~\ref{prop:scal}, 
we get 
\begin{align*}
       A^{(1,2)} &\leq \frac{1}{\overline{q}^{(|E^{(1)}|+ |E^{(2)}|)/2}\sqrt{|\mathrm{Aut}^{(1,2)}(G^{(1)})| |\mathrm{Aut}^{(1,2)}(G^{(2)})|}}  \sum_{\mathbf M \in \mathcal M^{(1,2)}\setminus \mathcal M_{\mathrm{PM}}}n^{ (|U^{(1)}|+|U^{(2)}|)/2}  \psi[G_{\Delta}]\enspace . 
\end{align*}
Then, Lemma~\ref{lem:countgra} in Step 3  is still valid upon replacing $d(G^{(1)},G^{(2)})$ by $d^{(1,2)}(G^{(1)},G^{(2)})$.  We then proceed as in Steps 3 and 4 of the proof of Proposition~\ref{prop:scal}. 
We obtain
\begin{align*}
       A^{(1,2)} &\leq   \frac{2 c_{0}}{\sqrt{|\mathrm{Aut}^{(1,2)}(G^{(1)})| |\mathrm{Aut}^{(1,2)}(G^{(2)})|}} 
       \sum_{\mathbf M \in \mathcal M^{(1,2)}\setminus \mathcal M_{\mathrm{PM}}}\left(D^{-2c_{\texttt{s}} }\right)^{ [U+ |\mathbf M_{\mathrm{SM}}|]  \lor d^{(1,2)}(G^{(1)},G^{(2)})\lor 1} \enspace , 
\end{align*}
for some constant $c_0$. 
We conclude the proof by a slight modification of the Step 4 of the proof of Proposition~\ref{prop:scal}. As in the latter, we enumerate all possible matchings corresponding to any possible shadow:
\begin{align*}
       A^{(1,2)} &\leq  \frac{2D^2 c_{\texttt{v},2}}{\sqrt{|\mathrm{Aut}^{(1,2)}(G^{(1)})| |\mathrm{Aut}^{(1,2)}(G^{(2)})|}}\\ 
       &\sum_{\substack{U^{(1)} \subset V^{(1)}\setminus \{v_1^{(1)},v_2^{(1)}\},\\ U^{(2)} \subset V^{(2)}\setminus \{v_1^{(2)},v_2^{(2)}\},\\ \underline{\mathbf M} \in \mathcal M\setminus \mathcal M_{\mathrm{PM}}}}\quad
       \sum_{\mathbf M \in \mathcal M_{\mathrm{shadow}}^{(1,2)}(U^{(1)},U^{(2)}, \underline{\mathbf M})}\left(D^{-2c_{\texttt{s}} }\right)^{ [U+ |\mathbf M_{\mathrm{SM}}|]  \lor d^{(1,2)}(G^{(1)},G^{(2)})\lor 1}\enspace .
\end{align*}
\newline
where $\mathcal M_{\mathrm{shadow}}^{(1,2)}(U^{(1)},U^{(2)}, \underline{\mathbf M}) = \mathcal M_{\mathrm{shadow}}(U^{(1)},U^{(2)}, \underline{\mathbf M}) \cap  \mathcal M^{(1,2)}$.
Similarly to Lemma~\ref{lem:shadow} we have
\begin{equation}|\mathcal M^{(1,2)}_{\mathrm{shadow}}(U_1, U_2, \underline{\mathbf M} )| \leq \min(|\mathrm{Aut}^{(1,2)}(G^{(1)})|, |\mathrm{Aut}^{(1,2)}(G^{(2)}))| \enspace . \end{equation}
Then, as in the end of the  proof of Proposition~\ref{prop:scal}, we conclude that 
\begin{equation*}
  A^{(1,2)}\leq c D^{-c_{\texttt{s}} [d^{(1,2)}(G^{(1)},G^{(2)})\vee 1]}\enspace .  
\end{equation*}
Coming back to~\eqref{eq:covariance:estimation}, we have established the first part of the proposition. The second part of the proposition follows from the latter equality together with~\eqref{eq:variance:estimation}
~and~\eqref{eq:upper:B_12}.

\end{proof}

\begin{proof}[Proof of Lemma~\ref{lem:gen:est}]
First, consider the case where neither $v_1^{(a)}$ nor $v_2^{(a)}$ is isolated in $G^{(a)}$ for $a=1,2$. Then, we have $\overline{P}^{(1,2)}_{G^{(a)},\pi^{(a)}}= \overline{P}_{G^{(a)},\pi^{(a)}}$ and the bound in Lemma~\ref{lem:gen:est}
holds by Lemmas~\ref{lem:gen-1}~and~\ref{lem:gen-2}. Then, consider the case where both $v_1^{(1)}$ and $v_1^{(2)}$  are isolated and say that neither $v_2^{(1)}$ nor $v_2^{(2)}$ are isolated. Then, $\overline{P}^{(1,2)}_{G^{(1)},\pi^{(1)}}\overline{P}^{(1,2)}_{G^{(2)},\pi^{(2)}}$ is equal to $\overline{P}_{G^{'(1)},\pi^{'(1)}}\overline{P}_{G^{'(2)},\pi^{'(2)}}$ where, in $G^{'(a)}$, we have removed the isolated node $v_1^{(a)}$ for $a=1,2$. Hence, we can apply Lemmas~\ref{lem:gen-1}~and~\ref{lem:gen-2} to the latter polynomials. Since the corresponding $G'_{\Delta}$ is equal to $G_{\Delta}$, the result follows again by Lemma~\ref{lem:gen:est}. By symmetry, it remains to consider the case where $v_1^{(1)}$ is isolated while $v_1^{(2)}$ is not and neither $v_2^{(1)}$ nor $v^{(2)}_2$ are isolated. Then, $\overline{P}^{(1,2)}_{G^{(1)},\pi^{(1)}}\overline{P}^{(1,2)}_{G^{(2)},\pi^{(2)}}= \overline{P}_{G^{'(1)},\pi^{'(1)}}\overline{P}_{G^{(2)},\pi^{(2)}}$ and we can apply again Lemma~\ref{lem:gen-1} and~\ref{lem:gen-2}. Denote $G''_{\Delta}$ the corresponding symmetric difference graph between $\pi^{'(1)}[G^{'(1)}]$ and $\pi^{(2)}[G^{(2)}]$, we only have to check that $\psi[G_{\Delta}]\leq \psi[G''_\Delta]$. The latter is true because $G_{\Delta}$ and $G''_{\Delta}$ have the same number of vertices, edges, connected components, the only differences being that the number of semi-matched nodes is larger for $G_{\Delta}$ than for $G''_{\Delta}$ whereas the number of pure connected components is possibly larger for $G''_{\Delta}$ than for $G_{\Delta}$. 
\end{proof}

\subsection{Proof of Theorem~\ref{thm:lowdeg2}}

It follows from Lemma~\ref{lem:reduction:degree2}  and Theorem~\ref{thm:isorefo2} that 
\begin{align*}
    \mathrm{Corr}^2_{\leq D} &\leq [1-cD^{-2}]^{-1} \sup_{\alpha}\frac{\left(\alpha_{\emptyset}\mathbb{E}[x] + \sum_{G\in \mathcal{G}^{(1,2)}_{\leq D}}\alpha_G \mathbb{E}[x\Psi^{(1,2)}_{G}]  \right)}{\|\alpha\|_2^2} = [1-cD^{-2}]^{-1}\left[\mathbb{E}[x]^2 +  \sum_{G\in \mathcal{G}^{(1,2)}_{\leq D}} \mathbb{E}[x\Psi^{(1,2)}_{G}]\right]^2
\enspace . 
\end{align*}

We readily have $\mathbb{E}[x]\leq (k+1)/(n-k)$ for all six models and we even have    $\mathbb{E}[x]\leq k^2/[n(n-1)]$ for \HidSubI\ and \HidSubP.  Hence, we mainly need to bound the first moments $\mathbb{E}^2[x\Psi^{(1,2)}_{G}]$ which is done in the following lemma. 
\begin{lemma}\label{lem:adv}
    Under the assumptions of Theorem~\ref{thm:lowdeg2} and for $c_{0}>0$ a large enough universal constant, all 6 models satisfy  
\begin{align*}
|\mathbb E[x \Psi^{(1,2)}_G]| 
&\leq \frac{k}{n}D^{-c_{0}/2 |E|}\enspace .
\end{align*}
\end{lemma}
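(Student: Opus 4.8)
The goal is to bound $|\mathbb E[x\Psi^{(1,2)}_G]|$ where $x = \mathbf 1\{\Theta_{z_1,z_2}\neq 0\}$ and $\Psi^{(1,2)}_G = \overline P^{(1,2)}_G/\sqrt{\mathbb V^{(1,2)}(G)}$. The plan is to mirror the structure of the proof of Theorem~\ref{thm:lowdeg} (bounding $\mathbb E_{H_1}[\Psi_G]$), with the alteration mechanism replaced by conditioning on the event $\{z_1\le\text{(appropriate set)}\}$ that makes $x=1$. First I would reduce $\mathbb E[x\overline P^{(1,2)}_{G,\pi}]$ to a graph-combinatorial quantity: since $x$ depends only on $z_1,z_2$ and not on any $Y_{rs}$, and since each $\overline P_{G_\ell,\pi}$ factor over a connected component is centered, I expand the product over connected components $G_1,\dots,G_c$ of $G$ as in~\eqref{eq:cross}. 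For a connected component $G_\ell$ whose labeled node set contains \emph{neither} of the special nodes $1,2$, the factor $P_{G_\ell,\pi}-\mathbb E[P_{G_\ell,\pi}]$ is still centered \emph{and} (under \condinvariance) independent of $x$, or (under \condinvarianceWR) only weakly dependent on $x$; the key point is that the number of ``free'' components that survive the expansion is controlled. The component(s) containing nodes $1$ and $2$ carry the dependence on $x$.

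The key steps, in order: (i) Write $\mathbb E[x\overline P^{(1,2)}_{G,\pi}]$, expand each component factor, and observe that components not touching $\{1,2\}$ contribute via $\mathbb E[\mathbf 1\{\zeta^c\}P_{G_\ell,\pi}]$-type terms exactly as in the proof of Lemma~\ref{lem:upper:polynomial:dependent}/Lemma~\ref{lem:controle:A2T} — i.e. for independent-label models they force a factor $0$ unless the component's edges are ``killed'', and for permutation models one uses Lemma~\ref{lem:recursion:covariance}/\ref{lem:recursion:covariance:2} to get the $1/\sqrt n$ gain per free component. (ii) For the at most two components meeting $\{1,2\}$, use the conditional moment formulas~\eqref{eq:conditional:expectation_pq},~\eqref{eq:conditional:expectation_pq:first_moment}: conditioning on $\Theta_{z_1,z_2}\neq0$ (and, in the SBM/TS cases, on $z_1,z_2$ landing in the required block/window) multiplies the unconditional bound by at most $O(1)$ — it does not cost a factor $k/n$ for the nodes $1,2$ because they are pinned, but it forces every other node in that component's connected skeleton to lie in the same structure, giving the usual $(k/n)^{|V|-\#\mathrm{CC}}$ decay. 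This yields, after dividing by $\sqrt{\mathbb V^{(1,2)}(G)} = \sqrt{\tfrac{(n-2)!}{(n-|V|)!}|\mathrm{Aut}^{(1,2)}(G)|\,\bar q^{|E|}}$ and summing over $\pi\in\Pi^{(1,2)}_V$, a bound of the shape
\[
|\mathbb E[x\Psi^{(1,2)}_G]| \;\le\; c_1\Big(D^{c}\tfrac{\lambda}{\sqrt{\bar q}}\Big)^{|E|}\Big(D^{c}\tfrac{k}{\sqrt n}\Big)^{|V|-\#\mathrm{CC}_G-1}\,\Big(c_1D^c\tfrac{k}{n}\Big)\Big[c_1D^c\Big(1\vee\tfrac{1}{\sqrt n}\cdot\tfrac{n}{k}\Big)\Big]^{\#\mathrm{CC}_G-1},
\]
where the leading $k/n$ comes from the component(s) attached to $\{1,2\}$ and the $\mathbb E[x]\asymp k/n$ prefactor. (iii) Reorganize the product exactly as in Step 2 of the proof of Theorem~\ref{thm:lowdeg}: use $\#\mathrm{CC}_G\le|V|/2$, $|E|\ge|V|-\#\mathrm{CC}_G$, $\bar q\ge q/2$, and Condition~\eqref{eq:estim:condition} (so $\tfrac{k}{n}\vee\tfrac{\lambda k}{\sqrt{nq}}\vee\tfrac\lambda q\le D^{-c_0}$) to collapse every grouped factor below $1$, leaving the advertised $\tfrac kn D^{-c_0|E|/2}$, with $c_0$ taken large compared to the absolute constants $c$ arising in the conditions.

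The main obstacle will be step (ii) in the permutation-sampling models (HS-P), (SBM-P), (TS-P): there the event $x=1$ is not independent of the non-$\{1,2\}$ components, so I cannot simply split the expectation. The resolution is to reuse the recursive machinery already developed for Lemma~\ref{lem:gen-2} — pass to the product-of-hypergeometrics measure $\mathbb P_R$, absorb the small correction from sampling-without-replacement (bounded because $D^2/n$ is small by \condsignal), and run the partition/shadow induction of Lemmas~\ref{lem:recursion:CP}--\ref{lem:recursion:covariance:2} with $\overline P_0$ now taken to be the (centered) component(s) carrying nodes $1,2$ together with the indicator $x$; this is legitimate because $x$, like $\mathbf 1\{\zeta_k^c\}$ in Lemma~\ref{lem:controle:A2T}, is a function of the latent labels alone and the relevant polynomials remain nonnegative conditionally on $z$. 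I expect the bookkeeping of which components are ``pure'' after removing the $\{1,2\}$-block to be the only genuinely fiddly point; everything else is a direct transcription of the proofs of Theorems~\ref{thm:lowdeg} and~\ref{thm:iso-estimation}.
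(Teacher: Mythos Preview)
Your plan is workable but takes a more involved route than the paper. The paper's proof hinges on a single identity you do not use: since $\mathbb E[Y_{12}\mid z]=\Theta_{z_1z_2}=\lambda x$, one has $\mathbb E[x\, f]=\tfrac{1}{\lambda}\mathbb E[Y_{12}\, f]$ whenever $f$ does not involve $Y_{12}$. When $(v_1,v_2)\notin E$ this yields $\mathbb E[x\,\overline P^{(1,2)}_{G,\pi}]=\tfrac{1}{\lambda}\mathbb E[P_{G^{(0)},\pi^{(0)}}\overline P_{G',\pi'}]$ where $G^{(0)}$ is the single-edge template on $\{v_1,v_2\}$ and $G'$ is $G$ with any isolated $v_1,v_2$ pruned; the right-hand side is then bounded \emph{directly} by Lemmas~\ref{lem:gen-1}--\ref{lem:gen-2}, with only a short case split according to which of $v_1,v_2$ are isolated and whether they lie in the same component. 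When $(v_1,v_2)\in E$, the paper instead observes that $x=0$ forces $\mathbb E[P_{G_1,\pi}\mid z]=0$ (since the edge $(1,2)$ is in $G_1$), so $\mathbb E[xP_{G_1}\overline P_{G'}]=\mathbb E[P_{G_1}\overline P_{G'}]$, again reducing to the existing lemmas. Your approach --- treating $x$ like the alteration indicator $\mathbf 1\{\zeta^c\}$ from Theorem~\ref{thm:lowdeg} and rerunning the recursion of Lemmas~\ref{lem:recursion:CP}--\ref{lem:recursion:covariance:2} with $x$ in the role of $\overline P_0$ --- would ultimately give the same bound, but it requires adapting those lemmas rather than citing them, and your intermediate displayed bound (with the $(1\vee\tfrac{\sqrt n}{k})^{\#\mathrm{CC}_G-1}$ factor) would need more careful justification. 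The paper's trick makes the lemma essentially a corollary of the near-orthonormality machinery already in place.
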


Let us finish the proof before showing the lemma. 
\begin{align*}
    \mathrm{Corr}^2_{\leq D} &\leq [1-cD^{-2}]^{-1} \left[\frac{(k+1)^2}{(n-k)^2} + \frac{k^2}{n^2}\sum_{G\in \mathcal{G}^{(1,2)_{\leq D}}}D^{-c_{0} |E|}\right] \enspace . 
\end{align*}
Since the number of templates $G$ in $\mathcal{G}^{(1,2)}_{\leq D}$ with $v$ nodes and  $e$ edges is smaller than $v^{2e}$, we have  $\sum_{G\in \mathcal{G}^{(1,2)}_{\leq D}}D^{-c_{0} |E|}\leq D^{-2}$ as long as $c_0$ is large enough. Together with the fact that $k/n$ is small enough, we conclude that    $\mathrm{Corr}^2_{\leq D} \leq \frac{k^2}{n^2} (1+ c/D^2)$.

To get a smaller bound of $\mathrm{Corr}^2_{\leq D}$ for \HidSubI\ and \HidSubP, instead of Lemma~\ref{lem:adv}, we simply rely on the following lemma. 

\begin{lemma}\label{lem:adv2}
    Under the assumptions of Theorem~\ref{thm:lowdeg2} and for $c_{0}>0$ a large enough universal constant, \HidSubI\ and \HidSubP\ satisfy  
\begin{align*}
|\mathbb E[x \Psi^{(1,2)}_G]| 
&\leq \frac{k^2}{n^2}D^{-c_{0}/2 |E|}\enspace .
\end{align*}
\end{lemma}
\begin{proof}[Proof of Lemma~\ref{lem:adv2}]
For both \HidSubI\ and \HidSubP,  we argue as before to bound the first and second moments of polynomials. Note that these bounds are smaller by a factor $k/n$ than their counterpart in \condmoment and \condvariance. 
  \begin{align*}
        \left|\mathbb{E}\left[P_{G, \pi}\right]\right|& \leq 
        \left(D^{c_{\texttt{m}}}\lambda\right)^{|E|} \left(D^{c'}\frac{k}{n}\right)^{|V|}\enspace ;\\
        \Big| \mathbb{E}\left[P_{G^{(1)}, \pi^{(1)}} P_{G^{(2)}, \pi^{(2)}}\right] \Big|& \leq  \lambda^{|E_{\Delta}|} \overline{p}^{|E_{\cup}|} \left(D^{c'}\frac{k}{n}\right)^{|V_{\Delta}|}\enspace .
\end{align*}
Also, for \HidSubP,  we readily have
 \begin{equation*}
        \Big| \widetilde{\mathbb{E}}\left[\1\{\mathcal{A}\}P_{G^{(1)}, \pi^{(1)}} P_{G^{(2)}, \pi^{(2)}}\right] \Big| \leq \lambda^{|E_{\Delta}|} \overline{p}^{|E_{\cap}|} \left(D^{c}\frac{k}{n}\right)^{1+ |V_{\Delta}| - \#\mathrm{CC}_{\Delta} }\left(c\frac{D^{c'}}{\sqrt{n}}\right)^{\#\mathrm{CC}_{\mathrm{pure}}}\enspace ,
    \end{equation*}
which is also smaller by a factor $k/n$, than its analogue in \condmomentWR. 

\end{proof}

\begin{proof}[Proof of Lemma~\ref{lem:adv}]
As a warmup, consider the case where $E = \{(v_1,v_2)\}$ so that $G$ only contains two nodes. We have 
\begin{equation}|\mathbb{E}[x\Psi_G^{(1, 2)}]| = \frac{\mathbb{E}[x\lambda] - \mathbb{E}[x]\mathbb{E}[x\lambda]}{\sqrt{\overline{q}}}\leq \frac{(k + 1)\lambda}{(n - k - 1)\sqrt{\overline{q}}}\end{equation} 
in all six models.  Relying on the signal condition in Theorem~\ref{thm:lowdeg2}, we deduce that $|\mathbb E[x \Psi^{(1,2)}_G]|\leq D^{-c_{0}/2} k/n $.

We now turn to templates $G\in \mathcal{G}_{\leq D}^{(1,2)}$ with $|V|\geq 3$ nodes. We consider three cases depending on the connections between $v_1$ and $v_2$.

\paragraph{Case 1: $(v_1,v_2) \not\in G$.} 
 Let $\pi$ be any labeling in $\Pi^{(1,2)}(V)$. If either $v_1$ or $v_2$ are isolated in $G$, we prune $(G,\pi)$ into $(G',\pi')$ by removing the node. In this way, $\overline{P}^{(1,2)}_{G,\pi}= \overline{P}_{G',\pi'}$. Besides, we define $G^{(0)}$ as the template that only contain the edge $(v_1,v_2)$ and $\pi^{(0)}$ such that $\pi^{(0)}(v_1)=1$ and $\pi^{(0)}(v_2)=2$. We have for all models that 
 \begin{equation*}
 \mathbb{E}[x{P}^{(1,2)}_{G,\pi}] = \frac{1}{\lambda}\mathbb{E}[P_{G^{(0)},\pi^{(0)}}\overline{P}_{G',\pi'}]\enspace  . 
 \end{equation*}
Coming back to the definition of $\Psi^{(1,2)}_{G}$, this leads us to 
\begin{align}
\left|\mathbb{E}[x\Psi^{(1,2)}_{G}]\right|\leq \frac{n^{|V|/2-1}}{\overline{q}^{|E|/2}}\left(\frac{1}{\lambda}\left|\mathbb{E}[\overline{P}_{G^{(0)},\pi^{(0)}}\overline{P}_{G',\pi'}]\right| + \mathbb{E}[x]\left|\mathbb{E}[\overline{P}_{G',\pi'}]\right| \right) \enspace . \label{eq:upper:global}
\end{align}
By Proposition~\ref{prp:model:conditions}, provided that we choose $c_0$ large enough in the  statement of Theorem~\ref{thm:lowdeg2}, all our six models satisfy Condition~\condmoment, \condvariance, as well as \condmomentWR for \HidSubP, \StoBloP, \ToeSerP\ for some numerical constants and we are therefore in position to apply Lemmas~\ref{lem:gen-1}~and~\ref{lem:gen-2} to all six models. 
\begin{align} \nonumber
 \frac{n^{|V|/2-1}}{\overline{q}^{|E|/2}}\mathbb{E}[x]\left|\mathbb{E}[\overline{P}_{G',\pi'}]\right|&\leq  c  D^{2} \frac{k}{n}     \left(D^{c'}\frac{\lambda}{\sqrt{\overline{q}}}\right)^{|E|} n^{|V|/2-1} \left(\frac{D^{c' }k}{n}\right)^{|V'| - \#\mathrm{CC}_{G'}}\left[c \frac{D^{c'}}{\sqrt{n}} \right]^{\#\mathrm{CC}_{G'}}\\  \nonumber
 &\leq  c  D^{2} \frac{k}{n}     \left(D^{c'}\frac{\lambda}{\sqrt{\overline{q}}}\right)^{|E|} \left[ \frac{D^{c'}k}{\sqrt{n}} \right]^{|V'| - \#\mathrm{CC}_{G'}}\left(c D^{c'}\right)^{\#\mathrm{CC}_{G'}}\\  \nonumber
 &\leq  c  D^{2} \frac{k}{n}     \left(D^{c'}\frac{\lambda}{\sqrt{\overline{q}}}\right)^{|E|-|V'|+ \#\mathrm{CC}_G'} \left[ \frac{D^{2c'}k\lambda}{\sqrt{n \overline{q}}} \right]^{|V'| - \#\mathrm{CC}_{G'}}\left(c D^{c'}\right)^{\#\mathrm{CC}_{G'}}\\  \nonumber
 &\leq  c  D^{2} \frac{k}{n}     \left(D^{c'}\frac{\lambda}{\sqrt{\overline{q}}}\right)^{|E|} \left[ c\frac{D^{3c'}k\lambda}{\sqrt{n \overline{q}}} \right]^{|V'| - \#\mathrm{CC}_{G'}}\\  
 &\leq  \frac{1}{2}\frac{k}{n} D^{-c_0|E|/2}\enspace , \label{eq:upper:global:2}  
\end{align}
where we used in the second  line that $|V'|\geq |V|-2$ and that, for $c_0$ large enough, we have  $c D^{c'}k/n\leq 1$ and we used in the penultimate line that 
$|V'|\geq 2\#\mathrm{CC}_{G'}$. In the last line, we used the conditions of Theorem~\ref{thm:lowdeg2} as well as the fact that $|E|\geq |V'|- |\#\mathrm{CC}_G'|$ and $c_0$ is large enough.

Let us turn to the first term in~\eqref{eq:upper:global}. We again apply Lemma~\ref{lem:gen-2}. 
\begin{align}\label{eq:upper:global:3}
    \frac{n^{|V|/2-1}}{\lambda \overline{q}^{|E|/2}}\left|\mathbb{E}[\overline{P}_{G^{(0)},\pi^{(0)}}\overline{P}_{G',\pi'}]\right|
    &\leq  c  D^{2+c'}  \left(D^{c'}\frac{\lambda}{\sqrt{\overline{q}}}\right)^{|E|} n^{|V|/2-1} \left(\frac{D^{c' }k}{n}\right)^{|V|- a}\left[c \frac{D^{c'}}{\sqrt{n}} \right]^{b}\enspace , 
\end{align}    
where $a$ corresponds to the number of connected components in the concatenation of $\pi'[G']$ and $\pi^{(0)}[G^{(0)}]$ and $b$ corresponds to the number of pure connected components in the same graph. Note that $a$ and $b$ depend on the connection of $v_1$ and $v_2$ in $G$. We consider four subcases.~\\

\medskip 

\noindent 
{\bf Case 1-a:} both $v_1$ and $v_2$ are isolated in $G$. In the this case, $|V'|= |V| -2$, $b= \#\mathrm{CC}(G')+1$ and $a= \#\mathrm{CC}(G')+1$. We deduce from~\eqref{eq:upper:global:3} 
that 
\begin{align} \nonumber
    \frac{n^{|V|/2-1}}{\lambda \overline{q}^{|E|/2}}\left|\mathbb{E}[\overline{P}_{G^{(0)},\pi^{(0)}}\overline{P}_{G',\pi'}]\right|
    &\leq  c  D^{2+c'}      \left(D^{c'}\frac{\lambda}{\sqrt{\overline{q}}}\right)^{|E|} n^{|V'|/2} \left(\frac{D^{c' }k}{n}\right)^{|V'|+1 - \#\mathrm{CC}(G') }\left[c \frac{D^{c'}}{\sqrt{n}} \right]^{\#\mathrm{CC}(G')+1} \\  \nonumber
    &\leq  c^2  D^{2+3c'} \frac{k}{n} \left(D^{c'}\frac{\lambda}{\sqrt{\overline{q}}}\right)^{|E|}\left(\frac{D^{c' }k}{\sqrt{n}}\right)^{|V'|- \#\mathrm{CC}(G') }\left[c D^{c'} \right]^{\#\mathrm{CC}(G')}  \frac{1}{\sqrt{n}} \\ \nonumber
    &\leq  c^2  D^{2+3c'} \frac{k}{n} \left(D^{c'}\frac{\lambda}{\sqrt{\overline{q}}}\right)^{|E|-|V'| +  \#\mathrm{CC}(G')}\left(c\frac{D^{2c' }k\lambda }{\sqrt{n\overline{q}}}\right)^{|V'|- \#\mathrm{CC}(G') }\left[c D^{c'}\right]^{\#\mathrm{CC}(G')}\frac{1}{\sqrt{n}}\\ \nonumber
    &\leq  c^2 D^{2+3c'} \frac{k}{n} \left(D^{c'}\frac{\lambda}{\sqrt{\overline{q}}}\right)^{|E|-|V'| +  \#\mathrm{CC}(G')}\left(c\frac{D^{3c' }k\lambda }{\sqrt{n\overline{q}}}\right)^{|V'|- \#\mathrm{CC}(G') }\frac{1}{\sqrt{n}}\\
    &\leq \frac{1}{2}\frac{k}{n} D^{-c_{0}|E|/2}\enspace , \label{eq:upper:global:3-1}  
\end{align}
where we argued as for~\eqref{eq:upper:global:2}. 
\medskip 

\noindent 
{\bf Case 1-b:}  $v_1$ or $v_2$ is isolated in $G$, but not both of them. In this case, $|V'|= |V|-1$, $a=\#\mathrm{CC}(G')$, and $b= \#\mathrm{CC}(G')-1$. Arguing as previously, we deduce  from~\eqref{eq:upper:global:3} 
that 
\begin{align} \nonumber
    \frac{n^{|V|/2-1}}{\lambda \overline{q}^{|E|/2}}\left|\mathbb{E}[\overline{P}_{G^{(0)},\pi^{(0)}}\overline{P}_{G',\pi'}]\right|
    &\leq  c  D^{2+c'}      \left(D^{c'}\frac{\lambda}{\sqrt{\overline{q}}}\right)^{|E|} n^{|V'|/2-1/2} \left(\frac{D^{c' }k}{n}\right)^{|V'|+1 - \#\mathrm{CC}(G') }\left[c \frac{D^{c'}}{\sqrt{n}} \right]^{\#\mathrm{CC}(G')-1} \\  \nonumber
    &\leq  c  D^{2+2c'} \frac{k}{n} \left(D^{c'}\frac{\lambda}{\sqrt{\overline{q}}}\right)^{|E|}\left(c\frac{D^{2c' }k}{\sqrt{n}}\right)^{|V'|- \#\mathrm{CC}(G') }  \\ \nonumber
    &\leq  c  D^{2+2c'} \frac{k}{n} \left(D^{c'}\frac{\lambda}{\sqrt{\overline{q}}}\right)^{|E|-|V'| +  \#\mathrm{CC}(G')}\left(c\frac{D^{3c' }k\lambda }{\sqrt{n\overline{q}}}\right)^{|V'|- \#\mathrm{CC}(G') }\\
    &\leq \frac{1}{2}\frac{k}{n} D^{-c_{0}|E|/2}\label{eq:upper:global:3-2} \enspace . 
\end{align}
\medskip 

\noindent 
{\bf Case 1-c:}  $v_1$ and $v_2$ are not isolated in $G$, but they do not belong to the same connected component. In this case, $|V'|= |V|$, $a= \#\mathrm{CC}(G')-1$, and $b= \#\mathrm{CC}(G')-2$. Arguing as previously, we deduce  from~\eqref{eq:upper:global:3} 
that 
\begin{align} \nonumber
    \frac{n^{|V|/2-1}}{\lambda \overline{q}^{|E|/2}}\left|\mathbb{E}[\overline{P}_{G^{(0)},\pi^{(0)}}\overline{P}_{G',\pi'}]\right|
    &\leq  c  D^{2+c'}      \left(D^{c'}\frac{\lambda}{\sqrt{\overline{q}}}\right)^{|E|} n^{|V'|/2-1} \left(\frac{D^{c' }k}{n}\right)^{|V'|+1 - \#\mathrm{CC}(G') }\left[c \frac{D^{c'}}{\sqrt{n}} \right]^{\#\mathrm{CC}(G')-2} \\  \nonumber
    &\leq  c  D^{2+2c'} \frac{k}{n} \left(D^{c'}\frac{\lambda}{\sqrt{\overline{q}}}\right)^{|E|}\left(\frac{cD^{2c' }k}{\sqrt{n}}\right)^{|V'|- \#\mathrm{CC}(G') } \\ \nonumber
    &\leq  c  D^{2+2c'} \frac{k}{n} \left(D^{c'}\frac{\lambda}{\sqrt{\overline{q}}}\right)^{|E|-|V'| +  \#\mathrm{CC}(G')}\left(c\frac{D^{3c' }k\lambda }{\sqrt{n\overline{q}}}\right)^{|V'|- \#\mathrm{CC}(G') }\\
    &\leq \frac{1}{2}\frac{k}{n} D^{-c_{0}|E|/2}\enspace . \label{eq:upper:global:3-3} 
\end{align}
\medskip 

\noindent 
{\bf Case 1-d:}  $v_1$ and $v_2$ belong to the same connected component in $G$. In this case, $|V'|= |V|$, $a= \#\mathrm{CC}(G')$, and $b= \#\mathrm{CC}(G')-1$. Arguing as previously, we deduce  from~\eqref{eq:upper:global:3} 
that 
\begin{align} \nonumber
    \frac{n^{|V|/2-1}}{\lambda \overline{q}^{|E|/2}}\left|\mathbb{E}[\overline{P}_{G^{(0)},\pi^{(0)}}\overline{P}_{G',\pi'}]\right|
    &\leq  c  D^{2+c'}    \left(D^{c'}\frac{\lambda}{\sqrt{\overline{q}}}\right)^{|E|} n^{|V'|/2-1} \left(\frac{D^{c' }k}{n}\right)^{|V'| - \#\mathrm{CC}(G') }\left[c \frac{D^{c'}}{\sqrt{n}} \right]^{\#\mathrm{CC}(G')-1} \\  \nonumber
    &\leq  c  D^{2+c'} \frac{1}{\sqrt{n}}  \left(D^{c'}\frac{\lambda}{\sqrt{\overline{q}}}\right)^{|E|}\left(c\frac{D^{2c' }k}{\sqrt{n}}\right)^{|V'|- \#\mathrm{CC}(G') }\\ 
&\leq 
c^2  D^{2+3c'}\frac{k}{n}  \left(D^{c'}\frac{\lambda}{\sqrt{\overline{q}}}\right)^{|E|+1 - |V'|+  \#\mathrm{CC}(G')}\left(\frac{cD^{3c' }\lambda k}{\sqrt{n\overline{q} }}\right)^{|V'|-1- \#\mathrm{CC}(G')} \nonumber \\ 
    &\leq \frac{k}{2n}  D^{-c_{0}|E|/2}\enspace , \label{eq:upper:global:3-4} 
\end{align}
where we used again in the last line that all connected components have at least two nodes and we used the conditions on $\lambda$ from the statement of Theorem~\ref{thm:lowdeg2}.
Then, gathering~(\ref{eq:upper:global}~-~\ref{eq:upper:global:3-4}) concludes the proof.

\paragraph{Case 2: $(v_1,v_2) \in G$.} We decompose $G$ into $G^{(1)}$ and $G'$ where $G^{(1)}$  corresponds to the connected component of $G$ that contains both $v_1$ and $v_2$, whereas $G'$ contains all the other connected components.  We only consider the case where $G'$ is non-empty, the case where $G$ has only one connected component being similar. 
Fix any $\pi\in \Pi^{(1,2)}_V$ and write $\pi^{(1)}$ and $\pi'$ for the corresponding restrictions of the labelings to $V^{(1)}$ and $V'$. By definition of the polynomials $\Psi^{(1,2)}_G$ and  
$\overline{P}^{(1,2)}_G$, we have 
\begin{align}\nonumber
|\mathbb E[x \Psi^{(1,2)}_G]| &= \sqrt{\frac{(n-2)!}{(n-|V|)! |\mathrm{Aut}^{(1,2)}(G)| \overline{q}^{|E|}}}\left|\mathbb E[xP_{G^{(1)},\pi^{(1)}} \overline{P}_{G',\pi'}] - \mathbb E[P_{G^{(1)},\pi^{(1)}}]\mathbb{E}[x\overline{P}_{G',\pi'}] \right|\\ \nonumber
&\leq  \frac{n^{|V|/2-1}}{\overline{q}^{|E|/2}}\left|\mathbb E[P_{G^{(1)},\pi^{(1)}} \overline{P}_{G',\pi'}] - \mathbb E[P_{G^{(1)},\pi^{(1)}}]\mathbb{E}[x\overline{P}_{G',\pi'}] \right| \\
&\leq   \frac{n^{|V|/2-1}}{\overline{q}^{|E|/2}}\left[\left|\mathbb E[\overline{P}_{G^{(1)},\pi^{(1)}} \overline{P}_{G',\pi'}]\right|+ \left|\mathbb{E}[P_{G^{(1)},\pi^{(1)}}]\mathbb{E}[\overline{P}_{G',\pi'}] \right|+ \left|\mathbb{E}[P_{G^{(1)},\pi^{(1)}}]\mathbb{E}[x\overline{P}_{G',\pi'}] \right|\right] \enspace ,
\label{eq:upper:correlation}
\end{align}
where we used in the second line that conditionally on $z$ the expectation of $P_{G^{(1)},\pi^{(1)}} \overline{P}_{G',\pi'}$ is zero whenever $x=0$. 

We first bound the first term $\mathbb E[\overline{P}_{G^{(1)},\pi^{(1)}} \overline{P}_{G',\pi'}]$.  Since $\pi^{(1)}[G^{(1)}]$ and $\pi'[G']$ do not intersect it follows from Lemma~\ref{lem:gen-2} that 
\begin{align}\nonumber    
 \frac{n^{|V|/2-1}}{\overline{q}^{|E|/2}} \mathbb E[\overline{P}_{G^{(1)},\pi^{(1)}} \overline{P}_{G',\pi'}]&\leq
c  D^{2}   \left(D^{c'}\frac{\lambda}{\sqrt{\overline{q}}}\right)^{|E|} n^{|V|/2-1} \left(\frac{D^{c' }k}{n}\right)^{|V| - \#\mathrm{CC}_{G}}\left(c \frac{D^{c'}}{\sqrt{n}} \right)^{\#\mathrm{CC}_{G}}\\ 
&\leq \frac{1}{2}\frac{k}{n} D^{-c_{0}|E|/2} \label{eq:upper:correlation-2} \enspace , 
\end{align}
where we used that  $c_0$ is large enough  and we argued as in Case 1. 
Let us turn to the second term in~\eqref{eq:upper:correlation}. By Condition~\condmoment, we have 
\begin{equation*}
\mathbb{E}[P_{G^{(1)},\pi^{(1)}}]\leq (D^{c'}\lambda)^{|E^{(1)}|}\left(D^{c'}\frac{k}{n}\right)^{|V^{(1)}|-1} \enspace . 
\end{equation*}
Also, if $G'$ has a single connected component, we have $\mathbb{E}[\overline{P}_{G',\pi'}]=0$. If $G'$ has at least two connected component, then $\mathbb{E}[\overline{P}_{G',\pi'}]$ is controlled by Lemma~\ref{lem:gen-2}. In particular, we have 
\begin{equation*}
\left|\mathbb{E}[\overline{P}_{G',\pi'}]\right|\leq  cD^{c'} \left(D^{c'}\lambda \right)^{|E'|}\left(\frac{D^{c'}k}{n}\right)^{|V'|-\#\mathrm{CC}_{G'}}\left(c\frac{D^{c'}}{\sqrt{n}}\right)^{\#\mathrm{CC}_{G'}} \enspace . 
\end{equation*}
We deduce from the two previous bounds that 
\begin{align}\nonumber
 \frac{n^{|V|/2-1}}{\overline{q}^{|E|/2}} \left|\mathbb{E}[P_{G^{(1)},\pi^{(1)}}]\mathbb{E}[\overline{P}_{G',\pi'}] \right|&\leq  c \frac{1}{\sqrt{n}}  D^{c'} \left(D^{c'}\frac{\lambda}{\sqrt{\overline{q}}} \right)^{|E|}\left(\frac{D^{c'}k}{\sqrt{n}}\right)^{|V|-\#\mathrm{CC}_{G}}\left(cD^{c'}\right)^{\#\mathrm{CC}_{G}-1}\\ \nonumber
 & \leq  \frac{k}{n}c^2 D^{3c'} \left(D^{c'}\frac{\lambda}{\sqrt{\overline{q}}} \right)^{|E'|}\left(c\frac{D^{2c'}k}{\sqrt{n}}\right)^{|V|-1 -\#\mathrm{CC}_{G}} \\ \nonumber
 &\leq  \frac{k}{n}c^2 D^{3c'} \left(D^{c'}\frac{\lambda}{\sqrt{\overline{q}}} \right)^{|E'|- |V|+1 -\#\mathrm{CC}_{G} }\left(c\frac{D^{2c'}k \lambda  }{\sqrt{n\overline{q}}}\right)^{|V|-1 -\#\mathrm{CC}_{G}} \\
 & \leq \frac{k}{4n}e^{-c_0 |E|/2}\enspace .  \label{eq:upper:correlation-10}
\end{align}
where we used that $c_0$ is large enough in the statement of the theorem. 
To handle the last term $\left|\mathbb{E}[P_{G^{(1)},\pi^{(1)}}]\mathbb{E}[x\overline{P}_{G',\pi'}] \right|$, we control $\mathbb{E}[x\overline{P}_{G',\pi'}]$ by arguing as in case 1-a. 
This allows us to prove that  $\left|\mathbb{E}[P_{G^{(1)},\pi^{(1)}}]\mathbb{E}[x\overline{P}_{G',\pi'}] \right|\leq \frac{k}{4n}e^{-c_0 |E|/2}$. 
Then, gathering~\eqref{eq:upper:correlation-2}~and~\eqref{eq:upper:correlation-10}, we conclude that 
\begin{equation*}
\mathbb E[x \Psi^{(1,2)}_G]| \leq \frac{k}{n}e^{-c_0 |E|/2}\enspace . 
\end{equation*}
The result follows.

\end{proof}

\section{Proofs of the invariance properties}~\label{sec:proof_invariance_lemmas}
\begin{proof}[Proof of Lemma~\ref{lem:reduction:permutation}]

Let $\mathrm{adv}_{\leq D}(f) = \frac{\mathbb{E}_{H_1}[f] }{\sqrt{\mathbb{E}[f^2]}}$ be the objective function optimized in the advantage. Suppose $f^{\star} \in  \arg\max \mathrm{adv}_{\leq D}$ is a polynomial of degree less than $D$ attaining the maximal advantage---\ it is not hard to see that the maximum exists. Let $\sigma$ be any permutation of $[n]$. By the permutation invariance properties of the distributions under the null and the alternative hypotheses, it turns out  that  the polynomial  defined by  $Y\mapsto f^{\star}(Y)= f^{\star}(Y_{\sigma})$ also maximizes the advantage. Defining the permutation invariant polynomial $f_{\mathrm{inv}}^{\star}$ by   $f_{\mathrm{inv}}^{\star}(Y)= \frac{1}{n!}\sum_{\sigma} f^{\star}(Y_{\sigma})$, we get that  
\begin{equation*}
    \mathrm{adv}_{\leq D}\left(\frac{1}{n!}\sum_{\sigma} f^{\star}(PY)\right) = \dfrac{\mathbb{E}_{H_1}\left[\frac{1}{n!}\sum_{\sigma} f^{\star}(Y_{\sigma})\right]}{\sqrt{\mathbb{E}{\left(\frac{1}{n!}\sum_{\sigma} f^{\star}(Y_{\sigma})\right)^2}}}\enspace . 
\end{equation*}
By invariance of $H_1$ with respect to permutations, the numerator is equal to the numerator in $\mathbb{E}_{H_1}{(f^{\star}(Y))}$. For the denominator, by convexity of the square function and invariance with respect to permutations, we get 
\begin{equation*}
    \mathbb{E}{\left(\frac{1}{n!}\sum_{\sigma} f^{\star}(Y_{\sigma})\right)^2}\leq \mathbb{E}{(f^{\star}(Y))^2}\enspace. 
\end{equation*}
Therefore, $\mathrm{adv}_{\leq D}(f_{\mathrm{inv}}^{\star})\geq \mathrm{adv}_{\leq D}(f^{\star})$ and the advantage is maximized  by a permutation invariant function. 
\end{proof}

\begin{proof}[Proof of Lemma~\ref{lem:invariant:graph}]

First, we easily check that the constant function  $1$ and the polynomials $P_{G,\pi}$ with $G=(V,E)\in \mathcal{G}_{\leq D}$ and $\pi \in \Pi_V$ 
correspond to the canonical basis of polynomials of degree at most $D$ with $n$ variables. 

Consider any permutation-invariant polynomial $f\in \mathcal{P}^{\mathrm{inv}}_{\leq D}$. There exist unique numerical values $(\alpha_{G,\pi})_{G\in\mathcal{G}_{\leq D}}$ such that 
\begin{equation}\label{eq:decomposition_f:basis}
f(Y)= \alpha_{\emptyset} + \sum_{G \in \mathcal G_{\leq D},\pi\in \Pi_V} \alpha_{G,\pi} P_{G,\pi}(Y)\enspace . 
\end{equation}
Given any permutation $\sigma$ of $[n]$, we define $f_{\sigma}$ by $f_{\sigma}(Y)= f(Y_{\sigma})$. By permutation invariance, we have $f_{\sigma}= f$. As a consequence, it follows from the decomposition of $f$ that 
\begin{equation*}
f(Y) = \alpha_{\emptyset} + \sum_{G\in \mathcal{G}_{\leq D}} \sum_{\pi \in \Pi_{V}} P_{G,\pi}(Y) \left[\frac{1}{n!}\sum_{\sigma} \alpha_{G,\pi \circ \sigma^{-1}}\right]\enspace . 
\end{equation*}
One easily checks that, for a fixed template $G$,  $\frac{1}{n!}\sum_{\sigma} \alpha_{G,\pi \circ \sigma^{-1}}$ does not depend on $\pi$. Hence, there exist $\alpha_G$'s such that 
\begin{equation}\label{eq:decomposition_f:basis2}
f(Y)= \alpha_{\emptyset} + \sum_{G \in \mathcal G_{\leq D}} \alpha_{G} P_{G,\pi}(Y)\enspace . 
\end{equation}
Besides, by uniqueness of the decomposition~\eqref{eq:decomposition_f:basis}, it follows that $\alpha_{G,\pi}= \alpha_G$ for all $\pi\in \Pi_V$ and the decomposition~\eqref{eq:decomposition_f:basis2} is therefore unique.

\end{proof}

\begin{proof}[Proof of Lemma~\ref{lem:reduction:degree2}]
Relying on the permutation invariance of the distribution $\mathbb{P}$, we can argue as in the proof of Lemma~\ref{lem:reduction:permutation}, that there exists a polynomial $f$ with $\mathrm{deg}(f)\leq D$ that maximizes $\frac{\mathbb{E}[fx]}{\sqrt{\mathbb{E}[f^2]}}$ over all polynomials of degree at most $D$ and that is invariant by permutations over $\{3,\ldots, n\}$; in other words, for all permutations $\sigma:[n]\mapsto[n]$ such that $\sigma(1)=1$, and $\sigma(2)=2$, we have $f(Y)=f(Y_{\sigma})$ where $Y_{\sigma}= (Y_{\sigma(i),\sigma(j)})$. Then, similarly to to the proof of Lemma~\ref{lem:invariant:graph}, we check that $(1,(P^{(1,2)}_{G})_{G\in \mathcal{G}^{(1,2)}_{\leq D}} )$ is a basis of the space of polynomials that are invariant by permutations of $\{3,\ldots, n\}$. Since $(1,(\Psi^{(1,2)}_{G})_{G\in \mathcal{G}^{(1,2)}_{\leq D}} )$ span the same space, this allows us to conclude. 

\end{proof}

\end{document}